\newcommand\labelAndRemember[2]
\gdef\csname labeled:#1\endcsname{#2}%
\newcommand\recallLabel[1]
\endcsname\tag{\ref{#1}}}
\newcommand\labelr[2]
\gdef\csname labeled:#1\endcsname{#2}%
\newcommand\recall[1]
\newcolumntype{H}{>{\setbox0=\hbox\bgroup}c<{\egroup}@{}}
\newcolumntype{Z}{>{\setbox0=\hbox\bgroup}c<{\egroup}@{\hspace*{-\tabcolsep}}}
\def\thm@space@setup{\thm@preskip=3pt
\thm@postskip=0pt}
\newtheorem{theorem}{Theorem}
\newtheorem{lemma}[theorem]{Lemma}
\newtheorem{definition}[theorem]{Definition}
\newtheorem{corollary}[theorem]{Corollary}
\newtheorem{proposition}[theorem]{Proposition}
\theoremstyle{definition}
\newtheorem{remark}[theorem]{Remark}
\newtheorem{example}[theorem]{Example}
\newcommand{\exend}{{~$\Diamond$}}
\newenvironment{proof-sketch}{\noindent{\bf Proof Sketch}
  \hspace*{1em}}{\qed\bigskip\\}
\newenvironment{proof-idea}{\noindent{\bf Proof Idea}
  \hspace*{1em}}{\qed\bigskip\\}
\newenvironment{proof-of}[1][{}]{\noindent{\bf Proof of \cref{#1}}
  \hspace*{1em}}{\qed\bigskip\\}
\newenvironment{proof-of-lemma}[1][{}]{\noindent{\bf Proof of Lemma {#1}}
  \hspace*{1em}}{\qed\bigskip\\}
\newenvironment{proof-of-proposition}[1][{}]{\noindent{\bf
    Proof of Proposition {#1}}
  \hspace*{1em}}{\qed\bigskip\\}
\newenvironment{proof-of-theorem}[1][{}]{\noindent{\bf Proof of Theorem {#1}}
  \hspace*{1em}}{\qed\bigskip\\}
\newenvironment{inner-proof}{\noindent{\bf Proof}\hspace{1em}}{
  $\bigtriangledown$\medskip\\}
\newenvironment{proof-attempt}{\noindent{\bf Proof Attempt}
  \hspace*{1em}}{\qed\bigskip\\}
\newenvironment{proofof}[1][{}]{\noindent{\bf Proof of \cref{#1}}
  \hspace*{1em}}{\qed\bigskip\\}
\newcommand{\revise}[1]{{#1}}
\def\shownotes{1}  %
\newcommand{\authnote}[2]{{\scriptsize $\ll$\textsf{#1 notes: #2}$\gg$}}
\newcommand{\authnote}[2]{}
\renewcommand{\hat}{\widehat}
\renewcommand{\epsilon}{\varepsilon}
\newcommand{\yes}{\ding{51}}
\newcommand{\tM}{\tilde{M}}
\newcommand{\RR}{\mathbb{R}}
\newcommand{\eps}{\varepsilon}
\newcommand{\II}{\mathbb{I}}
\newcommand{\EE}{\mathbb{E}}
\newcommand{\PP}{\mathbb{P}}
\newcommand{\QQ}{\mathbb{Q}}
\newcounter{cnt}
\xdef \csname c\Alph{cnt}\endcsname {\noexpand\mathcal{\Alph{cnt}}}%
\xdef \csname b\Alph{cnt}\endcsname {\noexpand\mathbb{\Alph{cnt}}}%
\renewcommand{\bR}{\mathbf{R}}
\newcommand{\br}{\mathbf{r}}
\newcommand{\tr}{\mathrm{tr}}
\DeclareMathOperator*{\argmin}{arg\,min}
\DeclareMathOperator*{\argmax}{arg\,max}
\newcommand{\TV}{\operatorname{TV}}
\newcommand{\reg}{\mathbf{Reg}}
\newcommand{\leqsim}{\lesssim}
\newcommand{\geqsim}{\gtrsim}
\newcommand{\T}{\top}  
\newcommand{\iprod}[2]{\left\langle #1, #2 \right\rangle}
\newcommand{\nrm}[1]{\left\|#1\right\|}
\newcommand{\minop}[1]{\min\left\{#1\right\}}
\newcommand{\abs}[1]{\left|#1\right|}
\newcommand{\cond}[2]{\mathbb{E}\left[\left.#1\right|#2\right]}
\newcommand{\bigO}[1]{\mathcal{O}\left(#1\right)}
\newcommand{\tbO}[1]{\widetilde{\mathcal{O}}\left(#1\right)}
\newcommand{\tbOm}[1]{\tilde{\Omega}\left(#1\right)}
\newcommand{\Om}[1]{\Omega\left(#1\right)}
\newcommand{\ceil}[1]{\left\lceil #1\right\rceil}
\newcommand{\floor}[1]{\left\lfloor #1\right\rfloor}
\DeclarePairedDelimiterX{\ddiv}[2]{(}{)}{%
  #1\;\delimsize\|\;#2%
}
\newcommand{\KL}{\operatorname{KL}\ddiv}
\newcommand{\Unif}{\mathrm{Unif}}
\newcommand{\poly}{\operatorname{poly}}
\newcommand{\hV}{\hat{V}}
\newcommand{\opi}{\overline{\pi}}
\newcommand{\hpi}{\hat{\pi}}
\newcommand{\Algo}{\mathsf{Alg}}
\newcommand{\norm}[1]{\left\|{#1}\right\|} %
\newcommand{\lone}[1]{\norm{#1}_1} %
\newcommand{\ltwo}[1]{\norm{#1}_2} %
\newcommand{\linf}[1]{\norm{#1}_\infty} %
\renewcommand{\cO}{\mathcal{O}}
\newcommand{\tO}{\widetilde{\cO}}
\newcommand{\wt}{\widetilde}
\newcommand{\<}{\left\langle}
\renewcommand{\>}{\right\rangle}
\newcommand{\sign}{\operatorname{sign}}
\renewcommand{\II}{\mathbbm{1}}
\newcommand{\setto}{\leftarrow}
\newcommand{\unif}{{\rm Unif}}
\newcommand{\hM}{\widehat{M}}
\renewcommand{\hV}{\widehat{V}}
\newcommand{\oM}{{\overline{M}}}
\newcommand{\omu}{\overline{\mu}}
\newcommand{\dec}{\operatorname{dec}}
\newcommand{\psc}{\operatorname{psc}}
\newcommand{\omlec}{\operatorname{mlec}}
\newcommand{\mlec}{\operatorname{mlec}}
\newcommand{\odec}{\dec}
\newcommand{\eec}{\operatorname{pacdec}}
\newcommand{\oeec}{\eec}
\newcommand{\eecg}{\operatorname{edec}_{\gamma}}
\newcommand{\dH}{D_{\mathrm{H}}}
\newcommand{\dTV}{D_{\mathrm{TV}}}
\newcommand{\wtdH}{D_{\operatorname{RL}}}
\renewcommand{\DH}[1]{D_{\mathrm{H}}^2\left(#1\right)}
\newcommand{\tDH}[1]{D_{\mathrm{RL}}^2\left(#1\right)}
\newcommand{\DTV}[1]{D_{\mathrm{TV}}\left(#1\right)}
\newcommand{\DTVt}[1]{D_{\mathrm{TV}}^2\left(#1\right)}
\newcommand{\subopt}{\mathbf{SubOpt}}
\newcommand{\co}{\operatorname{co}}
\newcommand{\regdm}{\mathbf{Reg}_{\mathbf{DM}}}
\newcommand{\eluder}{\mathfrak{e}}
\newcommand{\starn}{\mathfrak{s}}
\newcommand{\est}{\operatorname{Est}}
\newcommand{\Alg}{\mathbf{Alg}}
\newcommand{\Est}{\mathbf{Est}}
\newcommand{\EstwtH}{\mathbf{Est}_{\operatorname{RL}}}
\newcommand{\EstRL}{\mathbf{Est}_{\operatorname{RL}}}
\newcommand{\Ms}{{M^{\star}}}
\newcommand{\Vs}{V_\star}
\newcommand{\expl}{\mathrm{exp}}
\newcommand{\out}{\mathrm{out}}
\newcommand{\doac}{\mathrm{do}}
\newcommand{\MP}{\mathsf{P}^M}
\newcommand{\oMP}{\mathsf{P}^{\oM}}
\newcommand{\oMR}{\mathsf{R}^{\oM}}
\newcommand{\etod}{\textsc{E2D-TA}}
\newcommand{\etodfull}{\textsc{Estimation-to-Decisions with Tempered Aggregation}}
\newcommand{\etodva}{\textsc{E2D-VA}}
\newcommand{\Vovkalg}{Tempered Aggregation}
\newcommand{\tPP}{\widetilde{\PP}}
\newcommand{\etap}{\eta_{\mathrm{p}}}
\newcommand{\etar}{\eta_{\mathrm{r}}}
\newcommand{\mops}{\textsc{MOPS}}
\newcommand{\omle}{\textsc{OMLE}}
\newcommand{\dcp}{\operatorname{dcp}}
\newcommand{\dcpha}{\dcp^{h,\alpha}}
\newcommand{\dcpa}{\dcp^{\alpha}}
\newcommand{\eetod}{\textsc{PAC E2D}}
\newcommand{\pexp}{p_{\mathrm{exp}}}
\newcommand{\pout}{p_{\mathrm{out}}}
\newcommand{\hatpout}{\hat{p}_{\mathrm{out}}}
\newcommand{\pac}{{\rm pac}}
\newcommand{\rfec}{\operatorname{rfdec}}
\newcommand{\rfecg}{\operatorname{rfdec}_{\gamma}}
\newcommand{\orfec}{\rfec}
\newcommand{\orfecg}{\orfec_{\gamma}}
\newcommand{\rfalg}{\textsc{Reward-Free E2D}}
\newcommand{\rf}{{\rm rf}}
\newcommand{\suboptrf}{\mathbf{SubOpt}^{\mathbf{rf}}}
\newcommand{\Pm}{\mathsf{P}}
\newcommand{\oPm}{\overline{\mathsf{P}}}
\newcommand{\hPm}{\hat{\mathsf{P}}}
\newcommand{\err}{\operatorname{Err}}
\newcommand{\PPs}{\PP^\star}
\newcommand{\piest}{\pi^{\exp}}
\newcommand{\btheta}{\boldsymbol{\theta}}
\newcommand{\ltwot}[1]{\ltwo{#1}^2}
\newcommand{\dr}{d}
\newcommand{\Rs}{\bR_{\star}}
\newcommand{\ba}{\mathbf{a}}
\renewcommand{\circ}{\diamond}
\newcommand{\Gap}{{\rm Gap}}
\newcommand{\EQ}{{\tt EQ}}
\newcommand{\NE}{{\tt NE}}
\newcommand{\CE}{{\tt CE}}
\newcommand{\CCE}{{\tt CCE}}
\newcommand{\mealg}{\textsc{All-Policy Model-Estimation E2D}}
\newcommand{\MEalg}{\textsc{All-Policy Model-Estimation E2D}}
\newcommand{\ME}{\mathrm{me}}
\newcommand{\muout}{\mu_{\out}}
\renewcommand{\opi}{\bar{\pi}}
\newcommand{\DTVPi}[1]{\wdTV^{\Pi}\left(#1\right)}
\newcommand{\mdec}{\operatorname{amdec}}
\newcommand{\medec}{\operatorname{amdec}}
\newcommand{\omdec}{\mdec}
\newcommand{\MDEC}{AMDEC}
\newcommand{\wdTV}{\wt{D}_{\rm RL}}
\newcommand{\wDTV}[1]{\wdTV\left(#1\right)}
\newcommand{\DTVPid}[1]{\wdTV^{\Pi^{\det}}\left(#1\right)}
\newcommand{\Bpara}{B-representation}
\newcommand{\arev}{\alpha_{\sf rev}}
\newcommand{\nrmonetwo}[1]{\nrm{#1}_{1, 2}}
\newcommand{\stab}{\Lambda_{\sf B}}
\newcommand{\dum}{\rm dum}
\newcommand{\cerr}{\cE}
\newcommand{\BB}{\mathbf{B}}
\newcommand{\bq}{\mathbf{q}}
\newcommand{\QAh}{\mathcal{U}_{A,h}}
\newcommand{\Uone}{\cU_1}
\newcommand{\Uh}{{\cU_h}}
\newcommand{\Uhp}{{\cU_{h+1}}}
\newcommand{\UAh}{\mathcal{U}_{A,h}}
\newcommand{\nUA}{U_A}
\newcommand{\nUAh}{\abs{\UAh}}
\newcommand{\pis}{\pi_\star}
\newcommand{\LM}{L}
\newcommand{\dimc}{\dim_{\rm c}}
\newcommand{\dtr}{q}
\newcommand{\vgap}{G}
\newcommand{\odimc}{\overline{\dim}_{\rm c}}
\newcommand{\oeluder}{\overline{\mathfrak{e}}}
\newcommand{\conv}{\mathrm{conv}}
\newcommand{\normal}[1]{\textsf{N}\paren{#1}}
\newcommand{\belrep}{decouplable representation}
\newcommand{\Belrep}{Decouplable representation}
\newcommand{\mbelrep}{strong decouplable representation}
\newcommand{\mBelrep}{Strong decouplable representation}
\newcommand{\Pia}{\Pi}
\newcommand{\Piab}{\Pi_{\sf Pb}}
\newcommand{\cMpb}{\cM_{\sf Pb}}
\newcommand{\bpi}{\boldsymbol{\pi}}
\newcommand{\decpb}{\pbdec}
\newcommand{\Regpb}{\reg^{\bf pb}}
\newcommand{\pbrlalg}{\textsc{Preference-based E2D}}
\newcommand{\pbdec}{{\rm pbdec}}
\newcommand{\opbdec}{\pbdec}
\newcommand{\ssp}{\mathbb{S}}
\newcommand{\ssps}{\ssp^{\GOAL}}
\newcommand{\ssprf}{\ssp^{\rm rf}}
\newcommand{\sspme}{\ssp^{\rm me}}
\newcommand{\ssppb}{\ssp^{\rm pb}}
\newcommand{\suboptme}{\subopt^{\bf me}}
\newcommand{\suboptpac}{\subopt^{\bf pac}}
\newcommand{\csp}{\mathbb{D}}
\newcommand{\csps}{\csp^{\GOAL}}
\newcommand{\csppb}{\csp^{\rm pb}}
\newcommand{\GOAL}{\mathsf{G}}
\newcommand{\cspreg}{\csp_{\sf reg}}
\newcommand{\csppac}{\csp_{\sf pac}}
\newcommand{\DEC}{\GOAL\!\operatorname{-dec}}
\newcommand{\DECt}{$\GOAL$-DEC}
\newcommand{\getod}{$\GOAL$-\textsc{E2D}}
\newcommand{\subopts}{\subopt^{\GOAL}}
\newcommand{\Regs}{\reg^{\GOAL}}
\newcommand{\suboptpb}{\subopt^{\mathbf{pb}}}
\newcommand{\Mtraj}{M_{\sf traj}}
\newcommand{\oMtraj}{\oM_{\sf traj}}
\newcommand{\cMtraj}{\cM_{\sf traj}}
\newcommand{\Cmp}{\C}
\newcommand{\bSigma}{\mathbf{\Sigma}}
\newcommand{\byparts}[2]{#1#2}
\newcommand{\bypart}[2]{#1#2}
\newcommand{\dimG}{\dim}
\newcommand{\paren}[1]{{\left( #1 \right)}}
\newcommand{\brac}[1]{{\left[ #1 \right]}}
\newcommand{\set}[1]{{\left\{ #1 \right\}}}
\newcommand{\defeq}{\mathrel{\mathop:}=}
\newcommand{\eqdef}{=\mathrel{\mathop:}}
\newcommand{\vect}[1]{\ensuremath{\mathbf{#1}}}
\newcommand{\mat}[1]{\ensuremath{\mathbf{#1}}}
\renewcommand{\det}{\mathrm{det}}
\newcommand{\rank}{\mathrm{rank}}
\newcommand{\polylog}{\mathrm{polylog}}
\newcommand{\E}{\mathbb{E}}
\renewcommand{\P}{\mathbb{P}}
\newcommand{\Q}{\mathbb{Q}}
\newcommand{\Z}{\mathbb{Z}}
\newcommand{\R}{\mathbb{R}}
\newcommand{\C}{\mathbb{C}}
\newcommand{\B}{\mat{B}}
\newcommand{\M}{\mat{M}}
\newcommand{\e}{\vect{e}}
\renewcommand{\v}{\vect{v}}
\renewcommand{\a}{\vect{a}}
\renewcommand{\o}{\vect{o}}
\renewcommand{\O}{\mathbb{O}}
\renewcommand{\M}{\mathbb{M}}
\renewcommand{\T}{\mathbb{T}}
\newcommand{\tmu}{\tilde{\mu}}
\renewcommand{\PPs}{\PP^\star}
\renewcommand{\Rs}{\bR^\star}
\newcommand{\xt}{\pi^t}
\newcommand{\xs}{\pi^s}
\newcommand{\xp}{\pi}
\newcommand{\EEt}[1]{\EE_t\brac{#1}}
\newcommand{\op}{\overline{p}}
\newcommand{\Arew}{\cA_{0}}
\newcommand{\Arev}{\cA_{\sf rev}}
\newcommand{\Bern}{\mathrm{Bern}}
\newcommand{\acrev}[1]{a_{\sf{rev}, i}}
\newcommand{\RM}{\mathsf{R}^M}
\newcommand{\oeps}{\underline{\epsilon}}
\newcommand{\GBE}{\cG_{\rm BE}}
\newcommand{\gammaT}{\gamma(T)}
\newcommand{\aos}[2]{\iftoggle{aos}{#1}{#2}}
\newcommand{\crefa}[2]{\aos{#1 of the Supplementary Material~\citep{chen2024supp}}{\cref{#2}}}
\newcommand{\crefapp}[2]{\crefa{Appendix #1}{#2}}
\title{Unified Algorithms for RL with Decision-Estimation Coefficients: PAC, Reward-Free, Preference-Based Learning, and Beyond}
\author{
  Fan Chen\thanks{Massachusetts Institute of Technology. Email: \texttt{fanchen@mit.edu}}
 \and
  Song Mei\thanks{UC Berkeley. Email: \texttt{songmei@berkeley.edu}} \footnotemark[4] 
  \and
  Yu Bai\thanks{Salesforce Research. Email: \texttt{yu.bai@salesforce.com}} \thanks{Equal contribution.} 
}
\begin{document}
\maketitle

\begin{abstract}

Modern Reinforcement Learning (RL) is more than just learning the optimal policy; Alternative learning goals such as exploring the environment, estimating the underlying model, and learning from preference feedback are all of practical importance. While provably sample-efficient algorithms for each specific goal have been proposed, these algorithms often depend strongly on the particular learning goal and thus admit different structures correspondingly. It is an urging open question whether these learning goals can rather be tackled by a single unified algorithm.

We make progress on this question by developing a unified algorithm framework for a large class of learning goals, building on the Decision-Estimation Coefficient (DEC) framework. Our framework handles many learning goals such as no-regret RL, PAC RL, reward-free learning, model estimation, and preference-based learning, all by simply instantiating the same generic complexity measure called ``Generalized DEC'', and a corresponding generic algorithm. The generalized DEC also yields a sample complexity lower bound for each specific learning goal. As applications, we propose ``decouplable representation'' as a natural sufficient condition for bounding generalized DECs, and use it to obtain many new sample-efficient results (and recover existing results) for a wide range of learning goals and problem classes as direct corollaries. 
Finally, as a connection, we re-analyze two existing optimistic model-based algorithms based on Posterior Sampling and Maximum Likelihood Estimation, showing that they enjoy sample complexity bounds under similar structural conditions as the DEC.

\end{abstract}

\section{Introduction}

Reinforcement Learning (RL) has achieved immense success in modern artificial intelligence. As RL agents typically require an enormous number of samples to train in practice~\citep{mnih2015human,silver2016mastering}, \emph{sample-efficiency} has been an important question in RL research. This question has been studied extensively in theory, with provably sample-efficient algorithms established for many concrete RL problems. This includes tabular Markov Decision Processes (MDPs)~\citep{brafman2002r,azar2017minimax,agrawal2017optimistic,jin2018q,dann2019policy,zhang2020almost}, as well as MDPs with various types of linear structures~\citep{yang2019sample,jin2020provably,zanette2020learning,ayoub2020model,zhou2021nearly,wang2021optimism}.

Towards a more unifying theory, recent work seeks general structural conditions and unified algorithms that encompass as many known sample-efficient RL problems as possible. Many such structural conditions have been identified, including Bellman rank~\citep{jiang2017contextual}, Witness rank~\citep{sun2019model}, Eluder dimension~\citep{russo2013eluder,wang2020reinforcement}, Bilinear Class~\citep{du2019provably}, and Bellman-Eluder dimension~\citep{jin2021bellman}.~\revise{Intuitively, these conditions commonly require a generalized low-rank structure in the Bellman errors.} The recent work of~\citet{foster2021statistical} proposes the Decision-Estimation Coefficient (DEC) as a quantitative complexity measure governing the statistical complexity of model-based RL. Roughly speaking, the DEC measures the optimal trade-off---achieved by any policy---between exploration (gaining information) and exploitation (being a near-optimal policy itself) when the true model could be any model within the model class. \citet{foster2021statistical} establish upper and lower bounds showing that, for any RL problem identified with a model class, a bounded DEC is necessary and sufficient for online learning with low regret. This constitutes a significant step towards a unified understanding of sample-efficient RL.

Despite this progress, there still lacks an essential understanding of important learning goals beyond no-regret learning. For a broad range of RL applications, the objective of the agent is not to minimize regret, but rather to explore the environment sufficiently to collect enough information. Such learning goals include: (1) PAC learning~\citep{dann2017unifying}, where the objective is to ensure an output policy with small sub-optimality; (2) reward-free learning~\citep{jin2020reward}, where the agent explores without knowing the reward function so that a near-optimal policy can be computed for any reward after interactions; and (3) model estimation \citep{kumar2015stochastic}, where the objective is to estimate the \emph{model} of environment. Furthermore, some goals cannot be directly characterized by the standard notion of regret. One of the most prominent examples is preference-based learning~\citep{wirth2017survey}, where the performance of the agent is not measured by rewards but instead by comparison (e.g. human preference). Preference-based learning is also known as reinforcement learning from human feedback (RLHF) and recently achieved massive success in large language models \citep{ouyang2022training}. 

Previously, all the aforementioned learning goals have been studied in a problem-specific manner, i.e., for a specific learning goal and a specific problem class with certain structural conditions. In this paper, we present a \emph{unified} study of all these seemingly different RL learning goals under a generalized DEC framework. We do this by developing one principal complexity measure and essentially one unified algorithm.

\begin{figure}[t]
\center
\includegraphics[width=.9\hsize]{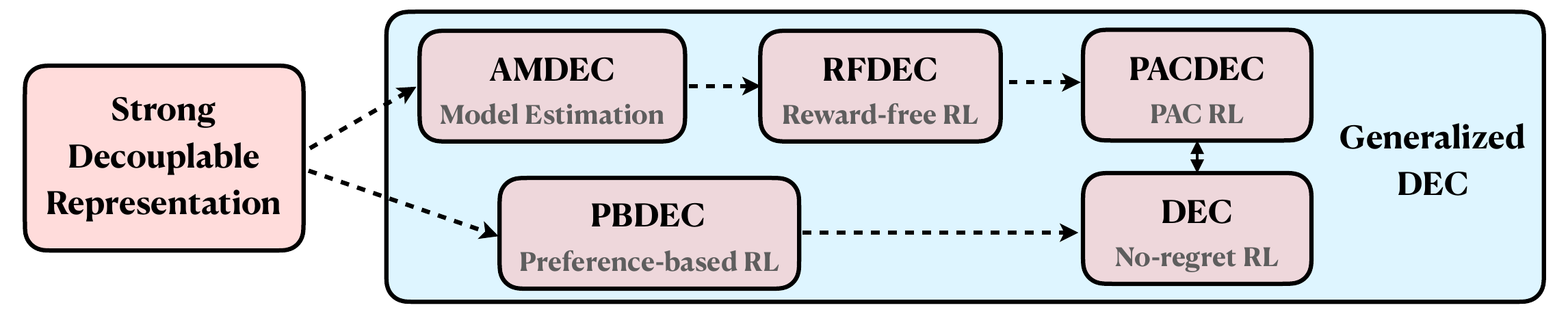}
\caption{
\small A conceptual diagram of implications between various \DECt s and (strong) decouplable representation.~\revise{{\rm PACDEC} can be bounded by {\rm RFDEC}, which can be further bounded by {\rm AMDEC}; {\rm PACDEC} and {\rm Regret DEC} can be converted to each other; {\rm Regret DEC} can be bounded by {\rm PBDEC}. } The implications between \DECt s are discussed in the corresponding sections (cf. \cref{sec:gen-dec}), and the bounds on \DECt s in terms of (strong) \belrep~are presented in \cref{section:bellman-rep} (\cref{prop:belrep-pac}, \ref{prop:belrep-am}, \ref{prop:belrep-rf}, and \ref{prop:belrep-pb}). %
}
\label{figure:relate}
\end{figure}

Our contributions can be summarized as follows.
\begin{itemize}[leftmargin=2em]

\item We extend the DEC framework to handle a generic family of learning goals (\cref{sec:gen-dec}). For any general learning goal, we present a unified meta-algorithm (\cref{alg:E2D-gen}) with complexity measured by a generalized DEC. We show that these generalized DECs also serve as lower bounds for the generic learning goals. 
\item We study the concrete learning goals of PAC learning (\cref{section:eec}), reward-free learning (\cref{section:rfec}), model estimation (\cref{section:mdec}), and preference-based learning (\cref{section:pbrl}). By specifying generalized DECs and generalized E2D algorithms, we derive a unified algorithm for each goal with complexity characterized by task-specific DECs, and provide corresponding lower bounds. We further examine the connections of DECs across different tasks\revise{, which implies the relative difficulty of tasks (as illustrated in \cref{figure:relate})}. As a further extension, we give a unified sample-efficient algorithm for learning equilibrium in Markov Games (\crefapp{G}{appendix:markov-game}). 
\item We apply our results to give sample complexity guarantees for reinforcement learning with low-complexity~\emph{\belrep},~\revise{which is a generalization of several known conditions for RL (e.g., Bilinear Class~\citep{du2021bilinear} and Bellman-Eluder dimension~\citep{jin2021bellman}).} Our results recover many existing and yield new guarantees when specialized to concrete RL problems (\cref{figure:inclusion} and \cref{section:bellman-rep}). 
\item We establish connections between E2D and two existing model-based algorithm principles: Model-Based Optimistic Posterior Sampling (MOPS) \citep{agarwal2022model}, and Optimistic Maximum-Likelihood Estimation (OMLE) \citep{liu2022partially}. We show these algorithms enjoy sub-optimality bounds similar to \eetod~under similar structural conditions (\crefapp{H}{section:optimistic}).%
\end{itemize}

\revise{We note that our unified framework and algorithms for general learning goals are a generalization of the pioneering work of \citet{foster2021statistical}. Rather than focusing on tightening the upper and lower bounds of the DEC framework, as explored in concurrent work \citep{foster2023tight}, our main contribution is the development of a unified approach that generalizes across various learning goals, incorporating them under the DEC framework. 
A detailed discussion of our technical innovations over \citet{foster2021statistical} is presented in \crefapp{A.1}{appdx:compare}.
}

\begin{figure}[t]
\center
\includegraphics[width=.6\hsize]{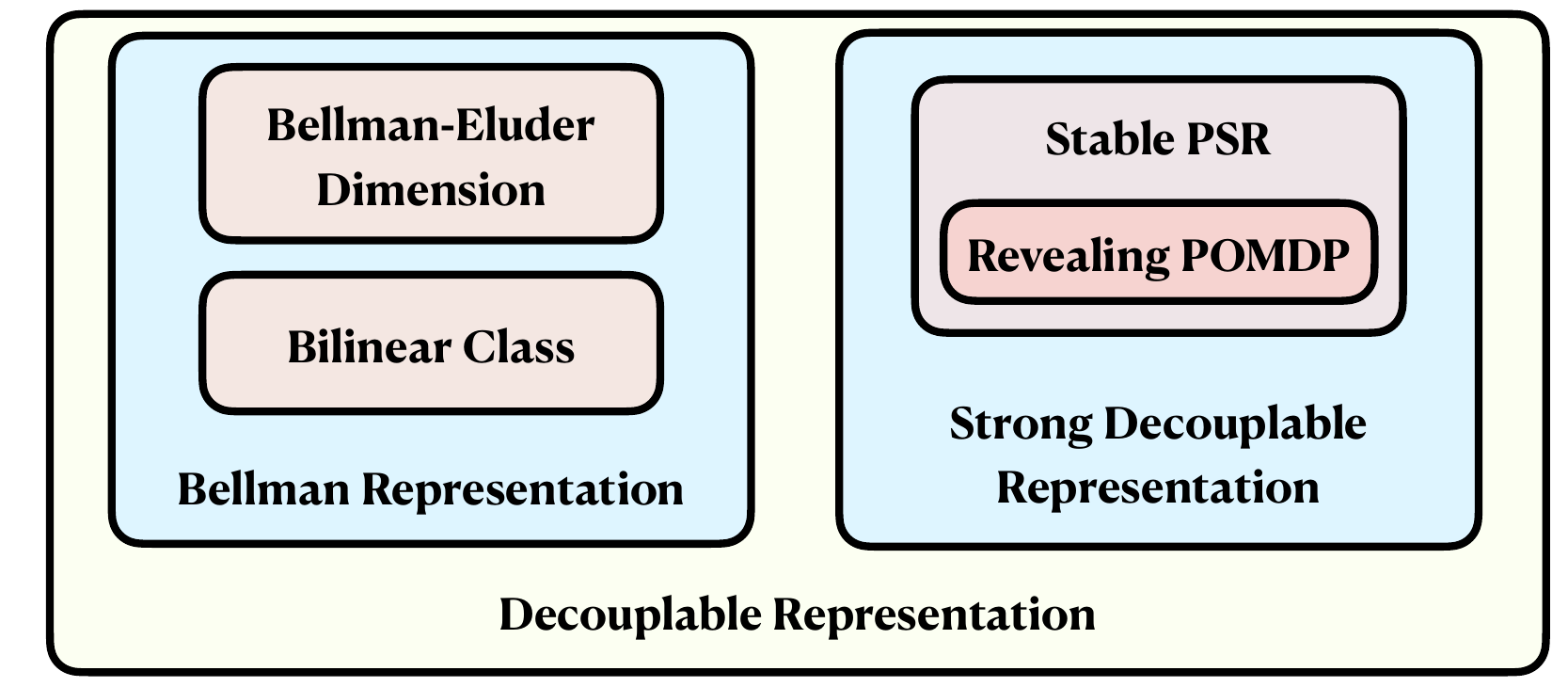}
\caption{
\small Illustration of how the \belrep~recovers existing generic structural conditions, including the model-based version of Bilinear class \citep{du2021bilinear}, Bellman-Eluder dimension \citep{jin2021bellman}, and stable PSR \citep{chen2022partially}. As we discuss in \cref{section:bellman-rep}, \mbelrep~also encompasses various concrete MDP model classes (see e.g. \cref{tab:examples}). 
}
\label{figure:inclusion}
\end{figure}

\subsection{Related work}

\paragraph{Sample-efficient reinforcement learning}
Sample-efficient RL has been extensively studied in the basic model of tabular MDPs~\citep{kearns2002near, brafman2002r, jaksch2010near, dann2015sample, azar2017minimax, agrawal2017optimistic, jin2018q, russo2019worst, dann2019policy, zanette2019tighter, zhang2020almost, domingues2021episodic}. The minimax sample complexity for finite-horizon tabular MDPs has been achieved by both model-based and model-free approaches~\citep{azar2017minimax,zhang2020almost}. When function approximation is involved, the sample complexity of RL has been studied under concrete assumptions about the function class and/or the MDP, such as various forms of linear or low-rank MDPs~\citep{yang2019sample,du2020is,jin2020provably,zanette2020learning,cai2020provably,lattimore2020learning,agarwal2020flambe,ayoub2020model,modi2020sample,zhou2021nearly}, generalized linear function approximation~\citep{wang2021optimism}, Block MDPs~\citep{du2019provably, misra2020kinematic}, parametric MDPs~\citep{kakade2020information, chowdhury2021reinforcement,li2022exponential} and others. More general structural conditions and algorithms have been studied~\citep{russo2013eluder,jiang2017contextual,sun2019model,wang2020reinforcement} and later unified by frameworks such as Bilinear Class~\citep{du2021bilinear} and Bellman-Eluder dimension~\citep{jin2021bellman}~\revise{based on the complexity of Bellman errors, as we illustrate in \cref{figure:inclusion}.}

\paragraph{Decision-estimation coefficient} \citet{foster2021statistical} propose the DEC as a complexity measure for interactive decision-making problems and develop the E2D meta-algorithm as a general model-based algorithm for problems within their DMSO framework, which covers bandits and RL. The DEC framework is further generalized in~\citet{foster2022complexity} to capture adversarial decision-making problems. %
The DEC has close connections to the modulus of continuity \citep{donoho1987geometrizing, donoho1991geometrizing, donoho1991geometrizing3}, information ratio \citep{russo2016information, russo2018learning, lattimore2021mirror}, and Exploration-by-optimization \citep{lattimore2020exploration}. Our work also builds on and extends the DEC framework: we generalize the DEC notions to capture general learning goals, including PAC learning, reward-free learning, all-policy model estimation, and preference-based learning. %

\paragraph{Other general algorithms} 
Posterior sampling (or Thompson Sampling) is another general purpose algorithm for interactive decision making \citep{thompson1933likelihood,russo2019worst, agrawal2017optimistic,zanette2020frequentist,zhang2022feel,agarwal2022model,agarwal2022non}. Frequentist regret bounds for posterior sampling are established in tabular MDPs~\citep{agrawal2017optimistic, russo2019worst} and linear MDPs~\citep{russo2019worst,zanette2020frequentist}. \citet{zhang2022feel} proves regret bounds of a posterior sampling algorithm for RL with general function approximation, which is then generalized in \cite{agarwal2022model,agarwal2022non}.
\crefapp{H.1}{section:mops} discusses the connection between the MOPS algorithm of~\citet{agarwal2022model} and \etod. The OMLE (Optimistic Maximum Likelihood Estimation) algorithm is studied in \citep{liu2022partially, liu2022sample} for Partially Observable Markov Decision Processes; however, the algorithm itself is general and can be used for any problem within the DMSO framework. We provide such a generalization and discuss the connections in~\crefapp{H.2}{section:omle}. Maximum-likelihood-based algorithms for RL are also studied in~\citep{mete2021reward,agarwal2020flambe, uehara2021representation}.

\paragraph{Reward-free learning, model estimation, and preference-based RL}
The reward-free learning framework, introduced by~\citep{jin2020reward}, has been extensively studied in both tabular and function approximation settings~\citep{jin2020reward,zhang2020task,kaufmann2021adaptive, menard2021fast, wang2020reward, zanette2020provably,agarwal2020flambe, liu2021sharp, modi2021model, zhang2021reward,zhang2021near,qiu2021reward,wagenmaker2022reward}. Recent work by~\citet{chen2022statistical} provides a general algorithm for problems with low (reward-free version of) Bellman-Eluder dimension. Our Reward-Free DEC framework generalizes many of these results by offering a unified structural condition and algorithm for reward-free RL with a model class. 

The learning goal of model estimation is also known as system identification in the literature of learning dynamical systems \citep[etc.]{schoukens2019nonlinear, mania2020active}. While model estimation is a stronger learning goal than both PAC RL and reward-free learning, for most RL problem classes this learning goal has not been considered. %
Our All-policy Model-estimation DEC framework advances the understanding of model estimation in RL and provides guarantees for a surprisingly broad range of RL problems.

A line of recent work studies preference-based RL, which has also been used in large-scale practice such as Reinforcement Learning from Human Feedback (RLHF)~\citep{ouyang2022training}.
Several recent works have explored preference-based RL with specific problem structures, including dueling bandit~\citep{dudik2015contextual, novoseller2020dueling, bengs2021preference}, tabular MDP~\citep{xu2020preference, pacchiano2021dueling}, linear mixture MDP~\citep{chen2022human}, and offline preference-based learning~\citep{zhu2023principled}. Our Preference-based DEC framework unifies all these results for online preference-based RL under the unified algorithm \pbrlalg.

\paragraph{Other problems covered by DMSO} 
Besides multi-armed bandits and RL, the DMSO framework of~\citep{foster2021statistical} (and thus all our theories as well) can handle other problems such as contextual bandits~\citep{auer2002nonstochastic, langford2007epoch, chu2011contextual, beygelzimer2011contextual, agarwal2014taming, foster2020beyond, foster2020instance}, contextual reinforcement learning~\citep{abbasi2014online,modi2018markov, dann2019policy, modi2020no}, online convex bandits~\citep{kleinberg2004nearly, bubeck2015bandit,bubeck2016multi, lattimore2020improved}, and non-parametric bandits~\citep{kleinberg2004nearly, auer2007improved,kleinberg2013bandits}. Instantiating our theories in these settings would be an interesting direction for future work. 

\revise{
\paragraph{Concurrent work}
Parallel to this paper, \citet{foster2023tight} propose the constrained version of DECs for \emph{reward-based} PAC learning and no-regret learning. Based on these \emph{constrained DECs}, they derive tighter lower and upper bounds for these two setting, which are remarkable as they are matching up to the factor of the model class complexity and polylogarthmic factors. In particular, the constrained DEC provides a characterization better than our results for PAC RL (\cref{section:eec}). On the other hand, as the proof technique of \citet{foster2023tight} is specific to the reward structure, it may not be directly generalized to the general learning goals studied in this paper. A detailed discussion is deferred to \crefapp{A.2}{appdx:constrained-dec} due to space constraints.
}

\paragraph{Subsequent works}
Since the initial appearance of our work, several related works have built upon the connections we point out in this paper. \citet{chen2022partially} develop a unified complexity measure for partially observable RL and demonstrated that the algorithmic principles (E2D, MOPS, and OMLE) can all be applied with sample-efficient guarantees. In parallel, \citet{liu2023optimistic} also identify OMLE as a generic algorithm for both PAC learning and model estimation across a range of model-based problem classes. \citet{foster2022note} demonstrate that posterior sampling with minimax policy optimization can be characterized by an optimistic variant of DEC, which can also handle model-free RL. Later, \citet{zhong2022gec} propose a general posterior sampling framework combining both model-based and model-based RL. 
For specific learning goals, %
\citet{foster2023complexity} introduce DEC framework for partial monitoring and learning equilibrium in multi-agent settings, which are both generalized PAC learning goals. Furthermore, \citet{wang2023rlhf} study the preference-based learning with OMLE under certain eluder-type structure conditions.

\section{Preliminaries}\label{sec:prelim}

\paragraph{RL as Decision Making with Structured Observations} In this paper, we adopt the general framework of Decision Making with Structured Observations (DMSO)~\citep{foster2021statistical}, which captures broad classes of interactive decision-making problems, including bandits and reinforcement learning.

In DMSO, the environment is described by a \emph{model} $M=(\MP, \bR^M)$, where $\MP \in \Delta(\cO)$ specifies the distribution of the \emph{observation} $o\in\cO$ (which may be a sequence of state-action pairs), and $\bR^M: \cO \to [0, 1]^H$ specifies the conditional mean rewards\footnote{Note that $R^M$ (and thus $M$) only specifies the conditional mean rewards instead of the reward distributions. This differs from the original DMSO framework of \cite{foster2021statistical} in which a model specifies the reward distribution. } of the stochastic \emph{reward vector} $\br\in[0,1]^H$, where $H \in \mathbb N$ is the horizon length. The learner interacts with a model using a \emph{policy} $\pi\in\Pi$. Upon executing $\pi$ in $M$, the learner observes an (observation, reward) tuple $(o, \br)\sim M(\pi)$ as follows:
\begin{enumerate}[leftmargin=2em]
    \item The learner first observes an observation $o\sim \MP(\pi)$ (also denoted as $\P^{M,\pi}(\cdot)\in\Delta(\cO)$) from the distribution specified by the environment $M$ and the policy $\pi$.
    \item Then, the learner receives a (stochastic) reward vector $\br=(r_h)_{h=1}^H \in [0, 1]^H$ with conditional mean rewards $\bR^M(o)=(R^M_h(o))_{h=1}^H\defeq \E_{\br\sim \RM(\cdot|o)}[\br]\in[0,1]^H$, and with independent entries conditioned on $o$. We also assume that $\sum_{h=1}^H R^M_h(o)\in[0,1]$ almost surely under any model $M$. 
\end{enumerate}
We let $f^M(\pi)\defeq \E^{M,\pi}[\sum_{h=1}^H r_h]$ denote the \emph{value} (expected cumulative reward) of $\pi$ under $M$, and let $\pi_M\defeq \argmax_{\pi\in\Pi} f^M(\pi)$ and $f^M(\pi_M)$ denote the optimal policy and optimal value for $M$, respectively.

Episodic Markov Decision Processes (MDPs) provide an example of a DMSO environment. An MDP $M=(H,\cS,\cA,\P^M,r^M)$ can be formulated as a DMSO problem as follows. The observation $o=(s_1,a_1,\dots,s_H,a_H)$ consists of the full state-action trajectory over the episode (so that the observation space is $\cO=(\cS\times\cA)^H$). Upon executing the Markov policy $\pi=\set{\pi_h:\cS\to\Delta(\cA)}_{h\in[H]}$ in $M$, the learner observes $o=(s_1,a_1\dots,s_H,a_H)\sim \MP(\pi)$, which sequentially samples $s_1\sim \P^M_0(\cdot)$, $a_h\sim \pi_h(\cdot|s_h)$, and $s_{h+1}\sim \P^M_{h}(\cdot|s_h, a_h)$ for all $h\in[H]$. The learner then receives a reward vector $\br=(r_h)_{h\in[H]}\in[0,1]^H$, where $r_h=r^M_h(s_h,a_h)$ is the (possibly random) instantaneous reward for the $h$-th step with conditional mean $\E^M[r_h|o]= R_h^M(o) \eqdef R^M_h(s_h,a_h)$ depending only on $(s_h,a_h)$.

\paragraph{Learning goals}
We examine the online decision-making problem under the DMSO framework, where the learner interacts with a fixed (but unknown) ground truth model $M^\star$ for $T$ episodes. Let $\pi^t\in\Pi$ denote the policy executed in the $t$-th episode. In general, $\pi^t$ may be sampled by the learner from a distribution $p^t\in\Delta(\Pi)$ before episode $t$ begins. 

In this paper, we consider a general class of learning goals formalized as follows. The agent is given an (abstract) strategy space $\ssp$. Its objective is to find an output strategy $\pout\in\ssp$ so that $\subopt_{\Ms}(\pout)$ is as small as possible, where $\subopt_M: \ssp \to \R$ is a non-negative and convex functional over the convex space $\ssp$ for model $M\in\cM$. 

The general framework described above encompasses the most well-studied learning goals of PAC reinforcement learning and no-regret learning, with strategy space $\ssp=\Delta(\Pi)$ and sub-optimality $\subopt_{M}(p)\defeq f^M(\pi_M)-\EE_{\pi\sim p} f^M(\pi)$. For PAC RL, the agent can choose any $\pout\in\Delta(\Pi)$ after all interactions are completed. For no-regret learning, the agent needs to choose $\pout=\frac{1}{T}\sum_{t=1}^T p^t$ as the output strategy, and this recovers the standard notion of regret, which measures the cumulative suboptimality of $\set{p^t}_{t\in[T]}$:
\begin{align*}
    \regdm \defeq \sum_{t=1}^T \E_{\pi^t\sim p^t}\brac{f^{M^\star}(\pi_{M^\star}) - f^{M^\star}(\pi^t)}.
\end{align*}
In \cref{sec:gen-dec}, we provide more examples of general learning goals, including reward-free learning, model estimation, and preference-based RL.

This paper focuses on model-based approaches in which we are given a \emph{model class} $\cM$, and we assume \emph{realizability}: $M^\star\in\cM$. Additionally, throughout the majority of the main text of this paper, we assume that the model class is finite: $\abs{\cM}<\infty$ for simplicity of presentation; this assumption can be relaxed using standard covering arguments (see e.g.~\crefapp{C.4}{appendix:ta-covering}), which we do when we instantiate our results to concrete RL problems in \cref{section:bellman-rep} (\cref{tab:examples}).

\paragraph{Divergences} The standard Hellinger distance between probability distributions $\P, \Q$ is $\dH^2(\P,\Q) \defeq \int (\sqrt{d \P /d\mu}-\sqrt{d \Q /d\mu})^2 d\mu$, where $\mu$ is a dominating measure. Based on the Hellinger distance, we define the following squared divergence:
\begin{align}
\label{equation:modified-hellinger}
    \wtdH^2(M(\pi), \oM(\pi)) \defeq \dH^2(\MP(\pi), \oMP(\pi)) + \E_{o\sim \MP(\pi)}\brac{ \big\|\bR^M(o) - \bR^{\oM}(o)\big\|^2_2 },
\end{align}
which is more suitable for studying the model class of interest in this paper. When we consider the goal of reward-free learning, there is no reward function, so that $\wtdH$ naturally degenerates to $\dH$.

The key feature of the divergence $\wtdH^2$ is its separate treatment of observations and rewards: it measures the observation distribution in Hellinger distance, but measures the reward only in the squared $L_2$ distance between the \emph{conditional mean rewards}. This asymmetric treatment is well-suited for RL problems, since estimating mean rewards is typically easier than estimating full reward distributions\footnote{\citet{foster2021statistical} primarily use the standard Hellinger distance (in the tuple $(o, \br)$) in their definition of the DEC, caring about full reward distributions (cf.~\crefapp{A.1.1}{appendix:comparison-dec}\aos{}{ for detailed discussions}).} and is also sufficient in most scenarios.

\subsection{DEC with randomized reference models}
The Decision-Estimation Coefficient (DEC), introduced by~\citet{foster2021statistical}, is a key quantity capturing the regret complexity of sequential decision-making problems. We adopt a specific definition of DEC with randomized reference models (henceforth ``DEC''), which instantiates the general definition of DECs in~\citet[Section 4.3]{foster2021statistical} by employing the divergence function $\wtdH^2$. 
\begin{definition}[DEC with randomized reference models]
\label{definition:dec}
The DEC of $\cM$ with respect to distribution $\omu\in\Delta(\cM)$ (with policy class $\Pi$ and parameter $\gamma>0$) is defined as
\begin{align}
\label{eqn:dec-def}
    \dec_\gamma(\cM, \omu) & \defeq \inf_{p\in\Delta(\Pi)} \sup_{M\in\cM} \E_{\pi\sim p} \E_{\oM\sim\omu}\brac{ f^M(\pi_M) - f^M(\pi) - \gamma\wtdH^2(M(\pi), \oM(\pi))}.
\end{align}
Further define $\odec_\gamma(\cM) \defeq \sup_{\omu\in\Delta(\cM)} \dec_\gamma(\cM, \omu)$. Above, $\wtdH^2$ is the squared divergence given by \cref{equation:modified-hellinger}. %

\end{definition}
The DEC measures the optimal trade-off between two terms: low suboptimality $f^M(\pi_M)-f^M(\pi)$, representing the gap between the optimal and learned policies under the true model $M$; and high information gain $\wtdH^2(M(\pi), \oM(\pi))$, measuring the divergence from the \emph{randomized} reference model $\oM\sim\omu$. 
\revise{To distinguish this DEC from our generalized DECs, we will refer to the DEC for no-regret learning as the ``Regret DEC'' in the subsequent sections. }

\subsection{No-regret algorithm: E2D with Tempered Aggregation}
\label{section:e2d}

\citet{foster2021statistical} propose the \emph{Estimation-to-Decisions} (E2D) algorithm as a meta-algorithm for no-regret learning in any model with a bounded DEC. We present an instantiation of E2D using the \emph{Tempered Aggregation} subroutine (\etod{}; \cref{alg:E2D-TA}).

\paragraph{Algorithm description}
In each episode $t$, \cref{alg:E2D-TA} maintains a randomized model estimator $\mu^t\in\Delta(\cM)$, and uses it to obtain a distribution of policies $p^t\in\Delta(\Pi)$ by minimizing the following risk function (cf. Line~\ref{line:e2d-pt}):
\begin{align}
\label{equation:vtp}
   \hV^{\mu^t}_\gamma(p) \defeq \sup_{M\in\cM}\E_{\pi \sim p} \E_{\oM\sim \mu^t} \brac{ f^M(\pi_M)-f^M(\pi) - \gamma\wtdH^2(M(\pi), \oM(\pi)) }.
\end{align}
The algorithm then samples a policy $\pi^t\sim p^t$, executes $\pi^t$, and observes $(o^t, r^t)$ from the environment (Line~\ref{line:e2d-execute}). Finally, the algorithm updates the randomized model estimator $\mu^t$ using the \emph{\Vovkalg} subroutine, which performs an exponential weights update on $\mu^t(M)$ using a linear combination of the log-likelihood $\log\P^{M,\pi^t}(o^t)$ for the observation, and the negative squared $L_2$ loss $-\| \br^t - \bR^M(o^t)\|_2^2$ for the reward (cf. Line~\ref{line:e2d-ta}).

A key feature of the Tempered aggregation subroutine is the learning rate $\etap\le 1/2$, which is smaller than the learning rate of $\etap=1$ used in Vovk's aggregating algorithm~\citep{vovk1995game}. This smaller learning rate allows Tempered Aggregation to achieve a slightly stronger estimation guarantee than Vovk's algorithm, making it suitable for our purpose. Intuitively, exponential weights with $\exp(\etap \log\P^{M,\pi^t}(o^t))=(\P^{M,\pi^t}(o^t))^{\etap}$ with $\etap\le 1/2$ is equivalent to computing the \emph{tempered posterior} in a Bayesian setting~\citep{bhattacharya2019bayesian,alquier2020concentration} (hence our name ``tempered''), whereas $\etap=1$ computes the \emph{exact} posterior (see~\crefapp{C.1}{appendix:discussion-aggregation} for a derivation).

\begin{algorithm}[t]
	\caption{\etod{}: \etodfull{}}
	\begin{algorithmic}[1]
	\label{alg:E2D-TA}
	\REQUIRE Parameter $\gamma>0$; Learning rate $\etap\in(0, \frac{1}{2})$, $\etar>0$.
	\STATE Initialize $\mu^1\setto {\rm Unif}(\cM)$.
	\FOR{$t=1,\ldots,T$}
    \STATE Set $p^t\setto \argmin_{p\in\Delta(\Pi)}\hV^{\mu^t}_{\gamma}(p)$, where $\hV^{\mu^t}_{\gamma}$ is defined in~\cref{equation:vtp}. \label{line:e2d-pt}
    \STATE Sample $\pi^t\sim p^t$. Execute $\pi^t$ and observe $(o^t,\br^t)$. \label{line:e2d-execute}
    \STATE Update randomized model estimator by \Vovkalg:
    \begin{align}
    \label{equation:tempered-aggregation}
        \mu^{t+1}(M) \; \propto_M \; \mu^{t}(M) \cdot \exp\paren{\etap \log \P^{M,\pi^t}(o^{t}) - \etar\ltwo{\br^t - \bR^M(o^t)}^2 }.
    \end{align}
    \label{line:e2d-ta}
    \ENDFOR
   \end{algorithmic}
\end{algorithm}

With the estimation guarantee of the Tempered aggregation subroutine, \cref{alg:E2D-TA} achieves the following regret guarantee; see~\crefapp{C.3}{appendix:general-e2d} for the proof.
\begin{proposition}[Regret guarantee for \etod{}]
\label{thm:E2D-TA}
Choosing $\etap=\etar=1/3$,~\cref{alg:E2D-TA} achieves the following with probability at least $1-\delta$:
\begin{align*}
    \regdm\leq T\odec_{\gamma}(\cM) + 10\gamma\cdot \log(\abs{\cM}/\delta).
\end{align*}
\end{proposition}
In typical problems (see examples in~\cref{section:bellman-rep}), the DEC scales as $\odec_\gamma(\cM)\le \tO(d/\gamma)$, where $d$ is some structural complexity (such as Eluder dimension) of $\cM$. In this case,~\cref{thm:E2D-TA} with the optimal choice of $\gamma>0$ implies a regret bound
\begin{align}
\label{eqn:E2D-TA-regret}
    \regdm\asymp \inf_{\gamma>0}\set{T\odec_\gamma(\cM) + \gamma\log(\abs{\cM}/\delta)} \le \tO\paren{ \sqrt{d\log\abs{\cM}\cdot T} }, 
\end{align}
which has the optimal $\sqrt{T}$ scaling in $T$, and depends on both the structural complexity $d$ as well as log-cardinality of the model class $\cM$. The proof of~\cref{thm:E2D-TA} builds upon the analysis of E2D meta-algorithms~\citep{foster2021statistical} combined with the online estimation guarantee for the \Vovkalg~subroutine (\crefapp{C.2}{corollary:restate-lemma-tempered-aggregation}). 

We remark that \etod~is slightly different from the instantiations of the E2D meta-algorithms in~\citet{foster2021statistical}, which use either Vovk's aggregating algorithm or problem-specific estimation subroutines outputting a \emph{deterministic} model estimator. While the original instantiation of E2D in \citet{foster2021statistical} may also achieve the upper bound of \cref{thm:E2D-TA}, we here introduce the modified algorithm \etod{} since it could easily generalize to learning goals other than no-regret learning (\cref{sec:gen-dec}), the main purpose of this paper.

\section{PAC reinforcement learning via PAC DEC}
\label{section:eec}

The original DEC definition and the E2D algorithm were designed for no-regret learning. In this paper, we generalize the DEC framework to support additional reinforcement learning goals beyond no-regret. One such goal is PAC learning, which requires the agent to identify a near-optimal policy after exploring for $T$ episodes. Unlike no-regret learning, PAC learning does not require the executed policies $\set{\pi^t}_{t=1}^T$ (the ``exploration policies'') during learning to be of high quality. To capture the complexity of PAC learning, we introduce the following definition PACDEC:
\begin{definition}[PACDEC]\label{def:edec}
The PAC Decision-Estimation Coefficient (PACDEC) of a model-class $\cM$ with respect to $\omu\in\Delta(\cM)$ and parameter $\gamma>0$ is defined as
\begin{align*}
    \eec_{\gamma}(\cM,\omu) \defeq \inf_{\substack{\pexp,\pout\in\Delta(\Pi)}}\sup_{M\in \cM} \bypart{ \EE_{\pi \sim \pout}\left[f^M(\pi_M)-f^M(\pi)\right] \vspace{-200pt}}{ -\gamma \EE_{\pi \sim \pexp,\oM \sim \omu}\left[ \wtdH^2(M(\pi),\oM(\pi))\right] }.
\end{align*}
Further, define $\oeec_{\gamma}(\cM) \defeq \sup_{\omu\in\Delta(\cM)} \eec_{\gamma}(\cM,\omu)$.
\end{definition}
The main distinction between the PACDEC and the Regret DEC (\cref{definition:dec}) is the infimum being taken over separate policy distributions $\pexp$ and $\pout$. Specifically, $\pexp$ (the ``exploration policy distribution'') is used in the information gain term, while $\pout$ (the ``output policy distribution'') appears in the suboptimality term. In contrast, Regret DEC constrains both terms to use the same policy distribution. This captures the differing goals between PAC learning and no-regret learning: \revise{PAC learning does not mandate that exploration policies achieve low suboptimality.}

\begin{algorithm}[t]
	\caption{\eetod} 
	\begin{algorithmic}[1]
	\label{alg:E2D-exp}
	\REQUIRE Parameter $\gamma>0$; Learning rate $\etap\in(0, \frac{1}{2})$, $\etar>0$.
	\STATE Initialize $\mu^1\setto {\rm Unif}(\cM)$.
	\FOR{$t=1,\ldots,T$}
    \STATE Set $(\pexp^t,\pout^t)\setto\argmin_{(\pexp,\pout)\in\Delta(\Pi)^2}\hV^{\mu^t}_{\pac,\gamma}(\pexp,\pout)$, where $\hV^{\mu^t}_{\pac}$ is defined in~\cref{equation:vtp-exp}.
    \label{line:ee2d-pt}
    \STATE Sample $\pi^t\sim \pexp^t$. Execute $\pi^t$ and observe $(o^t,\br^t)$. %
    \STATE Update randomized model estimator by \Vovkalg:
    \begin{align}
    \label{equation:tempered-aggregation-e2dexp}
        \mu^{t+1}(M) \; \propto_M \; \mu^{t}(M) \cdot \exp\paren{\etap \log \P^{M,\pi^t}(o^{t}) - \etar\ltwo{\br^t - \bR^M(o^t)}^2 }.
    \end{align}
    \label{line:ee2d-ta}
    \ENDFOR
    \ENSURE Policy $\hatpout:=\frac{1}{T}\sum_{t=1}^T \pout^t$.
   \end{algorithmic}
\end{algorithm}

\paragraph{Algorithm and theoretical guarantee}
We propose the following \eetod~algorithm (\cref{alg:E2D-exp}) for PAC RL. Define risk function $\hV_{\pac, \gamma}^{\mu^t}:\Delta(\Pi)\times\Delta(\Pi)\to\R$ as
\begin{align}\label{equation:vtp-exp}
\begin{aligned}
    \hV^{\mu^t}_{\pac,\gamma}(\pexp,\pout):=\sup_{M\in\cM}
    \bypart{\EE_{\pi \sim \pout}\left[f^M(\pi_M)-f^M(\pi)\right]}{-\gamma \EE_{\pi \sim \pexp,\hM^t\sim\mu^t}\left[ \wtdH^2(M(\pi),\hM^t(\pi))\right]}.
\end{aligned}
\end{align}
\eetod~(\cref{alg:E2D-exp}) is similar as \etod~(\cref{alg:E2D-TA}), except that in each iteration, \cref{alg:E2D-exp} finds $(\pexp^t,\pout^t)$ that jointly minimizes $\hV^{\mu^t}_{\pac,\gamma}(\cdot,\cdot)$ (Line~\ref{line:ee2d-pt}), executes $\pi^t\sim\pexp^t$ to collect data, and returns $\hatpout=\frac{1}{T}\sum_{t=1}^T \pout^t$ as the output policy after $T$ episodes. The randomized model estimators $\set{\mu^t}_{t=1}^T$ are updated using \Vovkalg, same as in~\cref{alg:E2D-TA}. We show that \eetod~achieves the following PAC sub-optimality bound.

\begin{theorem}[PAC RL with \eetod]
\label{thm:E2D-exp}
Choosing $\etap=\etar=1/3$, \cref{alg:E2D-exp} achieves the following PAC guarantee with probability at least $1-\delta$:
\begin{align*}
   \suboptpac \defeq  f^{M^\star}(\pi_{M^\star})-\EE_{\pi\sim \hatpout}\left[f^{M^\star}(\pi)\right]\leq \oeec_{\gamma}(\cM)+10\frac{\gamma \log(\abs{\cM}/\delta)}{T}.
\end{align*}
\end{theorem}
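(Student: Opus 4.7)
The plan is to mirror the proof of \cref{thm:E2D-TA} (no-regret \etod), but to decouple exploration from exploitation by leveraging the fact that in \cref{alg:E2D-exp}, the data is collected from $\pexp^t$ while the output quality is measured via $\pout^t$. The argument will be a per-round EDEC bound combined with the online estimation guarantee for \Vovkalg~(\cref{lemma:tempered-aggregation}) applied to the exploration policies.

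First, I will observe that for each $t$, because $\mu^t\in\Delta(\cM)$ is a valid randomized reference and $(\pexp^t,\pout^t)$ is the joint minimizer of $\hV^{\mu^t}_{\pac,\gamma}$, the definition of EDEC gives
\begin{align*}
\hV^{\mu^t}_{\pac,\gamma}(\pexp^t,\pout^t) \;=\; \inf_{(\pexp,\pout)}\hV^{\mu^t}_{\pac,\gamma}(\pexp,\pout) \;=\; \eec_\gamma(\cM,\mu^t) \;\le\; \oeec_\gamma(\cM).
\end{align*}
Specializing the supremum inside $\hV^{\mu^t}_{\pac,\gamma}$ to the realizable choice $M=M^\star\in\cM$ and rearranging yields the per-round inequality
\begin{align*}
\EE_{\pi\sim\pout^t}\!\brac{f^{M^\star}(\pi_{M^\star})-f^{M^\star}(\pi)} \;\le\; \oeec_\gamma(\cM) \;+\; \gamma\,\EE_{\pi\sim\pexp^t,\,\hM\sim\mu^t}\!\brac{\wtdH^2(M^\star(\pi),\hM(\pi))}.
\end{align*}

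Second, I will sum the previous display over $t=1,\ldots,T$. The first term contributes $T\cdot\oeec_\gamma(\cM)$. The second (information-gain) term is exactly the quantity $\EstwtH$ controlled by \cref{lemma:tempered-aggregation}, because the algorithm samples $\pi^t\sim\pexp^t$ and updates $\mu^{t+1}$ via precisely the \Vovkalg~rule \cref{equation:tempered-aggregation-e2dexp} with the stated choice $\etap=\etar=1/3$ (which satisfies $4\etap+\etar<2$). Applying the lemma therefore gives, with probability at least $1-\delta$,
\begin{align*}
\sum_{t=1}^T \EE_{\pi\sim\pexp^t,\,\hM\sim\mu^t}\!\brac{\wtdH^2(M^\star(\pi),\hM(\pi))} \;\le\; 10\log(|\cM|/\delta).
\end{align*}

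Third, I will divide by $T$ and identify the left-hand side with $\subopt$. By linearity of expectation and the definition $\hatpout=\tfrac{1}{T}\sum_{t=1}^T \pout^t$,
\begin{align*}
\tfrac{1}{T}\sum_{t=1}^T \EE_{\pi\sim\pout^t}\!\brac{f^{M^\star}(\pi_{M^\star})-f^{M^\star}(\pi)} \;=\; f^{M^\star}(\pi_{M^\star}) - \EE_{\pi\sim\hatpout}\brac{f^{M^\star}(\pi)} \;=\; \subopt,
\end{align*}
which combined with the two previous displays yields the claimed bound $\subopt\le\oeec_\gamma(\cM)+10\gamma\log(|\cM|/\delta)/T$.

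The proof has no substantive obstacle: the real work is already invested in \cref{lemma:tempered-aggregation}, and the EDEC was designed precisely so that the per-round decomposition above is tight. The only point requiring care is that the estimation guarantee must be applied to the \emph{exploration} policies $\pexp^t$ (these are the ones generating the training data $\pi^t\sim\pexp^t$), whereas the suboptimality is measured on $\pout^t$; this asymmetric pairing is exactly what the definition of $\eec_\gamma$ allows and what distinguishes this bound from the no-regret counterpart in \cref{thm:E2D-TA}.
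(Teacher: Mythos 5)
Your proposal is correct and follows essentially the same route as the paper's proof in the appendix: specialize the supremum in $\hV^{\mu^t}_{\pac,\gamma}$ to the realizable $M^\star$, identify the minimized risk with $\eec_\gamma(\cM,\mu^t)\le\oeec_\gamma(\cM)$, sum over $t$, and control the information-gain term via the \Vovkalg~estimation guarantee applied to the exploration policies. The only cosmetic difference is that you state the per-round inequality before summing, whereas the paper manipulates the full sum directly; the decomposition and the key lemma are identical.
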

For problems with $\oeec_\gamma(\cM)\lesssim \tbO{d/\gamma}$,~\cref{thm:E2D-exp} shows that \eetod~achieves $\suboptpac\le \tbO{\sqrt{d\log\abs{\cM}/T}}$ (by tuning $\gamma$), which implies an $\tbO{d\log\abs{\cM}/\eps^2}$ sample complexity for learning an $\eps$ near-optimal policy.

In the literature, PAC RL algorithms that decouple exploration from output policies have been developed across problems \citep{jiang2017contextual,du2021bilinear,liu2022partially}. These methods typically design exploration policies manually, often by appending uniform actions or leveraging domain knowledge. In contrast, \eetod~automatically \emph{learns} the best exploration policy $p_{\exp}\in\Delta(\Pi)$ by minimizing~\cref{equation:vtp-exp}. This bypasses the need for specialized, hand-engineered exploration, considerably simplifying algorithm design.

\paragraph{Lower bound}
We show that PACDEC provides an information-theoretic lower bound for PAC RL. This result generalizes the regret lower bounds in \citet{foster2021statistical}.
\begin{proposition}[Lower bound for PAC RL]
\label{prop:eec-lower-bound-demo}
For any model class $\cM$, $T\in\Z_{\ge 1}$, and any $T$-round algorithm $\Algo$, there exists a $M^\star\in\cM$ such that
\begin{align*}
\EE^{M^\star,\Algo}\brac{\suboptpac} \geq \frac13 \cdot \eec_{\gammaT}(\cM),
\end{align*}
where $\gammaT\asymp T$ is defined in~\cref{prop:gen-lower-bound-demo}. 
\end{proposition}
The upper and lower bounds in \cref{thm:E2D-exp} and~\cref{prop:eec-lower-bound-demo} together demonstrate that a controlled PACDEC is both necessary and sufficient for PAC RL, analogous to the role of the Regret DEC in no-regret learning~\citep{foster2021statistical}. A more detailed discussion on the necessity and sufficiency of PACDEC (along with the proof of \cref{thm:E2D-exp} and \cref{prop:eec-lower-bound-demo}) is deferred to \cref{sec:gen-dec}, where we discuss the implications of our lower bounds for any general learning goal. \revise{We also note that tighter lower bounds can be derived based on a constrained version of the PACDEC~\citep{foster2023tight} (as discussed in \crefapp{A.2}{appdx:constrained-dec}).}

\subsection{Relationship with DEC and no-regret learning}

One potential approach for deriving PAC upper bound using the Regret DEC framework (as opposed to the PACDEC) is to first obtain a regret bound in terms of DEC as in~\cref{thm:E2D-TA}, and then apply the standard online-to-batch conversion (e.g.~\citet{jin2018q}). Conversely, we can also convert a PAC algorithm to a no-regret learning algorithm using the standard \emph{explore-then-commit} procedure. Therefore, up to polynomial factors, PAC learning and no-regret learning can be viewed as equivalent. A similar relationship holds between the PACDEC and Regret DEC, as formalized in the following proposition.

\begin{proposition}[Relationship between PACDEC and Regret DEC]\label{prop:eec-to-dec}
For any model class $\cM$, $\alpha\in(0,1), \gamma>0$ and $\omu\in\Delta(\cM)$, we have 
\begin{align*}
\oeec_\gamma(\cM) \stackrel{(i)}{\leq} \odec_{\gamma}(\cM) \stackrel{(ii)}{\leq} \inf_{\alpha\in(0,1)}\set{\alpha+(1-\alpha)\oeec_{\gamma\alpha/(1-\alpha)}(\cM)}.
\end{align*}
\end{proposition}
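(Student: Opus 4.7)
The plan is to handle inequalities (i) and (ii) separately, both via direct manipulation of the definitions. For inequality (i), I would observe that restricting the EDEC's double infimum to the diagonal $\pexp = \pout = p$ recovers exactly the DEC objective for any fixed $M$ and $\omu$; hence $\eec_\gamma(\cM,\omu) \le \dec_\gamma(\cM,\omu)$ pointwise in $\omu$, and taking suprema over $\omu \in \Delta(\cM)$ yields $\oeec_\gamma(\cM) \le \odec_\gamma(\cM)$.

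For inequality (ii), the key idea is a convex-combination trick: given any feasible pair $(\pexp, \pout)$ for the EDEC with parameter $\gamma' \defeq \gamma\alpha/(1-\alpha)$, I would form the single policy distribution $p_\alpha \defeq (1-\alpha)\pout + \alpha \pexp \in \Delta(\Pi)$ and plug it into the DEC objective. Expanding by linearity of expectation gives
\begin{align*}
& \E_{\pi\sim p_\alpha}\E_{\oM\sim\omu}\brac{f^M(\pi_M) - f^M(\pi) - \gamma \wtdH^2(M(\pi), \oM(\pi))} \\
&= (1-\alpha)\E_{\pi\sim\pout}[f^M(\pi_M) - f^M(\pi)] + \alpha \E_{\pi\sim\pexp}[f^M(\pi_M) - f^M(\pi)] \\
&\quad - (1-\alpha)\gamma\,\E_{\pi\sim\pout,\oM\sim\omu}[\wtdH^2(M(\pi),\oM(\pi))] - \alpha\gamma\,\E_{\pi\sim\pexp,\oM\sim\omu}[\wtdH^2(M(\pi),\oM(\pi))].
\end{align*}

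Next, I would invoke the normalization $\sum_h R^M_h(\cdot)\in[0,1]$ from the preliminaries, which gives the crude uniform bound $f^M(\pi_M) - f^M(\pi) \le 1$; this lets me upper-bound the $\alpha\,\E_{\pi\sim\pexp}[\cdot]$ suboptimality term by $\alpha$. Dropping the non-negative $\pout$-divergence term, the RHS is at most
\begin{align*}
\alpha + (1-\alpha)\brac{\E_{\pi\sim\pout}[f^M(\pi_M) - f^M(\pi)] - \gamma'\, \E_{\pi\sim\pexp,\oM\sim\omu}[\wtdH^2(M(\pi),\oM(\pi))]}.
\end{align*}
Taking $\sup_{M\in\cM}$ on both sides (the supremum is now taken of a single expression in $M$, which yields exactly the EDEC's inner objective at parameter $\gamma'$), and noting that $p_\alpha$ is just one admissible element in the DEC's infimum over $\Delta(\Pi)$, gives
\begin{align*}
\dec_\gamma(\cM,\omu) \le \alpha + (1-\alpha)\sup_{M}\brac{\cdots} .
\end{align*}
Infimizing the right-hand side over the pair $(\pexp,\pout)$ yields $\alpha + (1-\alpha)\eec_{\gamma'}(\cM,\omu)$, proving the per-$\omu$ statement. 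The second displayed inequality of the proposition then follows by taking $\sup_{\omu\in\Delta(\cM)}$ on both sides and then $\inf_{\alpha\in(0,1)}$.

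No substantial obstacle is expected here: the argument is a routine convex-combination manipulation. The only subtle point worth flagging is that the step of absorbing the $\pexp$-suboptimality into an additive $\alpha$ term genuinely uses the $[0,1]$-normalization of cumulative rewards; without it, one would only get an $O(H)$ additive error instead of $\alpha$, and the rescaling of $\gamma$ to $\gamma\alpha/(1-\alpha)$ would lose a factor of $H$.
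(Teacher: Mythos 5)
Your proof is correct and follows essentially the same route as the paper's: both establish (i) by restricting the EDEC infimum to the diagonal $\pexp=\pout$, and both establish (ii) by plugging the mixture $p_\alpha=(1-\alpha)\pout+\alpha\pexp$ into the DEC objective, bounding the $\pexp$-suboptimality by $\alpha$ via the $[0,1]$ reward normalization, dropping the nonnegative $\pout$-divergence term, and recognizing the remainder as $(1-\alpha)$ times the EDEC objective at $\gamma\alpha/(1-\alpha)$. If anything, your version is slightly cleaner in deferring the infimum over $(\pexp,\pout)$ to the end rather than fixing a specific minimizer upfront.
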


Inequality (i) states that for any model class with a bounded DEC, the same upper bound applies to the PACDEC. This means that \eetod~achieves a sample complexity no worse than that of \etod~(\cref{thm:E2D-exp} \&~\cref{thm:E2D-TA}). On the other hand, the converse inequality (ii) generally provides a \emph{lossy} conversion---For a model class with low PACDEC, its DEC will also be bounded but with a slightly worse rate. See \crefapp{D.2}{appendix:eec-vs-dec} for an in-depth discussion of the relationship between the PACDEC and Regret DEC. 

Although PACDEC and DEC can be viewed as equivalent up to polynomial factors, the PACDEC (rather than the DEC) is the natural complexity measure for PAC RL, as it \emph{tightly} captures the sample complexity of PAC RL (whereas the Regret DEC cannot). To illustrate, the following proposition demonstrates that for a simple class of structured bandits, the best achievable sample complexity through no-regret learning and bounding the Regret DEC is $\tbO{1/\eps^3}$, since the minimax optimal regret scales as $T^{2/3}$. In contrast, PAC RL achieved by bounding the PACDEC gives the tight sample complexity of $\tbO{1/\eps^2}$.

\begin{proposition}[Informal]\label{prop:rev-bandit}
    For every $d\geq 1$, there exists $\cM$ a class of ``bandits with revealing actions'' such that (up to logarithmic factors)
    \begin{align*}
        \eec_{\gamma}(\cM)\asymp \frac{d}{\gamma}, \qquad
        \dec_{\gamma}(\cM)\asymp \min\set{\sqrt{\frac{d}{\gamma}}, \frac{2^d}{\gamma}}.
    \end{align*}
    In particular, any algorithm on $\cM$ has regret of $\Om{\min\set{T^{2/3},\sqrt{2^d T}}}$. %
\end{proposition}
The details are contained in \crefapp{D.3}{appdx:proof-rev-bandit}.

\section{Generalizing DEC: A unifying framework for RL tasks}\label{sec:gen-dec}

In this section, we generalize the DEC framework to capture the intrinsic sample complexity across a range of reinforcement learning goals beyond no-regret RL and PAC RL. These general learning goals include reward-free RL, all-policy model-estimation tasks, and preference-based RL~\revise{(\cref{tab:goals})}. Before diving into the details of each specific learning goal in \cref{section:rfec} - \ref{section:pbrl}, we first introduce a unified framework encompassing all of these examples. 

As discussed in \cref{sec:prelim}, a general learning goal of RL (denoted as $\GOAL$) is associated with a convex strategy space $\ssps$ (which could depend on the problem class $(\cM,\Pi)$) and a convex functional $\subopts_M: \ssps \to \R$. To capture the sample complexity of such a learning goal, we define the following \DECt~(pronounced as ``generalized DEC''; here the prefix $\GOAL$ indicates that this DEC is tied to the general learning goal $\GOAL$): 
\begin{definition}\label{def:gen-dec}
Consider a general learning goal $\GOAL$, specified by a given sub-optimality measure $\subopts_M$ and a decision domain $\csps\subseteq \Delta(\Pi)\times \ssps$. For this learning goal, the \DECt~of a model class $\cM$ is defined as
\begin{align*}
    \DEC_{\gamma}(\cM,\omu) \defeq & \inf_{(\pexp,\pout)\in\csps}\sup_{M\in \cM}\subopts_{M}(\pout)-\gamma \EE_{\pi \sim \pexp,\oM \sim \omu}\brac{ \wtdH^2\paren{M(\pi),\oM(\pi)}},
\end{align*}
for any $\omu\in\Delta(\cM)$ and $\gamma>0$. %
Further, we define $\DEC_{\gamma}(\cM) \defeq \sup_{\omu\in\Delta(\cM)} \DEC_{\gamma}(\cM,\omu)$.
\end{definition}

\cref{def:gen-dec} generalizes the standard DEC (\cref{definition:dec}) as the complexity for no-regret RL, as well as PACDEC (\cref{def:edec}) as the complexity for PAC RL. 
\revise{
It measures the trade-off between two terms: suboptimality term $\subopts_{M}(\pout)$, representing the sub-optimality of the output policy $\pout$ under the model $M$; and the information gains $\wtdH^2(M(\pi), \oM(\pi))$ under $\pi\sim \pexp$, measured by the divergence from the reference model. The constraint $(\pexp,\pout)\in \csps$ corresponds to the learning goal.
}
In this paper, we will consider learning goals $\GOAL$ which fall into one of the following two categories: %
\begin{itemize}
\item[(1)] Generalized PAC learning: We set $\csps = \csppac:=\Delta(\Pi)\times \ssps$. In this scenario, the output strategy $\pout$ is not tied to the exploration policy $\pexp$. The goal of the agent is to find an output strategy $\pout\in\ssps$ minimizing the \emph{sub-optimality}:
\begin{align*}
    \subopts\defeq \subopt^{\mathsf{G}}_{\Ms}(\pout).
\end{align*}
An example is PAC RL from \cref{section:eec}, with $\subopts_{M}(p)\defeq f^M(\pi_M)-\EE_{\pi\sim p} f^M(\pi)$. 
\item[(2)] Generalized No-regret learning: We set the strategy space $\ssps=\Delta(\Pi)$ and we set $\csps = \cspreg :=\set{(\pexp,\pout)=(p, p): p\in\Delta(\Pi)}$. The goal of the agent is to minimize \emph{generalized regret}:
\begin{align}
\textstyle \Regs\defeq \sum_{t=1}^T \subopts_{\Ms}(p^t), 
\end{align}
where $p^t\in\Delta(\Pi)$ is the strategy the agent executes in the $t$-th round. An example is the standard no-regret RL from \cref{sec:prelim}, upon choosing $\subopts_{M}(p)\defeq f^M(\pi_M)-\EE_{\pi\sim p} f^M(\pi)$ (see also discussion in \crefapp{E.3}{appdx:specifying}).
\end{itemize}

\revise{For example, reward-free learning (\cref{section:rfec}) and model estimation (\cref{section:mdec}) are both generalized PAC learning goals, and we formulate preference-based learning naturally as a generalized no-regret learning goal (\cref{section:pbrl}). For these specific learning goals, we summarize the definitions of the corresponding strategy space $\ssp$ and suboptimality measure $\subopts$ in \cref{tab:goals}. }

\begin{table}[t]
\renewcommand{\arraystretch}{1.3}
    \centering
    \begin{tabular}{|c|c|c|c|c|}
        \hline
        Learning goal $\GOAL$ & \DECt & $\ssps$ & $\csps$ & $\subopts$ \\\hline
        No-regret RL & DEC & $\Delta(\Pi)$ & $\cspreg$ & \multirow{2}{*}{$f^M(\pi_M)-\EE_{\pi\sim p}\big[f^M(\pi)\big]$} \\\cline{1-4}
        PAC RL & \!PACDEC\! & $\Delta(\Pi)$ & $\csppac$ & \\\hline
        Reward-free & RFDEC & \!$\cR\to\Delta(\Pi)$\! & $\csppac$ & $\sup_{R\in\cR} \big\{ f^{M,R}(\pi_{M,R})-\EE_{\pi\sim p(R)}\big[f^{M,R}(\pi)\big] \big\}$ \\\hline
        \!Model-estimation\! & \!AMDEC\! & $\Delta(\cM)$ & $\csppac$ & $\max_{\pi}\EE_{\hM\sim p}\big[ \wdTV\big( M(\pi), \hM(\pi) \big) \big]$ \\\hline
        \!Preference-based\! & PBDEC & $\Delta(\Piab)$ & $\cspreg$ & \!$\sup_{\pis} \EE_{(\pi_1,\pi_2)\sim p}\big[ \Cmp^M(\pi_\star, \pi_{1})+\Cmp^M(\pi_\star, \pi_{2})-1 \big]$\! \\\hline %
    \end{tabular}
    \caption{Definition of the strategy space $\ssps$, decision domain $\csps$ and sub-optimality measure $\subopts$ for various learning goals $\GOAL$.~\revise{The relationship of these learning goals are illustrated in \cref{figure:relate}. } For details, see the corresponding sections and also the discussions in the \crefapp{E.3}{appdx:specifying}.  }
    \label{tab:goals}
\end{table}

\paragraph{Algorithm and theoretical guarantee} For the general learning goal $\GOAL$, we propose the \getod~algorithm (\cref{alg:E2D-gen}). We define risk function $\hV_{\GOAL,\gamma}^{\mu^t}:\csps\to\R$ as
\begin{align}\label{equation:vtp-gen}
   \hV^{\mu^t}_{\GOAL,\gamma}(\pexp,\pout):=\sup_{M\in \cM}\subopts_{M}(\pout)-\gamma \EE_{\pi \sim \pexp,\oM \sim \omu}\brac{ \wtdH^2\paren{M(\pi),\oM(\pi)}}.
\end{align}
\getod~(\cref{alg:E2D-gen}) generalizes both \etod~(\cref{alg:E2D-TA}) and \eetod~(\cref{alg:E2D-exp}), where in each iteration, \getod~finds $(\pexp^t,\pout^t)$ that jointly minimizes the objective $\hV^{\mu^t}_{\GOAL,\gamma}(\cdot,\cdot)$ (Line~\ref{line:ge2d-pt}), executes $\pi^t\sim\pexp^t$ to collect data, and returns $\hatpout=\frac{1}{T}\sum_{t=1}^T \pout^t$ as the output policy after $T$ episodes.

\begin{algorithm}[t]
	\caption{\getod} 
	\begin{algorithmic}[1]
	\label{alg:E2D-gen}
	\REQUIRE Learning goal $\GOAL$, model class $\cM$, parameter $\gamma>0$, learning rate $\etap, \etar\geq 0$.
	\FOR{$t=1,\ldots,T$}
    \STATE Set $(\pexp^t,\pout^t)\setto\argmin_{(\pexp,\pout)\in\csps}\hV^{\mu^t}_{\GOAL,\gamma}(\pexp,\pout)$, where $\hV^{\mu^t}_{\GOAL,\gamma}$ is defined in~\cref{equation:vtp-gen}.
    \label{line:ge2d-pt}
    \STATE Sample $\pi^t\sim \pexp^t$. Execute $\pi^t$ and observe $(o^t,\br^t)$.
    \STATE Update randomized model estimator $\mu^{t+1}$ by \Vovkalg~subroutine:
    \begin{align}
    \label{eqn:E2D-gen-TA}
        \mu^{t+1}(M) \; \propto_M \; \mu^{t}(M) \cdot \exp\paren{\etap \log \P^{M,\pi^t}(o^{t}) - \etar\ltwo{\br^t - \bR^M(o^t)}^2 }.
    \end{align}
    \label{line:ge2d-ta}
    \ENDFOR
    \ENSURE Strategy $\hatpout:=\frac{1}{T}\sum_{t=1}^T \pout^t$.
   \end{algorithmic}
\end{algorithm}

\begin{theorem}%
\label{thm:E2D-gen}
Consider \cref{alg:E2D-gen} which instantiates the \Vovkalg~subroutine \cref{eqn:E2D-gen-TA} with proper choice of learning rate $(\etap,\etar)$ (as in \crefa{Corollary C.2}{corollary:restate-lemma-tempered-aggregation}). Then, for generalized PAC learning goals, \cref{alg:E2D-gen} achieves the following guarantee with probability at least $1-\delta$:
\begin{align*}
   \subopts\leq \DEC_{\gamma}(\cM)+10\frac{\gamma \log(\abs{\cM}/\delta)}{T}.
\end{align*}
Furthermore, for generalized no-regret learning goals, \cref{alg:E2D-gen} achieves the following guarantee with probability at least $1-\delta$:
\begin{align*}
    \Regs\leq T\cdot \DEC_{\gamma}(\cM)+10\gamma \log(\abs{\cM}/\delta).
\end{align*}
\end{theorem}
To the best of our knowledge, \cref{alg:E2D-gen} offers the first template for designing algorithms for general learning goals with statistical efficiency guarantees. Consider a problem with $\DEC_\gamma(\cM)\leq \tbO{d/\gamma}$. \cref{thm:E2D-gen} then implies that for generalized PAC learning goals, \getod~achieves $\subopts\le\eps$ within $\tbO{d\log\abs{\cM}/\eps^2}$ episodes; for generalized no-regret learning goals, \getod~achieves a regret bound of $\Regs\leq\tO{\sqrt{d\log\abs{\cM}T}}$. The proof of \cref{alg:E2D-gen} is provided in \crefapp{E.1}{appendix:proof-gen-dec}.

\paragraph{Lower bound}
We show that \DECt~also gives an information-theoretic lower bound for the corresponding learning goal $\GOAL$ (proof in \crefapp{E.2}{appendix:proof-gen-lower-bound}). %

\begin{theorem}\label{prop:gen-lower-bound-demo}
Consider a general learning goal $\GOAL$, a model class $\cM$, and $T \geq 1$ a fixed integer. Suppose that the sub-optimality $\subopts_M(p)\in[0,1]$ for any model $M\in\cM$ and $p\in\ssp$. Define $\gammaT=c_0\log(2T)\cdot T$, where $c_0$ is a large absolute constant.
Then, for any $T$-round algorithm $\Algo$, the following holds:
\begin{enumerate}
\item[(1)] For $\csps=\csppac$ (generalized PAC learning), there exists a $M^\star\in\cM$ such that
\begin{align*}
\EE^{M^\star,\Algo}\brac{\subopts} \geq \frac13\cdot \DEC_{\gammaT}(\cM),
\end{align*}

\item[(2)] For $\csps=\cspreg$ (generalized no-regret learning), there exists a $M^\star\in\cM$ such that
\begin{align*}
\EE^{M^\star,\Algo}\brac{\Regs} 
\geq&~ \frac{T}{3}\cdot \DEC_{\gammaT}(\cM).
\end{align*}
\end{enumerate}
\end{theorem}
The upper and lower bounds in \cref{thm:E2D-gen} and~\cref{prop:gen-lower-bound-demo} together demonstrate that a controlled \DECt~is both necessary and sufficient for the corresponding learning goal $\GOAL$. Instantiated to concrete learning goals (such as PAC learning, reward-free learning, preference-based learning) also shows that the corresponding DEC is both necessary and sufficient for each learning goal. %

To see this, we can consider any generalized PAC learning goal $\GOAL$ and a model class $\cM$, and the following measure for the minimax-optimal sample complexity of learning an $\eps$-optimal strategy for problem $(\GOAL,\cM)$:
\begin{align*}
    T^\star(\GOAL,\cM,\eps)\defeq \inf\set{ T: \exists\text{ $T$-round algorithm $\Algo$ s.t. }\max_{M\in\cM}\EE^{M,\Algo}[\subopts]\leq \eps}.
\end{align*}
Letting $T^{\dec}(\GOAL,\cM,\eps)\defeq \inf\set{\gamma: \DEC_\gamma(\cM)\leq \eps}$ be the sample complexity induced by the \DECt. The above results show that $T^\star$ can be upper and lower bounded by $T^{\dec}$ as (omitting logarithmic factors)
\begin{align}\label{eqn:SC}
    T^{\dec}(\GOAL,\cM,3\eps)\leqsim T^\star(\GOAL,\cM,\eps) \leqsim T^{\dec}(\GOAL,\cM,\eps/2)\cdot \frac{\log\abs{\cM}}{\eps},
\end{align}
where the lower bound is implied by \cref{prop:gen-lower-bound-demo} and the upper bound follows from \cref{thm:E2D-gen}. Similar results also hold for any generalized no-regret learning goal. Therefore, up to a factor of $\eps^{-1}$ and the model class complexity $\log\abs{\cM}$ (which in model-based RL is \emph{assumed} to be tractable), the statistical complexity of any learning problem $(\GOAL,\cM)$ is completely characterized by the \DECt.

\paragraph{Improved lower bounds}
We remark that the lower bounds implied by \cref{prop:gen-lower-bound-demo} are possibly not tight for specific problem classes. For example, for model class $\cM$ with $\eec_\gamma(\cM) \asymp \frac{d}{\gamma}$, \cref{prop:gen-lower-bound-demo} implies a $\Tilde{\Omega}(d/\epsilon)$ sample complexity lower bound for PAC RL, which can be undesirable in many scenarios where we expect a lower bound of $\Omega(d/\epsilon^2)$ (e.g. tabular MDPs).
In \crefapp{E.4}{appdx:gen-lower-local}, we demonstrate how a localized version of \cref{prop:gen-lower-bound-demo} can be used to derive tighter lower bounds for specific problems. For example, by bounding the \emph{localized} PACDEC for tabular MDPs, we recover the known $\Omega(HSA/\eps^2)$ PAC sample complexity lower bound for tabular MDP~\citep{domingues2021episodic}. \revise{Further, for PAC RL, tighter lower bounds can be derived \citep{foster2023tight} based on a constrained version of the PACDEC. For a detailed discussion, see \crefapp{A.2}{appdx:constrained-dec}.}

\paragraph{Instantiations}
For the remainder of this section, we instantiate our general framework to three exemplary learning goals: reward-free learning, all-policy model estimation, and preference-based RL. For each learning goal, we present the corresponding definition of the \DECt, as well as the corresponding E2D algorithm. %
\cref{tab:goals} %
illustrates how the results of each example can be derived from the general framework.

\subsection{Reward-free learning via Reward-Free DEC}
\label{section:rfec}

The goal of \emph{reward-free RL}~\citep{jin2020reward} is to optimally explore the environment without observing rewards, so that after the exploration phase, a near-optimal policy for any given reward function can be computed using only the collected trajectory data, without further environment interaction. This setting is particularly important in scenarios where reward functions are iteratively engineered to encourage desired behavior via trial and error (e.g., constrained RL formulations). In such cases, repeatedly invoking the same reinforcement learning algorithm with different rewards can be highly sample inefficient, making reward-free learning a more efficient solution.

To formalize the reward-free RL setting within the DMSO framework, we consider in this section the model $M=\Pm$ which only specifies the distribution $\Pm$ over observations $o\in\cO$ (the trajectory), with an empty reward vector $\br=\emptyset$. In this case, for any model $\Pm$ and reward function $R: \cO \to [0, 1]$, we define the value function $f^{\Pm,R}(\pi)\defeq \E^{\Pm,\pi}[R(o)]$, and let $\pi_{\Pm,R}\defeq \argmax_{\pi\in\Pi} f^{\Pm,R}(\pi)$ and $f^{\Pm,R}(\pi_{\Pm,R})$ denote the optimal policy and optimal value for the pair $(\Pm,R)$, respectively.

We now formulate the reward-free RL goal within our general framework by regarding $\cP$ as a class of transition dynamics models. Given a class $\cR\subseteq \set{R:\cO\to[0,1]}$ of \emph{mean} reward functions, the strategy space is $\ssprf=\set{ p: \cR\to\Delta(\Pi)}$, where a strategy maps any reward function to a distribution over policies. The sub-optimality of any $p\in\ssprf$ under a model $\Pm\in\cP$ is measured by
\begin{align*}
    \suboptrf_{\Pm}(p)\defeq \sup_{R\in\cR} \set{ f^{\Pm,R}(\pi_{\Pm,R})-\EE_{\pi\sim p(R)}\brac{f^{\Pm,R}(\pi)} }.
\end{align*}

Based on this, we can define the Reward-Free DEC (RFDEC) to capture the complexity of reward-free learning, derived by specifying $\GOAL=\rm rf$ in \cref{def:gen-dec}. The detailed derivation of RFDEC is presented in \crefapp{E.3}{appdx:specifying}.

\begin{definition}[Reward-Free DEC]\label{def:rfdec}
The Reward-Free Decision-Estimation Coefficient (RFDEC) of model class $\cM =\cP$, along with $\cR$ the class of mean reward function, with respect to $\omu\in\Delta(\cP)$ and parameter $\gamma>0$, is defined as
\begin{align*}
\rfecg(\cP,\omu):=\inf_{\pexp\in\Delta(\Pi)}\sup_{R\in\cR}\inf_{\pout\in\Delta(\Pi)}\sup_{\Pm\in\cP} & \Big\{ \EE_{\pi\sim \pout}\left[f^{\Pm, R}(\pi_{\Pm, R})-f^{\Pm, R}(\pi)\right] \\
& \quad -\gamma \EE_{\pi \sim \pexp}\EE_{\oPm \sim \omu}\left[ \dH^2(\Pm(\pi),\oPm(\pi))\right] \Big\}.
\end{align*}
Furthermore, we define $\orfecg(\cP):=\sup_{\omu\in\Delta(\cP)}\rfecg(\cP,\omu)$.
\end{definition}
\revise{
The RFDEC can be interpreted as a modification of the PACDEC, where we further insert $\sup_{R\in\cR}$ to reflect that we care about the suboptimality of the output policy $\pout=\pout(R)$ under \emph{any} given reward function $R\in\cR$, and use $\dH^2(\Pm(\pi),\oPm(\pi))$ as the divergence to reflect that we only observe the state-action trajectories $o^t$ (without rewards).
}

\paragraph{Algorithm and theoretical guarantee}
We propose \rfalg~(full description in~\crefapp{E.5}{alg:E2D-RF}), a specification of \getod, for reward-free learning. \rfalg~works in two phases. In the exploration phase, in the $t$-th episode, the algorithm finds $\pexp^t\in\Delta(\Pi)$ that minimizes the sup-risk $\sup_{R\in\cR} \hV^{\mu^t}_{\rf,\gamma}(\cdot,R)$, where
\begin{align}
\label{equation:vtp-rf}
\begin{aligned}
    \hV^{\mu^t}_{\rf,\gamma}(\pexp,R) \defeq \inf_{\pout}\sup_{\Pm\in\cP}
    \bypart{
    \E_{\pi \sim p_{\out}}  \brac{ f^{\Pm,R}(\pi_{\Pm,R})-f^{\Pm,R}(\pi) } }{
    - \gamma \E_{\pi \sim \pexp, \hPm^t\sim \mu^t}\brac{ \dH^2(\Pm(\pi), \hPm^t(\pi)) } }.
\end{aligned}
\end{align}
Then, in the planning phase, for any given reward function $R^\star\in\cR$, the algorithm computes $\pout^t(R^\star)$ that minimizers $\inf_{p_{\out}}$ in $\hV^{\mu^t}_{\rf,\gamma}(\pexp^t,R^\star)$. Finally, it outputs the average policy $\hatpout(R^\star)\defeq \frac{1}{T}\sum_{t=1}^T p_{\out}^t(R^\star)$. The following theorem is an instantiation of \cref{thm:E2D-gen} to reward-free learning.

\begin{theorem}[Reward-Free upper bound]
  \label{thm:E2D-RF}
\rfalg~achieves the following with probability at least $1-\delta$: 
\begin{align*}
    \suboptrf_{\Pm^\star}(\hatpout)
    \byparts{=\sup_{R^\star\in\cR} \set{ f^{\Pm^\star,R^\star}(\pi_{\Pm^\star,R^\star})-\EE_{\pi\sim \hatpout(R^\star)}\left[f^{\Pm^\star,R^\star}(\pi)\right]} }{
    \leq \orfecg(\cP) + \gamma\frac{3\log(\abs{\cP}/\delta)}{T} }.
\end{align*}
\end{theorem}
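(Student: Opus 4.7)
The plan is to combine three ingredients: (i) an online estimation guarantee for the transition-only variant of \Vovkalg~(the analog of \cref{lemma:tempered-aggregation} with only the observation log-likelihood term and $\dH^2$ in place of $\wtdH^2$); (ii) a variational identity that upper bounds the per-episode suboptimality of the planning output $\pout^t(R^\star)$ by $\hV^{\mu^t}_{\rf,\gamma}(\pexp^t,R^\star)$ plus an instantaneous estimation-error term; and (iii) the minimax optimality of $\pexp^t$, which gives $\hV^{\mu^t}_{\rf,\gamma}(\pexp^t,R)\le \orfecg(\cM)$ uniformly in $R\in\cR$.

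For (i), because the reward-free update in \cref{alg:E2D-RF} uses only the transition log-likelihood $\log\P^{\Pm,\pi^t}(o^t)$ (there is no reward signal during exploration), the exponential-weights/Freedman-martingale argument underlying \cref{lemma:tempered-aggregation} specializes directly to give, with probability at least $1-\delta$,
\[
\sum_{t=1}^T \E_{\pi^t\sim \pexp^t}\E_{\hPm^t\sim \mu^t}\big[\dH^2(\Pm^\star(\pi^t),\hPm^t(\pi^t))\big] \le 3\log(|\cP|/\delta),
\]
where the constant $3$ follows from the tempered learning rate chosen for the observation-only update. For (ii), fix $R^\star\in\cR$; by construction $\pout^t(R^\star)$ attains the inner $\inf_{\pout}\sup_{\Pm}$ in $\hV^{\mu^t}_{\rf,\gamma}(\pexp^t,R^\star)$, so plugging $\Pm=\Pm^\star$ into that supremum yields
\[
\E_{\pi \sim \pout^t(R^\star)}\!\big[f^{\Pm^\star,R^\star}(\pi_{\Pm^\star,R^\star})-f^{\Pm^\star,R^\star}(\pi)\big] \le \hV^{\mu^t}_{\rf,\gamma}(\pexp^t,R^\star)+\gamma\,\E_{\pi\sim \pexp^t}\E_{\hPm^t\sim \mu^t}\big[\dH^2(\Pm^\star(\pi),\hPm^t(\pi))\big].
\]
For (iii), since $\pexp^t$ is chosen to minimize $\sup_{R}\hV^{\mu^t}_{\rf,\gamma}(\cdot,R)$ and the definition of the RFDEC is precisely $\rfecg(\cM,\mu^t)=\inf_{\pexp}\sup_{R}\hV^{\mu^t}_{\rf,\gamma}(\pexp,R)$, every $R^\star\in\cR$ satisfies $\hV^{\mu^t}_{\rf,\gamma}(\pexp^t,R^\star)\le \rfecg(\cM,\mu^t)\le \orfecg(\cM)$.

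Summing (ii) over $t\in[T]$, applying (iii) to each $\hV$-term and (i) to the accumulated estimation error, and using the linearity $f^{\Pm^\star,R^\star}(p)=\E_{\pi\sim p}[f^{\Pm^\star,R^\star}(\pi)]$ together with $\hatpout(R^\star)=\frac{1}{T}\sum_{t}\pout^t(R^\star)$ to identify the average per-step suboptimality with the suboptimality of $\hatpout(R^\star)$, then dividing by $T$ and taking a supremum over $R^\star\in\cR$ (legal because the estimation event in (i) does not depend on $R^\star$), delivers the claimed bound. The hard part is establishing (i): one must verify that the tempered-aggregation analysis goes through when only observations drive the update and the divergence is $\dH^2$ on the observation marginals rather than the joint $\wtdH^2$ used in \cref{alg:E2D-TA}; the improved constant $3$ reflects the absence of the reward-loss term. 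Once (i) is in hand, the remaining steps are entirely parallel to the proof of \cref{thm:E2D-exp}, with the extra outer $\sup_{R}$ absorbed uniformly by the nested minimax structure defining the RFDEC.
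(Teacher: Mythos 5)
Your proposal is correct and follows essentially the same route as the paper: the chain $\orfecg(\cM)\ge\rfecg(\cM,\mu^t)=\sup_R\hV^{\mu^t}_{\rf,\gamma}(\pexp^t,R)\ge\sup_{\Pm}V^t(\pexp^t,R^\star;\pout^t(R^\star),\Pm)\ge V^t(\cdot;\cdot,\Pm^\star)$, followed by averaging over $t$, taking $\sup_{R^\star}$, and bounding the accumulated Hellinger estimation error. Your ingredient (i) is exactly the paper's observation-only case of the Tempered Aggregation guarantee ($\etar=0$, $\etap=\eta=1/3$, yielding the constant $1/\eta=3$), which the paper records as part of Corollary~\ref{corollary:restate-lemma-tempered-aggregation}, so the "hard part" you flag is already established there.
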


For problems with $\orfec_\gamma(\cP)\lesssim \tbO{d/\gamma}$, by tuning $\gamma>0$, \cref{thm:E2D-RF} shows that \rfalg~achieves $\suboptrf\le\eps$ within $\tbO{d\log\abs{\cP}/\eps^2}$ episodes. The only known such general guarantee for reward-free RL is the recently proposed RFOlive algorithm of~\citet{chen2022statistical}, which achieves sample complexity $\tbO{{\rm poly}(H)\cdot d_{\rm BE}^2\log(\abs{\cF}\abs{\cR})/\eps^2}$ in the model-free setting\footnote{Here, $\cF$ denotes the value class, $\cR$ denotes the reward class, and $d_{\rm BE}$ denotes the Bellman-Eluder dimension of a certain class of \emph{reward-free Bellman errors} induced by $\cF$.}. \cref{thm:E2D-RF} can be seen as a generalization of this result to the model-based setting, with a more general \emph{form} of structural condition (RFDEC). Further, unlike~\citet{chen2022statistical}, our guarantee \emph{does not further} depend on the statistical complexity (e.g. log-cardinality) of $\cR$ once we assume bounded RFDEC.

\paragraph{Lower bound} An instantiation of \cref{prop:gen-lower-bound-demo} shows that RFDEC gives the following lower bound for reward-free learning. %
\begin{proposition}[Reward-free lower bound]
\label{prop:rfec-lower-bound-demo}
For any model class $\cP$ and reward function class $\cR$, $T\in\Z_{\ge 1}$, and any algorithm $\Algo$ with output $\hatpout\in\ssprf$, there exists a $\Pm^\star\in\cP$ such that
\begin{align*}
\EE^{\Pm^\star,\Algo}\brac{\suboptrf_{\Pm^\star}(\hatpout)} \geq \frac13 \cdot \rfec_{\gammaT}(\cM),
\end{align*}
where $\gammaT\asymp T$ is defined in~\cref{prop:gen-lower-bound-demo}.
\end{proposition}

\revise{
The upper and lower bounds in \cref{thm:E2D-RF} and~\cref{prop:rfec-lower-bound-demo} together demonstrate that the suboptimality of reward-free learning is characterized by the RFDEC.
}

\subsection{All-Policy Model-Estimation via AMDEC}
\label{section:mdec}

The goal of the all-policy model-estimation task is to estimate an approximate model that captures the true transition and rewards under \emph{any policy}\footnote{This is to be distinguished from the guarantee of the \Vovkalg~subroutine, which only achieves online model estimation guarantee on the \emph{deployed} policies $(\pi^1,\cdots,\pi^T)$.}. The all-policy model estimation provides a stronger guarantee compared to PAC RL and reward-free learning, because an accurately estimated model is enough for outputting a near-optimal under any reward function. We note that such a learning goal is common in dynamical system learning \citep{kumar2015stochastic}, but it is largely unknown in RL literatures, as estimating the model can be much more challenging than learning a near-optimal policy.

More specifically, for any model $M$, $M'$, and policy $\pi \in \Pi$, we define divergence functions
\begin{align}
\wDTV{ M(\pi), M'(\pi)} \defeq&~ \dTV( \MP(\pi), \Pm^{M'}(\pi))+ \E_{o\sim \MP(\pi)}\brac{ \big\|\bR^M(o) - \bR^{M'}(o)\big\|_1 }, \label{equation:def-wDTV}\\
\DTVPi{M, M'}\defeq&~ \max_{\pi\in\Pi}\wDTV{ M(\pi), M'(\pi) }.\label{equation:def-dtvpi}
\end{align}
The divergence $\wDTV{\cdot}$ is an $L_1$ variant of the squared divergence $\wtdH(\cdot)$ defined in \cref{equation:modified-hellinger}.
The divergence $\DTVPi{M, M'}$ measures how close the models $(M, M')$ are over all policies. The goal of all-policy model estimation is to interact with the true model $M^\star$ in $T$ rounds and output an estimated model $\hM$ such that $\DTVPi{M^\star, \hM} \le \eps$.

Towards this goal, we instantiate our framework with $\sspme=\Delta(\cM)$ (corresponding to an \emph{estimation of the model}), $\csp=\csppac$, and the objective is to minimize 
\begin{align*}
    \suboptme_M(p)\defeq \max_{\pi\in\Pi}\EE_{\hM\sim p}\big[ \wdTV\big( M(\pi), \hM(\pi) \big) \big], \qquad \forall p\in\sspme.
\end{align*}
With this specification, we now derive the following All-Policy Model-Estimation DEC (AMDEC) by specifying $\GOAL=\rm me$ in \cref{def:gen-dec}, which captures the complexity of the all-policy model-estimation task.

\begin{definition}[All-Policy Model-Estimation DEC]
\label{definition:mdec}
The All-policy Model-estimation DEC (\MDEC) of $\cM$ with respect to reference measure $\omu\in\Delta(\cM)$ is defined as
\begin{align}
\begin{split}
    \mdec_\gamma(\cM,\omu)\defeq \inf_{\substack{\pexp\in\Delta(\Pi),\\ \muout\in\Delta(\cM)}}\sup_{M\in\cM,\opi\in\Pi}\Big\{&~\E_{\hM \sim \muout}  \brac{ \wDTV{ M(\opi), \hM(\opi)} } \\
    &~- \gamma \E_{\pi \sim \pexp}\E_{\oM\sim \omu}\brac{ \wtdH^2(M(\pi), \oM(\pi)) } \Big\}.
\end{split}
\end{align}
Furthermore, we define $\omdec_\gamma(\cM)\defeq \sup_{\omu\in\Delta(\cM)} \mdec_\gamma(\cM, \omu)$. 
\end{definition}

\revise{The AMDEC can be regarded as a measure of the optimal trade-off between the estimation error of the output $\muout$ and the information gain of the exploration policies $\pexp$.}

\paragraph{Algorithm and theoretical guarantee}
We propose \mealg~(full description in~\crefapp{E.6}{alg:E2D-ME}), specified from \getod, for the all-policy model-estimation task. For each step $t$, the algorithm finds $(\pexp^t,\muout)$ that jointly minimizes the risk 
\begin{align}
\label{equation:vtp-me}
\begin{aligned}
    \hV^{\mu^t}_{\ME,\gamma}(\pexp,\muout) \defeq \sup_{M\in\cM}\sup_{\opi\in\Pi}
    \bypart{ \E_{\oM \sim \muout}  \brac{ \wDTV{ M(\opi), \oM(\opi)} } }{
    - \gamma \E_{\pi \sim \pexp}\E_{\hM^t\sim \mu^t}\brac{ \wtdH^2(M(\pi), \hM^t(\pi)) } }.
\end{aligned}
\end{align}
Then, the algorithm outputs $\hM$ which is the projection of $\muout=\frac1T\sum_{t=1}^T\muout^t\in\Delta(\cM)$ into $\cM$.\footnote{Note that the direct specification of \getod~outputs $\muout$, an \emph{improper} estimation. Here, the projection step ensures that the output $\hM\in\cM$.} The following theorem is an instantiation of \cref{thm:E2D-gen} to all-policy model-estimation task, with a slight adaption to provide guarantee of the \emph{proper} estimation $\hM$ (details in \crefapp{E.6}{appendix:mdec}). %

\begin{theorem}[Model-estimation upper bound]\label{thm:E2D-ME}
\mealg~achieves the following with probability at least $1-\delta$: 
\begin{align*}
    \suboptme_{\Ms}(\hM)=\DTVPi{M^\star, \hM}\leq 6\omdec_{\gamma}(\cM)+\gamma\frac{60\log(\abs{\cM}/\delta)}{T}.
\end{align*}
\end{theorem}

\paragraph{Lower bound} An instantiation of \cref{prop:gen-lower-bound-demo} shows that AMDEC gives the following lower bound for the all-policy model-estimation task. %
\begin{proposition}[Model-estimation lower bound]
\label{prop:mdec-lower-bound-demo}
For any model class $\cM$, $T\in\Z_{\ge 1}$, and any $T$-round algorithm $\Algo$~which outputs an estimation $\hatpout\in\Delta(\cM)$, there exists a $M^\star\in\cM$ such that %
\begin{align*}
\EE^{M^\star,\Algo}\brac{\suboptme_{\Ms}(\hatpout)} \geq \frac13 \cdot \mdec_{\gammaT}(\cM),
\end{align*}
where $\gammaT\asymp T$ is defined in~\cref{prop:gen-lower-bound-demo}.
\end{proposition}

\revise{
As we have argued in \cref{sec:gen-dec}, the upper and lower bounds in \cref{thm:E2D-ME} and~\cref{prop:mdec-lower-bound-demo} together demonstrate that a controlled AMDEC is both necessary and sufficient for model estimation.
}

\paragraph{Reward-free learning is simpler than all-policy model-estimation} We remark that a suitable version of the \mealg~can also perform reward-free learning. Indeed, for any transition dynamic class $\cP$ with bounded \MDEC, reward-free learning with $\cP\times\cR$ can be solved by all-policy model-estimation: Simply run \mealg~to obtain an estimated transition $\hPm$, and for any given reward $R\in\cR$, output $\pi=\pi_{\hPm, R}$ (i.e. planning on $(\hPm, R)$). This implies that reward-free learning is a simpler task than all-policy model-estimation. The proposition below illustrates this task complexity relationship by comparing the DEC of these two tasks\aos{.}{ (proof in \cref{appendix:relation-mdec-rfdec}).}

\begin{proposition}\label{prop:rfdec-to-mdec}
For any transition model class $\cP$ and any reward function class $\cR\subseteq \set{ R:\cO\to[0,1] }$, it holds that $\orfecg(\cP,\cR)\leq 2\omdec_{\gamma/2}(\cP)$. \footnote{Here we write $\orfecg(\cP,\cR)$ to emphasize that the RFDEC also depends on the reward function class (cf. \cref{def:rfdec}).}
\end{proposition}

\paragraph{Extension: learning equilibrium in Markov games} 
We show the AMDEC framework can be adapted to give unified sample-efficient algorithms for learning Nash Equilibria and (Coarse) Correlated Equilibria (NE/CE/CCE) in tabular/linear mixture/low-rank Markov Games, which we present in~\crefapp{G}{appendix:markov-game}. 

\subsection{Preference-based reinforcement learning} \label{section:pbrl}

The goal of preference-based reinforcement learning \citep{wirth2017survey} is to achieve near-optimal performance measured in terms of a certain \emph{preference} function. Such a learning goal formulates scenarios where the agent's performance cannot be directly measured by a reward function. A particularly important example is reinforcement learning with human feedback (RLHF) \citep{christiano2017deep, ouyang2022training}, which has achieved significant empirical success in training large language models. 

In Preference-based reinforcement learning (PbRL) with trajectory preferences, the environment specifies a model of the form $M=(\Mtraj,\Cmp^M)$. The \emph{transition model} $\Mtraj\in\cMtraj$ together with a policy $\pi \in \Pi$ specifies a distribution over \emph{trajectories} $\tau \in \cT$, with $\tau \sim \Mtraj(\pi)$. The \emph{comparison function} $\Cmp^M:\cT\times\cT\to[0,1]$ compares two trajectories, satisfying $\Cmp^M(\tau_1,\tau_2)+\Cmp^M(\tau_2,\tau_1)=1$ for all $\tau_1,\tau_2\in\cT$. That is, $\Cmp^M$ can be viewed as the probability that $\tau_1$ is preferable to $\tau_2$. The observation space in PbRL is $\cO=\cT \times \cT \times\set{0,1}$, where an observation $o = (\tau_1, \tau_2, b) \in \cO$ consists of two trajectories $(\tau_1, \tau_2) \in \cT \times \cT$ and a $1$-bit preference feedback $b \in \{0, 1\}$. The policy space in PbRL is $\Piab = \Pia^2$, where $\Pi$ is the \emph{single-trajectory} policy class for $\Mtraj$. Upon executing policy $\bpi = (\pi_1, \pi_2)$ in environment $M$, the agent observes $o = (\tau_1, \tau_2, b)$, where $(\tau_1, \tau_2) \sim \Mtraj(\pi_1) \times \Mtraj(\pi_2)$ and $b \sim \Cmp^M(\tau_1, \tau_2)$ is a Bernoulli variable indicating the preference. With a slight abuse of notations, we also denote $\Cmp^M(\pi_1, \pi_2) = \E_{\tau_1 \sim M(\pi_1), \tau_2 \sim M(\pi_2)}[ \Cmp^M(\tau_1, \tau_2) ]$. We denote $\cMpb$ as a model class of PbRL models.

\begin{example}[Example of comparison function $\Cmp$]\label{example:BTL}
Given any function $u:\cT\to\R$, it induces a comparison function $\Cmp(\tau_1,\tau_2)= \exp(u(\tau_1)) / [\exp(u(\tau_1))+\exp(u(\tau_2))]$, known as the Bradley-Terry-Luce model \citep{bradley1952rank}. This probabilistic choice model, which outputs the probability that $\tau_1$ is preferred over $\tau_2$, has been widely adopted in reinforcement learning with human feedback (RLHF) \citep{christiano2017deep, ouyang2022training}. 
\end{example}

\newcommand{\PbReg}{\mathbf{PbReg}}

In PbRL, we consider the following notion of regret $\PbReg$ of a sequence of policy couples $\{ p^t \}_{t \in [T]}\subseteq \Delta(\Piab)$: 
\begin{align}\label{eqn:def-pb-reg}
    \textstyle \Regpb =  \sum_{t = 1}^T \max_{\pis\in\Pi} \EE_{(\pi^{t}_{1}, \pi^{t}_{2})\sim p^t}\brac{ \sum_{i = 1}^2 \paren{ \Cmp(\pi_\star, \pi^{t}_{ i}) - 1/2 } }. 
\end{align}
$\Regpb$ is always non-negative, which can be regarded as a measure of how much an opponent can outperform the profile $\{ p^t \}_{t \in [T]}$. We note that $\Regpb$ is slightly stronger than the regret considered in prior works~\citep{xu2020preference, bengs2021preference, chen2022human}, which is defined with respect to a \emph{fix} opponent policy. %

PbRL can be framed as a generalized no-regret learning problem: define the strategy space as $\ssppb = \Delta(\Piab)$, the decision domain as $\csppb = \cspreg$, and the sub-optimality measure as%
\begin{align*}
\suboptpb_M(p)=\sup_{\pis\in\Pia}\EE_{(\pi_1,\pi_2)\sim p}\brac{ \Cmp^M(\pi_\star, \pi_{1})+\Cmp^M(\pi_\star, \pi_{2})-1 }, \qquad \forall p\in\Delta(\Piab). 
\end{align*}
Then, the generalized regret corresponding to $\suboptpb$ defined above agrees with $\Regpb$, and the following Preference-Based Decision-Estimation Coefficient (PBDEC) now follows from specifying $\GOAL=\rm pb$ in \cref{def:gen-dec}.

\begin{definition}[Preference-based DEC]\label{def:pbdec}
The Preference-Based Decision-Estimation Coefficient (PBDEC) of a model-class $\cMpb$ with respect to $\omu\in\Delta(\cMpb)$ and parameter $\gamma>0$ is defined as
\begin{align*}
    \pbdec_\gamma(\cMpb, \omu) & \defeq \inf_{p\in\Delta(\Piab)} \sup_{M\in\cMpb} \suboptpb_M(p)-\gamma\E_{\bpi\sim p} \E_{\oM\sim\omu}\brac{\dH^2(M(\bpi), \oM(\bpi))}.
\end{align*}
Furthermore, we define $\opbdec_\gamma(\cMpb) \defeq \sup_{\omu\in\Delta(\cMpb)} \pbdec_\gamma(\cMpb, \omu)$. 
\end{definition}

\revise{The PBDEC measures the optimal trade-off between the exploitation and exploration under policy distribution $p$. Different from the (reward-based) DEC~\cref{eqn:dec-def}, the suboptimality of $p$ is not a linear combination of the suboptimality of the individual $\bpi\sim p$, capturing the nature of the regret considered in~\cref{eqn:def-pb-reg}. }

\paragraph{Algorithm and theoretical guarantee} We propose the {\pbrlalg} algorithm (described in~\crefapp{E.7}{alg:E2D-PbRL}) for preference-based RL, which is a direct instantiation of \getod.
The risk function is defined as
\begin{align}
\label{equation:vtp-pbrl}
   \hV^{\mu^t}_{\mathrm{pb},\gamma}(p) \defeq \sup_{M\in\cMpb} \suboptpb_M(p)-\gamma\E_{\bpi\sim p} \E_{\oM\sim\mu^t}\brac{\dH^2(M(\bpi), \oM(\bpi))}.
\end{align}
The algorithm is similar as \etod~(\cref{alg:E2D-TA}), except that in each iteration, we observe two trajectories and one-bit feedback $o^t = (\tau_1^t, \tau_2^t, b^t)$, but not the reward vector. The following theorem is an instantiation of \cref{thm:E2D-gen} to preference-based reinforcement learning.

\begin{theorem}[Preference-based RL upper bound]\label{thm:E2D-PBRL}
\pbrlalg~achieves the following with probability at least $1-\delta$:
\begin{align*}
    \Regpb\leq T\cdot\opbdec_\gamma(\cMpb)+10 \gamma \cdot \log(\abs{\cMpb }/ \delta).
\end{align*}
\end{theorem}
For model classes with $\pbdec_\gamma(\cMpb)\lesssim \tbO{d/\gamma}$, by tuning $\gamma>0$, \cref{thm:E2D-PBRL} shows that \pbrlalg~achieves $\Regpb\leq \tO(\sqrt{d\log\abs{\cM}T})$. 

\paragraph{Lower bound} An instantiation of \cref{prop:gen-lower-bound-demo} shows that PBDEC gives the following regret lower bound for Preference-based RL. 
\begin{proposition}[Preference-based RL lower bound]
\label{prop:pbrl-lower-bound-demo}
For any model class $\cM$, $T\in\Z_{\ge 1}$, and any $T$-round algorithm $\Algo$, there exists a $M^\star\in\cP$ such that
\begin{align*}
\EE^{M^\star,\Algo}\brac{\Regpb} 
\geq&~ \max_{\pis\in\Pi}\,\EE^{M^\star,\Algo}\brac{ \sum_{t = 1}^T \sum_{i = 1}^2 \paren{ \Cmp(\pi_\star, \pi^{t}_{ i}) - 1/2 } }
\geq \frac{T}3 \cdot \pbdec_{\gammaT}(\cM),
\end{align*}
where $\gammaT\asymp T$ is defined in~\cref{prop:gen-lower-bound-demo}.
\end{proposition}
Prior to our work, such upper and lower bounds for preference-based RL with general problem classes are largely unknown.

\paragraph{Relationship with standard no-regret RL}  Preference-based reinforcement learning (PbRL) and standard no-regret reinforcement learning are related in the following way. Consider a standard DMSO model class $\cM$, where each model $M=(\P^M,R^M) \in \cM$ has a mean reward function $R^M:\cT\to [0,1]$. Any $M\in\cM$ could induce a PbRL model $M_{\sf Pb} = (\P^M, \Cmp^M) \in \cMpb$ with $\Cmp^M$ with comparison probabilities:
\begin{align}\label{eqn:Cmp-RL}
    \Cmp^M(\tau_1,\tau_2)=(1+R^M(\tau_1)-R^M(\tau_2) ) / 2.
\end{align}
This gives a PbRL sub-optimality gap:
\begin{align*}
\suboptpb_M(p)=f^M(\pi_M)-\EE_{(\pi_1,\pi_2)\sim p}\EE_{\pi\sim\unif(\pi_1,\pi_2)}\brac{f^M(\pi)}. 
\end{align*}
Thus, the PbRL sub-optimality gap is the sub-optimality gap of the average policy $(\pi_1 + \pi_2)/2$ in standard no-regret RL, and the PbRL model $M_{\sf Pb}$ can be regarded as the original $M$ with the observation $o$ being restricted, in the sense that $M_{\sf Pb}$ only reveals the difference between cumulative rewards in two consecutive episodes. This shows that standard no-regret RL can be reduced to PbRL, in the sense that any algorithm for the PbRL problem with comparison probability~\cref{eqn:Cmp-RL} solves standard no-regret RL.

Somewhat surprisingly, the converse statement is also true in certain sense: PbRL is \emph{not more difficult} than standard model-based RL on a series of related model classes. More concretely, the following theorem shows that, if the agent can beat any known opponent in $\Mtraj$, then no-regret PbRL is possible.~\revise{Hence, it provides a universal reduction from no-regret PbRL to the standard no-regret RL, which is new in the preference-based RL literature.}

\newcommand{\oaux}{o_{\rm aux}}
\begin{theorem}\label{thm:PbRL-to-RL}
Consider a given PbRL model class $\cMpb$. For any $q\in\Delta(\Pia)$ and $M=(\Mtraj,\Cmp^M)\in\cMpb$, we consider a standard DMSO model $M_q$ defined as follows: for any policy $\pi\in\Pia$, upon executing $\pi$ on $M_q$:
\begin{itemize}
    \item The learner first observes $\tau\sim \Mtraj(\pi)$.
    \item Then, the environment generates auxiliary observation $\oaux=(\pi_0,\tau_0)$ as $\pi_0\sim q$, $\tau_0\sim \Mtraj(\pi_0)$.
    \item The learner observes $\oaux$ and receives a Bernoulli reward $r\sim \C^M(\tau,\tau_0)$.
\end{itemize}
Define $\cM_q=\{M_q: M\in\cMpb\}$ for each $q\in\Delta(\Pi)$. Then as long as $\Pi$ is finite, it holds that
\begin{align*}
    \pbdec_\gamma(\cMpb)\leq 2\sup_{q\in\Delta(\Pi)} \dec_{\gamma/2}(\cM_q).
\end{align*}
\end{theorem}

\aos{}{The proof of \cref{thm:PbRL-to-RL} is contained in \cref{appdx:proof-PbRL-to-RL}.}

\section{Instantiation: DEC bounds for model classes with \belrep}
\label{section:bellman-rep}

In this section, we consider a broad class of RL models---models with low-complexity \belrep. \Belrep~ is a generalization of Bellman Representation introduced by \citet{foster2021statistical} that encapsulates many general decision-making processes, including bandit problems, MDPs, and partially observable RL. We bound the various DECs by the complexity of \belrep, and provide unified sample-efficient guarantees for various RL goals~\revise{(\cref{tab:examples})}. This recovers existing sample complexity results across a range of RL goals and models, while also yielding novel bounds. 
For succinctness, we defer most of the details, discussions, and proofs to \crefapp{F}{appendix:bellman-rep}.

\paragraph{Decoupling dimension}
To define \belrep~and its complexity, we first introduce the notion of \emph{decoupling dimension} for a function class with a class of distributions. 
\begin{definition}[Decoupling dimension]\label{def:dcpl-dim}
For any function class $\cF\subseteq ( \cX \rightarrow \R)$ and distribution class $\cQ \subseteq \Delta(\cX)$, the decoupling dimension $\dimc(\cF,\cQ,\gamma)$ for $\gamma>0$ is defined as
\begin{align*}
    \dimc(\cF,\cQ,\gamma):=\sup_{\nu \in\Delta(\cF\times\cQ)}\gamma\mathbb{E}_{(f,q)\sim\nu}\EE_{x\sim q}\brac{f(x)} -\gamma^2 \EE_{f\sim\nu}\EE_{q\sim\nu,x\sim q}\brac{\abs{f(x)}^2}.
\end{align*}
When $\cQ$ is clear from the context or $\cQ=\Delta(\cX)$, we may abbreviate $\dimc(\cF,\gamma):=\dimc(\cF,\cQ,\gamma)$. We also write $\odimc(\cF,\gamma)=\sup_{f\in\cF}\dimc(\cF-f,\gamma)$.
\end{definition}
Decoupling dimension is a fairly general complexity measure of the decomposition function class. For instance, it can be bounded for generalized linear function classes\aos{}{ (\cref{example:dc-linear-gen})}, 
and more generally for any function class with low Eluder dimension~\citep{russo2013eluder} or star number~\citep{foster2020instance}\aos{}{ (cf. \cref{example:dc-to-es})}. Furthermore, the decoupling dimension of $(\cF,\cQ)$ can also be bounded by the complexity of the distribution class $\cQ$ itself, e.g. coverability~\citep{xie2022role}\aos{}{ (cf. \cref{example:dc-to-cov})}. \aos{A detailed discussion is postponed to \crefapp{F.1}{}.}

\subsection{Decouplable representation}

We now define \emph{\belrep} and its complexity as follows. 
\begin{definition}[\Belrep~and its complexity]
\label{definition:err-rep}
The \emph{\belrep} $\cG$ of the model class $\cM$ is associated with general index sets $\{ \cT_h\}_{h \in [H]}$, a class of functions $\{ \cerr^{M;\oM}_h: \cT_h \to \R \}_{\oM, M\in\cM, h\in[H]}$, a class of distributions $\{ \dtr_h(M; \oM) \in \Delta(\cT_h) \}_{\oM, M\in\cM, h\in[H]}$, a class of exploration policies $\{ \piest_{M}\in\Delta(\Pi) \}_{M \in \cM}$\footnote{Note that here we slightly abuse the notation by considering the mixture of policy in $\Delta(\Pi)$.}, and a constant $L$, satisfying the following %
\begin{align}
&f^{M}(\pi_M)-f^{\oM}(\pi_M)\leq\sum_{h=1}^H \EE_{\tau_h\sim \dtr_h(M; \oM)}\brac{\cerr^{M; \oM}_h(\tau_h)}, && \forall M,\oM, \label{eqn:err-rep-decouple-1}\\
&\sum_{h=1}^H \EE_{\tau_h\sim \dtr_h(M'; \oM)}\abs{\cerr^{M; \oM}_h(\tau_h)}^2\leq  L\cdot  \tDH{\oM(\piest_{M'}), M(\piest_{M'}) } , &&\forall M,M',\oM. \label{eqn:err-rep-decouple-2}
\end{align}
Denote $\cG^{\oM}_h=\{ \cerr^{M;\oM}_h \}_{M\in\cM}$ and $\cQ_h^{\oM}=\{ \dtr_h(M; \oM) \}_{M\in\cM}$. The complexity of \belrep~$\cG$ is measured by $\dim(\cG,\gamma)\defeq\max_{\oM,h}\dimc(\cG^{\oM}_h,\cQ^{\oM}_h,\gamma)$. %
\end{definition}

\revise{The decouplable representation generalizes several known structural conditions for RL, e.g., Bellman-Eluder dimension and Bilinear class~\citep{jin2021bellman, du2021bilinear}. Intuitively, in a \belrep, condition \cref{eqn:err-rep-decouple-1} requires the value difference $f^{M}(\pi_M)-f^{\oM}(\pi_M)$ can be decomposed into the error terms $\cerr^{M; \oM}_h$ and the occupancy $\dtr_h(M; \oM)$ of $\pi_M$, while condition \cref{eqn:err-rep-decouple-2} requires the \emph{decoupled} errors can be in turn bounded by the Hellinger distances.} Therefore, \belrep~captures most of the general structural conditions for MDP model class $\cM$. In the following example, we first demonstrate that Bellman errors of MDPs induce a \belrep~of $\cM$.

\begin{example}[Bellman errors]\label{def:bellman-err}
For any MDP model class $\cM$, it admits a \belrep~$\GBE$ as follows: \footnote{For simplicity, here we only consider the case where the initial state distribution is the same across $\cM$.} 
\begin{itemize}%
\item (Index set) For each $h\in[H]$, the index set is $\cT_h=\cS\times\cA$.
\item (Error functions and distributions) For each $M,\oM\in\cM$,
\begin{align*}
    \dtr_h(M;\oM)\defeq&~\PP^{\oM,\pi_M}(s_h=\cdot,a_h=\cdot)\in\Delta(\cS\times\cA), \\
    \cerr_h^{M; \oM}(s_h,a_h) \defeq&~ Q^{M, \star}_h(s_h, a_h) - [\T^{\oM}_h V^{M,\star}_{h+1}](s_h,a_h),
\end{align*}
where $(Q^{M,\star}, V^{M,\star})$ is the optimal value functions of $M$, and $\T_h^{\oM}$ is the \emph{Bellman operator} of the MDP $\oM$, which maps any function $V: \cS\to \R$ to
\begin{align*}
    [\T^{\oM}_h V](s,a)=R^{\oM}_h(s,a) + \EE_{s'\sim \PP^{\oM}_h(\cdot|s,a)} V(s')
\end{align*}
\item The exploration policies are given by $\piest_M=\pi_M$ and the constant $L=4H$.
\end{itemize}
\end{example}

Representation $\GBE$ described above corresponds to the well-known performance difference lemma \citep{kakade2002approximately} (see also \citet{jin2021bellman, foster2021statistical}): %
\begin{align}\label{eqn:Bellman-decomp}
    f^M(\pi_M)-f^{\oM}(\pi_M)=\sum_{h=1}^H \EE^{\oM,\pi_M}\brac{ Q^{M, \star}_h(s_h, a_h) - [\T^{\oM}_h V^{M,\star}](s_h,a_h) },
\end{align}
which immediately implies that \cref{eqn:err-rep-decouple-1} holds for $\GBE$, and \cref{eqn:err-rep-decouple-2} follows from the definition (detailed proof is deferred to \crefapp{F.3}{appendix:GBE-proof}).

\revise{In particular, \belrep~with controlled complexity encompasses the model-based version of bilinear class \citep{du2021bilinear}\aos{}{ (\cref{appdx:bilinear})}, Bellman-Eluder dimension \citep{jin2021bellman}\aos{}{ (\cref{appdx:BE-dim})}, and also Bellman representation \citep{foster2021statistical}\aos{}{ (\cref{appdx:relation-bel-rep})}.} More specifically, under these structural conditions, the decoupling dimension can be bounded for certain choices of \belrep~(i.e., variants of $\GBE$, see discussions in \crefapp{F.2}{appdx:bilinear-eluder}). %

Furthermore, for explicit problem classes (e.g. \cref{tab:examples}), natural \belrep~can be directly written down. In this way, \belrep~also captures the general structural condition of partially observable RL~\citep{chen2022partially,liu2023optimistic}\aos{}{ (cf. \cref{appendix:POMDP})}, which itself encompasses a wide range of POMDP classes, e.g. revealing POMDPs~\citep{jin2020sample,liu2022partially}.

 To summarize, for a wide range of RL classes, we can devise a corresponding \belrep~$\cG$ such that its decoupling dimension can be bounded as $\dimG(\cG,\gamma)\leq \tO(d)$  where $d$ depends on relevant problem parameters.
For the model class $\cM$ with such a low-complexity \belrep~$\cG$, we can apply following bounds for PACDEC and DEC, which in turn imply guarantees for no-regret learning and PAC learning.
\begin{proposition}[Bounding DEC/PACDEC by the complexity of \belrep]
\label{prop:belrep-pac}
Suppose that $\cG$ is a \belrep~of $\cM$. Then we have
\begin{align*}
    \oeec_{\gamma}(\cM)\leq \frac{6LH\dimG(\cG,\gamma/L)+6H}{\gamma}.
\end{align*}
In particular, if the \belrep~satisfies $\piest_M=\pi_M$ (the on-policy case), \aos{}{we have} 
\[
\dec_{\gamma}(\cM)\leq \frac{6LH\dimG(\cG,\gamma/L)+6H}{\gamma}.
\]
\end{proposition}
In particular, for $\cG$ with $\dimG(\cG,\gamma)\leq \tO(d)$,\footnote{Here and henceforth $\tO(\cdot)$ possibly hides $\polylog(\gamma)$ factor.} we have $\eec_{\gamma}(\cM)\leq\tO(dLH/\gamma)$, implying a sample complexity $\tO(dLH/\eps^2)$ of \eetod. If $\cG$ is further an on-policy \belrep, then we also have $\dec_{\gamma}(\cM)\leq\tO(dLH/\gamma)$, which implies a regret bound $\tO(\sqrt{dLH\log\abs{\cM}T})$ of \etod. (If $\cG$ is not on-policy, then we have $\dec_{\gamma}(\cM)\leq\tO(\sqrt{dLH/\gamma})$ by \cref{prop:eec-to-dec}, which gives a regret of order $T^{2/3}$.)

\subsection{\mBelrep} Analogous to \belrep~(\cref{definition:err-rep}), we can define a stronger representation called \mbelrep~that enables controlling the DEC for reward-free learning, all-policy model estimation and preference-based RL.

\begin{definition}[\mBelrep~and its complexity]\label{def:err-rep-model}
The \emph{\mbelrep} $\cG$ of the model class $\cM$ is associated with general index sets $\{ \cT_h\}_{h \in [H]}$, a class of functions $\{ \cerr^{M;\oM}_h: \cT_h \to \R \}_{\oM, M\in\cM, h\in[H]}$, a class of distributions $\{ \dtr_h(M; \oM) \in \Delta(\cT_h) \}_{\oM, M\in\cM, h\in[H]}$, a class of exploration policies $\{ \piest \in\Delta(\Pi) \}_{\pi\in\Pi}$,\footnote{For notational simplicity, here the superscript $\exp$ denotes a mapping from $\Pi$ to $\Delta(\Pi)$ that maps $\pi\to\piest$, i.e. for each policy $\pi\in\Pi$, it is assigned with an exploration policy $\piest\in\Delta(\Pi)$.}%
and a constant $L>0$, satisfying the following
\begin{align}
&\wDTV{ \oM(\pi), M(\pi) }\leq\sum_{h=1}^H \EE_{\tau_h\sim \dtr_h(\pi; \oM)}\brac{\cerr^{M; \oM}_h(\tau_h)}, && \forall M,\pi,\oM, \label{eqn:err-rep-decouple-strong-1}\\
&\sum_{h=1}^H \EE_{\tau_h\sim \dtr_h(\pi; \oM)}\abs{\cerr^{M; \oM}_h(\tau_h)}^2\leq  L\cdot  \tDH{\oM(\piest), M(\piest) } , &&\forall M,\pi,\oM. \label{eqn:err-rep-decouple-strong-2} 
\end{align}
Denote $\cG^{\oM}_h=\{ \cerr^{M;\oM}_h \}_{M\in\cM}$ and $\cQ_h^{\oM}=\{ \dtr_h(\pi; \oM) \}_{\pi\in\Pi}$. The complexity of $\cG$ is measured by $\dimG(\cG,\gamma)\defeq\max_{\oM,h}\dimc(\cG^{\oM}_h,\cQ^{\oM}_h,\gamma)$. 
\end{definition}
Notice that the only difference between the \emph{\mbelrep} and \emph{\belrep} is that the left-hand side of \eqref{eqn:err-rep-decouple-strong-1} gives $\wDTV{ \oM(\pi), M(\pi) }$, whereas the lefthand side of \eqref{eqn:err-rep-decouple-1} gives $f^{M}(\pi_M)-f^{\oM}(\pi_M)$. Note that we always have $f^{M}(\pi_M)-f^{\oM}(\pi_M) \le \wDTV{ \oM(\pi), M(\pi) }$, and hence, a \mbelrep~$\cG$ of $\cM$ is always a \belrep~of $\cM$, with the same complexity.

In the following, we bound RFDEC, AMDEC and PBDEC of any given model class $\cM$ in terms of the complexity of a \mbelrep~of $\cM$.

\begin{proposition}[Bounding \MDEC~by the complexity of \mbelrep]
\label{prop:belrep-am}
Suppose $\cG$ is a \mbelrep~of $\cM$. Then
\begin{align*}
   \medec_{\gamma}(\cM) 
   \leq \frac{ LH\dimG(\cG, \gamma/L ) }{\gamma}.
\end{align*}
\end{proposition}
In particular, if $\cM$ admits a \mbelrep~$\cG$ with $\dimG(\cG,\gamma)\leq \tO(d)$, we have $\medec_{\gamma}(\cM)\leq \tbO{dHL/\gamma}$. Consequently, the \mealg~algorithm returns an estimated model $\hM$ such that $\DTVPi{M^\star, \hM} \le \eps$ using $\tbO{dHL\log\abs{\cM}/\epsilon^2}$ episodes.

As a corollary of \cref{prop:belrep-am}, we also have the following guarantee of RFDEC.
\begin{proposition}\label{prop:belrep-rf}
Suppose $\cG$ is a \mbelrep~of $\cP$, a class of transition models. Then (regardless of the reward function class $\cR$)
\begin{align*}
   \rfec_{\gamma}(\cP) 
   \leq \frac{ 4LH\dimG(\cG, \gamma/(2L) ) }{\gamma}.
\end{align*}
\end{proposition}

Analogously, PBDEC of a given model class can also be bounded by the complexity of a corresponding \mbelrep, as follows.

\begin{proposition}\label{prop:belrep-pb}
Suppose that $\cMpb$ is a PbRL model class, with trajectory class $\cMtraj$ and comparison function class $\cC$. Suppose that $\cG$ is a on-policy \mbelrep~of $\cMtraj$ (i.e. $\piest=\pi$ for all $\pi$), then
\begin{align*}
    \pbdec_{\gamma}(\cMpb)\leq\frac{12LH\dimG(\cG,\gamma/6)+12\ \odimc(\cC,\gamma)+12}{\gamma}.
\end{align*}
\end{proposition}
In particular, suppose $\cG$ admits $\dimG(\cG,\gamma)\leq \tO(d)$ and $\dimG(\cC,\gamma)\leq \tO(d_{\Cmp})$ (e.g. when $\cC$ is a class of generalized linear functions), we have $\pbdec_{\gamma}(\cM)\leq\tO((dLH+d_{\Cmp})/\gamma)$, implying that \pbrlalg~has a regret bound of
\begin{align*}
    \Regpb\leq \tbO{\sqrt{(dLH+d_{\Cmp})\log\abs{\cM}T}}. 
\end{align*}
As we have discussed earlier, $\dimG(\cC,\gamma)$ can be simply controlled for various classes of preference functions.

\subsection{Examples and discussion}
~\cref{prop:belrep-pac}, \ref{prop:belrep-am}, \ref{prop:belrep-rf}, and \ref{prop:belrep-pb} can all be specialized to a wide range of concrete RL problems, for which we provide several illustrative examples and concrete results in \cref{tab:examples} (details in~\crefapp{F.8}{appendix:proof-examples}). Notably, the rates (for each learning goal) are obtained through (almost) a single unified algorithm without further problem-dependent designs. In the following, we briefly discuss their relation to the results in related literatures.

\renewcommand{\arraystretch}{1.6}
\begin{table}[t]
    \centering
    \begin{tabular}{|c|c|c|c|c|c|c|}
    \hline
    Model class & PACDEC & \!PAC Sample Complexity\! & \!RF\! & \!\!AM\!\! & Regret & \!PB\! \\\hline
    Linear bandit & $\frac{d}{\gamma}$ & $\frac{d^2}{\epsilon^2}$ & \yes & \yes & $d\sqrt{T}$ & \yes \\\hline
    Tabular MDP & $\frac{|\cS||\cA|H^2}{\gamma}$ & $\frac{|\cS|^3|\cA|^2H^3}{\epsilon^2}$ & \yes & \yes & $\sqrt{|\cS|^3|\cA|^2H^3T}$ & \yes \\\hline
    \!Linear mixture MDP\! & $\frac{dH^2}{\gamma}$ & $\frac{d^2H^3}{\epsilon^2}$ & \yes & \yes & $\sqrt{d^2H^3T}$ & \yes \\\hline
    Linear MDP & $\frac{dH^2}{\gamma}$ & $\frac{dH^2\log\abs{\cM}}{\epsilon^2}$ & \yes & \yes & $\sqrt{dH^2\log\abs{\cM}\cdot T}$ & \yes \\\hline
    Low-rank MDP & $\frac{d|\cA|H^2}{\gamma}$ & $\frac{d|\cA|H^2\log\abs{\cM}}{\epsilon^2}$ & \yes & \yes & \!$\sqrt{d|\cA|H^2\log\abs{\cM}\cdot T}$\! & \yes \\\hline
    Parametric MDP & $\frac{\kappa^2 d H^2}{\gamma}$ & $\frac{\kappa^2 d^2 H^3}{\epsilon^2}$ & \yes & \yes & $\sqrt{\kappa^2d^2H^3T}$ & \yes \\\hline
    Revealing POMDP & $\frac{|\cS||\cA|^mH^2}{\arev^2 \gamma}$ & \!$\frac{\poly(|\cS|,|\cA|,|\cO|)|\cA|^mH^3}{\arev^2\epsilon^2}$\! & \yes & \yes & $\cO(T^{2/3})$ & \yes \\\hline
    B-stable PSR & \!$\frac{\stab^2 d|\cA|U_AH^2}{\gamma}$\! & $\frac{\stab^2 d|\cA|U_AH^2\log\abs{\cM}}{\epsilon^2}$ & \yes & \yes & $\cO(T^{2/3})$ & \yes \\\hline
    \end{tabular}
    \caption{Upper bounds on sample complexity and regret of \getod~instantiated across various learning tasks and model classes (details and proofs in \crefapp{F.8}{appendix:proof-examples}). For succinctness, we omit logarithmic factors in the sample complexity and regret bounds. The $\cO(T^{2/3})$ regret corresponds to the regret obtained from sample complexity upper bound by explore-then-commit strategy (cf. \cref{prop:eec-to-dec}), and we omit other factors for succinctness. The ``RF/AM'' columns indicate whether the same sample complexity guarantee can be achieved for the settings of reward-free exploration (RF) and all-policy model estimation (AM) as with PAC learning. The ``PB'' column indicates whether a regret bound of the same order is feasible for preference-based RL as in no-regret learning (modulo the extra term $\sqrt{d_{\Cmp}T}$ that is typically of lower order). %
    }
    \label{tab:examples}
\end{table}

\begin{example}[Linear mixture MDPs, ~\citet{ayoub2020model}]\label{example:linear-mixture-demo}
A MDP $M$ is called a linear mixture MDP (with respect to a known $d$-dimensional feature map $\phi$) if there exists parameter $\{\theta_h^M\in\R^d\}_h$, such that for the given features $\phi_h(\cdot|\cdot,\cdot):\cS\times\cS\times\cA\to\R^d$, it holds that
$$
\PP_h^M(s'|s,a)=\iprod{\theta_h^M}{\phi_h(s'|s,a)}.
$$
\end{example}

For linear mixture MDPs, our framework implies a sample complexity of $\tbO{d^2H^3/\epsilon^2}$ for $\eps$-near-optimal reward-free learning, which only has an additional $H^2$ factor over the current best sample complexity of $\tbO{d^2H/\epsilon^2}$ by ~\citet{chen2021near}\footnote{Rescaled to total reward within $[0,1]$.}.

\begin{example}[Low-rank MDPs, \citet{agarwal2020flambe}]\label{example:low-rank-demo}
A MDP $M$ is of low-rank $d$ if 
there exists feature maps $\mu^M=(\mu_h^M:\cS\to\R^d)$ and $\phi^M=(\phi_h^M:\cS\times\cA\to\R^d)$, such that the transition dynamics of $M$ admits the following low-rank factorization: 
$$
\PP_h^M(s'|s,a)=\iprod{\mu_h^M(s')}{\phi_h^M(s,a)},\qquad \forall h\in[H].
$$
\end{example}

For low-rank MDPs with unknown $d$-dimensional features $(\phi, \mu)$ in a given feature class $\Phi\times\Psi$ (i.e. the FLAMBE setting~\citep{agarwal2020flambe}), our PAC result matches the best known sample complexity achieved by, for example, the V-Type Golf Algorithm of~\citet{jin2021bellman}. For reward-free learning, our linear in $d$ dependence improves over the current best $d^2$ dependence achieved by the RFOlive algorithm~\citep{chen2022statistical}, and we do not require linearity or low complexity assumptions on the class of reward functions $\cR$ made in existing work~\citep{wang2020reward,chen2022statistical}. However, we remark that they handle a slightly more general setting where only the $\Phi$ class is known with model-free approach, while our results have to scale with $\log|\Psi|$.

\begin{example}[Parametric MDPs]\label{example:para-mdp-demo}
As further examples, we also consider parametric MDPs (\crefa{Example F.18}{def:parametric-mdp}), which includes MDPs parametrized by exponential families \citep{chowdhury2021reinforcement, li2022exponential} and, in particular, Online Nonlinear Control \citep{kakade2020information, ren2022free}. For the corresponding problem classes, our PAC results match with the best known sample complexities, while we also provide reward-free, model-estimation and preference-based learning guarantees, which are largely unknown in this setting. 
\end{example}

\begin{example}[Partially observable RL]\label{example:pomdp-demo}
For $m$-step $\arev$-revealing Partially Observable MDPs (POMDPs) \citep{liu2022partially}, and more generally B-stable Predictive State Representations (PSRs) \citep{chen2022partially}, we utilize the analysis framework developed in \citet{chen2022partially}, which provides a \mbelrep~for these partially observable problem classes with bounded complexity. 
Detailed results are deferred to \crefapp{F.8.7}{appendix:POMDP}. 
In particular, our framework provides sample-efficient guarantee of \emph{preference-based learning} for a broad class of partially observable RL problems, which is new to our best knowledge.
\end{example}

As a final remark, we emphasize again that the reward-free learning, model estimation, and preference-based RL guarantees provided in the above series of examples are largely unknown beyond linear mixture MDP class.

\section{Connections to optimistic algorithms}
\label{section:optimistic}

E2D is closely related to two other unified algorithm design principles: Model-based Optimistic Posterior Sampling (\mops), and Optimistic Maximum Likelihood Estimation (\omle). In this section, we show that \mops~and \omle---in addition to their algorithmic similarity to E2D---admit efficient sample complexity bound under general structural conditions of model classes, similar to the conditions required for the \eetod~algorithm. 

In establishing these connections, we mainly consider the learning goal of PAC RL. The algorithms will depend on a (user-specified) exploration strategy: $M\mapsto \piest_M\in\Delta(\Pi)$. The \emph{explorative policy} $\piest_M$ for $M$ here is analogous to the one that appears in the definition of \belrep~(\cref{definition:err-rep}). The key difference is that \belrep~only serves as an analysis tool for \eetod, and $\set{\piest_M}_{M\in\cM}$ does not explicitly appear in the algorithm design of \eetod; Instead, the exploration strategy of \eetod~is found through the minimax optimization problem \eqref{equation:vtp-exp}.

\subsection{Model-based Optimistic Posterior Sampling (\mops)}
\label{section:mops}

We consider the MOPS algorithm proposed by~\citet{agarwal2022model}, presented here with a minor modification for notation consistency\footnote{Our version is essentially equivalent to~\citet[Algorithm 1]{agarwal2022model}, except that we look at the full observation and reward vector (of all layers), whereas they only look at a random layer $h^t\sim\Unif([H])$, which is restricted to MDP models.}.  
Similar to \etod, \mops~also maintains a distribution $\mu^t\in\Delta(\cM)$ over models, initialized at a suitable prior distribution $\mu^1$. The policy in the $t$-th episode is directly obtained by posterior sampling: $\pi^t=\piest_{M^t}$, where $M^t\sim \mu^t$. After executing $\pi^t$ and observing $(o^t, r^t)$, the algorithm updates the posterior as
\begin{align}
\label{equation:mops-eq}
    \mu^{t+1}(M) \; \propto_M \; \mu^{t}(M) \cdot 
    \exp\paren{\gamma^{-1}f^M(\pi_M) + \etap\log \P^{M,\pi^t}(o^t) - \etar\ltwo{\br^t-\bR^M(o^t)}^2 }.
\end{align}
This update is similar to \Vovkalg~\cref{equation:tempered-aggregation}, and differs in the additional \emph{optimism} term $\gamma^{-1}f^M(\pi_M)$ which favors models with higher optimal values. After the $T$ episodes, \mops~outputs $\pout=\pi_M$ with  $M\sim\mu^t$, $t\sim\Unif([T])$ (full algorithm in~\cref{alg:MOPS}).

We now define the Posterior sampling coefficient (PSC) and provide theoretical guarantees for the \mops~algorithm. 
\begin{definition}[Posterior sampling coefficient]
\label{definition:psc}
The Posterior Sampling Coefficient (PSC) of model class $\cM$ with respect to reference model $\oM\in\cM$ and parameter $\gamma>0$ is defined as
\begin{align*}
    \psc_{\gamma}(\cM, \oM) \defeq \sup_{\mu\in\Delta(\cM)}\EE_{M\sim \mu}\EE_{M' \sim \mu}\left[f^{M}(\pi_M)-f^{\oM}(\pi_M)-\gamma  \wtdH^2(\oM(\piest_{M'}),M(\piest_{M'}))\right].
\end{align*}
\end{definition}

\begin{theorem}[Guarantees for MOPS]
\label{thm:MOPS}
Choosing $\etap=1/6$, $\etar=0.6$ and the uniform prior $\mu^1=\Unif(\cM)$ in~\cref{alg:MOPS}. Then~\cref{alg:MOPS} achieves the following with probability at least $1-\delta$:
\begin{align*}
    \suboptpac=f^\Ms(\pi_\Ms) - \frac1T\sum_{t=1}^T \E_{M\sim\mu^t}\brac{ f^\Ms(\pi_M)}\leq \psc_{\gamma/6}(\cM, M^\star) + \frac2\gamma + 4\gamma\cdot \frac{\log(\abs{\cM}/\delta)}{T}.
\end{align*}
In particular, in the on-policy scenario when $\piest_M=\pi_M$ for all $M\in\cM$, \cref{alg:MOPS} achieves 
\begin{align*}
    \regdm\leq T\brac{\psc_{\gamma/6}(\cM, M^\star) + 2/\gamma} + 4\gamma\cdot \log(\abs{\cM}/\delta).
\end{align*}
\end{theorem}
\cref{thm:MOPS} %
is similar to~\citet[Theorem 1]{agarwal2022model} and is slightly more general in the assumed structural condition, as the PSC is bounded whenever the ``Hellinger decoupling coefficient'' used in their theorem is bounded (\cref{prop:psc-to-dcp}). %

\paragraph{Bounding the PSC for concrete model classes}
The following proposition (proof in \cref{proof-psc-belrep}) shows that PSC can be bounded in terms of the complexity of any \belrep~of $\cM$. 
\begin{proposition}\label{prop:psc-belrep}
Suppose that $\cM$ admits a \belrep~$\cG$ with exploration policy $M\mapsto \piest_M$. Then for any $\oM\in\cM$, it holds that
\begin{align*}
    \psc_{\gamma}(\cM,\oM)\leq \frac{LH\cdot \max_h\dimc(\cG^{\oM}_h,\gamma/L)}{\gamma}.
\end{align*}
\end{proposition}
In particular, all the upper bounds on PACDEC presented in \cref{tab:examples} are also upper bounds on PSC, providing corresponding sample complexity bounds for \mops. %

\paragraph{Relationship between PACDEC and PSC}
The definition of the PSC (\cref{definition:psc}) looks very similar to that of the PACDEC (\cref{def:edec}). The difference is two-fold: (1) The policy distribution in the definition of PSC directly corresponds to the distribution of models, whereas in the definition of PACDEC, we take infimum over all possible $(\pexp,\pout)$; (2) PSC involves the value difference term $f^{M}(\pi_M)-f^{\oM}(\pi_M)$, whereas PACDEC involves value difference term $\E_{\pi \sim \pout}[f^{M}(\pi_M)-f^{M}(\pi)]$. In the proposition below, we show that the PACDEC can be upper-bounded by the PSC, modulo a (typically lower-order) additive term; in other words, a small PSC implies a small PACDEC. %
\begin{proposition}[Bounding PACDEC by PSC]
  \label{prop:dec-to-psc}
  Suppose $\Pi$ is finite. Then we have for any $\gamma>0$ that
  \begin{align*}
    \eec_{\gamma}(\cM) \le \sup_{\oM \in \cM} \psc_{\gamma/6}(\cM, \oM) + 2(H+1)/\gamma.
  \end{align*}
\end{proposition}

In the proposition below, we show that PACDEC is strictly more general than PSC, in that we cannot upper bound PSC by PACDEC in general. %
\begin{proposition}\label{prop:psc-lower}
    For the model class $\cM$ given in \cref{prop:rev-bandit} and $\piest_M=\pi_M$, we have
    \begin{align*}
        \psc_{\gamma}(\cM,\oM)\asymp \frac{2^d}{\gamma},
    \end{align*}
    and there exists $\Ms\in\cM$ such that $\EE^{\Ms,\mops}[\regdm]\geq\Om{\sqrt{2^d\cdot T}}$. 
\end{proposition}

\subsection{Optimistic Maximum Likelihood Estimation (\omle)} \label{section:omle}

We consider the \omle~algorithm proposed by \citet{liu2022partially}, presented here with a minor modification for notation consistency. The original \omle~in \citet{liu2022partially} utilizes the log-likelihood of the observations and rewards as the objective function. Here we consider the following objective function involving the log-likelihood of the observations and the negative $L_2$ squared loss of the rewards: %
\begin{align}\label{eqn:MLE-risk}
\textstyle
    \cL_t(M):=\sum_{s=1}^{t-1} \brac{ \log \P^{M, \pi^s}(o^s) -\ltwot{ \br^s-\bR^M(o^s) } }. 
\end{align}
In the $t$-th iteration, the \omle~algorithm plays the explorative policy of the most optimistic model within a $\beta$-superlevel set of the above objective: 
\begin{align}
\label{equation:omle-eq-exp-2}
    M^t \defeq \argmax_{M\in\cM}f^M(\pi_M) \quad \textrm{such that} \quad  \cL_t(M) \ge \max_{M'} \cL_t(M') - \beta,
\end{align}
and set $\pi^t=\piest_{M^t}$. After executing $\pi^t$ and observing $(o^t, r^t)$, the algorithm updates the objective function \eqref{eqn:MLE-risk}. The full algorithm description is given in~\cref{alg:OMLE}.

We now define the Maximum likelihood estimation coefficient (MLEC) and provide theoretical guarantees for the \omle~algorithm.
\begin{definition}[Maximum likelihood estimation coefficient]
\label{definition:omlec}
The maximum likelihood estimation coefficient (MLEC) of model class $\cM$ with respect to reference model $\oM\in\cM$, parameter $\gamma>0$, and length $K\in\Z_{\ge 1}$ is defined as %
\begin{align*}
 \omlec_{\gamma,K}(\cM, \oM) 
    \defeq \sup_{\set{M^k}\in\cM} \bypart{ \frac{1}{K} \sum_{k=1}^K \brac{f^{M^k}(\pi_{M^k}) - f^{\oM}(\pi_{M^k})} }{
     - \frac{\gamma}{K} \Big[ 1\vee \max_{k\in[K]} \sum_{t\le k-1} \wtdH^2(\oM(\piest_{M^t}), M^k(\piest_{M^t})) \Big] }.
\end{align*}
\end{definition}

\begin{theorem}[Guarantee for OMLE]
\label{theorem:OMLE}
Choosing $\beta=3\log(|\cM|/\delta)\ge 1$, with probability at least $1-\delta$, Algorithm~\ref{alg:OMLE} achieves
\begin{align*}
    \suboptpac = f^\Ms(\pi_\Ms) - \frac1T\sum_{t=1}^T \brac{ f^\Ms(\pi_{M^t})} \le  \inf_{\gamma>0}\set{ \omlec_{\gamma,T}(\cM, M^\star) + 12\gamma\cdot  \frac{\log(|\cM|/\delta)}{T}}.
\end{align*}
In particular, in the on-policy scenario when $\piest_M=\pi_M$ for all $M\in\cM$, Algorithm~\ref{alg:OMLE} achieves
\begin{align*}
    \regdm \le  \inf_{\gamma>0}\set{ T\cdot \omlec_{\gamma,T}(\cM, M^\star) + 12\gamma\cdot  \log(|\cM|/\delta)}.
\end{align*}
\end{theorem}
Existing sample-complexity guarantees for \omle-type algorithms are only established for specific RL problems through case-by-case analyses~\citep{mete2021reward,uehara2021representation,liu2022partially,liu2022sample}. In contrast,~\cref{theorem:OMLE} shows that \omle~works on any problem with bounded MLEC, thereby offering a more unified understanding. %

\paragraph{Bounding MLEC on concrete model classes}
We show that for any \belrep~$\cG$ of the model class $\cM$, the MLEC of $\cM$ can be bounded in terms of the Eluder dimension of $\cG$.
\begin{proposition}%
\label{proposition:omlec-be}
Suppose that $\cG$ is a \belrep~of $\cM$ with exploration policy $M\mapsto \piest_M$. Then for any $\oM\in\cM$, $\gamma>0, K\in\Z_{\ge 1}$, we have
\begin{align*}
    \omlec_{\gamma, K}(\cM, \oM) \le 5H^2L\cdot \inf_{\Delta>0}\set{\frac{\max_{h\in[H]}\eluder(\cG^{\oM}_h, \Delta/L)}{\gamma} + \Delta} .
\end{align*}
\end{proposition}
The proof of \cref{proposition:omlec-be} is deferred to \cref{appdx:proof-omlec}. As a corollary, suppose that $\max_{h\in[H]}\eluder(\cG^{M^\star}_h, \Delta)\leq \tO(d_{\eluder})$, then \omle~achieves $\EE^{M^\star}[\suboptpac]\leq \tO(\sqrt{d_{\eluder}LH^2\beta/T})$. In particular, for all the concrete problem classes in \cref{tab:examples}, the sample complexity upper bounds also hold for \omle~up to logarithmic factors.

\paragraph{Relationship between MLEC and PSC}
The MLEC resembles the PSC in that they both control a certain \emph{decoupling} error between a family of models and their optimal policies. The main difference is that the MLEC concerns any \emph{triangular sequence} of (model, policy) pairs. In contrast, the PSC concerns \emph{distribution} of models and the corresponding distribution of policies. Intuitively, the sequential nature of the MLEC makes controlling it harder than the PSC. 
Indeed, the following proposition allows us to transfer the upper bound of MLEC to the bound on PSC (a more general version is \cref{prop:mlec-to-psc}).
\begin{proposition}\label{prop:mlec-to-psc-demo}
Suppose that for some constant $\alpha>0$, $\mlec_{\gamma,K}(\cM,\oM)\leq \frac{d\cdot \polylog(\gamma,K)}{\gamma^\alpha}$ for all $\gamma\geq 1$. Then it holds that $\psc_\gamma(\cM,\oM)\leq \frac{d\cdot \polylog(\gamma)}{\gamma^\alpha}$ for all $\gamma\geq 1$.
\end{proposition}
In particular, when $\mlec_{\gamma,K}(\cM,M^\star)\leq \tO(d/\gamma)$ (which implies that OMLE achieves a sub-optimality of $\tO(\sqrt{d\log\abs{\cM}/T})$), PSC also admits an upper bound $\tO(d/\gamma)$ (with additional logarithmic factors), which implies that MOPS also enjoys a similar sub-optimality bound of $\tO(\sqrt{d\log\abs{\cM}/T})$. 

Therefore, we can generally expect MOPS to achieve comparably (or even better) performance than OMLE.
The following proposition further corroborates this intuition by demonstrating an exponential separation between \omle~and \mops~(hence \eetod). %

\begin{proposition}\label{prop:sep-omle-mops}
For any $N\geq 2$, $\Delta\in(0,0.1]$, there exists a class $\cM$ of $N$-arm bandits such that $\abs{\cM}=N$, and
\begin{itemize}
\item[(1)] There exists $\oM\in\cM$, such that \omle~reaches an $\Delta$-optimal policy until $T\geq\Om{\frac{N\beta}{\Delta^2}}$ episodes with probability at least $1/2$ on $\oM$.
\item[(2)] It holds that $\psc_{\gamma}(\cM)\leq\tO(\log^2(N)/\gamma)$, and hence for any model in $\cM$, \mops~(and also \eetod) returns an $\eps$-optimal policy using $\tO(\frac{\log^3(N)}{\eps^2})$ episodes.
\end{itemize}
\end{proposition}

\section{Conclusion}

This paper proposes a unified algorithm framework based on the Decision-Estimation Coefficient (DEC) for handling various learning goals in Reinforcement Learning (RL), such as no-regret RL, PAC RL, reward-free learning, model estimation, and preference-based RL. Our framework builds on a generic complexity measure \DECt~and a generic learning algorithm \getod, which elegantly handles each specific learning goal by simply specifying the goal $\GOAL$. Instantiating our results to each learning goal recovers existing results and yield many new results for a wide range of problem classes. 

We believe our work opens up many important questions, and we list a few of them. \revise{First, several limitations of our framework, such as the computational efficiency of our algorithms and the gaps between our lower and upper bounds, are worth exploration in future work. Specifically, the computational efficiency of the DEC framework is largely unknown. While polynomial-time implementation may be infeasible in general, the computational difficulty can be mitigated by assuming certain computational oracle (e.g. ERM or a sampling oracle) and/or relaxing the min-max optimization~\citep[etc.]{foster2020beyond,amortila2024scalable}. Furthermore, in some cases, even computing the DEC itself may be challenging, especially when the model or policy class involves neural parameterization. These scenarios may require case-by-case study and specialized analysis, and we believe that identifying more examples of DECs in general function approximation, especially with deep neural networks, would be an intriguing direction for future research. Lastly, closing the gaps between the lower and upper bounds \emph{beyond} reward-based learning remains a crucial open question, even for specific learning goals.
}

Second, our complexity measure and algorithm framework are inherently model-based. While model-based algorithms are already powerful enough for achieving sample-efficient learning in many RL learning goals, model-free algorithms are also widely used both in practice and also shown to be sample-efficient theoretically in many scenarios. How to generalize our unified results to model-free algorithms, for example by extending the methods of~\citet{foster2022note}, could be of interest. 

Finally, how to handle multi-agent RL is an interesting open question. While our results on model-estimation already implies a class of multi-agent RL results (\crefapp{G}{appendix:markov-game}), these results (as well as the results of~\citet{foster2023complexity}) are all based on \emph{centralized} model classes, which could become exponentially large in games with a large number of players. How to do sample-efficient multi-agent RL with \emph{independent} model class or value function classes for each player is still an active area of research~\citep{wang2023breaking,cui2023breaking}, and it is an open question if the DEC framework can be extended to produce such a type of algorithm. %

\section*{Acknowledgement}

Song Mei is supported by NSF CCF-2315725, DMS-2210827, an NSF Career award DMS-2339904, and a Google Research Scholar Award.

\bibliographystyle{abbrvnat}
\bibliography{note.bib}

\appendix

\tableofcontents

\section{Additional discussions}

\subsection{Comparison to Foster et al. (2021)}\label{appdx:compare}

\revise{
This work builds on the seminal study by \citet{foster2021statistical}. In this section, we provide a detailed comparison with \citet{foster2021statistical} and discuss our innovations in relation to their findings.

\paragraph{Generalized framework}
While \citet{foster2021statistical} primarily focus on reward-based no-regret decision making within the DEC framework, we extend their techniques to a broader context. By abstracting and re-formulating their
approach, we generalize the DEC framework to address any generalized learning goal (\cref{sec:gen-dec}).

\paragraph{Instantiations to various learning goals} Using this generalized framework, we offer deeper insights into the complexity of model-based reward-free learning (\cref{section:rfec}), model estimation (\cref{section:mdec}), and preference-based learning (\cref{section:pbrl}) — areas that are less understood compared to the reward-based setting. 

\paragraph{Applications to concrete problems}
We propose the \belrep~(\cref{section:bellman-rep}) as a useful tool for establishing guarantees across a range of learning objectives, previously limited mostly to reward-based learning. Additionally, the \belrep~framework enables us to derive sample complexity bounds analogous to those in reward-based settings. We illustrate the effectiveness of this approach by presenting new results for preference-based learning, model estimation, and partially observable RL.

\paragraph{Development of Tempered Aggregation}
We introduce Tempered Aggregation, which offers a stronger online estimation guarantee than Vovk’s
aggregation (see e.g., our discussion in \cref{corollary:restate-lemma-tempered-aggregation}). This stronger guarantee enables us to derive an upper bound based on the randomized
reference model, rather than the improper reference model used in~\citet{foster2021statistical}. Although this
improvement is primarily technical
(see our discussion in \cref{appendix:comparison-dec} below), we believe it provides a more natural and intuitive upper bound
with an interpretation based on posterior sampling.
}

\subsubsection{Detailed comparison: DEC definitions and E2D instantiations}
\label{appendix:comparison-dec}

\newcommand{\coM}{\co(\cM)}

Here we discuss the differences between the Regret DEC definitions used in our \etod~and in the E2D algorithm of~\citet[Section 4.1]{foster2021statistical}, which employs Vovk's aggregating algorithm as the subroutine (henceforth \etodva). Recall that the regret bound of \etod~scales with $\odec_\gamma(\cM)$ defined in~\cref{definition:dec} (cf.~\cref{thm:E2D-TA}). 

We first remark that all the following DECs considered in \citet{foster2021statistical} are defined in terms of the squared Hellinger distance $\dH^2(M(\pi), \oM(\pi))$ between the full distribution of $(o, \br)$ induced by models $M$ and $\oM$ under $\pi$, instead of our $\wtdH^2$ which is defined in terms of squared Hellinger distance in $o$ and squared $L_2$ loss in (the mean of) $\br$. However, all these results hold for $\wtdH^2$ as well, with the DEC definition and algorithms changed correspondingly. \revise{For simplicity, in this section we focus on the setting where the reward distribution is known, i.e., for any $M\in\cM$ the reward distribution $\RM(\cdot|o)=\mathsf{R}_0(\cdot|o)$ for a fixed $\mathsf{R_0}$. In this case, our divergence $\wtdH=\dH$ agrees with Hellinger distance. This simplification will not affect the essence of the comparisons.

We state the original definition of DEC~\citep{foster2021statistical} as follows. The convex hull of a model class $\cM$ is defined as
\begin{align}\label{eqn:def-coM}
    \coM\defeq \set{ \oM_\mu: ~\forall \pi\in\Pi, \oM_\mu(\pi)=\E_{\oM\sim\omu}\brac{\oM(\pi)} }_{\mu\in\Delta(\cM)},
\end{align}
i.e., $\coM\subseteq (\Pi\to\Delta(\cO))$ is the set of all convex combinations of models in $\cM$. The DEC with \emph{improper reference} is then defined as
\begin{align}
\label{equation:dec-co-M}
     \dec_\gamma(\cM,\oM_\mu) 
    \defeq & \inf_{p\in\Delta(\Pi)} \sup_{M\in\cM}
    \E_{\pi\sim p} \brac{ f^M(\pi_M) - f^M(\pi) - \gamma\dH^2\paren{M(\pi), \oM(\pi)}},
\end{align}
and we denote
\begin{align*}
    \dec_\gamma(\cM,\coM)\defeq \sup_{\oM_\mu\in\coM} \dec_\gamma(\cM,\oM_\mu).
\end{align*}
Based on this notion of DEC, \citet[Theorem 3.3 \& 4.1]{foster2021statistical} show that \etodva~achieves the following regret bound with probability at least $1-\delta$:
\begin{align*}
    \regdm\leqsim 
        T\dec_\gamma(\cM, \coM)+\gamma\log(\abs{\cM}/\delta).
\end{align*}

Compared with $\odec_\gamma(\cM,\mu)$, Eq. \cref{equation:dec-co-M} with $\oM_\mu$ is different only in the place where the expectation $\E_{\oM\sim\omu}$ is taken. As $\dH^2$ is convex in the second argument, by Jensen's inequality, we have $\dec_\gamma(\cM, \oM_\mu)\ge \odec_\gamma(\cM,\mu)$ for any $\mu\in\Delta(\cM)$. Therefore,
\begin{align*}
    \dec_\gamma(\cM, \coM)\ge \odec_\gamma(\cM), \quad \forall \gamma>0.
\end{align*}
This indicates that our \cref{thm:E2D-TA} provides a tighter bound than \citet[Theorem 3.3 \& 4.1]{foster2021statistical}. 

After the initial release of this paper, \citet{foster2022note} further prove that
\begin{align*}
    \dec_\gamma(\cM,\coM) \leq \dec_{\gamma/C}(\cM)
\end{align*}
for some universal constant $C$ \citep[Proposition 3.2]{foster2022note}. This result demonstrates that the quantities $\dec_\gamma(\cM,\coM) \asymp \dec_{\gamma/C}(\cM)$ are indeed equivalent. Consequently, it follows that \etodva~achieves the same regret bound as in \cref{thm:E2D-TA}. This implies that throughout our results, the Tempered Aggregation algorithm can be substituted with Vovk’s aggregation algorithm while preserving the same suboptimality/regret bound. Nevertheless, we believe that the Tempered Aggregation algorithm is of independent interest. 
}

\newcommand{\udec}{\underline{\dec}}

\subsection{Connections to the Constrained DECs}\label{appdx:constrained-dec}

\revise{
In this section, we discuss the concurrent work of \citet{foster2023tight} and subsequent work of \citet{foster2023complexity} in detail.

\newcommand{\pdecc}{\operatorname{p-dec}^{\rm c}}
\newcommand{\ueps}{\uline{\eps}}
\newcommand{\beps}{\bar{\eps}}

\newcommand{\creg}{c_{\rm reg}}
\paragraph{Constrained DECs}
\citet{foster2023tight} introduce the concept of the \emph{constrained} DEC, which can be regarded as an variant of the \emph{offset} DEC studied in \citet{foster2021statistical} and this paper. To provide a succint illustration, we considered the constrained PACDEC as an example, and we slightly adapt the notation of \citet{foster2023tight} for clarity.

For a model class $\cM$, $\eps\in[0,1]$, and reference model $\oM$, define the constrained PACDEC of $\cM$ with respect to $\oM$ as
\begin{align*}
    &\pdecc_\eps(\cM,\oM) \\
    &\defeq \inf_{\substack{\pexp\in\Delta(\Pi) \\ \pout\in\Delta(\Pi)}}\sup_{M\in \cM} \set{ \EE_{\pi \sim \pout}\left[f^M(\pi_M)-f^M(\pi)\right] : \EE_{\pi \sim \pexp} \DH{ M(\pi),\oM(\pi) }\leq \eps^2 }.
\end{align*}
The PACDEC of $\cM$ is defined as $\pdecc_\eps(\cM)=\sup_{\oM\in\coM} \pdecc_\eps(\cM,\oM)$, where we recall that $\coM$ is the convex hull of $\cM$, defined in \cref{eqn:def-coM}.

By Langrangian duality, we have 
\begin{align*}
    \pdecc_\eps(\cM)\leq \inf_{\gamma>0}\paren{ \eec_\gamma(\cM)+\gamma\eps^2 },
\end{align*}
and we also have $\eec_\gamma(\cM)\leq \pdecc_{1/\gamma}(\cM)$. Therefore, bounds with (offset) PACDEC and bounds with the constrained PACDEC can be converted to each other (though potentially losing polynomial factors). For a detailed discussion, see e.g. \citet[Section 4]{foster2023tight}.

\paragraph{Bounds with constrained DECs}
Remarkably, using on the constrained PACDEC, \citet{foster2023tight} characterize the $T$-round minimax-optimal sub-optimality for PAC RL as
\begin{align}\label{eqn:constrained-tight}
    \pdecc_{\ueps(T)}(\cM)-C\ueps(T)\leqsim \inf_{\Alg}\sup_{\Ms\in\cM} \EE^{\Ms,\Alg} \brac{ \suboptpac } \leqsim \pdecc_{\beps(T)}(\cM),
\end{align}
where $C$ is a universal constant, the logarithmic factors are omitted, and
\begin{align*}
    \ueps(T)\asymp \sqrt{\frac{1}{T}}, \qquad
    \beps(T)\asymp \sqrt{\frac{\log|\cM|}{T}}.
\end{align*}
Therefore, for PAC RL, the constrained PACDEC provides nearly matching lower and upper bounds, with only a gap of logarithmic factors and $\log|\cM|$, the model class complexity. Notably, the lower and upper bounds described above are tighter than the bounds for PAC RL presented in \cref{section:eec}.

Further, for no-regret learning (\cref{section:e2d}), \citet{foster2021statistical} similarly propose a constrained version of the Regret DEC. Through a sohpisticated analysis, they derive analoguous lower and upper bounds for $T$-round minimax-optimal regret with the constrained Regret DEC.

Later, \citet{foster2023complexity} extend the constrained DEC framework to partial monitoring and equilibrium learning in games. For these two learning goal, they provide non-matching lower and upper bounds based on the corresponding constrained DEC. We also note that both equilibrium learning and partial monitoring fall within the generalized PAC learning goals,\footnote{
Partial monitoring corresponds to the case $\ssp=\Delta(\Pi)$ and $\subopt$ being a linear functional over $\Delta(\Pi)$, and so does the equilibrium learning (see also \cref{appendix:markov-game}, where we apply the AMDEC framework for learning equilibrium). 
} meaning their results are complementary to ours.

\subsubsection{Relation of \citet{foster2023tight, foster2023complexity} with this work}
For \emph{reward-based} decision making (PAC RL and no-regret learning), constrained DECs give tighter bounds than offset DEC (\cref{section:e2d} \& \ref{section:eec}). However, for more general learning goals (e.g. reward-free learning and partial monitoring), the situation is more subtle. 
Below, we examine the tightness of our general lower and upper bounds for general learning goals, building on the insights developed in \citet{foster2023tight,foster2023complexity}.

\paragraph{Improving lower bounds for general learning goals}
While constrained DECs offer tighter lower bounds for reward-based learning, the analysis in \citet{foster2023tight} is highly tailored to this setting. 

Indeed, for more general learning goals, achieveing a similarly tight lower bound turns out to infeasible, as demonstrated by \citet{foster2023complexity}.
More specifically, for partial monitoring, \citet[Theorem D.3 \& D.4]{foster2023complexity} construct two isomorphic model classes $\cM_1, \cM_2$ whose the minimax rates are $\tbO{T^{-1}}$ and $\tbO{T^{-1/2}}$, respectively\footnote{By the term ``isomorphic'', we mean there is a one-to-one mapping that preserves both the risk functional and the pairwise Hellinger distance. For details, see \citet[Appendix D]{foster2023complexity}.}. In particular, for model class $\cM_1$, our lower bound is nearly tight, while for $\cM_2$ our upper bound is nearly tight, indicating that the gap in \cref{eqn:SC} cannot be avoided. Furthermore, for this learning goal, the results of \citet{foster2023complexity} imply a unavoidable gap between the lower and upper bounds for \emph{any} DEC-like complexity measure.

As a result, there is only limited scope for further improving our lower and upper bounds for general learning goals. We believe this argument similarly applies to preference-based learning and reward-free RL.

\paragraph{Improving upper bounds for general learning goals} We note that the PAC E2D algorithm of \citet{foster2023tight} in fact applies to any generalized PAC learning goal. Therefore, for generalized PAC learning, we can always improve the offset DEC upper bounds (\cref{thm:E2D-gen}) using the constrained DECs. 

However, the Regret E2D algorithm introduced by \citet{foster2023tight} (and later refined by \citet{glasgow2023tight}) is specifically designed for reward-based learning, and is challenging to adapt to generalized no-regret learning. In fact, the definition of constrained Regret DEC relies on the value function $f^{\oM}(\pi)$ being well-defined for any $\oM\in\coM$, which may not extend to more general settings. Therefore, for generalized no-regret learning, the only known upper bound is our \cref{thm:E2D-gen}.

On the other hand, offset DECs in fact provide comparable upper bounds for reward-based decision making, under certain (relatively) mild conditions, as discussed in \citet[Section 4.1]{foster2023tight}. Specifically, suppose that $\pdecc_{\eps}(\cM)$ satisfies the following regularity condition: for some constant $\creg\geq 1$,
\begin{align}\label{eqn:def-growth}
    \creg^{-1}\leq \frac{\pdecc_{\eps'}(\cM)}{\eps'}\leq \creg \frac{\pdecc_{\eps}(\cM)}{\eps}, \qquad \forall 0<\eps\leq \eps'\leq 1.
\end{align}
Then, by \citet[Proposition 4.2]{foster2023tight}, we have
\begin{align*}
    \inf_{\gamma>0} \paren{ \eec_\gamma(\cM)+\gamma\eps^2 }\leq \bigO{\creg\log(1/\eps)} \pdecc_\eps(\cM).
\end{align*}
In words, condition \cref{eqn:def-growth} requires that the constrained PACDEC does not decay ``too fast'', i.e., learning model class $\cM$ is \emph{non-trivial}.
Condition \cref{eqn:def-growth} is automatically satisfied if $\pdecc_{\eps}(\cM)~\propto~d\eps^{\rho}$ for some parameter $\rho\leq 1$, which is indeed the case for most natural problem classes (see, e.g. \citet[Section 6]{foster2023tight}).

Therefore, for most applications, with the optimally tuned parameter $\gamma$, the upper bound of \eetod~is at most a logarithmic factor worse than the constrained DEC bound. A similar argument applies to Regret DEC (see, e.g. \citet[Theorem G.5]{chen2024assouad}). 
}

\section{Technical tools}

\subsection{Strong duality}

The following strong duality result for variational forms of bilinear functions is standard, e.g. extracted from the proof of \citet[Proposition 4.2]{foster2021statistical}.
\begin{theorem}[Strong duality]\label{thm:strong-dual}
    Suppose that $\cX$, $\cY$ are two topological spaces, such that $\cX$ is Hausdorff\footnote{
        The Hausdorff space requirement of $\cX$ is only needed to ensure that $\Delta(\cX)$ contains all finitely supported distributions on $\cX$.
    } and $\cY$ is finite (with discrete topology). %
    Then for a bi-continuous function $f:\cX\times\cY\to\RR$ that is uniformly bounded, it holds that
    $$
    \sup_{X\in\Delta(\cX)}\inf_{Y\in\Delta(\cY)}\EE_{x\sim X}\EE_{y\sim Y}\brac{f(x,y)}
    =
    \inf_{Y\in\Delta(\cY)}\sup_{X\in\Delta(\cX)}\EE_{x\sim X}\EE_{y\sim Y}\brac{f(x,y)}.
    $$
\end{theorem}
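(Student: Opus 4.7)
The plan is to reduce the claim to the classical (finite-dimensional) von Neumann minimax theorem combined with a finite intersection property (FIP) argument. The weak duality direction
\[
\sup_{X\in\Delta(\cX)}\inf_{Y\in\Delta(\cY)}\EE_{x\sim X}\EE_{y\sim Y}[f(x,y)] \;\le\; \inf_{Y\in\Delta(\cY)}\sup_{X\in\Delta(\cX)}\EE_{x\sim X}\EE_{y\sim Y}[f(x,y)]
\]
is immediate from $\inf_Y \phi(X,Y) \le \phi(X,Y)$ applied pointwise, so the task is to establish the reverse inequality. Write $g(X,Y):=\EE_{x\sim X}\EE_{y\sim Y}[f(x,y)]$ (bilinear in $(X,Y)$ and uniformly bounded by hypothesis) and set $V^{\star}:=\sup_X \inf_Y g(X,Y)$. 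The goal then reduces to showing $\inf_Y\sup_X g(X,Y)\le V^{\star}$.

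First I would fix $\epsilon>0$ and, for each $X\in\Delta(\cX)$, introduce the candidate response set $D_X:=\{Y\in\Delta(\cY):g(X,Y)\le V^{\star}+\epsilon\}$. Because $\cY$ is finite, $\Delta(\cY)$ is a compact convex simplex in $\RR^{|\cY|}$ and $g(X,\cdot)$ is a continuous linear function on it; therefore each $D_X$ is closed and convex. Next I would verify the finite intersection property: for any finite collection $X_1,\dots,X_k\in\Delta(\cX)$, $\bigcap_i D_{X_i}\neq\emptyset$. The key observation is that any convex combination $\sum_i \alpha_i X_i$ with $\alpha\in\Delta_{k-1}$ is itself a finitely supported element of $\Delta(\cX)$, so
\[
\sup_{\alpha\in\Delta_{k-1}}\inf_{Y\in\Delta(\cY)}\sum_{i=1}^k \alpha_i g(X_i,Y) \;\le\; \sup_{X\in\Delta(\cX)}\inf_{Y\in\Delta(\cY)}g(X,Y) \;=\; V^{\star}.
\]
The left-hand side is the value of a finite bilinear minimax problem over two compact simplices, to which the classical von Neumann minimax theorem applies; it therefore equals $\inf_Y \max_i g(X_i,Y)$. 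The inf is attained by some $Y_\star\in\Delta(\cY)$ (by compactness and continuity), and $Y_\star\in\bigcap_i D_{X_i}$ as required.

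Finally, since $\Delta(\cY)$ is compact and the family $\{D_X\}_{X\in\Delta(\cX)}$ of closed subsets has the FIP, $\bigcap_{X\in\Delta(\cX)} D_X$ is non-empty. Any $Y^{\star}$ in this intersection satisfies $\sup_X g(X,Y^{\star})\le V^{\star}+\epsilon$, hence $\inf_Y\sup_X g(X,Y)\le V^{\star}+\epsilon$; sending $\epsilon\downarrow 0$ closes the proof. The Hausdorff hypothesis on $\cX$ is used only to ensure that mixtures of the form $\sum_i \alpha_i \delta_{x_i}$ (and hence arbitrary finitely supported distributions) genuinely sit inside $\Delta(\cX)$, which is what makes the sub-game reduction legitimate. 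The main obstacle, conceptually, is the verification of the FIP, since that is where one has to pass from the infinite-strategy problem to a finite-dimensional minimax problem that admits a classical treatment; once FIP is in hand, the compactness of $\Delta(\cY)$ and bi-continuity of $f$ take care of the rest.
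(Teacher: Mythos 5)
Your proposal is correct. Note that the paper does not actually prove \cref{thm:strong-dual}; it cites the result as standard, ``extracted from the proof of Proposition 4.2 of Foster et al.,'' where the usual route is to invoke a Sion-type minimax theorem for a bilinear, weak-* continuous payoff on the compact simplex $\Delta(\cY)$. Your argument is a self-contained alternative in the classical Kneser--Fan style: weak duality is trivial, and the reverse inequality follows by showing the closed convex sets $D_X\subset\Delta(\cY)$ have the finite intersection property, which you verify by collapsing any finite subfamily $X_1,\dots,X_k$ to a finite bilinear game on $\Delta_{k-1}\times\Delta(\cY)$ and applying von Neumann. This buys an elementary, fully explicit proof at the cost of a slightly longer argument; the key enabling facts (compactness of $\Delta(\cY)$ from finiteness of $\cY$, linearity of $g$ in each argument, boundedness and measurability of $f(\cdot,y)$ from bi-continuity and uniform boundedness) are all correctly identified. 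Two cosmetic remarks: the mixture $\sum_i\alpha_i X_i$ need not be \emph{finitely supported} when the $X_i$ are arbitrary elements of $\Delta(\cX)$ --- what your step actually requires is only that $\Delta(\cX)$ is convex and that $g\bigl(\sum_i\alpha_i X_i,Y\bigr)=\sum_i\alpha_i g(X_i,Y)$, both of which hold; and, consistent with the paper's footnote, the Hausdorff hypothesis plays no role in the duality identity itself (it is only needed elsewhere to guarantee that point masses lie in $\Delta(\cX)$), so your closing attribution of it to the ``sub-game reduction'' is not quite right but harmless. Also, since $\inf_Y\max_i g(X_i,Y)\le V^{\star}$ already, the $\epsilon$-slack in the definition of $D_X$ is not needed, though including it is of course fine.
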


In this paper, for most applications of \cref{thm:strong-dual}, we take $\cX=\cM$ and $\cY=\Pi$. We will assume that $\Pi$ is finite, which is a natural assumption. For example, in tabular MDPs, it is enough to consider deterministic Markov policies and there are only finitely many of them. Also, the finiteness assumption in \cref{thm:strong-dual} can be relaxed---The strong duality holds as long as both $\cX, \cY$ is Hausdorff, and the function class $\set{f(x,\cdot):\cY\to\R}_{x\in\cX}$ has a finite $\rho$-covering for all $\rho>0$. Such relaxed assumption is always satisfied in our applications.

\subsection{Concentration inequalities}

We will use the following standard concentration inequality in the paper. 

\begin{lemma}[{\citet[Lemma A.4]{foster2021statistical}}]\label{lemma:concen}
For any sequence of real-valued random variables $\left(X_t\right)_{t \leq T}$ adapted to a filtration $\left(\cF_t\right)_{t \leq T}$, it holds that with probability at least $1-\delta$, for all $t \leq T$,
$$
\sum_{s=1}^{t} -\log \cond{\exp(-X_s)}{\cF_{s-1}} \leq \sum_{s=1}^{t} X_s +\log \left(\delta^{-1}\right).
$$
\end{lemma}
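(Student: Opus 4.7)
The plan is to prove this via the standard exponential supermartingale technique combined with Ville's maximal inequality. The target bound is equivalent, after exponentiating, to showing that a certain likelihood-ratio process is bounded above by $\delta^{-1}$ uniformly over $t\le T$ with probability at least $1-\delta$. Specifically, rearranging the claim, what we need to establish is that with probability at least $1-\delta$, for all $t\le T$,
\begin{align*}
    \exp\!\left(\sum_{s=1}^t\bigl[-X_s-\log\cond{\exp(-X_s)}{\cF_{s-1}}\bigr]\right)\le \delta^{-1}.
\end{align*}

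The first step is to introduce the process $M_t\defeq \prod_{s=1}^t \dfrac{\exp(-X_s)}{\cond{\exp(-X_s)}{\cF_{s-1}}}$ for $t\ge 1$, with the convention $M_0=1$. This process is manifestly non-negative (ratios of non-negative quantities), and a short computation shows it is a martingale with respect to $(\cF_t)$: conditional on $\cF_{t-1}$, the factor $M_{t-1}$ is measurable, and the ratio $\exp(-X_t)/\cond{\exp(-X_t)}{\cF_{t-1}}$ has conditional expectation exactly $1$ by construction. Consequently $\cond{M_t}{\cF_{t-1}}=M_{t-1}$ and $\EE[M_t]=\EE[M_0]=1$ for all $t$. (A tiny care is needed if $\cond{\exp(-X_s)}{\cF_{s-1}}=0$ on a positive-measure event, but in that case $\exp(-X_s)=0$ almost surely on that event and the ratio can be set to $1$ by convention; the martingale property is preserved.)

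The second step is to apply Ville's maximal inequality for non-negative supermartingales (or equivalently Doob's maximal inequality applied to $M_t$): for any $\lambda>0$,
\begin{align*}
    \PP\!\left(\sup_{t\le T} M_t \ge \lambda\right)\le \frac{\EE[M_0]}{\lambda}=\frac{1}{\lambda}.
\end{align*}
Setting $\lambda=\delta^{-1}$ yields $\PP(\sup_{t\le T} M_t\ge \delta^{-1})\le \delta$. On the complementary event, $M_t<\delta^{-1}$ simultaneously for all $t\le T$, and taking logarithms immediately recovers the stated bound $\sum_{s=1}^t-\log\cond{\exp(-X_s)}{\cF_{s-1}}\le \sum_{s=1}^t X_s+\log(\delta^{-1})$.

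There is no real obstacle here; the only thing to be careful about is invoking a genuinely \emph{uniform-in-$t$} bound rather than a pointwise Markov inequality (which would only give the statement for a fixed $t$, not simultaneously for all $t\le T$). Ville's inequality is exactly the right tool because $M_t$ is a non-negative martingale starting at $1$, and it provides the uniform control for free without any union bound cost in $T$. The remainder of the argument is just bookkeeping: the logarithm of the supremum bound is precisely the desired inequality.
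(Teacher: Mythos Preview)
Your proof is correct. The paper does not actually prove this lemma---it is quoted directly from \citet[Lemma A.4]{foster2021statistical} and stated without proof as a technical tool---so there is no in-paper argument to compare against. Your exponential-supermartingale plus Ville's inequality argument is exactly the standard proof one finds in the cited reference (and in the literature more broadly), so there is nothing to add.
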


\subsection{Properties of the Hellinger distance}

Recall that for two distributions $\P, \Q$ that are absolutely continuous with respect to $\mu$, their squared Hellinger distance is defined as
\[
\dH^2(\P,\Q) \defeq \int (\sqrt{d \P/d\mu}-\sqrt{d \Q/d\mu})^2 d\mu. 
\]
We will use the following properties of the Hellinger distance. 
\begin{lemma}[{\citet[Lemma A.11, A.12]{foster2021statistical}}]
\label{lemma:multiplicative-hellinger}
For distributions $\P,\Q$ defined on $\cX$ and function $h:\cX\to[0,R]$, we have
\begin{align*}
    \abs{ \E_{\P}\brac{h(X)} - \E_{\Q}\brac{h(X)} } \le \sqrt{2R\paren{ \E_{\P}\brac{h(X)} + \E_{\Q}\brac{h(X)} }\cdot \dH^2(\P, \Q)}.
\end{align*}
Therefore, $\E_{\P}\brac{h(X)} \leq 3\E_{\Q}\brac{h(X)} +2R\dH^2(\P, \Q).$
Also, for function $h:\cX\to[-R,R]$, we have
\begin{align*}
    \abs{ \E_{\P}\brac{h(X)} - \E_{\Q}\brac{h(X)} } \le \sqrt{8R\paren{ \E_{\P}\brac{\abs{h(X)}} + \E_{\Q}\brac{\abs{h(X)}} }\cdot \dH^2(\P, \Q)}.
\end{align*}
\end{lemma}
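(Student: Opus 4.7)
The plan is to reduce all three inequalities to a single Cauchy--Schwarz step applied to the difference of densities. Let $\mu$ be a common dominating measure for $\P$ and $\Q$, with densities $p=d\P/d\mu$ and $q=d\Q/d\mu$. I would start from the elementary identity
\begin{align*}
\E_\P[h(X)]-\E_\Q[h(X)] \;=\; \int h\,(p-q)\,d\mu \;=\; \int h\,(\sqrt{p}-\sqrt{q})\,(\sqrt{p}+\sqrt{q})\,d\mu,
\end{align*}
which turns the signed difference of measures into a product that carries one factor of $(\sqrt{p}-\sqrt{q})$ (the Hellinger building block) and one factor of $(\sqrt{p}+\sqrt{q})$ (a bound in terms of $\E_\P+\E_\Q$).

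For the first inequality, assume $h\colon\cX\to[0,R]$. Applying Cauchy--Schwarz to the above integral (splitting $h=\sqrt{h}\cdot\sqrt{h}$ to distribute it onto both factors) yields
\begin{align*}
|\E_\P[h]-\E_\Q[h]|^2 \;\le\; \Big(\textstyle\int h(\sqrt{p}-\sqrt{q})^2\,d\mu\Big)\cdot\Big(\textstyle\int h(\sqrt{p}+\sqrt{q})^2\,d\mu\Big).
\end{align*}
The first factor is bounded by $R\cdot\DH{\P,\Q}$ using $h\le R$ and the definition of squared Hellinger distance. The second factor is bounded by $2(\E_\P[h]+\E_\Q[h])$ using the pointwise bound $(\sqrt{p}+\sqrt{q})^2\le 2(p+q)$. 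Combining these gives the first claimed inequality. For the consequence $\E_\P[h]\le 3\E_\Q[h]+2R\,\DH{\P,\Q}$, I would set $a=\E_\P[h]$, $b=\E_\Q[h]$, $D=\DH{\P,\Q}$, then apply AM--GM to $\sqrt{2R(a+b)D}=\sqrt{(a+b)\cdot 2RD}\le \tfrac{1}{2}(a+b)+RD$ and rearrange.

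For the signed version with $h\colon\cX\to[-R,R]$, I would split $h=h_+-h_-$ where $h_\pm\colon\cX\to[0,R]$, apply the first inequality to each piece, and then combine via the subadditivity of square roots (namely $\sqrt{x}+\sqrt{y}\le\sqrt{2(x+y)}$) together with $\E_\P[h_+]+\E_\P[h_-]=\E_\P[|h|]$; a loose constant absorption yields the factor $8R$. (Equivalently, running Cauchy--Schwarz directly on $|h|$ in place of $h$ also works with a smaller constant, but decomposing keeps the argument fully parallel to the nonnegative case.) There is no real obstacle here: the only thing to be careful about is distributing the nonnegative weight $h$ (or $|h|$) between the two Cauchy--Schwarz factors as $\sqrt{h}\cdot\sqrt{h}$, which is what makes one factor collapse to a multiple of $\DH{\P,\Q}$ and the other to a multiple of $\E_\P[h]+\E_\Q[h]$.
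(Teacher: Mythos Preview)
Your proof is correct. The paper does not actually prove this lemma itself---it is cited verbatim from \citet[Lemma A.11, A.12]{foster2021statistical} and stated without proof---so there is no ``paper's proof'' to compare against. Your argument (factoring $p-q=(\sqrt{p}-\sqrt{q})(\sqrt{p}+\sqrt{q})$, splitting $h=\sqrt{h}\cdot\sqrt{h}$, and applying Cauchy--Schwarz) is exactly the standard route for this type of multiplicative Hellinger bound, and your AM--GM step for the corollary $\E_\P[h]\le 3\E_\Q[h]+2R\,\DH{\P,\Q}$ is clean. One small note: your decomposition for the signed case actually yields the sharper constant $4R$ (via $\sqrt{x}+\sqrt{y}\le\sqrt{2(x+y)}$ and $h_++h_-=|h|$), which of course implies the stated $8R$ bound; the ``loose constant absorption'' you mention is not even needed.
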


\begin{lemma}\label{lemma:Hellinger-cond}
    For any pair of random variable $(X,Y)$, it holds that
    \begin{align*}
        \EE_{X\sim\PP_X}\brac{\DH{\PP_{Y|X}, \QQ_{Y|X}}}\leq 2\DH{\PP_{X,Y}, \QQ_{X,Y}}.
    \end{align*}
    Conversely, it holds that
    \begin{align*}
    \DH{\PP_{X,Y}, \QQ_{X,Y}}\leq 3\DH{\PP_{X}, \QQ_{X}}+2\EE_{X\sim\PP_X}\brac{\DH{\PP_{Y|X}, \QQ_{Y|X}}}.
    \end{align*}
\end{lemma}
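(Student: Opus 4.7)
Both inequalities should follow from direct density-level computations using the identity $\dH^2(\P,\Q)=2-2\int\sqrt{pq}\,d\mu$, together with the factorization $p(x,y)=p(x)p(y\mid x)$ (and similarly for $q$). Let me write $A(x)\defeq \int\sqrt{p(y\mid x)\,q(y\mid x)}\,dy\in[0,1]$, which is the Bhattacharyya coefficient of the conditionals at $x$. A quick computation gives
\begin{align*}
\EE_{X\sim\PP_X}\brac{\DH{\PP_{Y\mid X},\QQ_{Y\mid X}}} \;=\; 2 - 2\!\int p(x)\,A(x)\,dx, \qquad \DH{\PP_{X,Y},\QQ_{X,Y}} \;=\; 2 - 2\!\int \sqrt{p(x)q(x)}\,A(x)\,dx.
\end{align*}

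\textbf{First inequality.} Using the two expressions above, the bound $\EE_{\PP_X}[\DH{\PP_{Y\mid X},\QQ_{Y\mid X}}]\le 2\DH{\PP_{X,Y},\QQ_{X,Y}}$ rearranges to
\begin{align*}
\int \bigl(2\sqrt{p(x)q(x)} - p(x)\bigr)\,A(x)\,dx \;\le\; 1.
\end{align*}
Since $A(x)\in[0,1]$ and AM--GM gives $2\sqrt{p(x)q(x)}\le p(x)+q(x)$, i.e.\ $2\sqrt{p(x)q(x)}-p(x)\le q(x)$, the left-hand side is at most $\int q(x)\,A(x)\,dx\le \int q(x)\,dx = 1$. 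This is the one slightly non-routine step and, to my mind, the main (mild) obstacle: one has to notice that it is the nonnegativity of the integrand $q(x)A(x)$ (rather than a direct triangle-type decomposition of the square root) that gives the tight constant $2$.

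\textbf{Second inequality.} For the reverse direction I would use the telescoping decomposition
\begin{align*}
\sqrt{p(x,y)}-\sqrt{q(x,y)} \;=\; \sqrt{p(x)}\bigl(\sqrt{p(y\mid x)}-\sqrt{q(y\mid x)}\bigr) + \sqrt{q(y\mid x)}\bigl(\sqrt{p(x)}-\sqrt{q(x)}\bigr),
\end{align*}
square both sides, and apply $(a+b)^2 \le 2a^2+2b^2$. Integrating, the first term yields $2\EE_{\PP_X}[\DH{\PP_{Y\mid X},\QQ_{Y\mid X}}]$ (the factor $p(x)$ pulls out), and the second yields $2\DH{\PP_X,\QQ_X}$ (since $\int q(y\mid x)\,dy=1$). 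This gives the inequality with constants $(2,2)$, which is in fact slightly stronger than the stated $(3,2)$, so the lemma follows.

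\textbf{Remark on constants.} I would likely state and prove the tighter $(2,2)$ version and then simply note that the lemma as written is weaker. If one prefers to recover exactly the stated $(3,2)$ (perhaps for consistency with an asymmetric multiplicative-change-of-measure inequality as in \cref{lemma:multiplicative-hellinger}), the same decomposition with $(a+b)^2 \le (1+\epsilon)a^2+(1+1/\epsilon)b^2$ at $\epsilon=\tfrac12$ produces constants $(3,\tfrac32)$, still stronger than the claim. No additional machinery is needed beyond the density-level identity and elementary inequalities, so I do not anticipate any serious obstacle.
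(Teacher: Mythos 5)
Your proof is correct. For the first inequality your argument is essentially the paper's: both reduce to the AM--GM bound $2\sqrt{p(x)q(x)}\le p(x)+q(x)$ applied to the Bhattacharyya coefficient of the conditionals, followed by dropping a nonnegative term (the paper keeps the term $\frac14\EE_{\QQ_X}[\cdot]$ and discards it; you bound $\int q(x)A(x)\,dx\le 1$ — the same manipulation organized differently). For the second inequality, however, you take a genuinely different and cleaner route. The paper writes $1-\int\sqrt{\PP_X\QQ_X}\,A$ as $\bigl(1-\int\sqrt{\PP_X\QQ_X}\bigr)+\int\sqrt{\PP_X\QQ_X}(1-A)$, which after AM--GM leaves a term $\frac12\EE_{X\sim\QQ_X}\brac{\DH{\PP_{Y|X},\QQ_{Y|X}}}$ under the \emph{wrong} measure; converting it back to $\EE_{X\sim\PP_X}$ requires the multiplicative change-of-measure bound (\cref{lemma:multiplicative-hellinger}) together with $\dH^2\in[0,2]$, and this is exactly where the constant degrades to $(3,2)$. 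Your telescoping of $\sqrt{p(x,y)}-\sqrt{q(x,y)}$ followed by $(a+b)^2\le 2a^2+2b^2$ sidesteps the change of measure entirely, because the cross factor $q(y\mid x)$ integrates to $1$ in $y$ without touching the $x$-marginal; this yields the strictly stronger constants $(2,2)$ and uses no auxiliary lemma. Both versions suffice for every downstream use in the paper (e.g.\ \cref{prop:modified-to-true} and \cref{prop:eec-lower-bound}), so stating the tighter $(2,2)$ bound and noting that it implies the claim, as you propose, is perfectly acceptable.
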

\begin{proof}
Throughout the proof, we slightly abuse notations and write a distribution $\P$ and its density $d \P / d \mu$ interchangeably. By the definition of the Hellinger distance, we have
\begin{align*}
    & \quad \frac12\DH{\PP_{X,Y}, \QQ_{X,Y}} = 1-\int \sqrt{\PP_{X,Y}}\sqrt{\QQ_{X,Y}}\\
    &=1-\int \sqrt{\PP_X\QQ_X}\sqrt{\PP_{Y|X}}\sqrt{\QQ_{Y|X}}\\
    &\geq 1-\int \frac{\PP_X+\QQ_X}{2}\sqrt{\PP_{Y|X}}\sqrt{\QQ_{Y|X}}\\
    &=\int \frac{\PP_X+\QQ_X}{2}\paren{1-\sqrt{\PP_{Y|X}}\sqrt{\QQ_{Y|X}}}\\
    &=\frac14\EE_{X\sim\PP_X}\brac{\DH{\PP_{Y|X}, \QQ_{Y|X}}}+\frac14\EE_{X\sim\QQ_X}\brac{\DH{\PP_{Y|X}, \QQ_{Y|X}}}.
\end{align*}
Similarly, 
\begin{align*}
    \frac12\DH{\PP_{X,Y}, \QQ_{X,Y}}
    &=1-\int \sqrt{\PP_X\QQ_X}+\int \sqrt{\PP_X\QQ_X}(1-\sqrt{\PP_{Y|X}\QQ_{Y|X}})\\
    &\leq \frac12\DH{\PP_{X}, \QQ_{X}}+\int \frac{\PP_X+\QQ_X}{2}\cdot\frac12\DH{\PP_{Y|X}, \QQ_{Y|X}},
\end{align*}
and hence
\begin{align*}
\DH{\PP_{X,Y}, \QQ_{X,Y}}
\leq& \DH{\PP_{X}, \QQ_{X}}+\frac12\EE_{X\sim\PP_X}\brac{\DH{\PP_{Y|X}, \QQ_{Y|X}}}\\
&+\frac12\EE_{X\sim\QQ_X}\brac{\DH{\PP_{Y|X}, \QQ_{Y|X}}}\\
\leq& 3\DH{\PP_{X}, \QQ_{X}}+2\EE_{X\sim\PP_X}\brac{\DH{\PP_{Y|X}, \QQ_{Y|X}}},
\end{align*}
where the last inequality is due to \cref{lemma:multiplicative-hellinger} and $\dH^2\in[0,2]$.
\end{proof}

Next, recall the divergence $\wtdH^2$ defined in~\cref{equation:modified-hellinger}:
\begin{align*}
    \wtdH^2(M(\pi), \oM(\pi)) = \dH^2(\MP(\pi), \oMP(\pi)) + \E_{o\sim \MP(\pi)}\brac{ \ltwot{\bR^M(o) - \bR^{\oM}(o)} }.
\end{align*}

\begin{proposition}\label{prop:modified-to-true}
Recall that $(o, \br) \sim M(\pi)$ is the observation and reward vectors as described in Section \ref{sec:prelim}, with $o \sim \MP(\pi)$ and $\br \sim \RM(\cdot | o)$. 
Suppose that $\br\in[0,1]^H$ almost surely and  $\| \bR^M(o)-\bR^{\oM}(o) \|_2^2 \leq 2$ for all $o\in\cO$.
Then it holds that
\begin{align*}
    \wtdH^2(M(\pi), \oM(\pi))\leq 5\DH{M(\pi), \oM(\pi)},
\end{align*}
where $\DH{M(\pi), \oM(\pi)}$ is the standard squared Hellinger distance between the distribution of $(o,\br)$ under $M(\pi)$ and the distribution of $(o,\br)$ under $\oM(\pi)$. 
\end{proposition}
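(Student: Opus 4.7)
The plan is to split $\wtdH^2(M(\pi),\oM(\pi))$ into its observation piece and reward piece according to the definition in~\cref{equation:modified-hellinger}, and to bound each by a constant multiple of the joint squared Hellinger $D \defeq \DH{M(\pi),\oM(\pi)}$ (between $\MR\otimes\MP(\pi)$ and $\oMR\otimes\oMP(\pi)$). The observation piece $\dH^2(\MP(\pi),\oMP(\pi))$ is immediately controlled by the data processing inequality, since $\MP(\pi)$ is the marginal of $M(\pi)$ over $o$, yielding $\dH^2(\MP(\pi),\oMP(\pi))\leq D$.

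For the reward piece $\EE_{o\sim\MP(\pi)}[\|\bR^M(o) - \bR^{\oM}(o)\|_2^2]$, the plan is to first obtain a pointwise bound for each fixed $o$. Applying~\cref{lemma:multiplicative-hellinger} coordinate-wise with the bounded function $f(\br)=r_h\in[0,1]$ gives
\begin{align*}
\big(R^M_h(o) - R^{\oM}_h(o)\big)^2 \;\leq\; 2\big(R^M_h(o)+R^{\oM}_h(o)\big)\,\dH^2(\MR(\cdot|o),\oMR(\cdot|o)),
\end{align*}
and summing over $h\in[H]$ together with the preliminaries-level normalization $\sum_h R^M_h(o)\leq 1$ from~\cref{sec:prelim} yields the pointwise inequality $\|\bR^M(o)-\bR^{\oM}(o)\|_2^2 \leq 4\,\dH^2(\MR(\cdot|o),\oMR(\cdot|o))$ (consistent with, and an easy refinement of, the hypothesis $\|\bR^M(o)-\bR^{\oM}(o)\|_2^2\le 2$). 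Taking expectation under $o\sim\MP(\pi)$ and invoking the first part of~\cref{lemma:Hellinger-cond} with $X=o,Y=\br$, which states $\EE_{o\sim\MP(\pi)}[\dH^2(\MR(\cdot|o),\oMR(\cdot|o))]\leq 2D$, then gives $\EE_{o}[\|\bR^M(o)-\bR^{\oM}(o)\|_2^2]\leq 8D$. Combined with the observation bound, this already yields $\wtdH^2(M(\pi),\oM(\pi)) \leq 9D$, establishing the claim up to the stated constant.

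The main remaining obstacle is sharpening the constant from $9$ down to $5$. I expect this to come from bounding the two pieces \emph{jointly} rather than separately, using the exact chain-rule identity $D = \dH^2(\MP(\pi),\oMP(\pi)) + \int \sqrt{d\MP(\pi)\,d\oMP(\pi)}\;\dH^2(\MR(\cdot|o),\oMR(\cdot|o))\,d\mu(o)$, which is tighter than the $(d\MP+d\oMP)/2$-type bound underlying~\cref{lemma:Hellinger-cond} whenever $\dH^2(\MP(\pi),\oMP(\pi))$ is comparable to $D$. Controlling the $d\MP(\pi)$-weighted expectation by a combination of the $\sqrt{d\MP(\pi)\,d\oMP(\pi)}$-weighted integral (which is exactly $D-\dH^2(\MP(\pi),\oMP(\pi))$) and $\dH^2(\MP(\pi),\oMP(\pi))$ via Cauchy--Schwarz, and then rebalancing against the leading $\dH^2(\MP(\pi),\oMP(\pi))\leq D$ term, should produce the stated factor $5$. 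Since the pointwise reward bound $\|\bR^M(o)-\bR^{\oM}(o)\|_2^2\le 4\dH^2(\MR(\cdot|o),\oMR(\cdot|o))$ and the normalization $\sum_h R^M_h(o)\le 1$ are the only non-cosmetic inputs, the remaining step is purely a calculation.
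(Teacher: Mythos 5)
Your first part is correct and self-contained: data processing gives $\dH^2(\MP(\pi),\oMP(\pi))\le \DH{M(\pi),\oM(\pi)}$, the coordinate-wise application of \cref{lemma:multiplicative-hellinger} with $h(\br)=r_h\in[0,1]$ together with the normalization $\sum_h R^M_h(o)\le 1$ gives the pointwise bound $\|\bR^M(o)-\bR^{\oM}(o)\|_2^2\le 4\,\DH{\MR(\cdot|o),\oMR(\cdot|o)}$, and the first half of \cref{lemma:Hellinger-cond} then yields the overall constant $9$. For every downstream use in the paper a constant of $9$ would be just as good as $5$, so the substance of the claim is established.

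However, the sharpening to $5$ is a genuine gap as written, and the route you sketch is not the one that works. The paper does \emph{not} bound the two pieces jointly; it keeps your decomposition and the factor $2$ from \cref{lemma:Hellinger-cond}, and instead improves the \emph{pointwise} reward bound from $4$ to $2$. The key input you are missing is the conditional independence of the reward coordinates given $o$: since $\MR(\cdot|o)$ is a product distribution over $h$, the Bhattacharyya coefficient factorizes, $1-\tfrac12\DH{\MR(o),\oMR(o)}=\prod_h\bigl(1-\tfrac12\DH{\MR_h(o),\oMR_h(o)}\bigr)$, and chaining $\DH{\MR_h(o),\oMR_h(o)}\ge \dTV^2(\MR_h(o),\oMR_h(o))\ge |R^M_h(o)-R^{\oM}_h(o)|^2$ with $\prod_h(1-x_h/2)\le e^{-\sum_h x_h/2}\le 1-\tfrac14\sum_h x_h$ (valid because $\|\bR^M(o)-\bR^{\oM}(o)\|_2^2\le 2$) gives $\|\bR^M(o)-\bR^{\oM}(o)\|_2^2\le 2\,\DH{\MR(o),\oMR(o)}$, whence $1+2\cdot 2=5$. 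Your proposed alternative---the chain-rule identity plus a Cauchy--Schwarz rebalancing of the $d\MP(\pi)$-weighted integral against the $\sqrt{d\MP(\pi)\,d\oMP(\pi)}$-weighted one---produces a cross term of order $\dH(\MP(\pi),\oMP(\pi))$ to the \emph{first} power, which is not dominated by $\DH{M(\pi),\oM(\pi)}$ when the latter is small; carried out, it only recovers the factor $2$ of \cref{lemma:Hellinger-cond} and leaves you at $9$. So the missing idea is the product structure of the conditional reward law, not a tighter treatment of the observation marginal.
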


\begin{proof}
To prove this proposition, we need to bound $\| \bR^M(o)-\bR^{\oM}(o) \|_2^2$ in terms of $\DH{\RM(o),\oMR(o)}$. We denote by $\RM_h(o)$ the distribution of $r_h$. Then by independence, we have
\begin{align*}
    1-\frac{1}{2}\DH{\RM(o),\oMR(o)}
    =&\prod_{h}\paren{1-\frac12\DH{\RM_h(o),\oMR_h(o)}}\\
    \leq &\prod_{h}\paren{1-\frac12\DTVt{\RM_h(o),\oMR_h(o)}}\\
    \leq &\prod_{h}\paren{1-\frac12\abs{R^M_h(o)-R^{\oM}_h(o)}^2}\\
    \leq& \exp\paren{-\frac12\ltwot{\bR^M(o)-\bR^{\oM}(o)}}\\
    \leq& 1-\frac{1}{4}\ltwot{\bR^M(o)-\bR^{\oM}(o)},
\end{align*}
where the last inequality use the fact that $e^{-x}\leq 1-x/2$ for all $x\in[0,1]$. Then by \cref{lemma:Hellinger-cond},
\begin{align*}
\E_{o\sim \MP(\pi)}\brac{ \ltwot{\bR^M(o) - \bR^{\oM}(o)} }
\leq& 2\E_{o\sim \MP(\pi)}\brac{ \dH(\RM(o), \oMR(o))^2 }\\
\leq& 4\DH{M(\pi), \oM(\pi)}.
\end{align*}
Combining the above estimation with the fact that $\dH^2(\MP(\pi), \oMP(\pi))\leq \DH{M(\pi), \oM(\pi)}$ (data-processing inequality) completes the proof. 
\end{proof}

The following lemma shows that, although $\wtdH^2$ is not symmetric with respect to its two arguments (due to the expectation over $o\sim \MP(\pi)$ in the second term), it is \emph{almost} symmetric within a constant multiplicative factor:
\begin{lemma}\label{lemma:diff-hellinger}
For any two models $M, \oM$ and any policy $\pi$, we have
\begin{align*}
    \wtdH^2(\oM(\pi), M(\pi)) \le 5\wtdH^2(M(\pi), \oM(\pi)).
\end{align*}
\end{lemma}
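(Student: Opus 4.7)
The plan is to exploit the symmetry of the Hellinger distance and handle the non-symmetric expectation in the reward term via the multiplicative change-of-measure bound (\cref{lemma:multiplicative-hellinger}). First I would decompose
\begin{align*}
\wtdH^2(\oM(\pi), M(\pi)) = \dH^2(\oMP(\pi), \MP(\pi)) + \E_{o\sim \oMP(\pi)}\brac{ h(o) }, \qquad h(o) \defeq \ltwot{\bR^M(o)-\bR^{\oM}(o)},
\end{align*}
and note that the first term equals $\dH^2(\MP(\pi),\oMP(\pi))$ by symmetry of $\dH^2$, so only the expectation of $h$ needs to be ``flipped'' from $\oMP(\pi)$ to $\MP(\pi)$.

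Next I would verify the uniform bound $h(o)\le 2$. The assumption $\sum_{h=1}^H R^M_h(s_h,a_h)\in[0,1]$ for every $M$ and $o$ gives $\|\bR^M(o)\|_1 + \|\bR^{\oM}(o)\|_1 \le 2$, while each coordinate lies in $[0,1]$, so
\begin{align*}
\ltwot{\bR^M(o)-\bR^{\oM}(o)} \le \|\bR^M(o)-\bR^{\oM}(o)\|_1\cdot\|\bR^M(o)-\bR^{\oM}(o)\|_\infty \le 2\cdot 1 = 2.
\end{align*}
With $h:\cO\to[0,2]$ in hand, \cref{lemma:multiplicative-hellinger} (applied with $R=2$, swapping the roles of $\P$ and $\Q$) yields the change-of-measure inequality $\E_{\oMP(\pi)}[h(o)]\le 3\E_{\MP(\pi)}[h(o)] + 4\dH^2(\oMP(\pi),\MP(\pi))$.

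Combining these pieces,
\begin{align*}
\wtdH^2(\oM(\pi), M(\pi)) &\le \dH^2(\MP(\pi),\oMP(\pi)) + 3\E_{o\sim \MP(\pi)}[h(o)] + 4\dH^2(\MP(\pi),\oMP(\pi)) \\
&= 5\dH^2(\MP(\pi),\oMP(\pi)) + 3\E_{o\sim \MP(\pi)}[h(o)] \le 5\wtdH^2(M(\pi),\oM(\pi)),
\end{align*}
which gives the claim with constant $5$. The only subtle point, and the main thing to be careful about, is securing the uniform pointwise bound $h(o)\le 2$ so that \cref{lemma:multiplicative-hellinger} can be invoked with a small constant; once this is in place, the rest is a one-line application of the Hellinger change-of-measure bound.
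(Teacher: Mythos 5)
Your proof is correct and follows essentially the same route as the paper's: decompose $\wtdH^2(\oM(\pi),M(\pi))$, use symmetry of $\dH^2$ for the first term, and apply \cref{lemma:multiplicative-hellinger} with $R=2$ to move the reward expectation from $\oMP(\pi)$ to $\MP(\pi)$. Your explicit verification that $\ltwot{\bR^M(o)-\bR^{\oM}(o)}\le 2$ (via the $\ell_1$--$\ell_\infty$ bound) is a detail the paper leaves implicit but is exactly the right justification.
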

\begin{proof}
For any function $h:\cO\to[0,2]$, by~\cref{lemma:multiplicative-hellinger} we have
\begin{align*}
    \E_{o\sim \oMP(\pi)}\brac{h(o)}
    \leq 3\E_{o\sim \MP(\pi)}\brac{h(o)}+4\dH^2(\MP(\pi), \oMP(\pi)).
\end{align*}
Therefore, we can take $h$ as $h(o)\defeq \ltwot{\bR^M(o) - \bR^{\oM}(o)}$, and the bound above gives
\begin{align*}
    \underbrace{\dH^2(\MP(\pi), \oMP(\pi)) + \E_{o\sim \oMP(\pi)}\brac{h(o)} }_{\wtdH^2(\oM(\pi), M(\pi))} \le \underbrace{5 \dH^2(\MP(\pi), \oMP(\pi)) + 5\E_{o\sim \MP(\pi)}\brac{h(o)} }_{= 5\wtdH^2(M(\pi), \oM(\pi))},
\end{align*}
which is the desired result.
\end{proof}

\begin{lemma}[Bounding value difference by $\wtdH$]
\label{lemma:diff-reward}
For any two models $M, \oM$ and any policy $\pi$, we have
\begin{align*}
    \abs{f^M(\pi) - f^{\oM}(\pi)} \le \sqrt{H+1} \cdot \wtdH(M(\pi), \oM(\pi)).
\end{align*}
\end{lemma}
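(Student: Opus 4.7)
The plan is to decompose the value difference into a \emph{reward-mean} part and an \emph{observation-distribution} part, bound each by the corresponding term in $\wtdH^2$, and then combine via Cauchy--Schwarz with the right coefficients.

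First I would rewrite the value as a deterministic function of the observation. Since the rewards have conditional means $\bR^M(o)=(R_h^M(o))_{h=1}^H$ and the observation is marginalized over $\MP(\pi)$,
\[
f^M(\pi)=\EE_{o\sim\MP(\pi)}\brac{g^M(o)},\qquad g^M(o)\defeq \sum_{h=1}^H R_h^M(o)\in[0,1],
\]
using the paper's standing assumption that $\sum_h R_h^M(s_h,a_h)\in[0,1]$. Then decompose
\[
f^M(\pi)-f^{\oM}(\pi)=\EE_{o\sim\MP(\pi)}\brac{g^M(o)-g^{\oM}(o)}+\paren{\EE_{o\sim\MP(\pi)}\brac{g^{\oM}(o)}-\EE_{o\sim\oMP(\pi)}\brac{g^{\oM}(o)}}.
\]

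Next I would bound each piece by a term matching one summand of $\wtdH^2$. For the first piece, Cauchy--Schwarz on $\mathbf{1}^\top(\bR^M(o)-\bR^{\oM}(o))$ gives $|g^M(o)-g^{\oM}(o)|\le \sqrt{H}\,\ltwo{\bR^M(o)-\bR^{\oM}(o)}$, and then Jensen in the expectation over $o\sim\MP(\pi)$ yields
\[
\abs{\EE_{o\sim\MP(\pi)}\brac{g^M(o)-g^{\oM}(o)}}\le \sqrt{H\cdot a},\qquad a\defeq\EE_{o\sim\MP(\pi)}\brac{\ltwot{\bR^M(o)-\bR^{\oM}(o)}}.
\]
For the second piece, since $g^{\oM}\in[0,1]$, the standard $[0,1]$-function bound gives $|\E_\P g^{\oM}-\E_\Q g^{\oM}|\le \dTV(\P,\Q)\le \dH(\P,\Q)$ (writing $g^{\oM}=\int_0^1 \mathbf{1}\{g^{\oM}\ge t\}\,dt$ and using $\dTV\le \dH$), so
\[
\abs{\EE_{o\sim\MP(\pi)}\brac{g^{\oM}(o)}-\EE_{o\sim\oMP(\pi)}\brac{g^{\oM}(o)}}\le \sqrt{b},\qquad b\defeq \dH^2(\MP(\pi),\oMP(\pi)).
\]

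Finally I would combine the two estimates. By Cauchy--Schwarz on the vectors $(\sqrt{H},1)$ and $(\sqrt{a},\sqrt{b})$,
\[
\sqrt{Ha}+\sqrt{b}\le \sqrt{H+1}\cdot\sqrt{a+b}=\sqrt{H+1}\cdot\wtdH(M(\pi),\oM(\pi)),
\]
since $a+b=\wtdH^2(M(\pi),\oM(\pi))$ by definition. The only mildly subtle step is getting the constant right: a naive $(\sqrt{Ha}+\sqrt{b})^2\le 2Ha+2b$ would give $\sqrt{2H}$, not $\sqrt{H+1}$; the correct coefficient comes from matching $(\sqrt{H},1)$ in Cauchy--Schwarz, which is where the $[0,1]$-range of $g^{\oM}$ (giving constant $1$ in the second piece) is essential and where Cauchy--Schwarz is tight.
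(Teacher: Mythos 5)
Your proof is correct and follows essentially the same route as the paper's: the same split into a reward-mean term under $\MP(\pi)$ plus an observation-distribution term, the same $\sqrt{H}$-Cauchy--Schwarz and Jensen step for the first, the same $\dTV\le\dH$ bound using $\sum_h R^{\oM}_h(o)\in[0,1]$ for the second, and the same final inequality $\sqrt{H}a+b\le\sqrt{(H+1)(a^2+b^2)}$.
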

\begin{proof}
We have
\begin{align*}
    & \quad \abs{f^M(\pi) - f^{\oM}(\pi)} = \abs{ \E_{o\sim \MP(\pi)} \brac{ \sum_h R^M_h(o) } - \E_{o\sim \oMP(\pi)} \brac{ \sum_h R^{\oM}_h(o) } } \\
    & \le \abs{ \E_{o\sim \MP(\pi)} \brac{ \sum_h R^M_h(o) - \sum_h R^{\oM}_h(o) }} + \abs{ \E_{o\sim \MP(\pi)} \brac{ \sum_h R^{\oM}_h(o) } - \E_{o\sim \oMP(\pi)} \brac{ \sum_h R^{\oM}(o) } } \\
    & \stackrel{(i)}{\le} \E_{o\sim \MP(\pi)} \brac{ \sqrt{H}\ltwo{\bR^M(o) - \bR^{\oM}(o)} } + \dH(\MP(\pi), \oMP(\pi)) \\
    & \stackrel{(ii)}{\le} \sqrt{(H+1)\paren{ \E_{o\sim \MP(\pi)} \brac{ \ltwot{\bR^M(o) - \bR^{\oM}(o)} } + \dH^2(\MP(\pi), \oMP(\pi))}} \\
    & = \sqrt{H+1} \cdot \wtdH(M(\pi), \oM(\pi)).
\end{align*}
Above, (i) uses the fact that $R^{\oM}(o)\in[0,1]$ almost surely, and the bound
$$
\abs{ \E_{o\sim \MP(\pi)} \brac{ R^{\oM}(o) } - \E_{o\sim \oMP(\pi)} \brac{ R^{\oM}(o) } }\le \dTV(\MP(\pi), \oMP(\pi))\le \dH(\MP(\pi), \oMP(\pi));
$$
(ii) uses the Cauchy inequality $\sqrt{H}a+b\le \sqrt{(H+1)(a^2+b^2)}$ and the fact that the squared mean is upper bounded by the second moment.
\end{proof}

\section{Proofs for Section~\ref{section:e2d}}
\label{appendix:proof-e2d}

\subsection{Aggregation algorithms as posterior computations}
\label{appendix:discussion-aggregation}

We illustrate that \Vovkalg~is equivalent to computing the \emph{tempered posterior} (or \emph{power posterior})~\citep{bhattacharya2019bayesian,alquier2020concentration} in the following vanilla Bayesian setting.

Consider a model class $\cM$ associated with a prior $\mu^1\in\Delta(\cM)$, and each model specifies a distribution $\P^M(\cdot)\in\Delta(\cO)$ of observations $o\in\cO$. Suppose we receive observations $o^1,\dots,o^t,\dots $ in a sequential fashion. In this setting, the \Vovkalg~updates
\begin{align*}
  \mu^{t+1}(M) \; \propto_M \; \mu^ t(M) \cdot \exp\paren{\etap \log \P^M(o^t)} = \mu^t(M) \cdot \paren{\P^M(o^t)}^{\etap}.
\end{align*}
Therefore, for all $t\ge 1$, 
\begin{align*}
  \mu^{t+1}(M) \; \propto_M \; \mu^1(M) \cdot \paren{\prod_{s=1}^t \P^M(o^s)}^{\etap}.
\end{align*}
If $\etap=1$ as in Vovk's aggregating algorithm~\citep{vovk1995game}, by Bayes' rule, the above $\mu^{t+1}$ is exactly the posterior of $M|o^{1:t}$. As we chose $\etap\le 1/2<1$ in \Vovkalg, $\mu^{t+1}$ gives the tempered posterior, which is a slower variant of the posterior where data likelihoods are weighed less than in the exact posterior.

\subsection{Tempered Aggregation for finite class}
\label{appendix:proof-e2d-ta}

In this section, we analyze the \Vovkalg\ algorithm for finite model classes. For the sake of both generality and simplicity, we state our results in the following general setup of online model estimation. %

\paragraph{Setup: Online model estimation}
In an online model estimation problem, the learner is given a model set $\cM$, a context space $\Pi$, an observation space $\cO$, a family of conditional distributions $(\PP^M(\cdot|\cdot):\Pi\to\Delta(\cO))_{M\in\cM}$\footnote{
We use $\PP^{M,\pi}(o)$ and $\PP^{M}(o|\pi)$ interchangeably in the following.
}, a family of vector-valued mean reward functions $(\bR^{M}: \cO\to[0,1]^H)_{M\in\cM}$. The environment fix a ground truth model $\Ms\in\cM$; for shorthand, let $\PPs:=\PP^{\Ms}, \Rs:=\bR^{\Ms}$. For simplicity (in a measure-theoretic sense) we assume that $\cO$ is finite\footnote{
To extend to the continuous setting, only slight modifications are needed, see e.g. \citet[Section 3.2.3]{foster2021statistical}.
}. For the case where the reward vector is empty, we regard $\br=\mathbf{0}_H$ and $\bR^{M}=\Rs=\mathbf{0}_H$.

At each step $t\in[T]$, the learner first determines a randomized model estimator (i.e. a distribution over models) $\mu^t\in\Delta(\cM)$. Then, the environment reveals the context $\xt\in\Pi$ (that is in general random and possibly depends on $\mu^t$ and history information), generates the observation $o^t\sim\PPs(\cdot|\xt)$, and finally generates the reward $\br^t\in\R^d$ (which is a random vector) such that $\cond{\br^t}{o^t}=\Rs(o^t)$. The information $(\xt,o^t,\br^t)$ may then be used by the learner to obtain the updated estimator $\mu^{t+1}$.

For any $M\in\cM$, we consider the following estimation error of model $M$ with respect to the true model, at step $t$:
\begin{align}\label{equation:err}
\err_{M}^t:=\EE_{t}\brac{\DH{\PP^M(\cdot|\xt), \PPs(\cdot|\xt)}+\ltwot{\bR^{M}(o^t)-\Rs(o^t)}},
\end{align}
where $\EE_t$ is taken with respect to all randomness after prediction $\mu^t$ is made\footnote{
In other words, $\EE_t$ is the conditional expectation on $\cF_{t-1}=\sigma(\mu^1,\pi^1,o^1,\br^1,\cdots,\pi^{t-1},o^{t-1},\br^{t-1},\mu^t)$.
}---in particular it takes the expectation over $(\xt, o^t)$. Note that $\err_{\Ms}^t=0$ by definition.

\paragraph{Algorithm and theoretical guarantee}
The \Vovkalg\ Algorithm is presented in~\cref{alg:TA}. Here we present the case with a finite model class ($\abs{\cM}<\infty$); In~\cref{appendix:ta-covering} we treat the more general case of infinite model classes using covering arguments.

\begin{algorithm}[t]
	\caption{\textsc{\Vovkalg}} 
	\begin{algorithmic}[1]
	\label{alg:TA}
	\REQUIRE Learning rate $\etap\in(0, \frac{1}{2}],\etar\geq 0$.
	\STATE Initialize $\mu^1\setto {\rm Unif}(\cM)$.
	\FOR{$t=1,\dots,T$}
	\STATE Receive $(\xt, o^t, \br^t)$.
    \STATE Update randomized model estimator:
    \begin{align}
        \mu^{t+1}(M) \; \propto_{M} \; \mu^{t}(M) \cdot \exp\paren{\etap\log \PP^M(o^t|\xt) - \etar\ltwot{ \br^t-\bR^{M}(o^t) } }.
    \end{align}
    \ENDFOR
    \end{algorithmic}
\end{algorithm}

\begin{theorem}[\Vovkalg]\label{thm:vovk-finite}
Suppose $\abs{\cM}<\infty$, the reward vector $\br^t$ is $\sigma^2$-sub-Gaussian conditioned on $o^t$, and $\ltwo{\bR^{M}(o^t)-\Rs(o^t)}\leq D$ almost surely for all $t\in[T]$. Then, \cref{alg:TA} with any learning rate $\etap,\etar>0$ such that $2\etap+2\sigma^2\etar<1$ achieves the following with probability at least $1-\delta$:
\begin{align*}
    \sum_{t=1}^T \E_{M\sim \mu^t}\brac{ \err_{M}^t } \le C \log(\abs{\cM}/\delta),
\end{align*}
where $C=C(\etap,\etar)=\max\set{\frac{1}{\etap}, \frac{1}{(1-2\etap)c'}}$, $c'\defeq (1-e^{-c(1-2\sigma^2 c)D^2})/D^2$ and $c\defeq \etar/(1-2\etap)$ are constants depending on $(\etap,\etar,\sigma^2,D)$ only. 
Furthermore, for the special case where reward vectors $\br^t$ are empty and $\etar=0$, $\etap\in(0,\frac12]$, we have
\begin{align*}
    \sum_{t=1}^T \E_{M\sim \mu^t}\brac{ \err_{M}^t } \le \frac{1}{\etap} \log(\abs{\cM}/\delta),
\end{align*}
\end{theorem}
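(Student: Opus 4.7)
The plan is to prove this via a standard exponential-weights telescoping argument combined with the martingale concentration of \cref{lemma:concen}, where the key technical step is to lower-bound the conditional exponential moment of the loss by the estimation error $\err_M^s$. Define the \emph{centered} loss
\[
\ell_s(M) \defeq -\etap \log\frac{\PP^M(o^s|\pi^s)}{\PPs(o^s|\pi^s)} + \etar\brac{\ltwot{\br^s - \bR^M(o^s)} - \ltwot{\br^s - \Rs(o^s)}},
\]
so that $\ell_s(\Ms) = 0$, and note that \cref{alg:TA} updates $\mu^{s+1}(M) \propto \mu^s(M)\exp(-\ell_s(M))$. The first two steps are standard. \textbf{Step 1 (Telescoping exponential weights):} with $X_s \defeq -\log \EE_{M\sim\mu^s}[\exp(-\ell_s(M))]$, the chain rule for exponential weights together with the uniform prior $\mu^1$ and the bound $\mu^{T+1}(\Ms) \leq 1$ yields $\sum_{s=1}^t X_s \leq \log\abs{\cM} + \sum_{s=1}^t \ell_s(\Ms) = \log\abs{\cM}$. \textbf{Step 2 (Concentration):} applying \cref{lemma:concen} to $X_s$ (which is $\cF_s$-measurable) gives, with probability at least $1-\delta$,
\[
\sum_{s=1}^T -\log \cond{\exp(-X_s)}{\cF_{s-1}} \leq \sum_{s=1}^T X_s + \log(1/\delta) \leq \log(\abs{\cM}/\delta).
\]

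\textbf{Step 3 (The main step: one-step estimation lower bound).} I would lower-bound $-\log \cond{\exp(-X_s)}{\cF_{s-1}}$ by a multiple of $\EE_{M\sim\mu^s}[\err_M^s]$. The key device is the weighted AM--GM inequality $x^\alpha y^{1-\alpha} \leq \alpha x + (1-\alpha)y$, applied with $\alpha = 2\etap$ to split the product:
\[
\exp(-\ell_s(M)) = \paren{(\PP^M/\PPs)^{1/2}}^{2\etap}\cdot\paren{\exp(-c(\ltwot{\br^s-\bR^M(o^s)}-\ltwot{\br^s-\Rs(o^s)}))}^{1-2\etap},
\]
where $c = \etar/(1-2\etap)$. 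This linearizes the product into a sum. For the first summand, $\EE_{o\sim\PPs(\cdot|\pi^s)}[(\PP^M/\PPs)^{1/2}] = 1 - \tfrac12 \DH{\PP^M(\cdot|\pi^s),\PPs(\cdot|\pi^s)}$. For the second, I integrate out $\br^s$ first using sub-Gaussianity: with $\Delta = \bR^M(o^s)-\Rs(o^s)$ and $\zeta = \br^s - \Rs(o^s)$,
\[
\cond{\exp(-c(\ltwot{\Delta} - 2\zeta^\top\Delta))}{o^s} \leq \exp(-c(1-2\sigma^2 c)\ltwot{\Delta}) = \exp(-\kappa \ltwot{\Delta}),
\]
which is where the condition $2\etap + 2\sigma^2\etar < 1$ enters to guarantee $\kappa > 0$. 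Integrating over $o^s$ and using the convex bound $e^{-\kappa g} \leq 1 - c' g$ on $g \in [0,D^2]$ (with $c' = (1-e^{-\kappa D^2})/D^2$) yields $\EE[\exp(-\kappa\ltwot{\Delta})|\pi^s] \leq 1 - c'\EE_{o|\pi^s}[\ltwot{\Delta}]$. Combining via AM-GM gives the clean linear bound
\[
\cond{\exp(-\ell_s(M))}{\cF_{s-1},\pi^s} \leq 1 - \etap \DH{\PP^M(\cdot|\pi^s),\PPs(\cdot|\pi^s)} - (1-2\etap)c'\,\EE_{o\sim\PPs(\cdot|\pi^s)}[\ltwot{\bR^M(o)-\Rs(o)}].
\]

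\textbf{Step 4 (Assembly).} Taking the expectation over $\pi^s$ and $M\sim\mu^s$ (and using $\EE_t$ averages jointly over $(\pi^s,o^s)$), the right-hand side is at most $1 - c_\star\, \EE_{M\sim\mu^s}[\err_M^s]$ with $c_\star \defeq \min\{\etap, (1-2\etap)c'\}$. Applying $-\log(1-x)\geq x$ gives $-\log\cond{\exp(-X_s)}{\cF_{s-1}} \geq c_\star\,\EE_{M\sim\mu^s}[\err_M^s]$. Summing over $s$ and combining with Step~2 yields the high-probability bound with constant $C = 1/c_\star = \max\{1/\etap,\,1/((1-2\etap)c')\}$. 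The in-expectation bound follows identically, using $\EE[-\log\cond{\exp(-X_s)}{\cF_{s-1}}] \leq \EE[X_s]$ by Jensen and skipping the $\log(1/\delta)$ slack.

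The main obstacle is Step~3: the log-likelihood and reward-loss factors in $\exp(-\ell_s(M))$ naturally call for different tools (Hellinger affinity via Jensen on the concave $x^{2\etap}$ for one, sub-Gaussian MGF on the other), and combining them while avoiding a damaging cross-term usually forces Hölder/Jensen in the wrong direction. The AM--GM split with weights $(2\etap, 1-2\etap)$ is what makes these two ingredients compose cleanly into a linear upper bound, and it is precisely this split that explains why the theorem's constants take the form $c = \etar/(1-2\etap)$ and why the condition $2\etap + 2\sigma^2\etar < 1$ is the right one.
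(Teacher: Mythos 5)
Your proposal is correct and follows essentially the same route as the paper's proof: the same centered loss and potential $X_s=\Delta^s$, the same weighted AM--GM split with weights $(2\etap,1-2\etap)$ separating the Hellinger-affinity term from the sub-Gaussian reward term (the paper's Lemma~\ref{lemma:vovk-delta-rew}), the same linearization $e^{-\kappa g}\le 1-c'g$, and the same combination of the exponential-weights telescope with Lemma~\ref{lemma:concen}. The constants $c=\etar/(1-2\etap)$, $c'$, and $C=\max\{1/\etap,1/((1-2\etap)c')\}$ all match, so no further comment is needed.
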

The proof of~\cref{thm:vovk-finite} can be found in~\cref{appendix:proof-vovk-finite}.

\cref{thm:vovk-finite} yields the following corollary, which we state and prove below.
\begin{corollary}[Online estimation guarantee for \Vovkalg{}]
\label{corollary:restate-lemma-tempered-aggregation}
The \Vovkalg~subroutine~\cref{equation:tempered-aggregation} in~\cref{alg:E2D-TA} with $4\etap+\etar<2$ achieves the following bound with probability at least $1-\delta$:
\begin{align}
\label{equation:slow-vovk}
    \EstwtH \defeq \sum_{t=1}^T \E_{\pi^t\sim p^t}\E_{\hat{M}^t\sim \mu^t}\brac{ \wtdH^2(M^\star(\pi^t), \hat{M}^t(\pi^t)) } \le C(\etap,\etar) \cdot \log(\abs{\cM}/\delta),
\end{align}
where $C(\etap,\etar)$ is a constant depends only on $(\etap, \etar)$. Furthermore, for the special case where the reward vectors $\br^t$ are empty, we can also choose $\etar=0$ and $\etap\in(0,\frac12]$, and \eqref{equation:slow-vovk} above still holds true with $C(\etap,0)=\frac{1}{\etap}$. 

Specifically, in our instantiations of \getod, we always choose $\etap=\etar=1/3$ (with $C=10$) in general. In the case that $\br$ is known to be empty (e.g. for reward-free learning and preference-based learning, cf. \cref{sec:gen-dec}), we instead choose $\etap=\frac12, \etar=0$ (with $C=2$).
\end{corollary}

\paragraph{Comparison with Vovk's aggregating algorithm}
We remark that Bound~\cref{equation:slow-vovk} is stronger than the estimation bound for Vovk's aggregating algorithm (e.g.~\citet[Lemma A.15]{foster2021statistical}, adapted to $\wtdH^2$), which only achieves
\begin{align}
\label{equation:fast-vovk}
    \sum_{t=1}^T \E_{\pi^t\sim p^t}\brac{ \wtdH^2\paren{M^\star(\pi^t), \E_{\hM^t\sim \mu^t}\brac{\hat{M}^t(\pi^t)}} } \le C \cdot \log(\abs{\cM}/\delta),
\end{align}
where $\E_{\hM^t\sim \mu^t}\brac{\hM^t(\pi^t)}$ denotes the mixture model of $\hM^t(\pi^t)$ where $\hM^t\sim \mu^t$. Observe that~\cref{equation:slow-vovk} is stronger than~\cref{equation:fast-vovk} by convexity of $\wtdH^2$ in the second argument and Jensen's inequality.

\begin{proofof}[corollary:restate-lemma-tempered-aggregation]
Note that subroutine~\cref{equation:tempered-aggregation} in~\cref{alg:E2D-TA} is exactly an instantiation of the \Vovkalg~algorithm (\cref{alg:TA}) with context $\xt$ sampled from distribution $p^t$ (which depends on $\mu^t$), observation $o^t$, and reward $\br^t$. Therefore, we can apply~\cref{thm:vovk-finite}, where we further note that $\E_{M\sim\mu^t}\brac{\err_{M}^t}$ corresponds exactly to
\begin{align*}
    \E_{M\sim\mu^t}\brac{\err_{M}^t}
    & = \E_{\hat{M}^t\sim \mu^t} \E_{\pi^t\sim p^t}\brac{ \dH^2(\Pm^{M^\star}(\pi^t), \Pm^{\hat{M}^t}(\pi^t)) + \E_{o\sim \Pm^{M^\star}(\pi^t)}\ltwo{\bR^{M^\star}(o) - \bR^{\hat{M}^t}(o) }^2}  \\
    & = \E_{\hat{M}^t\sim \mu^t} \E_{\pi^t\sim p^t} \brac{
    \wtdH^2(M^\star(\pi^t), \hat{M}^t(\pi^t))
    }.
\end{align*}
Notice that we can pick $\sigma^2=1/4$ and $D=\sqrt{2}$, as each individual reward $r_h\in[0,1]$ almost surely (so is $1/4$-sub-Gaussian by Hoeffding's Lemma), and
\begin{align*}
    & \quad \ltwo{\bR^M(o) - \bR^{M'}(o)}^2 = \sum_{h=1}^H \abs{R^M_h(o) - R^{M'}_{h}(o)}^2 \\
    & \le \sum_{h=1}^H \abs{R^M_h(o) - R^{M'}_{h}(o)} \le \sum_{h=1}^H \abs{R^M_h(o)} + \abs{ R^{M'}_{h}(o)} = 2.
\end{align*}
for any two models $M,M'$ and any $o\in\cO$.
Therefore,~\cref{thm:vovk-finite} yields that, as long as $4\etap+\etar<2$, we have with probability at least $1-\delta$ that
\begin{align*}
    \EstwtH \defeq \sum_{t=1}^T \E_{\pi^t\sim p^t}\E_{\hat{M}^t\sim \mu^t}\brac{ \wtdH^2(M^\star(\pi^t), \hat{M}^t(\pi^t)) } \le C \cdot \log(\abs{\cM}/\delta),
\end{align*}
where $C=\max\set{\frac{1}{\etap}, \frac{1}{(1-2\etap)c'}}$, $c'= (1-e^{-c(2-c)})/2$, and $c=\etar/(1-2\etap)$. 
Choosing $\etap=\etar=1/3$, we have $c=1$, $c'=(1-e^{-1})/2$, and $C=\max\set{3, 3/c'}\le 10$ by numerical calculations. This is the desired result. The case $\etar=0$ follows similarly.
\end{proofof}

\subsubsection{Proof of Theorem~\ref{thm:vovk-finite}}
\label{appendix:proof-vovk-finite}

For all $t\in[T]$ define the random variable
$$
\Delta^t:=-\log \EE_{M\sim \mu^t}\left[ \exp\left( \etap\log \frac{\PP^M(o^t|\xt)}{\PPs(o^t|\xt) }+\etar\delta^t_{M}\right) \right],
$$
where 
\begin{align}\label{eqn:vovk-proof-delta}
\delta^t_{M}:=&\ltwot{ \br^t-\Rs(o^t) }-\ltwot{\br^t-\bR^{M}(o^t)}.
\end{align}
Recall that $\EE_t$ is taken with respect to all randomness after prediction $\mu^t$ is made. Then
\begin{align}
\label{eqn:vovk-proof-expminusdelta}
\begin{aligned}
\MoveEqLeft
\EEt{\exp\left(-\Delta^t\right)}
=
\EEt{ \EE_{M\sim \mu^t}\left[ \exp\left( \etap\log \frac{\PP^M(o^t|\xt)}{\PPs(o^t|\xt) } + \etar\delta^t_{M} \right) \right] }\\
=&
\sum_{M\in\cM}\mu^t(M)\EEt{ \exp\left( \etap\log \frac{\PP^M(o^t|\xt)}{\PPs(o^t|\xt) } + \etar\delta^t_{M} \right) }\\
\leq &
\sum_{M\in\cM}\mu^t(M)\EEt{ 2\etap\exp\left( \frac{1}{2} \log \frac{\PP^M(o^t|\xt)}{\PPs(o^t|\xt) } \right) + (1-2\etap)\exp\left( \frac{\etar}{1-2\etap}\delta^t_{M} \right) }\\
=&
2\etap\sum_{M\in\cM}\mu^t(M)\EEt{ \EE_{o\sim\PPs(\cdot|\xt)}\brac{\sqrt{\frac{\PP^M(o|\xt)}{\PPs(o|\xt)}}} }\\
&+(1-2\etap)\sum_{M\in\cM}\mu^t(M)\EEt{ \exp\left(\frac{\etar}{1-2\etap}\delta^t_{M} \right) }.
\end{aligned}
\end{align}
For the first term, by definition
\begin{align}
\label{eqn:vovk-proof-likelihood}
\EE_{o\sim\PPs(\cdot|\xt)}\brac{\sqrt{\frac{\PP^M(o|\xt)}{\PPs(o|\xt)}}}
=1-\frac12\dH^2(\PPs(\cdot|\xt), \PP^M(\cdot|\xt)).
\end{align}
To bound the second term, we abbreviate $c\defeq \frac{\etar}{1-2\etap}$, and invoke the following lemma. The proof can be found in~\cref{appendix:proof-vovk-delta-rew}.
\newcommand{\er}{\overline{\br}}
\newcommand{\rr}{\hat{\br}}
\begin{lemma}\label{lemma:vovk-delta-rew}
    Suppose that $\br\in\R^d$ is a $\sigma^2$-sub-Gaussian random vector, $\er=\EE[\br]$ is the mean of $\br$, and $\rr\in\R^d$ is any fixed vector. Then the random variable
    \begin{align*}
        \delta:=\ltwot{\br-\er}-\ltwot{\br-\rr},
    \end{align*}
    satisfies $\EE\brac{\exp(\lambda\delta)}\leq \exp\paren{-\lambda(1-2\sigma^2\lambda)\ltwot{\er-\rr}}$ for any $\lambda\in\R$.
\end{lemma}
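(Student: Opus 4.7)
The plan is to reduce the statement to a standard sub-Gaussian moment generating function bound by first rewriting $\delta$ so that the randomness enters only linearly in $\br$. Concretely, expanding the squared norms and letting $\mathbf{v}\defeq \rr-\er$,
\[
    \delta = \ltwot{\br-\er}-\ltwot{\br-\rr} = 2\langle \br-\er,\mathbf{v}\rangle - \ltwot{\mathbf{v}},
\]
where the constant shift $-\ltwot{\mathbf{v}}$ absorbs $\|\er\|^2 - \|\rr\|^2 + 2\langle \er,\mathbf{v}\rangle$ (this is just the identity $\|\br-\er\|^2-\|\br-\rr\|^2 = 2\langle \br-\er,\rr-\er\rangle - \|\rr-\er\|^2$). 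The key point of this rewriting is that all the randomness of $\delta$ is now contained in the single centered linear functional $\langle \br-\er,\mathbf{v}\rangle$.

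Next I would apply the $\sigma^2$-sub-Gaussian property of $\br$: recall that this means $\EE[\exp(\langle \mathbf{u},\br-\er\rangle)] \le \exp(\sigma^2\|\mathbf{u}\|^2/2)$ for every $\mathbf{u}\in\R^d$. Plugging in $\mathbf{u}=2\lambda\mathbf{v}$ and combining with the deterministic shift gives
\[
    \EE[\exp(\lambda\delta)] = \exp(-\lambda\ltwot{\mathbf{v}})\cdot\EE[\exp(\langle 2\lambda\mathbf{v},\br-\er\rangle)] \le \exp\paren{-\lambda\ltwot{\mathbf{v}} + 2\sigma^2\lambda^2\ltwot{\mathbf{v}}},
\]
which rearranges to $\exp(-\lambda(1-2\sigma^2\lambda)\ltwot{\er-\rr})$, as claimed.

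I do not anticipate a significant obstacle: the whole argument is two lines of algebra plus one invocation of the sub-Gaussian MGF bound, and it works uniformly in $\lambda\in\R$ (not only $\lambda\ge 0$) because the sub-Gaussian assumption gives a two-sided MGF control and the rewriting above holds verbatim for negative $\lambda$. The only minor point worth flagging is that the statement of the lemma is used in~\cref{eqn:vovk-proof-expminusdelta} with $\lambda = c = \etar/(1-2\etap)$, and the factor $(1-2\sigma^2 c)$ appearing in the constant $c'$ in~\cref{thm:vovk-finite} comes precisely from the exponent $-\lambda(1-2\sigma^2\lambda)$ derived here; in particular the sub-Gaussian parameter $\sigma^2$ enters the final regret constant only through this step.
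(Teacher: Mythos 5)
Your proof is correct and is essentially identical to the paper's own argument: both rewrite $\delta = 2\iprod{\br-\er}{\rr-\er} - \ltwot{\rr-\er}$ and then apply the sub-Gaussian MGF bound with the test vector $2\lambda(\rr-\er)$. No gaps; the argument indeed holds for all $\lambda\in\R$ as you note.
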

Therefore,
\begin{align}\label{eqn:vovk-proof-reward}
\EEt{ \exp\left(c\delta^t_{M} \right) }
\leq&
\EEt{ \exp\left(-c(1-2\sigma^2c)\ltwot{ \bR^{M}(o^t)-\Rs(o^t) } \right)}\notag\\
\leq&
1- c'\EEt{\ltwot{ \bR^{M}(o^t)-\Rs(o^t) }},
\end{align}
where the second inequality is due to the fact that for all $x\in[0,D^2]$, it holds that $e^{-c(1-2\sigma^2c)x}\le 1-c'x$, which is ensured by our choice of $c\in[0,2/\sigma^2)$ and $c'\defeq (1-e^{-D^2 c(1-2\sigma^2c)})/D^2>0$. Therefore, by flipping ~\cref{eqn:vovk-proof-expminusdelta} and adding one on both sides, and plugging in~\cref{eqn:vovk-proof-likelihood} and~\cref{eqn:vovk-proof-reward}, we get
\begin{align*}
1-\EEt{ \exp\left(-\Delta^t\right) }   
\geq&
\etap\EE_{M\sim \mu^t} \EE_{\xt\sim\cdot|\cF_{t-1}}\brac{\dH^2(\PP^M(\cdot|\xt), \PPs(\cdot|\xt)) }\\
&+ (1-2\etap)c'
\EE_{M\sim \mu^t} \EEt{\ltwot{ \bR^{M}(o^t)-\Rs(o^t) }}.
\end{align*}
Thus, by martingale concentration (\cref{lemma:concen}), we have with probability at least $1-\delta$ that
\begin{align}
\label{equation:deltat-key-bound}
\begin{aligned}
    \sum_{t=1}^{T} \Delta^t+\log(1/\delta)
    \geq& \sum_{t=1}^{T}-\log \EEt{\exp\left(-\Delta^t\right)}
    \geq \sum_{t=1}^{T} 1-\EEt{\exp\left(-\Delta^t\right)}\\
    \geq&
    \etap\sum_{t=1}^{T}
    \EE_{M\sim \mu^t} \EE_{t}\brac{\dH^2(\PP^M(\cdot|\xt), \PPs(\cdot|\xt)) }\\
    &+ (1-2\etap)c'\sum_{t=1}^{T}
    \EE_{M\sim \mu^t} \EEt{\ltwot{ \bR^{M}(o^t)-\Rs(o^t) }} \\
    \geq& \min\set{ \etap, (1-2\etap)c' } \cdot \EE_{M\sim \mu^t}\brac{\err_{M}^t}.
\end{aligned}
\end{align}

It remains to upper bound $\sum_{t=1}^T\Delta^t$. Note that the update rule of~\cref{alg:TA} can be written in the following Follow-The-Regularized-Leader form:
\begin{align*}
    \mu^t(M) = \frac{ \mu^1(M)\exp\left( \sum_{s\le t-1} \etap\log \PP^M(o^s|\xs) + \etar\delta^s_{M} \right) }{\sum_{M'\in\cM} \mu^1(M') \exp\left( \sum_{s\le t-1} \etap \log \PP^{M'}(o^s|\xs) + \etar\delta^s_{M'} \right)},
\end{align*}
where we have used that $\delta^t_{M}=-\ltwot{\br^t-\bR^{M}(o^t)}+\ltwot{ \br^t-\Rs(o^t) }$ in which $\ltwot{ \br^t-\Rs(o^t) }$ is a constant that does not depend on $M$ for all $t\in[T]$. 
Therefore we have
\begin{align}\label{eqn:vovk-proof-sum-delta-eq}
\begin{split}
&\qquad\exp(-\Delta^t)
=\EE_{M\sim \mu^t}\left[ \exp\left( \etap\log \frac{\PP^M(o^t|\xt)}{\PPs(o^t|\xt) } + \etar\delta^t_{M}\right) \right]\\
&=\sum_{M\in\cM} \mu^t(M)\exp\left( \etap\log \frac{\PP^M(o^t|\xt)}{\PPs(o^t|\xt) } + \etar\delta^t_{M}\right)\\
&=\sum_{M\in\cM} \frac{ \mu^1(M)\exp\left( \sum_{s\le t-1} \etap\log \PP^M(o^s|\xs) + \etar\delta^s_{M} \right) }{\sum_{M'\in\cM} \mu^1(M') \exp\left( \sum_{s\le t-1} \etap \log \PP^{M'}(o^s|\xs) + \etar\delta^s_{M'} \right)} \exp\left( \etap\log \frac{\PP^M(o^t|\xt)}{\PPs(o^t|\xt) } + \etar\delta^t_{M}\right)\\
&=\frac{ \sum_{M\in\cM} \mu^1(M) \exp\left(  \sum_{s\le t}\etap\log \frac{\PP^M(o^s|\xs)}{\PPs(o^s|\xs) } + \etar\delta^s_{M}\right) }
{ \sum_{M\in\cM} \mu^1(M) \exp\left( \sum_{s\le t-1} \etap\log \frac{\PP^M(o^s|\xs)}{\PPs(o^s|\xs) } + \etar\delta^s_{M}\right) },
\end{split}
\end{align}
where the last equality used again the fact that $-\etap \log\PPs(o^s|\xs)$ is a constant that does not depend on $M$ for all $s\in[t]$.

Taking $-\log$ on both sides above and summing over $t\in[T]$, we have by telescoping that
\begin{align}
\label{equation:ta-telescope}
\sum_{t=1}^{T} \Delta^t=-\log \sum_{M\in\cM} \mu^1(M) \exp\left( \sum_{t=1}^{T} \etap\log \frac{\PP^M(o^t|\xt)}{\PPs(o^t|\xt) }  + \etar\delta^t_{M}\right).
\end{align}
By realizability $\Ms\in\cM$, we have
$$
\sum_{t=1}^{T} \Delta^t\le -\log \mu^1(\Ms)=\log|\cM|.
$$
Plugging this bound into~\cref{equation:deltat-key-bound} gives the desired high-probability statement. The in-expectation statement follows similarly by further noticing that in~\cref{equation:deltat-key-bound}, taking the expectation $\E\brac{\sum_{t=1}^T\Delta^t}$ gives the same right-hand side, but without the additional $\log(1/\delta)$ term on the left-hand side.
\qed

\subsubsection{Proof of Lemma~\ref{lemma:vovk-delta-rew}}
\label{appendix:proof-vovk-delta-rew}

By definition,
\begin{align*}
    \delta=2\iprod{\br-\er}{\rr-\er}-\ltwot{\rr-\er},
\end{align*}
and therefore,
\begin{align*}
\EE\brac{ \exp\paren{\lambda\delta} }
=&
\exp\paren{- \lambda\ltwot{\rr-\er}}\EE\brac{ \exp\paren{2\lambda\iprod{\br-\er}{\rr-\er} } }\\
\leq&
\exp\paren{ 2\sigma^2\lambda^2\ltwot{\rr-\er} - \lambda\ltwot{\rr-\er} }\\
=&
\exp\paren{-\lambda(1-2\sigma^2\lambda)\ltwot{\er-\rr}},
\end{align*}
where the inequality is due to the definition of $\sigma^2$-sub-Gaussian random vector: For $\v=2\lambda(\rr-\er)\in\R^\dr$, 
$$
\EE\brac{\exp\paren{\iprod{v}{\br}}}\leq \exp\paren{\frac{\sigma^2\ltwot{\v}}{2}}.
$$
\qed

\subsection{General E2D \& Proof of Proposition~\ref{thm:E2D-TA}}
\label{appendix:general-e2d}

We first prove a guarantee for the following E2D meta-algorithm that allows any (randomized) online estimation subroutine, which includes~\cref{alg:E2D-TA} as a special case by instantiating $\Alg_{\Est}$ as the \Vovkalg~subroutine (for finite model classes) and thus proving~\cref{thm:E2D-TA}.

\begin{algorithm}[t]
	\caption{E2D Meta-Algorithm with Randomized Model Estimators} 
	\begin{algorithmic}[1]
	\label{alg:E2D-meta}
	\REQUIRE Parameter $\gamma>0$; Online estimation subroutine $\Alg_{\Est}$; Prior distribution $\mu^1\in\Delta(\cM)$.
	\FOR{$t=1,\ldots,T$}
    \STATE Set $p^t\setto \argmin_{p\in\Delta(\Pi)}\hV^{\mu^t}_{\gamma}(p)$, where $\hV^{\mu^t}_{\gamma}$ is defined in~\cref{equation:vtp}. 
    \STATE Sample $\pi^t\sim p^t$. Execute $\pi^t$ and observe $(o^t,\br^t)$. 
    \STATE Update randomized model estimator by online estimation subroutine:
    \begin{align*}
        \mu^{t+1} \setto \Alg_{\Est}^{t}\paren{\set{(\pi^s, o^s, \br^s)}_{s\in[t]}}.
    \end{align*}
    \ENDFOR
   \end{algorithmic}
\end{algorithm}

The following theorem is an instantiation of~\citet[Theorem 4.3]{foster2021statistical} by choosing the divergence function to be $\wtdH$. It is also an immediate corollary of our results of generalized DEC (see e.g. \cref{thm:E2D-gen} and also its proof in \cref{appendix:proof-gen-dec}). %
Let
\begin{align}
  \label{equation:esth}
    \EstwtH \defeq \sum_{t=1}^T \E_{\pi^t\sim p^t}\E_{\hat{M}^t\sim \mu^t}\brac{ \wtdH^2(M^\star(\pi^t), \hat{M}^t(\pi^t)) }
\end{align}
denote the online estimation error of $\set{\mu^t}_{t=1}^T$ in $\wtdH^2$ divergence (achieved by $\Alg_{\Est}$).
\begin{theorem}[E2D Meta-Algorithm~\citep{foster2021statistical}]
\label{thm:E2D-meta}
\cref{alg:E2D-meta} achieves
\begin{align*}
    \regdm \le T\cdot \odec_\gamma(\cM) + \gamma\cdot \EstwtH.
\end{align*}
\end{theorem}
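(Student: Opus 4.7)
The plan is to exploit the fact that $p^t$ is chosen as the exact minimizer of $\hV^{\mu^t}_{\gamma}$, so by matching the definition of this risk with~\cref{definition:dec}, one has $\hV^{\mu^t}_{\gamma}(p^t)=\dec_{\gamma}(\cM,\mu^t)\le \odec_{\gamma}(\cM)$ at every step $t$. Specializing the inner $\sup_{M\in\cM}$ to the ground truth $M=\Ms$, which lies in $\cM$ by realizability, converts this sup-level bound into a per-step bound on the instantaneous regret, modulo an information-gain penalty that, once summed over $t$, is precisely the online estimation error $\EstwtH$ accumulated by $\Alg_{\Est}$.

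Concretely, the first step will be to write, for each $t\in[T]$,
\begin{align*}
\sup_{M\in\cM}\E_{\pi\sim p^t}\E_{\oM\sim\mu^t}\!\left[f^M(\pi_M)-f^M(\pi)-\gamma\,\wtdH^2(M(\pi),\oM(\pi))\right] \;=\; \dec_{\gamma}(\cM,\mu^t)\;\le\; \odec_{\gamma}(\cM),
\end{align*}
where the equality holds by the choice of $p^t$ together with~\cref{definition:dec}. The second step is to drop the outer sup by setting $M=\Ms$, which yields
\begin{align*}
\E_{\pi\sim p^t}\!\left[f^{\Ms}(\pi_{\Ms})-f^{\Ms}(\pi)\right] \;\le\; \odec_{\gamma}(\cM) \,+\, \gamma\,\E_{\pi\sim p^t}\E_{\hat M^t\sim\mu^t}\!\left[\wtdH^2(\Ms(\pi),\hat M^t(\pi))\right].
\end{align*}
The third step is simply to sum over $t=1,\ldots,T$ and recognize the right-most sum as $\EstwtH$, giving $\regdm \;\le\; T\,\odec_{\gamma}(\cM) + \gamma\,\EstwtH$, which is the claim.

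I do not anticipate any real obstacle: the argument is a one-line consequence of the variational definition of the DEC combined with the algorithmic choice of $p^t$, and invokes no probabilistic or concentration machinery whatsoever---the entire probabilistic burden has already been offloaded to the analysis of $\Alg_{\Est}$, which is responsible for controlling $\EstwtH$. The only mild bookkeeping subtlety is the ordering of the two arguments of $\wtdH^2$ in the definition of $\EstwtH$ in~\eqref{equation:esth} vis-\`a-vis~\cref{definition:dec}; this can either be matched directly (which is the natural reading, consistent with the form of $\EstwtH$ appearing in~\cref{corollary:restate-lemma-tempered-aggregation}), or if needed reconciled at the cost of an absolute multiplicative constant via the near-symmetry in~\cref{lemma:diff-hellinger}.
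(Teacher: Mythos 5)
Your argument is correct and is essentially identical to the paper's proof in~\cref{appendix:proof-e2d-meta}: both rest on realizability ($\Ms\in\cM$) to specialize the $\sup_M$ to $\Ms$, on the fact that $p^t$ minimizes $\hV^{\mu^t}_\gamma$ so that $\hV^{\mu^t}_\gamma(p^t)=\dec_\gamma(\cM,\mu^t)\le\odec_\gamma(\cM)$, and on summing the resulting per-step bound; the paper merely writes the same decomposition starting from $\regdm$ rather than from the DEC. Your remark about the argument order of $\wtdH^2$ in~\cref{equation:esth} versus~\cref{definition:dec} is a fair catch of a notational inconsistency in the paper, which the paper's own proof resolves by simply using the $(M^\star,\hat M^t)$ ordering throughout.
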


We are now ready to prove the main theorem (finite $\cM$).

\begin{proofof}[thm:E2D-TA]
Note that~\cref{alg:E2D-TA} is an instantiation of~\cref{alg:E2D-meta} with $\Alg_{\Est}$ chosen as \Vovkalg. By \cref{corollary:restate-lemma-tempered-aggregation}, choosing $\etap=\etar=1/3$, the \Vovkalg~subroutine achieves
\begin{align*}
    \EstwtH \le 10\log(\abs{\cM}/\delta)
\end{align*}
with probability at least $1-\delta$. On this event, by~\cref{thm:E2D-meta} we have that
\begin{align*}
    \regdm \le T\cdot \odec_\gamma(\cM) + \gamma\cdot \EstwtH \le T\cdot \odec_\gamma(\cM) + 10\gamma\log(\abs{\cM}/\delta).
\end{align*}
This is the desired result.
\end{proofof}

\subsection{\Vovkalg~with covering}
\label{appendix:ta-covering}

In many scenarios, we have to work with an infinite model class $\cM$ instead of a finite one. In the following, we define a covering number suitable for divergence $\wtdH$, and provide the analysis of the \Vovkalg\ subroutine (as well as the corresponding \etod~algorithm) with such coverings. 

We consider the following definition of optimistic covering.
\begin{definition}[Optimistic covering]
\label{def:opt-cover}
Given $\rho\in[0,1]$, an optimistic $\rho$-cover of $\cM$ is a tuple $(\tPP,\cM_0)$, where $\cM_0$ is a finite subset of $\cM$, and each $M_0 \in\cM_0$ is assigned with an \emph{optimistic likelihood function} $\tPP^{M_0}$, such that the following holds: 
\begin{enumerate}%
\item[(1)] For $M_0\in\cM_0$, for each $\pi$, $\tPP^{M_0,\pi}(\cdot)$ specifies a un-normalized distribution over $\cO$, and it holds that $\nrm{\PP^{M_0,\pi}(\cdot)-\tPP^{M_0,\pi}(\cdot)}_1\leq\rho^2$.
\item[(2)] For any $M\in\cM$, there exists a $M_0\in\cM_0$ that \emph{covers} $M$: for all $\pi\in\Pi$, $o\in\cO$, it holds $\tPP^{M_0,\pi}(o)\geq \PP^{M,\pi}(o)$\footnote{
An important observation is that, along with (1), this requirement implies $\DTV{\PP^{M,\pi}(\cdot), \PP^{M_0,\pi}(\cdot)}\leq \rho^2$ (for proof, see e.g. \eqref{eqn:proof-mops-tv-opt}). Therefore, a $\rho$-optimistic covering implies a $\rho^2$-covering in TV distance.
}, and $\lone{\bR^M(o)-\bR^{M_0}(o)}\leq \rho$. 
\end{enumerate}
The optimistic covering number $\cN(\cM,\rho)$ is defined as the minimal cardinality of $\cM_0$ such that there exists $\tPP$ such that $(\tPP,\cM_0)$ is an optimistic $\rho$-cover of $\cM$. 
\end{definition}
Define
\begin{align}
\label{equation:est-cover}
    \est(\cM, K):=\inf_{\rho\geq 0}\paren{\log\cN(\cM,\rho)+K\rho} 
\end{align}
which measures the estimation complexity of $\cM$ for $K$-step interaction.
With the above definitions at hand, the \Vovkalg\ algorithm can be directly generalized to infinite model classes by performing the updates on an optimistic cover (\cref{alg:TA-infinite}).
\begin{proposition}[\Vovkalg~with covering for RL]
\label{prop:vovk-cover-demo}
For any model class $\cM$ and an associated optimistic $\rho$-cover $(\tPP,\cM_0)$, the \Vovkalg~subroutine
\begin{align}
\label{eqn:ta-cover-def}
    \mu^{t+1}(M) \; \propto_M \; \mu^{t}(M) \cdot \exp\paren{\etap \log \tPP^{M,\pi^t}(o^{t}) - \etar\ltwo{\br^t - \bR^M(o^t)}^2 }
\end{align}
with $\mu^1=\Unif(\cM_0)$ and $\etap=\etar=1/3$ achieves the following bound with probability at least $1-\delta$: %
\begin{align*}
    \EstwtH %
    \le 10 \cdot \brac{\log\abs{\cM_0}+2T\rho+2\log(2/\delta)}.
\end{align*}
In particular, the \Vovkalg~subroutine can be suitably instantiated such that with probability at least $1-\delta$,
\begin{align*}
    \EstwtH \leq 20 \est(\cM,T)+20\log(2/\delta).
\end{align*}
\end{proposition}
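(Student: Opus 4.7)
My plan is to repeat the two-sided bound on $\sum_t \Delta^t$ from the proof of Theorem~\ref{thm:vovk-finite} (\cref{appendix:proof-vovk-finite}), now applied on the cover $\cM_0$ with $\PP^M$ replaced by the optimistic likelihood $\tPP^M$. Concretely, I would define
$$\Delta^t := -\log \EE_{M\sim\mu^t}\exp\!\Big(\etap \log \tfrac{\tPP^M(o^t|\pi^t)}{\PPs(o^t|\pi^t)} + \etar\delta^t_M\Big),$$
with $\delta^t_M$ as in~\cref{eqn:vovk-proof-delta}. The FTRL telescoping identity~\cref{equation:ta-telescope} is unaffected by the substitution $\PP^M\to\tPP^M$, so $\sum_{t=1}^T \Delta^t = -\log \sum_{M\in\cM_0} \mu^1(M) \exp\big(\sum_t \etap \log (\tPP^M(o^t|\pi^t)/\PPs(o^t|\pi^t)) + \etar\delta^t_M\big)$.

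For the upper bound on $\sum_t\Delta^t$, since realizability only holds in $\cM$, I would pick $M_0^\star\in\cM_0$ covering $M^\star$ and retain only that summand, giving $\sum_t \Delta^t\le \log|\cM_0| - \etap \sum_t \log (\tPP^{M_0^\star}(o^t|\pi^t)/\PPs(o^t|\pi^t)) - \etar \sum_t \delta^t_{M_0^\star}$. The log-likelihood sum is $\le 0$ by the pointwise domination $\tPP^{M_0^\star,\pi}(o)\ge\PP^{M^\star,\pi}(o)=\PPs(o|\pi)$ from part~(2) of \cref{def:opt-cover}. For the reward sum, the MGF estimate of \cref{lemma:vovk-delta-rew} with $\lambda=-\etar$ combined with the pointwise bound $\ltwot{\Rs(o^t)-\bR^{M_0^\star}(o^t)}\le\lone{\Rs(o^t)-\bR^{M_0^\star}(o^t)}^2\le\rho^2$ yields $\EEt{\exp(-\etar\delta^t_{M_0^\star})}\le\exp(\etar(1+2\sigma^2\etar)\rho^2)$; a standard Chernoff-type application of \cref{lemma:concen} (applied to $X_t=\etar\delta^t_{M_0^\star}$, and then used to upper-bound $\sum_t(-X_t)$) gives, with probability at least $1-\delta/2$,
$$\sum_t \Delta^t \le \log|\cM_0| + T\etar(1+2\sigma^2\etar)\rho^2 + \log(2/\delta).$$

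For the lower bound I would rerun~\cref{eqn:vovk-proof-expminusdelta}--\cref{eqn:vovk-proof-reward}; the only step that changes is the Hellinger term. Using the elementary inequality $(\sqrt{a}-\sqrt{b})^2\le |a-b|$, Cauchy--Schwarz, and part~(1) of \cref{def:opt-cover} ($\lone{\tPP^M-\PP^M}\le\rho^2$ for $M\in\cM_0$), I will show that for every $M\in\cM_0$,
$$\textstyle\sum_o\sqrt{\PPs(o|\pi^t)\tPP^M(o|\pi^t)}\le \sum_o\sqrt{\PPs(o|\pi^t)\PP^M(o|\pi^t)} + \rho = 1 - \tfrac12\dH^2(\PPs(\cdot|\pi^t),\PP^M(\cdot|\pi^t)) + \rho.$$
This introduces an additive $2\etap\rho$ error per round relative to the finite-$\cM$ calculation, so plugging into~\cref{eqn:vovk-proof-expminusdelta} yields $1-\EEt{\exp(-\Delta^t)}\ge \min\{\etap,(1-2\etap)c'\}\EE_{M\sim\mu^t}[\err_M^t] - 2\etap\rho$, and a second application of \cref{lemma:concen} (with failure probability $\delta/2$), exactly as in~\cref{equation:deltat-key-bound}, gives $\min\{\etap,(1-2\etap)c'\}\sum_t\EE_{M\sim\mu^t}[\err_M^t]\le \sum_t\Delta^t + 2T\etap\rho + \log(2/\delta)$.

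Chaining these two bounds via a union bound and plugging in $\etap=\etar=1/3$, $\sigma^2=1/4$, $D^2=2$ (for which $1/\min\{\etap,(1-2\etap)c'\}\le 10$ exactly as computed in the proof of \cref{thm:vovk-finite}, and for which $\rho^2\le\rho$ absorbs the reward term $\etar(1+2\sigma^2\etar)\rho^2 T$ into the Hellinger term $2\etap\rho T$) produces the claimed bound on $\EstwtH$ after identifying $\EE_{M\sim\mu^t}[\err_M^t]$ with $\EE_{\hat M^t\sim\mu^t}\EE_{\pi^t\sim p^t}[\wtdH^2(M^\star(\pi^t),\hat M^t(\pi^t))]$ as in \cref{corollary:restate-lemma-tempered-aggregation}. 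The main obstacle I anticipate is the Hellinger step: because $\tPP^M$ is un-normalized the usual identity $\EE_{o\sim\PPs}[\sqrt{\PP^M/\PPs}]=1-\tfrac12\dH^2$ only holds up to an $O(\rho)$ correction per round (not $O(\rho^2)$), and this error must be tracked with the correct sign and then verified to combine with the reward-side $\rho^2 T$ into the $2T\rho$ term as stated.
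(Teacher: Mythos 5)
Your proposal is correct and follows essentially the same route as the paper's proof of \cref{thm:vovk-cover}: the same $\Delta^t$ with the optimistic likelihood ratio, the same telescoping, the same Cauchy--Schwarz$+(\sqrt a-\sqrt b)^2\le|a-b|$ derivation of the $O(\rho)$ Hellinger correction, and the same pointwise-domination argument (plus the reward MGF bound from \cref{lemma:vovk-delta-rew}) for the upper bound on $\sum_t\Delta^t$. The only cosmetic differences are that you invoke \cref{lemma:concen} on $X_t=\etar\delta^t_{M_0^\star}$ where the paper runs a direct Chernoff bound on $\EE[\exp(\sum_t\Delta^t)]$, and you bound $\ltwot{\cdot}\le\lone{\cdot}^2\le\rho^2$ where the paper uses $\lone{\cdot}\linf{\cdot}\le\rho$; both yield the stated constants.
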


\paragraph{\etod~with covering}
\cref{prop:vovk-cover-demo} implies that, based on the model class $\cM$, we can suitably design the optimistic likelihood function $\tPP$ and the prior $\mu^1$, so that ~\cref{alg:E2D-meta} with $\Alg_{\Est}$ chosen as ~\cref{eqn:ta-cover-def} achieves $\EstwtH=\tbO{\est(\cM,T)}$. Therefore, by~\cref{thm:E2D-meta} we directly have the following guarantee.
\begin{theorem}[\etod~with covering]
\label{thm:E2D-TA-full}
\cref{alg:E2D-meta} with $\Alg_{\Est}$ chosen as \textsc{\Vovkalg~with covering}~\cref{eqn:ta-cover-def} and optimally chosen $\gamma$ achieves
\begin{align*}
    \regdm \le C\inf_{\gamma>0}\paren{ T\cdot \odec_\gamma(\cM) + \gamma\est(\cM,T) + \gamma\log(1/\delta) }
\end{align*}
with probability at least $1-\delta$, where $C$ is a universal constant.
\end{theorem}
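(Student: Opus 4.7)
The plan is to chain the two ingredients already in hand: the abstract E2D regret decomposition (\cref{thm:E2D-meta}), which bounds $\regdm$ in terms of $\odec_\gamma(\cM)$ and the online estimation error $\EstwtH$ of whatever subroutine is plugged in; and \cref{prop:vovk-cover-demo}, which controls $\EstwtH$ for the \Vovkalg-with-covering subroutine in terms of the cardinality of an optimistic $\rho$-cover and $T\rho$. Substituting the second bound into the first and then optimizing over both the cover radius $\rho$ and the DEC parameter $\gamma$ produces the advertised inequality.

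First I would fix $\gamma>0$ and any $\rho\in[0,1]$, and pick an optimistic $\rho$-cover $(\tPP,\cM_0)$ of $\cM$ of minimal cardinality $|\cM_0| = \cN(\cM,\rho)$, so that the \Vovkalg-with-covering update in \cref{eqn:ta-cover-def} is well defined with $\mu^1 = \Unif(\cM_0)$. Because \cref{alg:E2D-meta} instantiated with this subroutine is exactly the algorithm under consideration, \cref{prop:vovk-cover-demo} applies, yielding with probability at least $1-\delta$
\begin{align*}
    \EstwtH \le 10\brac{\log \cN(\cM,\rho) + 2T\rho + 2\log(2/\delta)}.
\end{align*}
On the same event, \cref{thm:E2D-meta} (whose proof treats $\Alg_{\Est}$ as a black box) gives
\begin{align*}
    \regdm \le T\cdot \odec_\gamma(\cM) + \gamma\cdot \EstwtH \le T\cdot \odec_\gamma(\cM) + 10\gamma\brac{\log \cN(\cM,\rho) + 2T\rho + 2\log(2/\delta)}.
\end{align*}

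Next I would take the infimum over $\rho\ge 0$. By the definition $\est(\cM,K) = \inf_{\rho\ge 0}\paren{\log\cN(\cM,\rho) + K\rho}$, the infimum over $\rho$ of the bracketed expression equals $\est(\cM,2T)$; and since $\log\cN(\cM,\rho)\ge 0$, the elementary inequality $\log\cN(\cM,\rho) + 2T\rho \le 2\paren{\log\cN(\cM,\rho) + T\rho}$ gives $\est(\cM,2T)\le 2\est(\cM,T)$. Absorbing numerical constants into a universal constant $C$ then yields $\regdm \le C\brac{T\odec_\gamma(\cM) + \gamma\est(\cM,T) + \gamma\log(1/\delta)}$ on the high-probability event. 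Since $\gamma$ is a tunable parameter of the algorithm, taking the infimum over $\gamma>0$ produces the claimed bound.

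I do not anticipate a real obstacle: every step is a direct assembly of pieces already proved. The two minor points to verify are (i) that the covering-based \Vovkalg\ subroutine satisfies the black-box interface required by \cref{thm:E2D-meta} (which is immediate from the algorithm's definition), and (ii) that replacing $\est(\cM,2T)$ by $\est(\cM,T)$ only costs a constant factor (the computation above). Both are routine, so the whole argument reduces to a two-line combination of \cref{prop:vovk-cover-demo} and \cref{thm:E2D-meta}.
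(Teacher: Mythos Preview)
Your proposal is correct and follows essentially the same route as the paper: the paper states the theorem immediately after \cref{prop:vovk-cover-demo}, noting that this proposition gives $\EstwtH=\tbO{\est(\cM,T)}$ and that plugging this into \cref{thm:E2D-meta} yields the result. Your write-up just makes explicit the $\est(\cM,2T)\le 2\,\est(\cM,T)$ step that the paper absorbs into the $\tbO{\cdot}$.
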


\subsubsection{Discussions about optimistic covering}

We make a few remarks regarding our definition of the optimistic covering. 
Examples of optimistic covers on concrete model classes can be found in e.g. \cref{example:cover-tabular}, \cref{prop:cover-linear-mixture}; see also \citep[Appendix B]{liu2022partially}.

\paragraph{A more relaxed definition} 
We first remark that \cref{def:opt-cover}(2) can actually be relaxed to

\textit{(2') For any $M\in\cM$, there exists a $M_0\in\cM_0$, such that $\max_{o\in\cO}\lone{\bR^M(o)-\bR^{M_0}(o)}\leq \rho$, and  
\begin{align}\label{eqn:opt-cover-gen}
    \EE_{o\sim \PP^M(\cdot|\pi)}\brac{\frac{\PP^M(o|\pi)}{\tPP^{M_0}(o|\pi)}}\leq 1+\rho,\quad \forall \pi\in\Pi.\tag{$\dagger$}
\end{align}
}

For the simplicity of presentation, we state all the results in terms of \cref{def:opt-cover}. But the proof of \cref{thm:vovk-cover} can be directly adapted to \eqref{eqn:opt-cover-gen}; see \cref{remark:opt-cover-gen}. %

\paragraph{Relation to \citet[Definition 3.2]{foster2021statistical}} 
We comment on the relationship between our optimistic covering and the covering introduced in \citet[Definition 3.2]{foster2021statistical} (which is also used in their algorithms to handle infinite model classes). First, the covering in \citet{foster2021statistical} needs to cover \emph{the distribution of reward}, while ours only need to cover the mean reward function. More importantly, \citet[Lemma A.16]{foster2021statistical} explicitly introduces a factor $\log B$, where $B\geq \sup_{o\in\cO,\pi\in\Pi,M\in\cM}\frac{\PP^M(o|\pi)}{\nu(o|\pi)}$ with $\nu$ being certain base distribution. Actually, with such a $B$, we can show that
$$
\cN'(\cM, \rho)\leq \cN_{\TV}(\cM, \rho^2/4B),
$$
where $\cN_{\TV}$ is the covering number in the TV sense, and $\cN'$ is the optimistic covering number with respect to \eqref{eqn:opt-cover-gen}. %

\paragraph{Relation to other notions of covering numbers} 
Ignoring the reward component, our optimistic covering number is essentially equivalent to the \emph{bracketing number}. We further remark that optimistic covering can be slightly weaker than the covering in $\chi^2$-distance sense: given a $\rho^2$-covering $\cM_0$ in the latter sense, we can take $\tPP=(1+\rho^2)\PP$ to obtain a $\rho$-optimistic covering defined by \eqref{eqn:opt-cover-gen}.

\subsection{Proof of Proposition~\ref{prop:vovk-cover-demo}}
\label{appendix:proof-e2d-ta-cover}

We first restate the \textsc{\Vovkalg~with covering} subroutine~\cref{eqn:ta-cover-def} in the general setup of online model estimation in \cref{alg:TA-infinite}.

\begin{algorithm}[t]
\caption{\textsc{\Vovkalg~with covering}} 
	\begin{algorithmic}[1]
	\label{alg:TA-infinite}
	\REQUIRE Learning rate $\etap\in(0, \frac{1}{2}),\etar>0$, number of steps $T$, $\rho$-optimistic cover $(\tPP,\cM_0)$.
	\STATE Initialize $\mu^1\setto {\rm Unif}(\cM_0)$.
	\FOR{$t=1,\dots,T$}
	\STATE Receive $(\xt, o^t, \br^t)$.
    \STATE Update randomized model estimator:
    \begin{align*}
        \mu^{t+1}(M) \; \propto_{M} \; \mu^{t}(M) \cdot \exp\paren{\etap\log \tPP^M(o^t|\xt) - \etar\ltwot{ \br^t-\bR^{M}(o^t) } }.
    \end{align*}
    \ENDFOR
    \end{algorithmic}
\end{algorithm}

\begin{theorem}[\Vovkalg\ over covering]\label{thm:vovk-cover}
For any $\cM$ that is not necessarily finite, but otherwise under the same setting as~\cref{thm:vovk-finite},~\cref{alg:TA-infinite} with $2\etap+2\sigma^2\etar<1$ achieves with probability at least $1-\delta$ that
\begin{align*}
    \sum_{t=1}^T \E_{M\sim \mu^t}\brac{ \err_{M}^t } \le C \brac{\log\abs{\cM_0} + 2\log(2/\delta) + 2T\rho(\etar+\etap)},
\end{align*}
where $C$ is defined same as in~\cref{thm:vovk-finite}.%
\end{theorem}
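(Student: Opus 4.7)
The plan is to mirror the proof of \cref{thm:vovk-finite}, but to run the argument on the finite cover $\cM_0$ with optimistic likelihoods $\tPP^M$ in place of $\cM$ and $\PP^M$, and then account for two $\rho$-dependent slack terms that the covering introduces. I would keep the definition
$$\Delta^t := -\log \EE_{M\sim \mu^t}\brac{\exp\paren{\etap \log \tfrac{\tPP^M(o^t|\xt)}{\PPs(o^t|\xt)} + \etar \delta^t_M}}$$
with $\delta^t_M$ as in \cref{eqn:vovk-proof-delta}, and apply the same AM-GM splitting as in \cref{eqn:vovk-proof-expminusdelta} to decouple the likelihood and reward factors inside $\EEt{\exp(-\Delta^t)}$.

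The new calculation is on the likelihood side: $\EE_{o\sim\PPs(\cdot|\xt)}[\sqrt{\tPP^M(o|\xt)/\PPs(o|\xt)}] = \sum_o \sqrt{\PPs(o|\xt)\,\tPP^M(o|\xt)}$ for $M\in\cM_0$. Using $|\sqrt{a}-\sqrt{b}|\leq\sqrt{|a-b|}$ together with Cauchy-Schwarz against $\PPs$ and the cover bound $\lone{\tPP^M-\PP^M}\leq \rho^2$ from \cref{def:opt-cover}(1), this quantity is at most $1-\tfrac12\,\dH^2(\PPs(\cdot|\xt),\PP^M(\cdot|\xt))+\rho$. Plugging this back alongside the unchanged reward estimate \cref{eqn:vovk-proof-reward} and using $-\log x\geq 1-x$ yields
$$\sum_{t=1}^T -\log\EEt{\exp(-\Delta^t)} \;\geq\; \min\set{\etap,\,(1-2\etap)c'}\sum_{t=1}^T \EE_{M\sim\mu^t}\brac{\err^t_M} \;-\; 2\etap T\rho,$$
where $c'$ is as in \cref{thm:vovk-finite}.

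For the right-hand side, the telescoping identity leading to \cref{equation:ta-telescope} still holds, but since $\Ms\notin\cM_0$ in general I would restrict the resulting partition function to the covering element $M_0^\star\in\cM_0$ of $\Ms$, giving
$$\sum_t \Delta^t \;\leq\; \log\abs{\cM_0} \;-\; \etap\sum_t \log\tfrac{\tPP^{M_0^\star}(o^t|\xt)}{\PPs(o^t|\xt)} \;-\; \etar\sum_t \delta^t_{M_0^\star}.$$
The first residual is non-positive pointwise because $\tPP^{M_0^\star,\pi}(o)\geq \PPs(o|\pi)$ by the optimism clause in \cref{def:opt-cover}(2). For the second I would invoke \cref{lemma:concen} a second time on $\{-\etar\delta^t_{M_0^\star}\}$; conditionally on $o^t$, \cref{lemma:vovk-delta-rew} at $\lambda=-\etar$ gives an MGF bound $\exp(\etar(1+2\sigma^2\etar)\ltwot{\Rs(o^t)-\bR^{M_0^\star}(o^t)})$, which the cover bound $\lone{\bR^{\Ms}-\bR^{M_0^\star}}\leq\rho$ upgrades to $\exp(\etar(1+2\sigma^2\etar)\rho^2)$ pointwise. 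Under $2\etap+2\sigma^2\etar<1$ and $\rho\le 1$ this cleanly produces $-\etar\sum_t\delta^t_{M_0^\star}\leq 2\etar T\rho+\log(2/\delta)$, and a union bound between this event and the concentration for $\{\Delta^t\}$ (each at level $\delta/2$) combines everything into the stated $C[\log\abs{\cM_0}+2\log(2/\delta)+2T\rho(\etar+\etap)]$ after dividing by $c_0=\min\{\etap,\,(1-2\etap)c'\}$. The main obstacle will be keeping the two asymmetric covering radii straight — $\lone{\tPP^M-\PP^M}\leq \rho^2$ for the likelihoods but $\lone{\bR^M-\bR^{M_0}}\leq \rho$ for the mean rewards — and verifying that each side's residual ends up $O(\rho)$ rather than $O(\rho^2)$ after, respectively, the $|\sqrt{a}-\sqrt{b}|\leq\sqrt{|a-b|}$ step on the Hellinger side and the $\ltwo{\cdot}\leq\lone{\cdot}$ step on the reward side, so that the two slacks consolidate into $2T\rho(\etap+\etar)$.
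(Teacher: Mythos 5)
Your proposal is correct and follows essentially the same route as the paper's proof: same $\Delta^t$, same AM--GM splitting, the same $1-\tfrac12\dH^2+\rho$ bound on the optimistic likelihood term, and the same two $O(T\rho)$ slacks from the cover (using $\|\tPP^M-\PP^M\|_1\le\rho^2$ on the Hellinger side and $\ltwot{\bR^{M_0^\star}-\Rs}\le\lone{\cdot}\linf{\cdot}\le\rho$ on the reward side). The only cosmetic difference is in controlling $\sum_t\Delta^t$: you drop the $-\etap\log(\tPP^{M_0^\star}/\PPs)$ term pointwise via optimism and then apply the martingale concentration lemma a second time to $\{\etar\delta^t_{M_0^\star}\}$, whereas the paper bounds $\EE[\exp(\sum_t\Delta^t)]$ in one shot by iterated conditioning and Chernoff --- the two are equivalent exponential-martingale arguments yielding the same $\log\abs{\cM_0}+2T\etar\rho+\log(2/\delta)$ bound.
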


Plugging~\cref{thm:vovk-cover} into the RL setting, picking $(\etap,\etar)$ and performing numerical calculations, we directly have the \cref{prop:vovk-cover-demo}. The proof follows the same arguments as \cref{corollary:restate-lemma-tempered-aggregation} and hence omitted.  Similarly, when $\etap=\eta\in(0,\frac12)$, $\etar=0$, the proof of \cref{thm:vovk-cover} implies that \eqref{eqn:ta-cover-def} with $\mu^1=\Unif(\cM_0)$ achieves the following bound with probability at least $1-\delta$:
\begin{align}
\label{equation:ta-obs-only}
    \sum_{t=1}^T \E_{\pi^t\sim p^t}\E_{\hat{M}^t\sim \mu^t}\brac{ \DH{ \Pm^{M^\star}(\pi^t), \Pm^{\hat{M}^t}(\pi^t) } } %
    \le \frac{1}{\eta}\cdot \brac{\log\abs{\cM_0}+2\eta T\rho+2\log(2/\delta)}.
\end{align}

\begin{proofof}[thm:vovk-cover]
The proof is similar to that of~\cref{thm:vovk-finite}. Consider the random variable
$$
\Delta^t:=-\log \EE_{M\sim \mu^t}\left[ \exp\left( \etap\log \frac{\tPP^M(o^t|\xt)}{\PPs(o^t|\xt) }+\etar\delta^t_{M}\right) \right],
$$
for all $t\in[T]$, where $\delta$ is defined in \eqref{eqn:vovk-proof-delta}. Then by \eqref{eqn:vovk-proof-expminusdelta} and \eqref{eqn:vovk-proof-reward}, we have
\begin{align}\label{eqn:proof-vovk-delta-to}
\begin{split}
\EEt{\exp\left(-\Delta^t\right)}
\leq &
2\etap\EE_{M\sim\mu^t}\EE_t\brac{ \sqrt{\frac{\tPP^M(o^t|\xt)}{\PPs(o^t|\xt)}} }\\
&+(1-2\etap)\paren{ \EE_{M\sim\mu^t}\EE_t\brac{\exp\paren{-c(1-2\sigma^2c)\ltwot{ \bR^{M}(o^t)-\Rs(o^t) }}} }
\\
\leq &
2\etap\EE_{M\sim\mu^t}\EE_t\brac{ \sqrt{\frac{\tPP^M(o^t|\xt)}{\PPs(o^t|\xt)}} }\\
&+(1-2\etap)\paren{ 1- c'\EE_{M\sim\mu^t}\EE_t\brac{\ltwot{ \bR^{M}(o^t)-\Rs(o^t) }} },
\end{split}
\end{align}
where $c'$ is the same as in~\cref{thm:vovk-finite}.
To bound the first term, we notice that for all $\pi\in\Pi$, and $o\sim\PPs(\cdot|\xp)$, we have
\begin{align}\label{eqn:vovk-proof-optlike}
\MoveEqLeft
\EE_{o\sim\PPs(\cdot|\xp)}\brac{\sqrt{\frac{\tPP^M(o|\xp)}{\PPs(o|\xp)}}}
=
\EE_{o\sim\PPs(\cdot|\xp)}\brac{\sqrt{\frac{\PP^M(o|\xp)}{\PPs(o|\xp)}}}+\EE_{o\sim\PPs(\cdot|\xp)}\brac{\frac{\sqrt{\tPP^M(o|\xp)}-\sqrt{\PP^M(o|\xp)}}{\sqrt{\PPs(o|\xp)}}}\notag\\
\leq&
1-\frac12\dH^2(\PP^M(\cdot|\xp),\PPs(\cdot|\xp))+\EE_{o\sim\PPs(\cdot|\xp)}\Bigg[\frac{\paren{\sqrt{\tPP^M(o|\xp)}-\sqrt{\PP^M(o|\xp)} }^2}{\PPs(o|\xp)} \Bigg]^{\frac12}\notag\\
\leq&
1-\frac12\dH^2(\PP^M(\cdot|\xp),\PPs(\cdot|\xp))+\EE_{o\sim\PPs(\cdot|\xp)}\Bigg[\frac{\abs{\tPP^M(o|\xp)-\PP^M(o|\xp) }}{\PPs(o|\xp)} \Bigg]^{\frac12}\notag\\
=&
1-\frac12\dH^2(\PP^M(\cdot|\xp),\PPs(\cdot|\xp))+\nrm{\tPP^M(\cdot|\xp)-\PP^M(\cdot|\xp)}_1^{\frac12}\notag\\
\leq&
1-\frac12\dH^2(\PP^M(\cdot|\xp),\PPs(\cdot|\xp))+\rho,
\end{align}
where the last inequality is due to the fact that $\nrm{\PP^M(\cdot|\xp)-\tPP^M(\cdot|\xp)}_1\leq\rho^2$. \eqref{eqn:vovk-proof-optlike} directly implies that
\begin{align}\label{eqn:vovk-proof-optlike-2}
    \EE_t\brac{ \sqrt{\frac{\tPP^M(o^t|\xt)}{\PPs(o^t|\xt)}} }\leq 1-\frac12\EE_t\brac{\dH^2(\PP^M(\cdot|\xp^t),\PPs(\cdot|\xp^t))}+\rho.
\end{align}
Therefore, by \cref{lemma:concen}, with probability at least $1-\delta/2$, it holds that
\begin{align*}
    \sum_{t=1}^{T} \Delta^t+\log(2/\delta)
    \geq& \sum_{t=1}^{T}-\log \EEt{\exp\left(-\Delta^t\right)}
    \geq \sum_{t=1}^{T} 1-\EEt{\exp\left(-\Delta^t\right)}\\
    \geq&
    \etap \brac{ \sum_{t=1}^{T}
    \EE_{M\sim \mu^t} \EE_t\brac{\dH^2(\PP^M(\cdot|\xt), \PPs(\cdot|\xt)) } -2 T\rho }\\
    &+ (1-2\etap)c'\sum_{t=1}^{T}
    \EE_{M\sim \mu^t} \EE_t\brac{\ltwot{ \bR^{M}(o^t)-\Rs(o^t) }}.
\end{align*}
In the following, we complete the proof by showing that with probability at least $1-\delta/2$,
\begin{align}\label{eqn:vovk-proof-sum-delta-demo}
    \sum_{t=1}^{T} \Delta^t\leq \log|\cM_0|+2T\etar\rho+\log(2/\delta).
\end{align}
By a telescoping argument same as~\cref{equation:ta-telescope}, we have
\begin{align}\label{eqn:vovk-proof-sum-delta}
    \sum_{t=1}^{T} \Delta^t=-\log \sum_{M\in\cM_0} \mu^1(M) \exp\left( \sum_{t=1}^{T} \etap\log \frac{\tPP^M(o^t|\xt)}{\PPs(o^t|\xt) } +\etar\delta^t_{M}\right).
\end{align}
By the definition of $\cM_0$ and the realizability $\Ms\in\cM$, there exists a $M\in\cM_0$ such that $\Ms$ is covered by $M$ (i.e. $\nrm{\bR^{M_0}(o)-\Rs(o)}_{\infty}\leq \rho$ and $\tPP^{M}(\cdot|\pi)\geq \PPs(\cdot|\pi)$ for all $\pi$). 
Then
\begin{align}\label{eqn:proof-vovk-sum-delta-le}
\begin{split}
    \EE\brac{\exp\paren{\sum_{t=1}^{T} \Delta^t}}
    \leq& \abs{\cM_0}\EE\brac{\exp\left(-\sum_{t=1}^{T} \etap\log \frac{\tPP^{M}(o^t|\xt)}{\PPs(o^t|\xt) } - \etar\delta_{M}^t\right)}.
\end{split}
\end{align}
Now%
\begin{align}\label{eqn:vovk-proof-excess}
\begin{split}
\MoveEqLeft
\EE\brac{\exp\left(-\sum_{t=1}^{T} \etap\log \frac{\tPP^{M}(o^t|\xt)}{\PPs(o^t|\xt) } - \etar\delta_{M}^t\right)}
=
\EE\brac{ \prod_{t=1}^{T} \paren{\frac{\PPs(o^t|\xt)}{\tPP^M(o^t|\xt) } }^{\etap} \cdot \exp(-\etar\delta_{M}^t) }\\
&\leq 
\EE\brac{ \prod_{t=1}^{T} \exp(-\etar\delta_{M}^t) }
=
\EE\brac{ \prod_{t=1}^{T-1} \exp(-\etar\delta_{M}^t) \cdot \cond{\exp(-\etar\delta_{M}^T)}{o^T} }\\
&\leq 
\exp\paren{2\rho\etar}\EE\brac{ \prod_{t=1}^{T-1} \exp(-\etar\delta_{M}^t) }\\
&\leq\cdots\leq %
\exp\paren{2T\rho\etar},
\end{split}
\end{align}
where the first inequality is due to $\tPP^{M}\geq \PPs$, the second inequality is because for all $t\in[T]$,
\begin{align*}
    \cond{\exp(-\etar\delta_{M}^t)}{o^t}
    \leq \exp\paren{\etar(1+2\sigma^2\etar)\ltwot{\bR^{M}(o^t)-\Rs(o^t)}}
    \leq \exp(2\rho\etar),
\end{align*}
which is due to \cref{lemma:vovk-delta-rew} and
\begin{align*}
    \ltwo{\bR^{M}(o^t)-\Rs(o^t)}^2 \leq \lone{\bR^{M}(o^t)-\Rs(o^t)} \linf{\bR^{M}(o^t)-\Rs(o^t)} \leq \rho.
\end{align*}
Applying Chernoff's bound completes the proof.
\end{proofof}

\begin{remark}\label{remark:opt-cover-gen}
From the proof above, it is clear that \cref{thm:vovk-cover} also holds for for the alternative definition of covering number in~\cref{eqn:opt-cover-gen}: Under that definition, we can proceed in~\cref{eqn:vovk-proof-excess} by using the fact $\EE_{o\sim \PPs(\cdot|\xp)}\brac{\frac{\PPs(o|\xp)}{\tPP^M(o|\xp)}}\leq 1+\rho$ and the fact $\cond{\exp(-\etar\delta_{M}^t)}{o^t}\leq \exp(2\rho\etar)$ alternately.
\end{remark}

\section{Proofs for Section~\ref{section:eec}}

In this section, we provide proofs for results in \cref{section:eec}, except \cref{thm:E2D-exp} and \cref{prop:eec-lower-bound-demo}, which are encompassed by the generic guarantees provided in \cref{sec:gen-dec} (\cref{thm:E2D-gen} and \cref{prop:gen-lower-bound}, see also \cref{appdx:specifying}). 

\subsection{Proof of Proposition~\ref{prop:eec-to-dec}}
\label{appendix:proof-eec-to-dec}
Fix a $\omu\in\Delta(\cM)$, and we take
\begin{align*}
 (\op_{\expl},\op_{\out})\defeq\argmin_{(\pexp,\pout)\in\Delta(\cM)^2}\sup_{M\in \cM} \Big\{ &\EE_{\pi \sim \pout}\left[f^M(\pi_M)-f^M(\pi)\right]\\
 &-\gamma \EE_{\pi \sim \pexp}\EE_{\oM \sim \omu}\left[ \wtdH^2(M(\pi),\oM(\pi))\right] \Big\}.
\end{align*}
Then consider $\op=\alpha \op_{\expl}+(1-\alpha)\op_{\out}$. By definition,
\begin{align*}
&\dec(\cM,\omu)\\
\leq& \sup_{M\in \cM}\EE_{\pi \sim \op}\left[f^M(\pi_M)-f^M(\pi)\right]-\gamma \EE_{\pi \sim \op}\EE_{\oM \sim \omu}\left[ \wtdH^2(M(\pi),\oM(\pi))\right]\\
=& \sup_{M\in \cM}\Big\{ \alpha\EE_{\pi \sim \op_{\expl}}\left[f^M(\pi_M)-f^M(\pi)\right]-\gamma\alpha \EE_{\pi \sim \op_{\expl}}\EE_{\oM \sim \omu}\left[ \wtdH^2(M(\pi),\oM(\pi))\right]\\
&\qquad\,\,(1-\alpha) \EE_{\pi \sim \op_{\out}}\left[f^M(\pi_M)-f^M(\pi)\right]-\gamma(1-\alpha )\EE_{\pi \sim \op_{\out}}\EE_{\oM \sim \omu}\left[ \wtdH^2(M(\pi),\oM(\pi))\right] \Big\}\\
\leq& \sup_{M\in \cM} \Big\{\alpha+(1-\alpha)\EE_{\pi \sim \op_{\out}}\left[f^M(\pi_M)-f^M(\pi)\right]
-\alpha\gamma \EE_{\pi \sim \op_{\expl}}\EE_{\oM \sim \omu}\left[ \wtdH^2(M(\pi),\oM(\pi))\right] \Big\}\\
=& \alpha+(1-\alpha)\eec_{\alpha\gamma/(1-\alpha)}(\cM,\omu).
\end{align*}
\qed

\subsection{Additional discussions on bounding Regret DEC by PACDEC}
\label{appendix:eec-vs-dec}

Here we argue that, for classes with low PACDEC, obtaining a PAC sample complexity through the implied DEC bound is in general worse than the bound obtained by the PACDEC bound directly.

Consider any model class $\cM$ with $\oeec_{\gamma}(\cM)\lesssim d/\gamma$, where $d$ is some dimension-like complexity measure. Using the \eetod~algorithm, by~\cref{thm:E2D-exp}, the suboptimality of the output policy scales as
\begin{align*}
  & \quad f^{M^\star}(\pi_{M^\star})-\EE_{\pi\sim \hatpout}\left[f^{M^\star}(\pi)\right]\leq \oeec_{\gamma}(\cM)+10\frac{\gamma \log(\abs{\cM}/\delta)}{T} \\
  & \lesssim \frac{d}{\gamma} + \frac{\gamma}{T}\cdot \log(\abs{\cM}/\delta) \lesssim \sqrt{\frac{d\log(\abs{\cM}/\delta)}{T}},
\end{align*}
where the last inequality follows by choosing the optimal $\gamma>0$. This implies a PAC sample complexity $d\log(\abs{\cM}/\delta)/\eps^2$ for finding an $\eps$ near-optimal policy.

By contrast, suppose we use an algorithm designed for low DEC problems (such as \etod). To first bound the DEC by the PACDEC, by~\cref{prop:eec-to-dec}, we have
\begin{align*}
   \quad \odec_{\gamma}(\cM)\leq&~ \inf_{\alpha\in(0,1)}\set{\alpha+(1-\alpha)\oeec_{\gamma\alpha/(1-\alpha)}(\cM)} \\
   \le&~ \inf_{\alpha\in(0,1)}\set{\alpha+(1-\alpha)^2 \frac{d}{\gamma\alpha}} \lesssim \sqrt{\frac{d}{\gamma}}+\frac{d}{\gamma}.
\end{align*}
Then, using the \etod~algorithm, by~\cref{thm:E2D-TA} and the online-to-batch conversion, the suboptimality of the average policy scales as
\begin{align*}
  \frac{\regdm}{T}& = \frac{1}{T}\sum_{t=1}^T f^{M^\star}(\pi_{M^\star})-\EE_{\pi^t\sim p^t}\left[f^{M^\star}(\pi^t)\right]  \\
  & \le \odec_\gamma(\cM) + 10\frac{\gamma\log(\abs{\cM}/\delta)}{T} \\
  & \lesssim \sqrt{\frac{d}{\gamma}} + \frac{\gamma}{T}\cdot \log(\abs{\cM}/\delta) \lesssim \paren{\frac{d\log(\abs{\cM}/\delta)}{T}}^{1/3},
\end{align*}
where the last inequality follows by choosing the optimal $\gamma\geq d$. This implies a PAC sample complexity $d\log(\abs{\cM}/\delta)/\eps^3$ for finding an $\eps$ near-optimal policy, which is an $1/\eps$ factor worse than that obtained from the PACDEC directly. Note that this $1/\eps^3$ rate is the same as obtained from the standard explore-then-commit conversion from PAC algorithms with sample complexity $1/\eps^2$ to no-regret algorithms.

We remark that the same calculations above also hold in general for problems with $\oeec_\gamma(\cM) \lesssim 1/\gamma^\beta$ (when only highlighting dependence on $\gamma$) for some $\beta>0$. In that case, the PACDEC yields PAC sample complexity $(1/\eps)^{\frac{\beta+1}{\beta}}$, whereas the implied DEC bound only yields a slightly worse $(1/\eps)^{\frac{\beta+2}{\beta}}$ sample complexity.

\subsection{Proof of Proposition~\ref{prop:rev-bandit}}\label{appdx:proof-rev-bandit}

We first present the full statement of \cref{prop:rev-bandit} as follows.
\begin{proposition}\label{prop:rev-bandit-full}
    For any integer $n\geq 2$, there exists $\cM$ a class of ``bandits with revealing actions'' such that for all $\gamma>0$, we have
    \begin{align*}
        \eec_{\gamma}(\cM)\leqsim \frac{n}{\gamma}, \qquad
        \dec_{\gamma}(\cM)\geqsim \min\set{\sqrt{\frac{n/\log\gamma+1}{\gamma}}, \frac{2^{n}}{\gamma}, 1}.
    \end{align*}
    Furthermore, the localized version (cf. Eq. \cref{equation:localized-model}) of the above lower bound of $\dec_{\gamma}(\cM)$ also holds true, and hence implies $\EE[\regdm]\geq\tbOm{\min\set{n^{1/3}T^{2/3},\sqrt{2^nT}, T}}$.
\end{proposition}

We now present the proof of \cref{prop:rev-bandit-full} (and \cref{prop:rev-bandit}). Consider $\cM^{\Delta}$ the class of ``bandits with revealing actions'' described as follows.

Denote $\cA=\Arew\bigsqcup\Arev$, $\Arew=\{0,1\}^n$, $\Arev=\set{\acrev{1},\cdots,\acrev{n}}$. For each $a\in\Arew$, we write $a[i]$ to be the $i$-th coordinate of $a$ (as a vector in $\{0,1\}^n\subset \R^n$). Let $\Pi=\cA$.

For each $a\in\Arew$, $\Delta>0$, $M=M_{(a,\Delta)}$ is defined as: 
\begin{enumerate}
    \item If $\pi\in\cA_0$, then $r(\pi)\sim\Bern\paren{\frac12+\Delta\cdot\II(\pi=a)}$, $o\sim \Bern\paren{\frac12}$.
    \item For $\pi=\acrev{i}$, then $r(\pi)=0$, $o\sim \Bern\paren{\frac12+\Delta\cdot a[i] }$.
\end{enumerate}
Notice that for $M=M_{(a,\Delta)}$, we have $\pi_M=a$.

We define $\cM^\Delta\defeq \set{M_{(\Delta,a)}}_{a\in\Arew}$.
Furthermore, let $\oM$ be the model with $o\sim\Bern\paren{\frac12}$, and $r(\pi)\sim\Bern\paren{\frac12}$ for $\pi\in\Arew$, $r(\pi)=0$ for $\pi\in\Arev$. 
Finally, we define the model class  $\cM=\bigcup_{\Delta\in[0,\frac{1}{3}]}\cM^\Delta$ (where we understand $\cM^0=\{\oM\}$), with policy class $\Pi=\cA$.

We next lower bound $\dec_{\gamma}(\cM)\geq \dec_{\gamma}(\cM^\Delta,\oM)$. 
For $M\in\cM^\Delta$, $\pi\in\Arew$, we have
\begin{align}\label{eqn:eec-vs-dec-comp}
    f^M(\pi_M)-f^M(\pi)=\Delta\cdot\II(\pi=a_M), \qquad
    \tDH{ M(\pi), \oM(\pi) }=\Delta^2\cdot\II(\pi=\pi_M).
\end{align}
For $\pi=\acrev{i}$, we have
\begin{align*}
    &f^M(\pi_M)-f^M(\pi)=\frac12+\Delta, \\
    &\tDH{ M(\pi), \oM(\pi) }=\DH{\Bern\paren{\frac12+\Delta\cdot a_M[i]}, \Bern\paren{\frac12}}\leq 3\Delta^2\cdot \pi_M[i].
\end{align*}
Note that by duality we have
\begin{align*}
    \dec_{\gamma}(\cM^\Delta,\oM)
    =&~ \sup_{\mu\in\Delta(\cM)}\inf_{\pi\in\Pi} \EE_{M\sim\mu}\brac{ f^M(\pi_M)-f^M(\pi)-\gamma \tDH{ M(\pi), \oM(\pi) } } \\
    \geq&~
    \sup_{\mu} \set{ \min_{M}\set{\Delta-(\Delta+\gamma\Delta^2)\mu(M)} \wedge \min_i \set{ \frac12+\Delta-3\gamma\Delta^2\cdot \mu(M:\pi_M[i]=1) } }.
\end{align*}
Fix a $p\in(0,\frac{1}{2}]$ and $\Delta>0$ (to be specified later), we consider $\mu\in\Delta(\cM)$ be given by $M=M_{(\Delta,a)}, a\sim\Bern(p)^{\otimes n}$. Then we have $\max_M\mu(M)=(1-p)^n$, $\max_i \mu(M:\pi_M[i]=1)=p$.
Choosing $\Delta=\frac{1}{4}\min\set{\frac{1}{\sqrt{p\gamma}}, \frac{1}{(1-p)^n\gamma}, 1}$, we then have $\dec_{\gamma}(\cM^\Delta,\oM)\geq \frac{\Delta}{2}$. Choosing $p=\min\set{\frac{\log\gamma}{n},\frac{1}{2}}$, it holds
\begin{align*}
    \dec_{\gamma}(\cM)\geq \dec_{\gamma}(\cM^\Delta,\oM)\geq\frac{1}{8}\min\set{\sqrt{\frac{\max\set{n/\log\gamma,2}}{\gamma}}, \frac{2^n}{\gamma}, 1 }.
\end{align*}
This gives the desired lower bound of $\dec_{\gamma}(\cM)$.

Note that under our construction, $\cM^\Delta$ is a localized model class (in the sense of \cref{equation:localized-model}), and hence our lower bound on $\dec_{\gamma}(\cM^\Delta,\oM)$ indeed implies the desired lower bound of regret by \cite[Theorem 3.2]{foster2021statistical} (which is exactly the instantiation of \cref{prop:gen-lower-bound} to no-regret RL).

We next upper bound $\eec_\gamma(\cM)$ and $\dec_\gamma(\cM)$. We only need to show that $\cM$ admits a ``trivial'' \belrep~$\cG$ with complexity 1 (cf. \cref{definition:err-rep} and \cref{prop:belrep-pac}). For each $M\in\cM$, $M$ is parameterized by a tuple $(\Delta_M,a_M)$, where we understand $(\Delta_{\oM},a_{\oM})=(0,\mathbf0)$. Let us consider the \belrep~$\cG$ given by (here $H=1$)
\begin{align*}
    \cE^{M;M'}=f^M(\pi_M)-f^{M'}(\pi_M)=\Delta_M-\Delta_{M'}\cdot\II(a_M=a_{M'})
\end{align*}
with $\cT$ being a singleton.
Clearly, for $\piest=\Unif(\Arev)$, we have
\begin{align*}
    \abs{\cE^{M;M'}}^2 
    \leq&~ \max_{i\in[n]}\paren{ \Delta_M\cdot a_M[i]-\Delta_{M'}\cdot a_{M'}[i]}^2 \\
    \leq&~  \max_{i\in[n]} \DH{ M(\acrev{i}), M'(\acrev{i}) }\\
    \leq&~ n \tDH{ M(\piest), M'(\piest) }.
\end{align*}
Therefore, $\cG$ is indeed a \belrep~with $\dimc(\cG,\gamma)\leq 1\forall\gamma$ and $L=n$, and hence by \cref{prop:belrep-pac},
\begin{align*}
    \eec_{\gamma}(\cM)\leqsim \frac{n}{\gamma}.
\end{align*}
Thus, by \cref{prop:eec-to-dec}, we have $\dec_\gamma(\cM)\leqsim \sqrt{\frac{n}{\gamma}}$. Further, it is clear that $\dec_\gamma(\cM)\leqsim \frac{2^n}{\gamma}$, which can be deduced from the DEC's upper bound of $2^n$-arm bandits. This completes the proof of upper bound.
\qed

\section{Proofs for Section~\ref{sec:gen-dec}}\label{appdx:proof-gen-dec-all}

\subsection{Proof of Theorem~\ref{thm:E2D-gen}}
\label{appendix:proof-gen-dec}
  
By \cref{corollary:restate-lemma-tempered-aggregation}, %
we have the following bound on online estimation error (cf.~\cref{equation:esth}):
\begin{align*}
  \EstRL \defeq \sum_{t=1}^T \E_{\pi^t\sim \pexp^t}\E_{\hat{M}^t\sim \mu^t}\brac{ \wtdH\paren{M^\star(\pi^t), \hat{M}^t(\pi^t)} } \leq 10\log(\abs{\cM}/\delta).
\end{align*}
Then, we can bound
\begin{align*}
  & \quad \sum_{t=1}^T \subopts_{\Ms}(\pout^t) \\
    & = \sum_{t=1}^T \subopts_{\Ms}(\pout^t) -\gamma \EE_{\pi \sim \pexp^t}\EE_{\hM^t\sim\mu^t}\brac{ \wtdH^2\paren{M(\pi),\hM^t(\pi)}} \\
    & \qquad + \gamma\cdot \sum_{t=1}^T \E_{\pi^t\sim \pexp^t}\E_{\hat{M}^t\sim \mu^t}\brac{ \wtdH^2\paren{M^\star(\pi^t), \hat{M}^t(\pi^t)} } \\
    & \stackrel{(i)}{\le} \sum_{t=1}^T \sup_{M\in\cM} \subopts_{M}(\pout^t) -\gamma \EE_{\pi \sim \pexp^t}\EE_{\hM^t\sim\mu^t}\brac{ \wtdH^2\paren{M(\pi),\hM^t(\pi)}}\\
    & \qquad + \gamma\cdot \sum_{t=1}^T \E_{\pi^t\sim \pexp^t}\E_{\hat{M}^t\sim \mu^t}\brac{ \wtdH^2\paren{M^\star(\pi^t), \hat{M}^t(\pi^t)} } \\
    & \stackrel{(ii)}{=} \sum_{t=1}^T \underbrace{\hV^{\mu^t}_{*,\gamma}(\pexp^t,\pout^t)}_{=\inf_{(\pexp,\pout)\in\csps} \hV^{\mu^t}_{\gamma}} + \gamma\cdot \EstRL \\
    & \stackrel{(iii)}{=} \sum_{t=1}^T \DEC_\gamma(\cM, \mu^t) + \gamma\cdot \Est \le T\cdot \DEC_\gamma(\cM) + \gamma\cdot \EstRL.
\end{align*}
Above, (i) follows by the realizability assumption $M^\star\in\cM$; (ii) follows by definition of the risk $\hV^{\mu^t}_{\gamma}$ (cf.~\cref{equation:vtp-gen}) as well as the fact that $(\pexp^t,\pout^t)$ minimizes $\hV^{\mu^t}_{*,\gamma}(\cdot,\cdot)$ in~\cref{alg:E2D-gen}; (iii) follows by definition of $\DEC_\gamma(\cM, \mu^t)$. 

Therefore, by the convexity of $\subopt$, dividing both sides of the above inequality by $T$ gives
\begin{align*}
    \subopts_{\Ms}(\hatpout)\leq \frac{1}{T}\sum_{t=1}^T \subopts_{\Ms}(\pout^t)
    \leq \DEC_{\gamma}(\cM)+\frac{\gamma\EstRL}{T}.
\end{align*}
Furthermore, for the special case $\csps=\cspreg$, we directly have for $p^t=\pexp^t=\pout^t$,
\begin{align*}
    \Regs=\sum_{t=1}^T \subopts_{\Ms}(p^t)
    \leq T\cdot \DEC_{\gamma}(\cM)+\gamma\EstRL.
\end{align*}
Combining the inequalities above completes the proof.
\qed

From the proof above, we can directly generalize \cref{thm:E2D-gen} to any model class $\cM$ with finite covering number (\cref{def:opt-cover}), as follows.
\begin{theorem}\label{thm:E2D-gen-full}
Given a suitable $\rho$-optimistic cover $(\tPP,\cM_0)$ of $\cM$, we can replace the subroutine \eqref{eqn:E2D-gen-TA} in~\cref{alg:E2D-gen} with subroutine \eqref{eqn:ta-cover-def}:
\begin{align*}
    \mu^{t+1}(M) \; \propto_M \; \mu^{t}(M) \cdot \exp\paren{\etap \log \tPP^{M,\pi^t}(o^{t}) - \etar\ltwo{\br^t - \bR^M(o^t)}^2 }.
\end{align*}
Then choosing $\etap$ and $\etar$ as in \cref{corollary:restate-lemma-tempered-aggregation}, \cref{alg:E2D-gen} achieves the following guarantee with probability at least $1-\delta$:
\begin{align*}
   \subopts\leq \DEC_{\gamma}(\cM)+20\gamma\cdot \frac{\est(\cM,T)+ \log(1/\delta)}{T}.
\end{align*}
In particular, for $\csps=\cspreg$, \cref{alg:E2D-gen} also achieves the following guarantee with probability at least $1-\delta$:
\begin{align*}
    \Regs\leq T\cdot \DEC_{\gamma}(\cM)+20\gamma \brac{\est(\cM,T)+ \log(1/\delta)}.
\end{align*}
\end{theorem}

As a remark, we also note that the constant 20 in the above theorem can be replaced by constant 4 in the case $\br$ being empty (e.g. reward-free learning and preference-based learning), using \cref{corollary:restate-lemma-tempered-aggregation}.

\subsection{Proof of Theorem~\ref{prop:gen-lower-bound-demo}}\label{appendix:proof-gen-lower-bound}

In this section, we prove \cref{prop:gen-lower-bound-demo}, which is a generalization of \citet[Theorem 3.2]{foster2021statistical}. Before presenting the proof, we first specify how an algorithm is described, and then we specify the reward distribution for any $M\in\cM$ (note that for $M\in\cM$, only the mean reward function is specified).

\paragraph{Algorithm}
Without loss of generality, we suppose that the algorithm $\Algo$ is given by rules $(p^{(1)}_{\expl}, \cdots, p^{(T)}_{\expl}, p_{\out})$, where for each $t\in[T]$ and any $\cH^{(t-1)}=(\pi^1,o^1,\br^1,\cdots,\pi^{t-1},o^{t-1},\br^{t-1})$ the history up to the $t$-th step, $p_{\expl}^{(t)}\paren{\cH^{(t-1)}}\in \Delta(\Pi)$ specifies the distribution of the policy $\pi^t$ that $\Algo$ executes at $t$-th step (given $\cH^{(t-1)}$), and $p_{\out}\paren{\cH^{(T)}}\in\Delta(\ssp)$ specifies the distribution of the output strategy $\hatpout\in\ssp$ based on the full history $\cH^{(T)}$. For any model\footnote{A ``model'' $M$ here is a map from the policy class $\Pi$ to $\Delta(\cO\times \R^H)$, and it does not necessarily belong to $\cM$ (see also the discussion in \cref{appendix:comparison-dec} and \citet{foster2021statistical}).} $M$, we write $\PP^{M,\Algo}$ the probability distribution of $\cH^{(T)}$ induced by $\Algo$ when interacting with $M$, and $\EE^{M,\Algo}$ is the expectation under $\PP^{M,\Algo}$.
For any model $M$, we define
$$
p_{M,\expl}=\EE^{M,\Algo}\left[\frac{1}{T} \sum_{t=1}^{T} p^{(t)}_{\expl}\left(\cH^{(t-1)}\right)\right] \in \Delta(\Pi), \qquad 
p_{M,\out}=\EE^{M,\Algo}\left[p_{\out}(\cH^{T})\right] \in \ssp,
$$
where we identify $\Delta(\ssp)$, the space of probability distribution over $\ssp$, with $\ssp$ itself by convexity.
With the above definition, we may invoke the following chain rule of Hellinger distance \citep[Lemma A.13]{foster2021statistical} (a more detailed derivation can be found in \citet[Appendix C.1.1]{foster2021statistical}).
\begin{proposition}\label{prop:Hellinger-chain}
For any two model $M, \oM$, it holds that
\begin{align*}
    \DH{\PP^{M,\Algo}, \PP^{\oM,\Algo}} \leq 100\log(2T)\cdot \EE^{\oM,\Algo}\brac{ \sum_{t=1}^T \DH{ M(\pi^t), \oM(\pi^t) } }.
\end{align*}
\end{proposition}
Therefore, by the definition of $p_{M,\expl}$, we have
\begin{align}\label{eqn:Hellinger-chain}
    \DH{\PP^{M,\Algo}, \PP^{\oM,\Algo}} \leq 100\log(2T)T\cdot  \EE_{\pi\sim p_{\oM,\expl}}\brac{ \DH{ M(\pi), \oM(\pi) } }.
\end{align}

\paragraph{Reward distribution}
We show that, for the given model class $\cM$, we can suitably assign each model $M=(\MP,\bR^M)$ with a reward distribution, so that $\wtdH$ is equivalent to $\dH$ (see also \cref{prop:modified-to-true}). Therefore, in the remaining part of the proof of \cref{prop:gen-lower-bound}, we assume that \cref{eqn:Hellinger-to-DRL} holds for the model class $\cM$, and we only need to deal with the Hellinger distance $\dH$. 
\begin{lemma}\label{lem:reward-dist}
    For a model class $\cM$, we can assign each model $M\in\cM$ with a reward distribution $\RM$, such that for each $o\in\cO$, $\br\sim\RM(o)$ is 1-sub-Gaussian with mean $\EE[\br|o]=\bR^M(o)$, and
    \begin{align}\label{eqn:Hellinger-to-DRL}
        \DH{M(\pi),\oM(\pi)}\leq 3\tDH{M(\pi),\oM(\pi)}, \qquad \forall M,\oM\in\cM.
    \end{align}
\end{lemma}
The proof of \cref{lem:reward-dist} is deferred to the end of this section.

With the above preparation, we now prove \cref{prop:gen-lower-bound-demo}.
\begin{proof}[Proof of \cref{prop:gen-lower-bound-demo} (1)]
\renewcommand{\tM}{\widetilde{M}}
Let us first prove the case $\csps=\csppac$. We abbreviate $\gamma=\gammaT$ and 
\begin{align*}
    \delta:=\DEC_{\gamma}(\cM)=\sup_{\omu\in\Delta(\cM)}\DEC_{\gamma}(\cM,\omu).
\end{align*}
Let $\omu \in \Delta(\cM)$ attains this supremum. To proceed, we follow \citet{foster2021statistical} and consider an \emph{improper} reference model $\tM=\EE_{\oM\sim \omu} \oM$ induced by $\omu$\footnote{More formally, we define $\tM\in (\Pi\to\Delta(\cO\times\R^H))$ as $\tM(\pi)=\EE_{\oM\sim \omu} \brac{\oM(\pi)}$. For more details, see discussion in \cref{appendix:comparison-dec}.}. Then by definition,
$$
\sup _{M \in \cM} \subopts_M(p_{\tM,\out}) - \gamma \EE_{\pi \sim p_{\tM,\expl},\oM\sim\omu}\left[ \wtdH^2\paren{M(\pi), \oM(\pi)}\right] \geq \delta.
$$
Let $M \in \cM$ attains the supremum above. Then by the convexity of $\subopts$ and \cref{lem:reward-dist}, we have
\begin{align*}
\EE^{\tM,\Algo}\brac{\subopts_M(\hatpout)}
\geq&~ \subopts_M(p_{\tM,\out}) \\
\geq&~ \frac{\gamma}3 \cdot \EE_{\pi \sim p_{\tM,\expl},\oM\sim\omu}\left[\DH{M(\pi), \oM(\pi)}\right]+\delta.
\end{align*}
By \cref{lemma:multiplicative-hellinger}, it holds that
$$
\begin{aligned}
&\left|\EE^{M,\Algo}\left[\subopts_M(\hatpout)\right]-\EE^{\tM,\Algo}\left[\subopts_M(\hatpout)\right]\right| \\
&\leq \sqrt{2\left(\EE^{M,\Algo}\left[\subopts_M(\hatpout)\right]+\EE^{\tM,\Algo}\left[\subopts_M(\hatpout)\right]\right) \cdot \DH{\PP^{M,\Algo}, \PP^{\tM,\Algo}}} \\
&\leq 2\DH{\PP^{M,\Algo}, \PP^{\tM,\Algo}}+\frac{1}{2}\left(\EE^{M,\Algo}\left[\subopts_M(\hatpout)\right]+\EE^{\tM,\Algo}\left[\subopts_M(\hatpout)\right]\right),
\end{aligned}
$$
which implies
\begin{align*}
    \EE^{M,\Algo}\left[\subopts_M(\hatpout)\right]
    \geq &\frac{1}{3}\EE^{\tM,\Algo}\left[\subopts_M(\hatpout)\right]-\frac{4}{3}\DH{\PP^{M,\Algo}, \PP^{\tM,\Algo}}.
\end{align*}
Using \cref{eqn:Hellinger-chain}, we have 
\begin{align*}
    \DH{\PP^{M,\Algo}, \PP^{\tM,\Algo}}
    \leq&~
    100\log(2T)T\cdot  \EE_{\pi\sim p_{\oM,\expl}}\brac{ \DH{ M(\pi), \tM(\pi) } } \\
    \leq&~ 100\log(2T)T\cdot  \EE_{\pi\sim p_{\oM,\expl}, \oM\sim \omu}\brac{ \DH{ M(\pi), \oM(\pi) } },
\end{align*}
where we use the convexity of squared Hellinger distance:
\begin{align*}
     \DH{M(\pi), \tM(\pi)}
    = \DH{M(\pi), \EE_{\oM\sim\omu}\brac{\oM(\pi)}}
    \leq \EE_{\oM\sim\omu}\brac{ \DH{M(\pi),\oM(\pi)} }.
\end{align*}
Hence, using the fact that $\gamma=\gammaT\geq 400\log(2T)T$ (by choosing $c_0=400$), we can conclude that
$$
\EE^{M,\Algo}\left[\subopts_M(\hatpout)\right]\geq \frac{1}{3} \delta=\frac{1}{3}\cdot\DEC_{\gammaT}(\cM).
$$
This completes the proof.
\end{proof}

\begin{proof}[Proof of \cref{prop:gen-lower-bound-demo} (2)]
For the case where $\csps=\cspreg$  (i.e. $\ssps=\Delta(\Pi)$), the proof is essentially analogous. We can consider the algorithm $\Algo$ with the same interaction rules $\pexp$ and (modified) output rule $\hatpout=\frac1T \sum_{t=1}^T \pexp^t$. Under such an algorithm, we know $p_{M,\expl}=p_{M,\out}$ for any model $M$. Therefore, the argument above yields that there exists $M^\star\in\cM$ such that
\begin{align*}
    \EE^{M^\star,\Algo}\left[\subopts_{M^\star}(\hatpout)\right]\geq\frac{1}{3}\cdot\DEC_{\gammaT}(\cM).
\end{align*}
Then by the convexity of $\subopt$, it holds that
\begin{align}\label{eqn:reg-lower-bound-strong}
\begin{split}
    \EE^{M^\star,\Algo}\brac{\Regs}
    =&~ \EE^{M^\star,\Algo}\brac{ \sum_{t=1}^T \subopts_{M^\star}(\pexp^t) } \\
    \geq&~ T\cdot \EE^{M^\star,\Algo}\brac{\subopts_{M^\star}\paren{ \frac1T\sum_{t=1}^T\pexp^t } } \\
    \geq&~ \frac{T}{3}\cdot\DEC_{\gammaT}(\cM).
\end{split}
\end{align}
This gives the desired result.
\end{proof}

\begin{proof}[Proof of \cref{lem:reward-dist}]
For any model $M\in\cM$, we specify the reward distribution of $M$ as follows: given its mean reward function $\bR^M:\cO\to[0,1]^H$, conditional on the observation $o$, we let $\br=(r^h)$ be a random vector, with each entry $r^h$ independently sampled from%
$$
\RM_h\left(r^h=-\frac12\Big|o\right)=\frac34-\frac{R^M_h(o)}{2},\qquad
\RM_h\left(r^h=\frac32\Big|o\right)=\frac14+\frac{R^M_h(o)}{2}.
$$
Then a simple calculation gives $\DH{\RM_h(o),\oMR_h(o)}\leq \frac12\abs{R^M_h(o)-R^{\oM}_h(o)}^2$. Therefore, by the fact that $(r^h)$ are mutually independent conditional on $o$, we have
\begin{align*}
    1-\frac{1}{2}\DH{\RM(o),\oMR(o)}
    =&~\prod_{h}\paren{1-\frac12\DH{\RM_h(o),\oMR_h(o)}}\\
    \geq&~ \prod_{h}\paren{1-\frac14\abs{R^M_h(o)-R^{\oM}_h(o)}^2}\\
    \geq&~ 1 - \sum_h \frac14\abs{R^M_h(o)-R^{\oM}_h(o)}^2 \\
    =&~ 1-\frac14\ltwot{\bR^M(o)-\bR^{\oM}(o)},
\end{align*}
Therefore, by \cref{lemma:Hellinger-cond},
\begin{align*}
&\DH{M(\pi),\oM(\pi)}\leq
3\DH{\MP(\pi),\oMP(\pi)}+2\EE_{o\sim \MP(\pi)}\brac{\DH{\RM(o),\oMR(o)}}\\
&\qquad\leq 
3\DH{\MP(\pi),\oMP(\pi)}+\EE_{o\sim \MP(\pi)}\brac{\ltwot{\bR^M(o)-\bR^{\oM}(o)}}
\leq 3\tDH{M(\pi),\oM(\pi)}.
\end{align*}
\end{proof}

\subsection{Specifications for concrete learning goals}\label{appdx:specifying}

In this section, we briefly discuss how our general framework encompasses the specific learning goals presented in
\cref{tab:goals}.
Recall that in \cref{def:gen-dec}, the \DECt~is defined as
\begin{align*}
    \DEC_{\gamma}(\cM,\omu) \defeq & \inf_{(\pexp,\pout)\in\csps}\sup_{M\in \cM}\subopts_{M}(\pout)-\gamma \EE_{\pi \sim \pexp,\oM \sim \omu}\brac{ \wtdH^2\paren{M(\pi),\oM(\pi)}}.
\end{align*}

\paragraph{No-regret learning and PAC RL} For these two learning goals, we consider
\begin{align}
    \subopt_{M}(p)\defeq f^M(\pi_M)-\EE_{\pi\sim p} f^M(\pi).
\end{align}
Then for no-regret learning (\cref{sec:prelim}), DEC defined in \cref{definition:dec} indeed follows from \cref{def:gen-dec}:
\begin{align*}
    \dec_{\gamma}(\cM,\omu) = & 
    \inf_{p\in\Delta(\Pi)}\sup_{M\in \cM}\subopt_{M}(p)-\gamma \EE_{\pi \sim p,\oM \sim \omu}\brac{ \wtdH^2\paren{M(\pi),\oM(\pi)}} \\
    =& \inf_{(\pexp,\pout)\in\cspreg}\sup_{M\in \cM}\subopt_{M}(\pout)-\gamma \EE_{\pi \sim \pexp,\oM \sim \omu}\brac{ \wtdH^2\paren{M(\pi),\oM(\pi)}},
\end{align*}
Similarly, for PAC learning ($\GOAL=\mathrm{pac}$, \cref{section:eec}), PACDEC defined in \cref{def:edec} also follows from \cref{def:gen-dec}:
\begin{align*}
    \eec_{\gamma}(\cM,\omu) = & \inf_{(\pexp,\pout)\in\csppac}\sup_{M\in \cM}\subopt_{M}(\pout)-\gamma \EE_{\pi \sim \pexp,\oM \sim \omu}\brac{ \wtdH^2\paren{M(\pi),\oM(\pi)}}.
\end{align*}

\paragraph{Reward-free learning} For reward-free learning ($\GOAL=\mathrm{rf}$, \cref{section:rfec}), the strategy space of interest is $\ssprf=\set{ p: \cR\to\Pi}$. The sub-optimality of any strategy $p\in\ssprf$ is then defined as
\begin{align*}
    \suboptrf_{\Pm}(p)\defeq \sup_{R\in\cR} \set{ f^{\Pm,R}(\pi_{\Pm,R})-\EE_{\pi\sim p(R)}\brac{f^{\Pm,R}(\pi)} }.
\end{align*}
To see why RFDEC (\cref{def:rfdec}) is a specification of \DECt~with $\GOAL=\mathrm{rf}$, we have
\begin{align*}
    &~ \inf_{\pout\in\ssprf}\sup_{\Pm\in \cP}\subopts_{\Pm}(\pout)-\gamma \EE_{\pi \sim \pexp,\oPm \sim \omu}\brac{ \DH{\Pm(\pi),\oPm(\pi)}} \\
    =&~
    \inf_{\pout:\cR\to\Delta(\Pi)}\sup_{\Pm\in \cP}\sup_{R\in\cR} \set{ f^{\Pm,R}(\pi_{\Pm,R})-\EE_{\pi\sim \pout(R)}\brac{f^{\Pm,R}(\pi)} }-\gamma \EE_{\pi \sim \pexp,\oPm \sim \omu}\brac{ \DH{\Pm(\pi),\oPm(\pi)}} \\
    =&~
    \inf_{\pout:\cR\to\Delta(\Pi)}\sup_{R\in\cR}\sup_{\Pm\in \cP} \set{ f^{\Pm,R}(\pi_{\Pm,R})-\EE_{\pi\sim \pout(R)}\brac{f^{\Pm,R}(\pi)} }-\gamma \EE_{\pi \sim \pexp,\oPm \sim \omu}\brac{ \DH{\Pm(\pi),\oPm(\pi)}} \\
    =&~
    \sup_{R\in\cR} \inf_{\pout\in\Delta(\Pi)}\sup_{\Pm\in \cP}\set{ f^{\Pm,R}(\pi_{\Pm,R})-\EE_{\pi\sim \pout}\brac{f^{\Pm,R}(\pi)} }-\gamma \EE_{\pi \sim \pexp,\oPm \sim \omu}\brac{ \DH{\Pm(\pi),\oPm(\pi)}},
\end{align*}
Hence, we can conclude that for RFDEC defined in \cref{def:rfdec}, 
\begin{align*}
    \rfecg(\cP,\omu)
    =\inf_{(\pexp,\pout)\in\Delta(\Pi)\times \ssprf}\sup_{\Pm\in \cP}\subopts_{\Pm}(\pout)-\gamma \EE_{\pi \sim \pexp,\oPm \sim \omu}\brac{ \DH{\Pm(\pi),\oPm(\pi)}}.
  \end{align*}
The same reasoning also applies for \cref{alg:E2D-RF}.

\paragraph{All-policy model estimation} For this task, the space of strategy (corresponding to an \emph{estimation of the model}) is $\sspme=\Delta(\cM)$, i.e. the set of all distributions over $\cM$, and
\begin{align*}
    \suboptme_M(p)\defeq \max_{\pi\in\Pi}\EE_{\hM\sim p}\big[ \wdTV\big( M(\pi), \hM(\pi) \big) \big].
\end{align*}
Along with $\csp=\csppac$, \cref{definition:mdec} (\cref{alg:E2D-ME}) immediately follows from \cref{def:gen-dec} (\cref{alg:E2D-gen}).

\paragraph{Preference-based RL} For preference-based RL ($\GOAL=\mathrm{pb}$), the strategy space is $\ssppb=\Delta(\Piab)$, and the sub-optimality measure is given by
\begin{align*}
    \suboptpb_M(p)=\sup_{\pis\in\Pia}\EE_{(\pi_1,\pi_2)\sim p}\brac{ \Cmp^M(\pi_\star, \pi_{1})+\Cmp^M(\pi_\star, \pi_{2})-1 }.
\end{align*}
Along with $\csp=\cspreg$, \cref{def:pbdec} (\cref{alg:E2D-PbRL}) directly follows from \cref{def:gen-dec} (\cref{alg:E2D-gen}). We also remark that the inequalities in \cref{prop:pbrl-lower-bound-demo} indeed follows from Eq. \cref{eqn:reg-lower-bound-strong} in the proof of \cref{prop:gen-lower-bound-demo}.

\subsection{Lower bound with localized G-DECs}\label{appdx:gen-lower-local}

The main result of this section is the following localized version of \cref{prop:gen-lower-bound-demo}. For any model class $\cM$ and $\oM\in\cM$, we define the shorthand $\DEC_\gamma(\cM, \oM)\defeq \DEC_\gamma(\cM, \delta_{\oM})$, where $\delta_{\oM}$ denotes the point mass at $\oM$.

\begin{proposition}
\label{prop:gen-lower-bound}
Consider a general learning goal $\GOAL$, a model class $\cM$, and $T \geq 1$ a fixed integer. Define $\oeps_{\gamma}:= \frac{\gamma}{c_0\log(2T)T}$ for a large absolute constant $c_0$
and the localization
\begin{align}
\label{equation:gen-localized-model}
\cM_{\varepsilon}^{\GOAL}(\oM)=\set{M \in \cM:\big|\subopts_M(p)-\subopts_{\oM}(p)\big| \leq \varepsilon,\ \forall p\in\ssp }.
\end{align}
Then for any $T$-round algorithm $\Algo$, the following holds:

(1) For $\csps=\csppac$, there exists a $M^\star\in\cM$ such that
\begin{align*}
\EE^{M^\star,\Algo}\brac{\subopts} \geq \frac16\cdot \max _{\gamma>0} \sup _{\oM \in \cM} \DEC_{\gamma}(\cM_{\oeps_{\gamma}}^{\GOAL}(\oM), \oM),
\end{align*}

(2) For $\csps=\cspreg$, there exists a $M^\star\in\cM$ such that
\begin{align*}
\EE^{M^\star,\Algo}\brac{\Regs} 
\geq&~ \frac{T}{6}\cdot \max _{\gamma>0} \sup _{\oM \in \cM} \DEC_{\gamma}(\cM_{\oeps_{\gamma}}^{\GOAL}(\oM), \oM).
\end{align*}
\end{proposition}

The proof of \cref{prop:gen-lower-bound} is a direct generalization of \citet[Theorem 3.2]{foster2021statistical} and is deferred to \cref{appdx:gen-lower-local}. \cref{prop:gen-lower-bound} can be directly instantiated to all the learning goals of interest to provide possibly tighter lower bound. As an example, we instantiate \cref{prop:gen-lower-bound} to PAC RL, as follows.

\begin{proposition}[Lower bound for PAC RL]
\label{prop:eec-lower-bound-local}
For any model class $\cM$, $T\in\Z_{\ge 1}$, and any $T$-round algorithm $\Algo$, there exists a $M^\star\in\cM$ such that
\begin{align*}
\EE^{M^\star,\Algo}\brac{\suboptpac} \geq \frac16\cdot \max _{\gamma>0} \sup _{\oM \in \cM} \eec_{\gamma}(\cM_{\oeps_{\gamma}}^{\rm pac}(\oM), \oM),
\end{align*}
where we abbreviate $\vgap^{M}(\pi):=f^{M}(\pi_{M})-f^{M}(\pi)$, and define the PAC localization as
\begin{align}
\label{equation:localized-model}
\cM_{\varepsilon}^{\rm pac}(\oM)=\left\{M \in \cM:\left|\vgap^{M}(\pi)-\vgap^{\oM}(\pi)\right| \leq \varepsilon,\ \forall \pi \in \Pi\right\}.
\end{align}
\end{proposition}

In the following proposition, we show that there exists a class of tabular MDPs whose (localized) PACDEC has a desired lower bound, and hence establish a $\tilde{\Omega}(HSA/\epsilon^2)$ lower bound of sample complexity for PAC learning in tabular MDPs, recovering the result of \cite{domingues2021episodic}.
\begin{proposition}[PACDEC lower bound for tabular MDPs]
\label{prop:eec-lower-bound-MDP}
There exists $\cM$ a class of MDPs with $S\geq 4$ states, $A\geq 2$ actions, horizon $H\geq 2\log_2(S)$ and the same reward function, such that
\begin{align*}
    \sup_{\oM\in\cM} \eec_{\gamma}(\cM_{\varepsilon}^{\pac}(\oM),\oM)\geq  c_1\min\set{1, \frac{HSA}{\gamma}},
\end{align*}
for all $\gamma>0$ such that $\varepsilon\geq c_2HSA/\gamma$, where $c_1,c_2$ are two universal constants. As a corollary, applying the PAC lower bound in localized PACDEC (\cref{prop:eec-lower-bound-local}), we have that for any algorithm $\Algo$ that interacts with the environment for $T$ episodes,
    \begin{align*}
    \sup_{M\in\cM} \EE^{M,\Algo}\brac{\subopt}\geq c_0\min\set{1, \sqrt{\frac{HSA}{\log(2T)T}}}, 
    \end{align*}
    where $c_0$ is a universal constant.
\end{proposition}

\cref{prop:eec-lower-bound-MDP} (proof in \cref{appdx:proof-eec-lower-MDP}) implies a sample complexity lower bound of $\tilde{\Omega}(HSA/\epsilon^2)$\footnote{The logarithm factor here is an artifact of the proof of \cref{appdx:gen-lower-local}, where under certain condition \cref{prop:Hellinger-chain} can be refined to remove the $\log(2T)$ factors (see e.g. \citet[Theorem 3.2]{foster2021statistical}).} for learning $\epsilon$-optimal policy in tabular MDPs. This simple example illustrates the power of $\eec$ as a lower bound for PAC learning, analogously to the DEC for no-regret learning.

\subsubsection{Proof of Proposition~\ref{prop:gen-lower-bound}}\label{appdx:proof-gen-lower-local}
The proof in this section adopts the technique of \citet{foster2021statistical} for proving lower bounds with the localized DEC. %
We recall the notations in \cref{appendix:proof-gen-lower-bound} and the results there (\cref{prop:Hellinger-chain} and \cref{lem:reward-dist}).

Let us first prove \cref{prop:gen-lower-bound} (1), the case $\csps=\csppac$. We abbreviate $\eps=\oeps_{\gamma}$, $\delta:=\sup_{\oM\in\cM}\DEC_{\gamma}\left(\cM_{\varepsilon}^{\GOAL}(\oM), \oM\right)$ for a fixed $\gamma>0$. Let $\oM \in \cM$ attains this supremum, and then by definition,
$$
\sup _{M \in \cM_{\varepsilon}^{\GOAL}(\oM)} \subopts_M(p_{\oM,\out}) - \gamma \EE_{\pi \sim p_{\oM,\expl}}\left[ \wtdH^2\paren{M(\pi), \oM(\pi)}\right] \geq \delta.
$$
Let $M \in \cM_{\varepsilon}^{\GOAL}(\oM)$ attain the supremum above. Then by the convexity of $\subopts$ and \cref{lem:reward-dist}, we have
$$
\EE^{\oM,\Algo}\brac{\subopts_M(\hatpout)}
\geq \subopts_M(p_{\oM,\out})
\geq \frac{\gamma}3 \cdot \EE_{\pi \sim p_{\oM,\expl}}\left[\DH{M(\pi), \oM(\pi)}\right]+\delta.
$$
Recall from the definition of $\cM_{\varepsilon}^{\GOAL}(\oM)$ that $\left|\subopts_M(\pout)-\subopts_{\oM}(\pout)\right| \leq \varepsilon$ for all $\pout\in\ssp$. Hence, we can now apply \cref{lemma:multiplicative-hellinger} to the random variable
\begin{align*}
    X_{M,\oM}\defeq X_{M,\oM}(\hatpout)=\subopts_M(\hatpout)-\subopts_{\oM}(\hatpout),
\end{align*}
and it holds that
$$
\begin{aligned}
&\left|\EE^{M,\Algo}\left[X_{M,\oM}\right]-\EE^{\oM,\Algo}\left[X_{M,\oM}\right]\right| \\
&\leq \sqrt{8 \varepsilon \cdot\left(\EE^{M,\Algo}\abs{X_{M,\oM}}+\EE^{\oM,\Algo}\abs{X_{M,\oM}}\right) \cdot \DH{\PP^{M,\Algo}, \PP^{\oM,\Algo}}} \\
&\leq 4 \varepsilon \DH{\PP^{M,\Algo}, \PP^{\oM,\Algo}}+\frac{1}{2}\left(\EE^{M,\Algo}\abs{X_{M,\oM}}+\EE^{\oM,\Algo}\abs{X_{M,\oM}}\right),
\end{aligned}
$$
which implies (by re-arranging)
\begin{align*}
    &~\EE^{M,\Algo}\left[\subopts_M(\hatpout)\right]+\EE^{\oM,\Algo}\left[\subopts_{\oM}(\hatpout)\right] \\
    \geq &~\frac{1}{3}\EE^{\oM,\Algo}\left[\subopts_M(\hatpout)\right]+\frac{1}{3}\EE^{M,\Algo}\left[\subopts_{\oM}(\hatpout)\right]
    -\frac{8}{3}\varepsilon \DH{\PP^{M,\Algo}, \PP^{\oM,\Algo}}.
\end{align*}
Furthermore, by the chain rule of Hellinger distance (cf. \cref{eqn:Hellinger-chain} and \cref{eqn:Hellinger-chain}), we have
\begin{align*}
\DH{\PP^{M,\Algo}, \PP^{\oM,\Algo}}
\leq 100\log(2T)T \cdot \EE_{\pi \sim p_{\oM,\expl}}\left[\DH{M(\pi), \oM(\pi)}\right].
\end{align*}
As long as $2400\log(2T) T \varepsilon \leq \gamma$, it holds that
$$
\EE^{M,\Algo}\left[\subopts_M(\hatpout)\right]+\EE^{\oM,\Algo}\left[\subopts_{\oM}(\hatpout)\right] \geq \frac{1}{3} \delta.
$$
This completes the proof of the case $\csps=\csppac$. 

For (2), the case where $\csps=\cspreg$ (i.e. $\ssps=\Delta(\Pi)$), we follow the strategy that we prove \cref{prop:gen-lower-bound-demo} (2). Consider the algorithm $\Algo$ with the same interaction rules $\pexp$ and (modified) output rule $\hatpout=\frac1T \sum_{t=1}^T \pexp^t$. Repeating the argument above yields that there exists $M^\star\in\cM$ such that
\begin{align*}
    \EE^{M^\star,\Algo}\left[\subopts_{M^\star}(\hatpout)\right]\geq\frac16\sup_{\oM\in\cM}\DEC_{\gamma}\left(\cM_{\varepsilon}^{\GOAL}(\oM), \oM\right).
\end{align*}
Then by the convexity of $\subopt$, it holds that
\begin{align*}%
\begin{split}
    \EE^{M^\star,\Algo}\brac{\Regs}
    =&~ \EE^{M^\star,\Algo}\brac{ \sum_{t=1}^T \subopts_{M^\star}(\pexp^t) } \\
    \geq&~ T\cdot \EE^{M^\star,\Algo}\brac{\subopts_{M^\star}\paren{ \frac1T\sum_{t=1}^T\pexp^t } } \\
    \geq&~ \frac{1}{6}\max _{\gamma>0} \sup _{\oM \in \cM} \DEC_{\gamma}(\cM_{\oeps_{\gamma}}^{\GOAL}(\oM), \oM).
\end{split}
\end{align*}
This gives the desired result. \qed

\subsubsection{Proof of Proposition~\ref{prop:eec-lower-bound-MDP}}\label{appdx:proof-eec-lower-MDP}
In this section, we follow \cite{domingues2021episodic, foster2021statistical} to construct a class of tabular MDPs whose (localized) PACDEC has a desired lower bound.

Without loss of generality, we assume that $S=2^{n+1}+1$ and let $S'=2^n$. We also write $A'=A-1$, $H'=H-n\geq H/2$. 

Fix a $\Delta\in(0,\frac13]$, we consider $\cM^{\Delta}$ the class of MDPs described as follows.
\newcommand{\Stree}{\cS_{\rm tree}}
\newcommand{\Sl}{S'}
\newcommand{\hs}{h^\star}
\renewcommand{\ss}{s^\star}
\newcommand{\as}{a^\star}
\newcommand{\Al}{[A']}
\renewcommand{\sp}{s_{\oplus}}
\newcommand{\sq}{s_{\ominus}}
\newcommand{\aleft}{{\sf left}}
\newcommand{\aright}{{\sf right}}
\newcommand{\await}{{\sf wait}}
\begin{enumerate}[leftmargin=2em]
    \item The state space $\cS=\Stree \bigsqcup \set{\sp, \sq}$, where $\Stree$ is a binary tree of level $n+1$ (hence $\abs{\Stree}=2^{n+1}-1$), and $\sp, \sq$ are two auxiliary nodes. Let $s_0$ be the root of $\Stree$, and $\Sl$ be the set of leaves of $\Stree$ (hence $\abs{\Sl}=2^n$).
    \item Each episode has horizon $H$. 
    \item The reward function is fixed and known: arriving at $\sp$ emits a reward 1, and at all other states the reward is 0.
    \item For $\hs\in\cH'\defeq \set{n+1,\cdots,H}, \ss\in\Sl, \as\in\Al$, the transition dynamic of $M=M_{\hs,\ss,\as}$ is defined as follows:
    \begin{itemize}[leftmargin=1em]
        \item The initial state is always $s_0$.
        \item At a node $s\in\Stree$ such that $s$ is not leaf of $\Stree$, there are two available actions $\aleft$ and $\aright$, with $\aleft$ leads to the left children of $s$ and $\aright$ leads to the right children of $s$.
        \item At leaf $s\in\Sl$, there are $A$ actions: $\await, 1, \cdots, A-1$. The dynamic of $M=M_{\hs,\ss,\as}\in\cM$ at $s$ is given by: $\PP^M_h(s|s,\await)=1$, and for $a\in\Al$, $h\in[H]$
        \begin{align*}
            &\PP^{M}_h(\sp|s,a)=\frac12+\Delta\cdot\II(h=\hs,s=\ss,a=\as), \\
            &\PP^{M}_h(\sq|s,a)=\frac12-\Delta\cdot\II(h=\hs,s=\ss,a=\as),
        \end{align*}
        \item The state $\sp$ always transits to $\sq$, and $\sq$ is the absorbing state (i.e. $\PP(\sq|\sp,\cdot)=1$, $\PP(\sq|\sq,\cdot)=1$).
    \end{itemize}
\end{enumerate}
Let $\oM$ be the MDP model with the same transition dynamic and reward function as above, except that for all $h\in[H], s\in\Sl, a\in\Al$ it holds $\PP^{M}_h(\sp|s,a)=\PP^{M}_h(\sq|s,a)=\frac12$. Note that $\oM$ does not depend on $\Delta$. We then define
$$
\cM^{\Delta}=\set{\oM}\bigcup 
\set{M_{h,s,a}: (h,s,a)\in\cH'\times\Sl\times\Al}.
$$

Before lower bounding $\eecg(\cM^{\Delta}, \oM)$, we make some preparations. Define
\begin{align*}
    \nu_{\pi}(h,s,a)=\PP^{\oM,\pi}(s_h=s,a_h=a), \qquad \forall (h,s,a)\in\cH'\times\Sl\times\Al.
\end{align*}
Note that due to the structure of $\oM$, the events $A_{h,s,a}\defeq \{s_h=s,a_h=a\}$ are disjoint for $(h,s,a)\in\cH'\times\Sl\times\Al$; therefore,
$$
\sum_{h\in\cH',s\in\Sl,a\in\Al} \nu_{\pi}(h,s,a)\leq 1.
$$
Furthermore, for $M=M_{h,s,a}\in\cM^\Delta$, we have
\begin{align*}
    \DH{M(\pi), \oM(\pi)}=\PP^{\oM,\pi}(s_{h}=s, a_{h}=a) \DH{\PP_h^M(\cdot|s,a), \PP_h^{\oM}(\cdot|s,a)}
\end{align*}
because $M(\pi)$ and $\oM(\pi)$ only differs at the conditional probability of $s_{h+1}|s_h=s,a_h=a$. Therefore, due to the fact that $\DH{\PP_h^M(\cdot|s,a), \PP_h^{\oM}(\cdot|s,a)}=\DH{\Bern(\frac12+\Delta), \Bern(\frac12)}\leq 3\Delta^2$, we have
$$
\DH{M(\pi), \oM(\pi)}\leq 3\nu_{\pi}(h,s,a)\Delta^2.
$$

Now, that for $\pexp,\pout\in\Delta(\pi)$ and $M\in\cM'$, we have
\begin{align*}
    &\EE_{\pi\sim\pout}\brac{f^M(\pi_M)-f^M(\pi)}-\gamma\EE_{\pi\sim\pexp}\brac{\DH{M(\pi), \oM(\pi)}}\\
    =&
    \frac12+\Delta-\EE_{\pi\sim\pout}\brac{\frac12+\Delta\PP^{M,\pi}(\hs,\ss,\as)}
    -\gamma\EE_{\pi\sim\pexp}\brac{\DH{M(\pi), \oM(\pi)}}\\
    \geq& \Delta\paren{1-\EE_{\pi\sim\pout}\brac{\nu_{\pi}(\hs,\ss,\as)}}-3\gamma\Delta^2\EE_{\pi\sim\pexp}\brac{\nu_{\pi}(\hs,\ss,\as)}.
\end{align*}
Therefore, we define
\begin{align*}
    \nu_{\pexp}(h,s,a)=\EE_{\pi\sim\pexp}\brac{\nu_{\pi}(h,s,a)}, \qquad \nu_{\pout}(h,s,a)=\EE_{\pi\sim\pout}\brac{\nu_{\pi}(h,s,a)}.
\end{align*}
Then, for any fixed $\pexp,\pout\in\Delta(\Pi)$, by the fact that
$$
\sum_{h\in\cH',s\in\Sl,a\in\Al}\set{ \nu_{\pexp}(h,s,a)+3\gamma\Delta\nu_{\pout}(h,s,a) }\leq 1+3\gamma\Delta,
$$
we know that there exists $(h',s',a')\in\cH'\times\Sl\times\Al$ such that 
$$
\nu_{\pexp}(h',s',a')+3\gamma\Delta\nu_{\pout}(h',s',a')\leq \frac{1+3\gamma\Delta}{H'S'A'}.
$$
Then we can consider $M'=M_{h',s',a'}$, and 
\begin{align*}
    &\sup_{M\in\cM'}\EE_{\pi\sim\pout}\brac{f^M(\pi_M)-f^M(\pi)}-\gamma\EE_{\pi\sim\pexp}\brac{\DH{M(\pi), \oM(\pi)}}\\
    \geq &
    \EE_{\pi\sim\pout}\brac{f^{M'}(\pi_{M'})-f^{M'}(\pi)}-\gamma\EE_{\pi\sim\pexp}\brac{\DH{M'(\pi), \oM(\pi)}}\\
    \geq& \Delta\paren{1-\nu_{\pexp}(h',s',a')}-3\gamma\Delta^2\nu_{\pout}(h',s',a')\\
    \geq& \Delta-\Delta\cdot\frac{1+3\gamma\Delta}{H'S'A'}.
\end{align*}
By the arbitrariness of $\pexp,\pout\in\Delta(\Pi)$, we derive that
$$
\eec_{\gamma}(\cM^{\Delta},\oM)\geq \Delta-\Delta\cdot\frac{1+3\gamma\Delta}{H'S'A'}.
$$
Therefore, we consider $\cM=\bigcup_{\Delta>0} \cM^{\Delta}$. By definition, it holds that $\cM^{\Delta}\subset \cM^{\pac}_{\eps}(\oM)$ for any $\eps\geq\Delta$ (cf.~\eqref{equation:localized-model}). Hence, for any given $\gamma>0$, we can take $\Delta=\min\set{\frac13, \frac{H'S'A'}{12\gamma}}$, and then as long as $\eps\geq\Delta$
\begin{align*}
    \eec_{\gamma}(\cM_{\varepsilon}^{\pac}(\oM),\oM)\geq  \eec_{\gamma}(\cM^{\Delta},\oM)
    \geq \Delta-\Delta\cdot\frac{1+3\gamma\Delta}{H'S'A'}\geq \frac{\Delta}{4}=\frac14\min\set{\frac13, \frac{H'S'A'}{12\gamma}}.
\end{align*}
This completes the proof of \cref{prop:eec-lower-bound-MDP}. \qed

\subsection{Algorithm Reward-Free E2D}

In this section, we present the complete description of \rfalg~algorithm (\cref{alg:E2D-RF}) sketched in \cref{section:rfec}. %

\begin{algorithm}[t]
\caption{\rfalg} 
\begin{algorithmic}[1]
\label{alg:E2D-RF}
\STATE \textbf{Input:} Parameters $\eta=1/2$, $\gamma>0$; prior distribution $\mu^1=\unif(\cP)$.
\STATE \textit{// Exploration phase}
\FOR{$t=1,\ldots,T$}
    \STATE Set $\pexp^t=\argmin_{\pexp\in\Delta(\Pi)}\sup_{R\in\cR}\hV^{\mu^t}_{\rf,\gamma}(\pexp,R)$, where (cf.~\cref{equation:vtp-rf})
    \begin{align*}
       \hV^{\mu^t}_{\rf,\gamma}(\pexp,R) \defeq \inf_{\pout}\sup_{\Pm\in\cP}\E_{\pi \sim \pout}  \brac{ f^{\Pm,R}(\pi_{\Pm,R})-f^{\Pm,R}(\pi) } - \gamma \E_{\pi \sim \pexp}\E_{\hPm^t\sim \mu^t}\brac{ \dH^2(\Pm(\pi), \hPm^t(\pi)) }.
    \end{align*}
    \STATE Sample $\pi^t\sim p^t_{\expl}$. Execute $\pi^t$ and observe $o^t$.
    \STATE Compute $\mu^{t+1}\in\Delta(\cP)$ by \Vovkalg~with observations only:
    \begin{align}
      \label{equation:tempered-aggregation-obs-only}
      \mu^{t+1}(\Pm) \; \propto_{\Pm} \; \mu^{t}(\Pm) \cdot \exp\paren{\eta \log \Pm^{\pi^t}(o^{t}) }.
    \end{align}
\ENDFOR
\STATE \textit{// Planning phase}
\STATE \textbf{Input:}  $R^\star\in\cR$
\FOR{$t=1,\ldots,T$}
    \STATE Compute
    \begin{align*}
        \pout^t(R^\star):=\argmin_{\pout\in\Delta(\Pi)} \sup_{\Pm\in\cP}\E_{\pi \sim \pout}  \brac{ f^{\Pm,R^\star}(\pi_{\Pm,R})-f^{\Pm,R^\star}(\pi) } - \gamma \E_{\pi \sim \pexp^t}\E_{\hPm^t\sim \mu^t}\brac{ \dH^2(\Pm(\pi), \hPm^t(\pi)) }.
    \end{align*}
\ENDFOR
\STATE \textbf{Output:} $\hatpout(R^\star)=\frac{1}{T}\sum_{t=1}^T \pout^t(R^\star).$
\end{algorithmic}
\end{algorithm}

\newcommand{\tPm}{\wt{\Pm}}
More generally, \cref{alg:E2D-RF} also applies to the case when $\cM$ only admits a finite optimistic covering (\cref{thm:E2D-gen-full}). 
The guarantee of \cref{alg:E2D-RF} in this general setting is stated as follows.
\begin{theorem}[\rfalg]
\label{thm:E2D-RF-full}
Given a suitable $\rho$-optimistic cover $(\tPm,\cP_0)$ of $\cP$, we can replace the subroutine \eqref{equation:tempered-aggregation-obs-only} in~\cref{alg:E2D-RF} with \begin{align*}
  \mu^{t+1}(\Pm) \; \propto_{\Pm} \; \mu^{t}(\Pm) \cdot \exp\paren{\eta \log \tPm^{\pi^t}(o^{t}) }.
\end{align*}
and let $\eta=1/2$, $\mu^1=\Unif(\cP_0)$, then~\cref{alg:E2D-RF} achieves the following with probability at least $1-\delta$:
\begin{align*}
    \suboptrf\leq \orfecg(\cP)+\frac{2\gamma}{T}\brac{\est(\cP,T)+2\log(2/\delta)}.
\end{align*}
\end{theorem}

\subsection{Algorithm All-Policy Model-Estimation E2D}
\label{appendix:mdec-alg}\label{appendix:mdec}

In this section, we present the complete description of \mealg~algorithm (\cref{alg:E2D-ME}) sketched in \cref{section:mdec}.
\begin{algorithm}[t]
\caption{\MEalg} 
\begin{algorithmic}[1]
\label{alg:E2D-ME}
\STATE \textbf{Input:} Parameters $\etap=\etar=1/3$, $\gamma>0$; prior distribution $\mu^1=\Unif(\cM)$.
\STATE Initialize $\mu^1\setto \unif(\cM_0)$
\FOR{$t=1,\ldots,T$}
    \STATE Set $(\pexp^t,\muout^t)=\argmin_{(\pexp,\muout)\in\Delta(\Pi)\times\Delta(\cM)}\hV^{\mu^t}_{\ME,\gamma}(\pexp,\muout)$, where (cf. \cref{equation:vtp-me})
    \begin{align*}
       \hV^{\mu^t}_{\ME,\gamma}(\pexp,\muout) \defeq \sup_{M\in\cM}\sup_{\opi\in\Pi}\E_{\oM \sim \muout}  \brac{ \wDTV{ M(\opi), \oM(\opi)} } - \gamma \E_{\pi \sim \pexp}\E_{\hM^t\sim \mu^t}\brac{ \wtdH^2(M(\pi), \hM^t(\pi)) }.
    \end{align*}
    \STATE Sample $\pi^t\sim p^t_{\expl}$. Execute $\pi^t$ and observe $(o^t,\br^t)$.
    \STATE Compute $\mu^{t+1}\in\Delta(\cM)$ by \Vovkalg:
    \begin{align}\label{eqn:TA-AMDEC}
    \mu^{t+1}(M) \; \propto_M \; \mu^{t}(M) \cdot \exp\paren{\etap \log \PP^{M,\pi^t}(o^{t}) - \etar\ltwo{\br^t - \bR^M(o^t)}^2 }.
    \end{align}
\ENDFOR
\STATE Compute $\muout=\frac1T\sum_{t=1}^T\muout^t\in\Delta(\cM)$.
\STATE \textbf{Output:} $\hM=\argmin_{M\in\cM} \sup_{\opi\in\Pi} \E_{\oM \sim \muout}  \brac{ \wDTV{ M(\opi), \oM(\opi)} }$
\end{algorithmic}
\end{algorithm}

More generally, \cref{alg:E2D-ME} also applies to the case when $\cM$ only admits a finite optimistic covering (\cref{thm:E2D-gen-full}). 
The guarantee of \cref{alg:E2D-ME} in this general setting is stated as follows.

\begin{theorem}[\mealg]\label{thm:E2D-ME-full}
Given a suitable $\rho$-optimistic cover $(\tPP,\cM_0)$ of $\cM$, we can replace the subroutine \eqref{eqn:TA-AMDEC} in~\cref{alg:E2D-ME} with \begin{align*}
    \mu^{t+1}(M) \; \propto_M \; \mu^{t}(M) \cdot \exp\paren{\etap \log \tPP^{M,\pi^t}(o^{t}) - \etar\ltwo{\br^t - \bR^M(o^t)}^2 }.
\end{align*}
and let $\etap=\etar=1/3$, $\mu^1=\Unif(\cM_0)$, then~\cref{alg:E2D-ME} achieves the following with probability at least $1-\delta$:
\begin{align*}
    \DTVPi{M^\star, \hM}\leq 6\omdec_{\gamma}(\cM)+\frac{60\gamma}{T}\brac{\est(\cM,T)+2\log(2/\delta)}.
\end{align*}
\end{theorem}

\begin{proof}[Proof of \cref{thm:E2D-ME} and \cref{thm:E2D-ME-full}]
We only need to relate the guarantee of $\hM$ to the guarantee of $\muout$ (implied by \cref{thm:E2D-gen}). 
By definition of $\hM$, it holds that
$$
\sup_{\opi\in\Pi}\E_{\oM \sim \muout}  \brac{ \wDTV{ \hM(\opi), \oM(\opi)} }\leq \sup_{\opi\in\Pi}\E_{\oM \sim \muout}  \brac{ \wDTV{ M^\star(\opi), \oM(\opi)} },
$$
and therefore by \cref{lemma:tDTV-triagnle},
\begin{align*}
\max_{\opi\in\Pi}\wDTV{ M^\star(\opi), \hM(\opi) }
\leq& 3\sup_{\opi\in\Pi}\set{ \E_{\oM \sim \muout}  \brac{ \wDTV{ \hM(\opi), \oM(\opi)} +  \wDTV{ M^\star(\opi), \oM(\opi)} }}\\
\leq&  6\sup_{\opi\in\Pi}\E_{\oM \sim \muout}  \brac{ \wDTV{ M^\star(\opi), \oM(\opi)} }.
\end{align*}
This completes the proof of \cref{thm:E2D-ME} and \cref{thm:E2D-ME-full} (by invoking \cref{thm:E2D-gen} and \cref{thm:E2D-gen-full}, respectively).
\end{proof}

\begin{lemma}\label{lemma:tDTV-triagnle}
For all $M,\oM,\hM\in\cM$ and policy $\pi$, it holds that
\begin{align}\label{eqn:triangle-wdTV}
    \wDTV{ M(\pi), \oM(\pi)} \leq 3\wDTV{ M(\pi), \hM(\pi)} + 3\wDTV{ \oM(\pi), \hM(\pi)}.
\end{align}
Furthermore, we also have
\begin{align}\label{eqn:triangle-wdTV-2}
    \abs{ \wDTV{ M(\pi), \hM(\pi) } - \wDTV{ \oM(\pi), \hM(\pi) } } \leq 3\wDTV{ M(\pi), \oM(\pi) }
\end{align}
\end{lemma}
\begin{proof}
Note that by triangle inequality,
$$
\DTV{ \MP(\pi), \oMP(\pi)}\leq \DTV{ \MP(\pi), \Pm^{\hM}(\pi)}+\DTV{ \oMP(\pi), \Pm^{\hM}(\pi)},
$$
and similarly,
\begin{align*}
&\E_{o\sim \MP(\pi)}\brac{ \big\|\bR^M(o) - \bR^{\oM}(o)\big\|_1 } \\
\leq&~ \E_{o\sim \MP(\pi)}\brac{ \big\|\bR^M(o) - \bR^{\hM}(o)\big\|_1 }+\E_{o\sim \MP(\pi)}\brac{ \big\|\bR^{\oM}(o) - \bR^{\hM}(o)\big\|_1 }\\
\leq&~ \E_{o\sim \MP(\pi)}\brac{ \big\|\bR^M(o) - \bR^{\hM}(o)\big\|_1 } \\
&~+\E_{o\sim \oMP(\pi)}\brac{ \big\|\bR^{\oM}(o) - \bR^{\hM}(o)\big\|_1 } 
+2\DTV{\MP(\pi), \oMP(\pi)},
\end{align*}
where the second inequality is because $\big\|\bR^{\oM}(o) - \bR^{\hM}(o)\big\|_1\leq 2$ for all $o\in\cO$ and $\oM,\hM\in\cM$. Combining the inequalities above completes the proof of \eqref{eqn:triangle-wdTV}. The proof of \cref{eqn:triangle-wdTV-2} follows similarly.
\end{proof}

\subsection{Algorithm Preference-based E2D}

In this section, we present the complete description of \pbrlalg~algorithm (\cref{alg:E2D-PbRL}) sketched in \cref{section:pbrl}.

\begin{algorithm}[t]
	\caption{E2D for Preference-based RL (\pbrlalg)}
	\begin{algorithmic}[1]
	\label{alg:E2D-PbRL}
	\REQUIRE Parameters $\eta=1/2$, $\gamma>0$; prior distribution $\mu^1=\Unif(\cMpb)$.
	\STATE Initialize $\mu^1\setto {\rm Unif}(\cM)$.
	\FOR{$t=1,\ldots,T$}
    \STATE Set $p^t\setto \argmin_{p\in\Delta(\Pi)}\hV^{\mu^t}_{\mathrm{pb},\gamma}(p)$, where (cf. \cref{equation:vtp-pbrl})%
    \begin{align*}
        \hV^{\mu^t}_{\mathrm{pb},\gamma}(p) \defeq \sup_{M\in\cMpb} \suboptpb_M(p)-\gamma\E_{\bpi\sim p} \E_{\oM\sim\mu^t}\brac{\dH^2(M(\bpi), \oM(\bpi))}.
    \end{align*}
    \label{line:e2d-pbrl-pt}
    \STATE Sample $\bpi^t=(\pi^t_1,\pi^t_2)\sim p^t$. Execute $\pi^t_1$ and observe $\tau^t_{1}$, execute $\pi^t_2$ and observe $\tau^t_{2}$, and then observe $b^t$. Set $o^t=(\tau^t_1,\tau^t_2,b^t)$. \label{line:e2d-pbrl-execute}
    \STATE Update randomized model estimator by \Vovkalg: %
    \begin{align}\label{eqn:TA-PbRL}
        \mu^{t+1}(M) \; \propto_M \; \mu^{t}(M) \cdot \exp\paren{\eta \log \P^{M,\bpi^t}(o^{t}) }.
    \end{align}
    \label{line:e2d-pbrl}
    \ENDFOR
   \end{algorithmic}
\end{algorithm}

More generally, \cref{alg:E2D-PbRL} also applies to the case when $\cM$ only admits a finite optimistic covering (\cref{thm:E2D-gen-full}). 
The guarantee of \cref{alg:E2D-PbRL} in this general setting is stated as follows.

\begin{theorem}[\pbrlalg]
\label{thm:E2D-PbRL-full}
Given a suitable $\rho$-optimistic cover $(\tPm,\cM_0)$ of $\cMpb$, we can replace the subroutine \eqref{eqn:TA-PbRL} in~\cref{alg:E2D-RF} with \begin{align*}
  \mu^{t+1}(M) \; \propto_{M} \; \mu^{t}(M) \cdot \exp\paren{\eta \log \tPP^{M,\pi^t}(o^{t}) }.
\end{align*}
and let $\eta=1/2$, $\mu^1=\Unif(\cM_0)$, then~\cref{alg:E2D-PbRL} achieves the following with probability at least $1-\delta$:
\begin{align*}
    \Regpb\leq T\cdot\pbdec_\gamma(\cM)+2\gamma\cdot\brac{\est(\cP,T)+2\log(2/\delta)}.
\end{align*}
\end{theorem}

\subsection{Proof of Proposition~\ref{prop:rfdec-to-mdec}}
\label{appendix:relation-mdec-rfdec}

Fix any $\tmu\in\Delta(\cP)$. By definition,
\begin{align*}
\rfecg(\cP,\omu)
\stackrel{(i)}{=}&
\inf_{\pexp\in\Delta(\Pi)}\sup_{R\in\cR}\sup_{\mu\in\Delta(\cP)}\inf_{\pout\in\Delta(\Pi)}\EE_{\Pm\sim\mu}\EE_{\pi\sim \pout}\left[f^{\Pm, R}(\pi_{\Pm, R})-f^{\Pm, R}(\pi)\right]\\
&\qquad\qquad\qquad\qquad\qquad\qquad\quad-\gamma \EE_{\Pm\sim\mu}\EE_{\oPm \sim \omu}\EE_{\pi \sim \pexp}\left[ \dH^2(\Pm(\pi),\oPm(\pi))\right]\\
=&
\inf_{\pexp\in\Delta(\Pi)}\sup_{R\in\cR}\sup_{\mu\in\Delta(\cP)}\inf_{\pout\in\Delta(\Pi)}\EE_{\Pm\sim\mu}\left[f^{\Pm, R}(\pi_{\Pm, R})\right]
-\EE_{\pi \sim \pout}\EE_{\oPm \sim \tmu}\left[f^{\oPm, R}(\pi)\right]\\
&\qquad\qquad\qquad\qquad\qquad\qquad\quad+\EE_{\Pm\sim\mu}\EE_{\pi \sim \pout}\EE_{\oPm \sim \tmu}\left[f^{\oPm, R}(\pi)-f^{\Pm, R}(\pi)\right]\\
&\qquad\qquad\qquad\qquad\qquad\qquad\quad-\gamma \EE_{\Pm\sim\mu}\EE_{\oPm \sim \omu}\EE_{\pi \sim \pexp}\left[ \dH^2(\Pm(\pi),\oPm(\pi))\right]\\
\stackrel{(ii)}{\leq}&
\inf_{\pexp\in\Delta(\Pi)}\sup_{R\in\cR}\sup_{\mu\in\Delta(\cP)}\EE_{\Pm\sim\mu}\EE_{\oPm \sim \tmu}\left[f^{\Pm, R}(\pi_{\Pm, R})-f^{\oPm, R}(\pi_{\Pm, R})\right]\\
&\qquad\qquad\qquad\qquad\quad+\EE_{\Pm\sim\mu}\EE_{\Pm'\sim\mu}\EE_{\oPm \sim \tmu}\left[f^{\oPm, R}(\pi_{\Pm',R})-f^{\Pm, R}(\pi_{\Pm',R})\right]\\
&\qquad\qquad\qquad\qquad\quad-\gamma \EE_{\Pm\sim\mu}\EE_{\oPm \sim \omu}\EE_{\pi \sim \pexp}\left[ \dH^2(\Pm(\pi),\oPm(\pi))\right]\\
\stackrel{(iii)}{\leq}&
\inf_{\pexp\in\Delta(\Pi)}\sup_{R\in\cR}\sup_{\mu\in\Delta(\cP)}2\EE_{\Pm\sim\mu}\brac{\sup_{\pi'\in\Pi}\abs{\EE_{\oPm \sim \tmu}\left[f^{\Pm, R}(\pi')-f^{\oPm, R}(\pi')\right]}}\\
&\qquad\qquad\qquad\qquad\quad-\gamma \EE_{\Pm\sim\mu}\EE_{\oPm \sim \omu}\EE_{\pi \sim \pexp}\left[ \dH^2(\Pm(\pi),\oPm(\pi))\right]\\
=&
\inf_{\pexp\in\Delta(\Pi)}\sup_{R\in\cR,\Pm\in\cP} \Big\{ \sup_{\pi'\in\Pi}\abs{\EE_{\oPm \sim \tmu}\left[f^{\oPm, R}(\pi')-f^{\Pm, R}(\pi')\right]}\\
&\qquad\qquad\qquad\qquad\quad-\gamma \EE_{\oPm \sim \omu}\EE_{\pi \sim \pexp}\left[ \dH^2(\Pm(\pi),\oPm(\pi))\right] \Big\},
\end{align*}
where (i) is due to strong duality (\cref{thm:strong-dual}), in (ii) we upper bound $\inf_{\pout}$ by letting $\pout\in\Delta(\Pi)$ be defined by $\pout(\pi)=\mu(\{\Pm: \pi_{\Pm,R}=\pi\})$, and in (iii) we upper bound
\begin{align*}
\EE_{\oPm \sim \tmu}\left[f^{\Pm, R}(\pi_{\Pm, R})-f^{\oPm, R}(\pi_{\Pm, R})\right]
&\leq \sup_{\pi'\in\Pi}\abs{\EE_{\oPm \sim \tmu}\left[f^{\oPm, R}(\pi')-f^{\Pm, R}(\pi')\right]},
\\
\EE_{\Pm'\sim\mu}\EE_{\oPm \sim \tmu}\left[f^{\oPm, R}(\pi_{\Pm',R})-f^{\Pm, R}(\pi_{\Pm',R})\right]&\leq \sup_{\pi'\in\Pi}\abs{\EE_{\oPm \sim \tmu}\left[f^{\oPm, R}(\pi')-f^{\Pm, R}(\pi')\right]}.
\end{align*}
Taking $\inf_{\tmu}$ over $\tmu\in\Delta(\cP)$ gives
\begin{align*}
\rfecg(\cP,\omu)\leq&~
\inf_{\substack{p\in\Delta(\Pi), \\ \tmu\in\Delta(\cP)}}\sup_{\Pm,R,\pi'} 2\abs{\EE_{\oPm\sim\tmu}\brac{ f^{\Pm, R}(\pi')-f^{\oPm, R}(\pi')} }-\gamma\EE_{\oPm\sim\omu}\EE_{\pi\sim p}\brac{\dH^2(\Pm(\pi),\oPm(\pi))} \\
\leq&~
\inf_{\substack{p\in\Delta(\Pi), \\ \tmu\in\Delta(\cP)}}\sup_{\Pm,\pi'} 2\abs{\EE_{\oPm\sim\tmu}\brac{   \dTV(\Pm(\pi'),\oPm(\pi'))  } }-\gamma\EE_{\oPm\sim\omu}\EE_{\pi\sim p}\brac{\dH^2(\Pm(\pi),\oPm(\pi))} \\
=&~
2\mdec_{\gamma/2}(\cP,\omu).
\end{align*}
This is the desired result.

\subsection{Proof of Theorem~\ref{thm:PbRL-to-RL}}\label{appdx:proof-PbRL-to-RL}

Fix a $\omu\in\Delta(\cMpb)$.
For any $q\in\Delta(\Pi)$, consider the following function $L_q:\Delta(\Pi)\to \R$
\begin{align*}
    p\mapsto L_q(p):=\max_{M,\pis} \EE_{\pi\sim p}\EE_{\pi_0\sim q}\brac{ \Cmp^M(\pis,\pi_0)-\Cmp^M(\pi,\pi_0) - \gamma \EE_{\oM\sim\omu}\brac{\dH^2(M(\pi,\pi_0), \oM(\pi,\pi_0))}  }.
\end{align*}
Notice that for any model $M\in\cMpb$, $\pi\in\Pia$, the expected reward of $\pi$ under model $M_q$ is exactly $f^{M_q}(\pi)=\EE_{\pi_0\sim q}\brac{ \Cmp^M(\pi,\pi_0) }$, and
\begin{align*}
    \EE_{\pi_0\sim q}\brac{\dH^2(M(\pi,\pi_0), \oM(\pi,\pi_0))}=\dH^2(M_q(\pi), \oM_q(\pi)), \qquad\forall \oM\in\cMpb.
\end{align*}
This is because under model $M_q$ and $\oM_q$ the distribution of $\pi_0\sim q$ is the same, and hence the equality above follows from the definition of Hellinger distance. Therefore, we have
\begin{align*}
    L_q(p)=\max_{M} \EE_{\pi\sim p}\brac{ f^{M_q}(\pi_{M_q})-f^{M_q}(\pi) - \gamma \EE_{\oM\sim\omu}\brac{\dH^2(M_q(\pi), \oM_q(\pi))}  }.
\end{align*}
By the definition of $\dec$, we know
\begin{align*}
    \min_{p\in\Delta(\Pi)} L_q(p)\leq \dec_{\gamma}(\cM_q,\omu).
\end{align*}

Now, consider the map
\begin{align*}
    F: q\in\Delta(\Pia) \mapsto \argmin_{p\in\Delta(\Pia)} L_q(p).
\end{align*}
$F$ is a set-valued map such that for all $q\in\Delta(\Pia)$, $F(q)\subset\Delta(\Pia)$ is non-empty and convex, and the graph of $F$ is clearly closed. Therefore, we can apply Kakutani’s fixed point theorem \cite[Lemma 20.1]{osborne1994course} to show that there exists a fixed point $q$ such that $q\in F(q)$, i.e. $L_q(q)\leq \dec_{\gamma}(\cM_q,\omu)$. Equivalently, we have
\begin{align*}
    \max_{M,\pis} \EE_{\pi_1\sim q}\EE_{\pi_2\sim q}\brac{ \Cmp^M(\pis,\pi_2)-\Cmp^M(\pi_1,\pi_2) - \gamma \EE_{\oM\sim\omu}\brac{\dH^2(M(\pi_1,\pi_2), \oM(\pi_1,\pi_2))}  }\leq \dec_{\gamma}(\cM_q,\omu).
\end{align*}
Notice that $\EE_{\pi_1\sim q}\EE_{\pi_2\sim q}\brac{ \Cmp^M(\pi_1,\pi_2)}=\frac12$ by the symmetric property of $\Cmp^M$. Therefore, for the distribution $p=q\times q\in\Delta(\Piab)$, we have
\begin{align*}
    &~\max_{M,\pis} \EE_{\bpi=(\pi_1,\pi_2)\sim p}\brac{ \Cmp^M(\pis,\pi_1)+\Cmp^M(\pis,\pi_2)-1- 2\gamma \EE_{\oM\sim\omu}\brac{\dH^2(M(\pi_1,\pi_2), \oM(\pi_1,\pi_2))}  } \\
    =&~
    \max_{M,\pis} \EE_{\pi\sim q}\brac{ 2\Cmp^M(\pis,\pi)-1} - 2\gamma \EE_{\pi_1\sim q,\pi_2\sim q}\EE_{\oM\sim\omu}\brac{\dH^2(M(\pi_1,\pi_2), \oM(\pi_1,\pi_2))}  \\
    \leq&~ 2\dec_{\gamma}(\cM_q,\omu).
\end{align*}
This inequality directly implies $\pbdec_{2\gamma}(\cMpb,\omu)\leq 2\dec_{\gamma}(\cM_q,\omu)$. By the arbitrariness of $\omu$, the proof is completed by rescaling $\gamma$ to $\gamma/2$.
\qed

\section{Proofs for Section~\ref{section:bellman-rep}}
\label{appendix:bellman-rep}

This section provides the proofs for~\cref{section:bellman-rep} along with some additional discussions, organized as follows. We begin by presenting some useful definition and intermediate results in~\cref{appendix:complexity-measures} for complexity measures; 
~\cref{appdx:bilinear-eluder} presents some discussions on Bellman Eluder dimension~\citep{jin2021bellman}, Bilinear class~\citep{du2021bilinear}, coverability~\citep{xie2022role}, and Bellman representation~\citep{foster2021statistical}. The subsequent sections provide proofs for the propositions of \cref{section:bellman-rep}. The proofs of results in \cref{tab:examples} are provided in~\cref{appendix:proof-examples}. Finally, unless otherwise specified, the proofs of all new results in this section are presented in~\cref{appendix:proof-bellman-rep-props}. 

\subsection{Complexity measures}
\label{appendix:complexity-measures}

In this section, we revisit several well-known complexity measures of function classes, which are sufficient for bounding the decoupling dimension.

\begin{example}[Linear function class]\label{example:dc-linear}
Suppose that there exists $\phi:\cX\to\R^d$ and $\theta:\cF\to \R^d$ such that $f(x)=\iprod{\theta(f)}{\phi(x)}$ for all $(f,x)\in\cF\times\cX$. Then %
$\odimc(\cF,\gamma)\leq d$. %
\exend 
\end{example}
In particular, we always have $\odimc(\cF,\gamma)\leq\abs{\cX}$. \cref{example:dc-linear} can also be extended to generalized linear functions, as follows.
\begin{example}[Generalized linear function class]\label{example:dc-linear-gen}
Suppose that $\sigma:\R\to\R$ satisfies $\mu\leq \sigma'(t)\leq L$ for all $t\in\R$, and there exists $\phi:\cX\to\R^d$ and $\theta:\cF\to \R^d$ such that $f(x)=\sigma(\iprod{\theta(f)}{\phi(x)})$ for all $(f,x)\in\cF\times\cX$. Then $\odimc(\cF,\gamma)\leq \kappa^2d$, where $\kappa=L/\mu$.
\exend 
\end{example}

Next, we recall the definition of Eluder dimension~\citep{russo2013eluder} and star number~\citep{foster2020instance}.

\begin{definition}[Eluder dimension]
\label{definition:eluder}
The \emph{eluder dimension} $\eluder(\cF,\Delta)$ is the maximum of the length of sequence $(f_1,x_1), \cdots, (f_n,x_n)\in\cF\times\cX$ such that there is a $\Delta'\geq \Delta$, and
\begin{align*}
    \abs{f_i(x_i)}> \Delta', \qquad \sum_{j< i} \abs{f_i(x_j)}^2\leq (\Delta')^2, \qquad \forall i.
\end{align*}
We also define $\oeluder(\cF,\Delta)=\sup_{f\in\cF}\eluder(\cF-f,\Delta)$.
\end{definition}

\begin{definition}[Star number]
\label{definition:starn}
The \emph{star number} $\starn(\cF,\Delta)$ is the maximum of the length of sequence $(f_1,x_1), \cdots,(f_n,x_n)\in\cF\times\cX$ such that there is a $\Delta'\geq \Delta$, and 
\begin{align*}
    \abs{f_i(x_i)}> \Delta', \qquad \sum_{j\neq i} \abs{f_i(x_j)}^2\leq (\Delta')^2, \qquad \forall i.
\end{align*}
\end{definition}

\begin{example}[{\citet{foster2021statistical}}]\label{example:dc-to-es}
\label{def:star-eluder}
When $\cF\subset( \cX \rightarrow [-1,1])$, it holds that
\begin{align*}
\dimc(\cF,\gamma)\leq 24\inf_{\Delta> 0}\set{\min\set{ \starn^2(\cF, \Delta), \eluder(\cF, \Delta) }\log^2(\gamma\vee e)+\gamma\Delta}.
\end{align*}
\end{example}
More generally, the decoupling dimension can be bounded by the \textit{disagreement coefficient} introduced in \citep[Definition 6.3]{foster2021statistical}. The proof of~\cref{example:dc-to-es} (\cref{lemma:dc-to-disagree}, which follows directly from \cite{foster2021statistical}) along with some further discussions can be found in~\cref{appdx:bilinear-eluder}. %

Notice that the examples above relate the decoupling dimension to certain structural complexity measure of the function class $\cF$, i.e. they all hold for any $\cQ\subset \Delta(\cX)$. The following two examples illustrate that the decoupling dimension can also be bounded by certain complexity measures of the distribution class $\cQ$. 

\newcommand{\Cov}{C_{\sf cov}}
\begin{example}[Coverability, \citet{xie2022role}]\label{example:dc-to-cov}
For a class $\cQ$ of distributions over $\cX$, the \emph{coverability} of $\cQ$ is defined as
\begin{align*}
    \Cov(\cQ)=\inf_{\mu\in\Delta(\cX)}\sup_{q\in\cQ}\sup_{x\in\cX}\frac{q(x)}{\mu(x)}.
\end{align*}
Then for any function class $\cF$ over $\cX$, it holds that $\dimc(\cF,\cQ,\gamma)\leq \Cov(\cQ)$. \exend
\end{example}

\cref{example:dc-to-cov} recovers the learnability results under coverability (see discussion in \cref{appdx:bilinear-eluder}).

\begin{example}\label{example:dc-to-rank}
For a class $\cQ$ of distributions over $\cX$, $\rank(\cQ)$ is defined as the rank of the matrix $[q(x)]_{(q,x)}\in\R^{\cQ\times\cX}$. Then for any function class $\cF$ over $\cX$, it holds that $\dimc(\cF,\cQ,\gamma)\leq \rank(\cQ)$.
\exend
\end{example}

\subsection{Relation to known structural conditions}\label{appdx:bilinear-eluder}

In this section, we briefly discuss how \belrep~framework recovers several known structural conditions. %

\subsubsection{Relation to Bellman-Eluder dimension}\label{appdx:BE-dim}
Take Q-type Bellman-Eluder dimension~\citep{jin2021bellman} as an example. Using \cref{example:dc-to-es}, we know that for the \belrep~$\cG=\GBE$,
\begin{align*}
    \dimG(\cG,\gamma)\leqsim \max_{\oM,h} \dim_{\rm DE}(\cG_h^{\oM}, \cQ_h^{\oM}, 1/\gamma)\cdot\log^2(1/\gamma),
\end{align*}
where $\dim_{\rm DE}$ is the \emph{distribution Eluder dimension} studied in \citet{jin2021bellman}. From the definition of $\GBE$ (\cref{def:bellman-err}), it is not hard to see the quantity $\dim_{\rm DE}(\cG_h^{\oM}, \cQ_h^{\oM}, 1/\gamma)$ is indeed the Q-type (model-induced) Bellman Eluder dimension with respect to model $\oM$~\citep{jin2021bellman}.

\subsubsection{Relation to Model-based Bilinear Class}\label{appdx:bilinear}
Consider the following model-based version of Bilinear class~\citep{du2021bilinear}, which is introduced in \citet{foster2021statistical}.

\newcommand{\piestt}{\pi^{\rm est}}
\newcommand{\lest}{\ell^{\rm est}}
\begin{definition}[Model-based Bilinear class]\label{def:bilinear}
A MDP model class $\cM$ is a $d$-dimensional bilinear class if there exists a collection of maps $\{X_h:\cM\times\cM\to\R^d\}_{h\in[H]}$, $\{W_h:\cM\times\cM\to\R^d\}_{h\in[H]}$, a class of \emph{estimation policies} $\{\piestt_M\}_{M\in\cM}$, a collection of estimation functional $\{\lest_{M,h}(\cdot;\cdot)\}_{M\in\cM,h\in[H]}$, such that the following holds:

(1) For each $M, \oM\in\cM$, $h\in[H]$,
\begin{align*}
    \abs{\EE^{\oM,\pi_M}\brac{Q^{M, \star}_h(s_h, a_h) - r_h - V_{h+1}^{M, \star}(s_{h+1})}}
    \leq \abs{\iprod{X_h(M;\oM)}{W_h(M;\oM)}},
\end{align*}
and $W_h(\oM;\oM)=0$ always.

(2) For each $M, M', \oM\in\cM$, $h\in[H]$, it holds that
\begin{align*}
    \iprod{X_h(M;\oM)}{W_h(M';\oM)}=\EE^{\oM,\pi_M\circ_h\piestt_M}\brac{\lest_{M,h}(M';s_h,a_h,r_h,s_{h+1})},
\end{align*}
where for policy $\pi, \pi'$, the policy $\pi\circ_h\pi'$ is given by executing $\pi$ for the first $h$ steps and then following $\pi'$ afterwards.
\end{definition}
Given a Bilinear class $\cM$ described above, we immediately have a \belrep~$\cG$ given as follows:
\begin{itemize}%
    \item (Index set) For each $h\in[H]$, $\cT_h=\cM$, and $\dtr(M;\oM)=\delta_M\in\Delta(\cM)$.
    \item For each $M, M', \oM\in\cM$, $h\in[H]$, 
    \begin{align*}
        \cE^{M;\oM}(M')=\iprod{X_h(M';\oM)}{W_h(M;\oM)}.
    \end{align*}
    \item The exploration policies are given by $\piest_M=\Unif(\{ \pi_M\circ_h\piestt_M \}_{h\in[H]})$, and $L=2HL_0^2$, where $L_0$ is a upper bound of $\abs{\lest_{M,h}}$ for all $M\in\cM, h\in[H]$.
\end{itemize}
For $\cG$ described as above, we have $\dimG(\cG,\gamma)\leq d$ (\cref{example:dc-linear}). Therefore, \belrep~indeed encompasses the model-based version of Bilinear class.

\subsubsection{Relation to coverability} The following definition of MDP with coverability is introduced in \citet{xie2022role}.
\begin{definition}[Coverability]\label{def:mdp-cov}
For a MDP $M$, we define the coverability coefficient of $M$ as
\begin{align*}
    C(M)=\inf_{\mu_1,\cdots,\mu_H\in\Delta(\cS\times\cA)}\sup_{\pi\in\Pi,s\in\cS,a\in\cA,h\in[H]} \frac{\PP^{M,\pi}(s_h=s,a_h=a)}{\mu_h(s,a)}.
\end{align*}
\end{definition}
Suppose that $\cM$ is a class of MDPs such that $C(M)\leq C$ for all $M\in\cM$. Then, by definition, the \belrep~$\GBE$ of $\cM$ (\cref{def:bellman-err}) has $\Cov(\cQ_h^{\oM})\leq C$ for all $\oM\in\cM, h\in[H]$. Thus, by \cref{example:dc-to-cov}, we have $\dimG(\GBE,\gamma)\leq C$, and hence the \belrep~framework indeed recovers the results of the coverability~\citep{xie2022role}.  
In particular, we have $\dec_\gamma(\cM)\leq CH^2/\gamma$, and thus \etod~achieves a regret bound of $\cO(\sqrt{CH^2\log\abs{\cM}T})$, matching the result of \citet{xie2022role}, except that the upper bound there depends on $\log\abs{\cF}$, the log-cardinality of certain value function class $\cF$.
A more detailed discussion is deferred to \cref{appendix:proof-cov-mdp}, where we show that a bounded coverability also implies bounded DECs of reward-free learning, model estimation and preference-based learning.

\subsubsection{Relation to Bellman representation}\label{appdx:relation-bel-rep}
\Belrep~(\cref{definition:err-rep}) can be regarded as a generalization of the Bellman representation~\citep[Definition F.1]{foster2021statistical}, by considering $\cT_h=\cM$, $\dtr(M;\oM)=\delta_{M}$, and choosing $\cerr^{M';\oM}_h(M)$ to be the expectation of the discrepancy function $\ell_M(M';s_h,a_h,r_h,s_{h+1})$ considered there.

In~\citet{foster2021statistical}, the complexity of a \belrep~is measured in terms of \textit{disagreement coefficient}, which can be upper bounded by Eluder dimension or star number. 
\begin{definition}
The disagreement coefficient of a function class $\cF\subset( \Pi \rightarrow [-1,1])$ is defined as
$$
\btheta\left(\cF, \Delta_0, \varepsilon_0; \rho \right)=\sup _{\Delta \geq \Delta_0, \varepsilon \geq \varepsilon_0}\left\{\frac{\Delta^2}{\varepsilon^2} \cdot \mathbb{P}_{\pi \sim \rho}\left(\exists f \in \cF:|f(\pi)|>\Delta, \mathbb{E}_{\pi \sim \rho}\left[f^2(\pi)\right] \leq \varepsilon^2\right)\right\} \vee 1.
$$
By \citet[Lemma 6.1]{foster2021statistical}, for $\Delta, \varepsilon>0, \rho\in\Delta(\Pi)$, it holds that
$$
\btheta(\cF, \Delta, \varepsilon; \rho) \leq 4\min\set{ \starn^2(\cF, \Delta), \eluder(\cF, \Delta) }.
$$
\end{definition}

It turns out that our decoupling dimension can be upper bounded by the disagreement coefficient: the following result follows immediately from \citet[Lemma E.3]{foster2021statistical}. 
\begin{lemma}\label{lemma:dc-to-disagree}
For function class $\cF\subset( \Pi \rightarrow [-1,1])$, we have
$$
\dimc(\cF,\gamma)
\leq \inf _{\Delta>0}\left\{2\gamma \Delta+6 \btheta\left(\cF, \Delta, \gamma^{-1}\right) \log ^2( \gamma \vee e)\right\},
$$
where $\btheta\left(\cF, \Delta, \gamma^{-1}\right)\defeq\sup_{\rho\in\Delta(\Pi)} \btheta(\cF, \Delta, \varepsilon; \rho)$.
\end{lemma}
\cref{example:dc-to-es} is now a direct corollary of \cref{lemma:dc-to-disagree}.

\subsection{Proof of Example~\ref{def:bellman-err}}\label{appendix:GBE-proof}
We first prove \cref{eqn:Bellman-decomp}, as follows:
\begin{align*}
    &~\sum_{h=1}^H \EE^{\oM,\pi_M}\brac{ Q^{M, \star}_h(s_h, a_h) - [\T^{\oM}_h V^{M,\star}_{h+1}](s_h,a_h) } \\
    =&~
    \sum_{h=1}^H \EE^{\oM,\pi_M}\brac{ Q^{M, \star}_h(s_h, a_h) - r_h - V^{M,\star}_{h+1}(s_{h+1}) } \\
    =&~
    \sum_{h=1}^H \EE^{\oM,\pi_M}\brac{ V^{M, \star}_h(s_h) - V^{M,\star}_{h+1}(s_{h+1}) } - \EE^{\oM,\pi_M}\brac{\sum_{h=1}^H r_h} \\
    =&~ 
    \EE\brac{V_1^{M,\star}(s_1)} - f^{\oM}(\pi_M) \\
    =&~
    f^M(\pi_M)-f^{\oM}(\pi_M),
\end{align*}
where we use the assumption that $M$ and $\oM$ has the same initial distribution.

We next prove the condition \cref{eqn:err-rep-decouple-2} for $\GBE$. By definition,
\begin{align*}
    \abs{\cerr_h^{M; \oM}(s_h,a_h)} =&~ \abs{Q^{M, \star}_h(s_h, a_h) - [\T^{\oM}_h V^{M,\star}](s_h,a_h)} \\
    =&~ \abs{[\T^{M}_h V^{M,\star}](s_h,a_h) - [\T^{\oM}_h V^{M,\star}](s_h,a_h)} \\
    \leq&~ \DTV{\PP^M_h(\cdot|s_h,a_h),\PP^\oM_h(\cdot|s_h,a_h)}+\abs{R^M_h(s_h,a_h)-R^\oM_h(s_h,a_h)}.
\end{align*}
Therefore,
\begin{align*}
    \EE^{\oM,\pi} \abs{\cerr_h^{M; \oM}(s_h,a_h)}^2
    \leq&~ 2\EE^{\oM,\pi}\DTV{\PP^M_h(\cdot|s_h,a_h),\PP^\oM_h(\cdot|s_h,a_h)}^2+2\EE^{\oM,\pi}\abs{R^M_h(s_h,a_h)-R^\oM_h(s_h,a_h)}^2.
\end{align*}
By $\dTV\leq \dH$ and \cref{lemma:Hellinger-cond}, we have
\begin{align*}
    \EE^{\oM,\pi}\DTV{\PP^M_h(\cdot|s_h,a_h),\PP^\oM_h(\cdot|s_h,a_h)}^2
    \leq 2\DH{ M(\pi), \oM(\pi) }.
\end{align*}
Therefore, taking summation over $h\in[H]$ yields
\begin{align*}
    \sum_{h=1}^H \EE^{\oM,\pi} \abs{\cerr_h^{M; \oM}(s_h,a_h)}^2
    \leq&~ 4H \DH{ M(\pi), \oM(\pi) } + 2 \EE^{\oM,\pi} \ltwo{\bR^M(o)-\bR^\oM(o)}^2 \\
    \leq&~ 4H\tDH{\oM(\pi),M(\pi)}.
\end{align*}
Combining the results above completes the proof.
\qed

\subsection{Proof of Proposition~\ref{prop:belrep-pac}}\label{appdx:proof-belrep-pac}

\cref{prop:belrep-pac} is an immediate corollary of the combination of \cref{prop:dec-to-psc} and \cref{prop:psc-belrep}. As a remark, a similar strategy of using probability matching to bound DEC is also adopted in \citet{foster2021statistical}.
\qed

\subsection{Proof of Proposition~\ref{prop:belrep-am}}\label{appendix:bellman-rep-medec-dcpl}

By definition, we have
\begin{align*}
&~\mdec(\cM,\omu)\\
\defeq&~ \inf_{\pexp\in\Delta(\Pi),\muout\in\Delta(\cM)}\sup_{M\in\cM,\opi\in\Pi}\E_{\hM \sim \muout}  \brac{ \wDTV{ M(\opi), \hM(\opi)} } - \gamma \E_{\pi \sim \pexp}\E_{\oM\sim \omu}\brac{ \tDH{M(\pi), \oM(\pi)} }\\
\leq&~ \inf_{\pexp\in\Delta(\Pi)}\sup_{M\in\cM,\opi\in\Pi}\E_{\oM \sim \omu}\brac{ \wDTV{ M(\opi), \oM(\opi)}  - \gamma \E_{\pi \sim \pexp}\brac{ \tDH{M(\pi), \oM(\pi)} } }\\
=&~ \sup_{\nu\in\Delta(\cM\times\Pi)}\inf_{\pexp\in\Delta(\Pi)} \E_{\oM \sim \omu}\brac{ \EE_{(M,\opi)\sim\nu}\brac{\wDTV{ M(\opi), \oM(\opi)}}  - \gamma \EE_{M\sim\nu}\E_{\pi \sim \pexp}\brac{ \tDH{M(\pi), \oM(\pi)} } },
\end{align*}
where in the first inequality we take $\muout=\omu$, and the last line is due to strong duality. Now, for any $\nu\in\Delta(\cM\times\Pi)$, we have
\begin{align*}
\EE_{(M,\opi)\sim\nu}\brac{\wDTV{ M(\opi), \oM(\opi)}}
\leq&~
\sum_{h=1}^H 
\EE_{(M,\opi)\sim\nu}\EE_{\tau_h\sim \dtr_h(\opi;\oM)}\brac{\abs{\cE^{M; \oM}_h(\tau_h)}}\\
=&~
\sum_{h=1}^H 
\EE_{(M,\tau_h)\sim\nu_h}\brac{\abs{\cE^{M; \oM}_h(\tau_h)}}\\
\leq&~
\sum_{h=1}^H 
\eta\EE_{M\sim \nu_h,\tau_h\sim\nu_h}\abs{\cE^{M; \oM}_h(\tau_h)}^2+\eta^{-1}\dimc(\cG^{\oM}_h,\eta),
\end{align*}
where we define $\nu_h\in\Delta(\cM\times\cT_h)$ as the joint distribution of $(M,\tau_h)$ where $(M,\opi)\sim\nu$, $\tau_h\sim \dtr_h(\opi;\oM)$. By definition, $M\sim \nu\stackrel{d}{=} M\sim \nu_h$, and hence
\begin{align*}
\sum_h\EE_{M\sim \nu_h,\tau_h\sim\nu_h}\abs{\cE^{M; \oM}_h(\tau_h)}^2
=&~
\EE_{M\sim \nu}\EE_{\pi\sim\nu} \brac{ \sum_h\EE_{\tau_h\sim \dtr_h(\opi;\oM)}\abs{\cE^{M; \oM}_h(\tau_h)}^2 } \\
\leq&~
L\EE_{M\sim \nu}\EE_{\pi\sim\nu}\brac{\tDH{M(\piest), \oM(\piest)}}.
\end{align*}
Therefore, taking $\eta=\gamma/L$ gives
\begin{align*}
\EE_{(M,\opi)\sim\nu}\brac{\wDTV{ M(\opi), \oM(\opi)}}
\leq&~ \gamma \EE_{M\sim \nu}\EE_{\pi\sim\nu}\brac{\tDH{M(\piest), \oM(\piest)}} + \gamma^{-1}LH\max_h \dimc(\cG^{\oM}_h,\gamma/L).
\end{align*}
To finalize the proof, we note that
\begin{align*}
&~\mdec_\gamma(\cM,\omu)\\
\leq&~ \sup_{\nu\in\Delta(\cM\times\Pi)}\inf_{\pexp\in\Delta(\Pi)} \E_{\oM \sim \omu}\brac{ \EE_{(M,\opi)\sim\nu}\brac{\wDTV{ M(\opi), \oM(\opi)}}  - \gamma \EE_{M\sim\nu}\E_{\pi \sim \pexp}\brac{ \tDH{M(\pi), \oM(\pi)} } }\\
\leq&~ \sup_{\nu\in\Delta(\cM\times\Pi)} \E_{\oM \sim \omu}\brac{ \EE_{(M,\opi)\sim\nu}\brac{\wDTV{ M(\opi), \oM(\opi)}}  - \gamma \EE_{M\sim\nu}\E_{\pi \sim \nu}\brac{ \tDH{M(\piest), \oM(\piest)} } }\\
\leq&~
\gamma^{-1}LH\dimG(\cG,\gamma/L).
\end{align*}
This is the desired result.
\qed

\subsection{Proof of Proposition~\ref{prop:belrep-pb}}\label{appdx:proof-pbrl-belrep}

Before presenting the proof, we first remark that \cref{prop:belrep-pb} can be proven by utilizing \cref{thm:PbRL-to-RL} and establishing \mbelrep~for each model class $\cM_q$ ($q\in\Delta(\Pi)$). However, the following proof is conceptually simpler, as it is analogous to our proof of \cref{prop:belrep-pac} (by combining \cref{prop:dec-to-psc} and \cref{prop:psc-belrep}).

\begin{proof}
By definition, it suffices to bound $\decpb_\gamma(\cMpb, \omu)$ for any fixed $\omu\in\Delta(\cMpb)$. For notational simplicity, for $\bpi=(\pi_1,\pi_2)\in\Piab$, we denote
\begin{align*}
    \Gap_M(\pi_\star,\bpi)
    =
    \Cmp^M(\pi_\star,\pi_1)+\Cmp^M(\pi_\star,\pi_2)-1.
\end{align*}
Then we can rewrite
\begin{align*}
    \pbdec_\gamma(\cMpb, \omu)  \defeq \inf_{p\in\Delta(\Piab)} \sup_{\substack{M\in\cMpb\\\pi_\star\in\Pi}} \E_{\bpi\sim p} \E_{\oM\sim\omu}\brac{ \Gap_M(\pi_\star,\bpi) - \gamma\dH^2(M(\bpi), \oM(\bpi))}.
\end{align*}
By strong duality (\cref{thm:strong-dual}), we have
\begin{align*}
    \pbdec_\gamma(\cMpb, \omu)  
    & = \inf_{p\in\Delta(\Piab)} \sup_{\mu\in\Delta(\cMpb\times\Pia)} \E_{(M,\pi_\star)\sim\mu,\oM\sim\omu}\E_{\bpi\sim p} \brac{ \Gap_M(\pi_\star,\bpi) - \gamma\dH^2(M(\bpi), \oM(\bpi))}\\
    & = \sup_{\mu\in\Delta(\cMpb\times\Pia)} \inf_{p\in\Delta(\Piab)}  \E_{(M,\pi_\star)\sim\mu,\oM\sim\omu}\E_{\bpi\sim p} \brac{ \Gap_M(\pi_\star,\bpi) - \gamma\dH^2(M(\bpi), \oM(\bpi))}.
\end{align*}
Now, fix any $\mu\in\Delta(\cMpb\times\Pi)$, we pick $p\in\Delta(\Piab)$ to be the distribution of $\bpi=(\pi_1,\pi_2)$, $\pi_1\sim \mu, \pi_2\sim\mu$. For this choice of $p$, we can compute the quantity inside the sup-inf above as follows. First, by probability matching, we have
\begin{align*}
    \E_{(M,\pi_\star)\sim\mu}\E_{\bpi\sim p} \brac{ \Gap_M(\pi_\star,\bpi) }
    =&~
    \E_{(M,\pi_\star)\sim\mu}\E_{\bpi\sim p} \brac{ \Cmp^M(\pi_\star,\pi_1)+\Cmp^M(\pi_\star,\pi_2) } - 1\\
    =&~
    2\E_{(M,\pi_\star)\sim\mu}\E_{\pi\sim \mu} \brac{ \Cmp^M(\pi_\star,\pi) } - 1\\
    =&~
    2\E_{(M,\pi_\star)\sim\mu}\E_{\pi\sim \mu} \brac{ \Cmp^M(\pi_\star,\pi) } - 2\E_{\pi_\star\sim\mu,\pi\sim \mu} \brac{ \Cmp^\oM(\pi_\star,\pi) }\\
    =&~
    2\E_{(M,\pi_\star)\sim\mu}\E_{\pi\sim \mu} \brac{ \Cmp^M(\pi_\star,\pi)-\Cmp^\oM(\pi_\star,\pi) }\\
    \leq&~
    2\E_{(M,\pi_\star)\sim\mu}\E_{\pi\sim \mu} \brac{ \DTV{M(\pi_\star,\pi),\oM(\pi_\star,\pi)} },
\end{align*}
where the third line is because for any $q\in\Delta(\Pia)$, we have
\begin{align*}
    \EE_{\pi_1\sim q, \pi_2\sim q}\brac{ \Cmp^\oM(\pi_1,\pi_2)} = \frac12
\end{align*}
by symmetry.
Therefore,
\begin{align*}
    \pbdec_\gamma(\cMpb,\omu) & = \sup_{\mu\in\Delta(\cMpb\times\Pia)} \inf_{p\in\Delta(\Piab)}  \E_{(M,\pi_\star)\sim\mu,\oM\sim\omu}\E_{\bpi\sim p} \brac{ \Gap_M(\pi_\star,\bpi) - \gamma\dH^2(M(\bpi), \oM(\bpi))}\\
    &\leq 
    \sup_{\mu\in\Delta(\cMpb\times\Pia)} 2\E_{(M,\pi_\star)\sim\mu, \pi\sim \mu, \oM\sim\omu} \brac{ \DTV{M(\pi_\star,\pi),\oM(\pi_\star,\pi)} }\\
    &\qquad\qquad\qquad\qquad\qquad-\gamma \E_{M\sim\mu,\pi_1\sim\mu,\pi_2\sim\mu,\oM\sim\omu}\brac{ \DH{M(\pi_1,\pi_2), \oM(\pi_1,\pi_2)}}\\
    &\leq \sup_{\oM\in\cMpb}
    \sup_{\mu\in\Delta(\cMpb\times\Pia)} 2\E_{(M,\pi_\star)\sim\mu, \pi\sim \mu} \brac{ \DTV{M(\pi_\star,\pi),\oM(\pi_\star,\pi)} }\\
    &\qquad\qquad\qquad\qquad\qquad-\gamma \E_{M\sim\mu,\pi_1\sim\mu,\pi_2\sim\mu}\brac{ \DH{M(\pi_1,\pi_2), \oM(\pi_1,\pi_2)}}.
\end{align*}
Notice that
\begin{align*}
    \DTV{M(\pi_\star,\pi),\oM(\pi_\star,\pi)}
    \leq&~ \DTV{ \Mtraj(\pi_\star), \oMtraj(\pi_\star)} + \DTV{ \Mtraj(\pi), \oMtraj(\pi)} \\
    &~+ \EE_{(\tau_1,\tau_2)\sim \oM(\pi_\star,\pi)}\abs{\Cmp^M(\tau_1,\tau_2)-\Cmp^\oM(\tau_1,\tau_2)}.
\end{align*}
By the proof of \cref{prop:belrep-am} (\cref{appendix:bellman-rep-medec-dcpl}), we have
\begin{align}\label{eqn:pbrl-proof-part-1}
\begin{aligned}
    \E_{(M,\pi_\star)\sim\mu} \brac{ \DTV{ \Mtraj(\pi_\star), \oMtraj(\pi_\star)} } 
    \leq&~ \frac{LH\dimG(\cG,\eta_1)}{\eta_1} + \eta_1\E_{M\sim\mu,\pi_\star\sim\mu} \brac{ \DH{ \Mtraj(\pi_\star), \oMtraj(\pi_\star)} }.
\end{aligned}
\end{align}
Second, by AM-GM inequality, we have
\begin{align}\label{eqn:pbrl-proof-part-2}
\begin{aligned}
    \E_{M\sim \mu,\pi\sim\mu}\brac{ \DTV{ \Mtraj(\pi), \oMtraj(\pi)} } 
    \leq&~ \E_{M\sim \mu,\pi\sim\mu} \brac{ \dH\paren{ \Mtraj(\pi), \oMtraj(\pi)} }\\
    \leq&~ \frac{1}{\eta_2}+\eta_2
    \E_{M\sim\mu,\pi\sim\mu} \brac{ \DH{ \Mtraj(\pi), \oMtraj(\pi)} }\\
\end{aligned}
\end{align}
Third, we can consider $\nu\in\Delta(\cF\times\cT)$ that equals distribution of $(\Cmp^M,(\tau_1,\tau_2))$ with $(M,\pi_\star)\sim\mu,\pi\sim\mu, (\tau_1,\tau_2)\sim \oM(\pi_\star,\pi)$. Then
\begin{align}\label{eqn:pbrl-proof-part-3}
\begin{aligned}
    \MoveEqLeft
    \E_{(M,\pi_\star)\sim\mu,\pi\sim\mu} \EE_{(\tau_1,\tau_2)\sim \oM(\pi_\star,\pi)}\abs{\Cmp^M(\tau_1,\tau_2)-\Cmp^\oM(\tau_1,\tau_2)} 
    = \EE_{(\Cmp,\tau)\sim \nu} \abs{\Cmp(\tau)-\Cmp^\oM(\tau)} \\
    \leq&~ \frac{\dimc(\cC-\Cmp^\oM,\eta_3)}{\eta_3}
    +\eta_3\EE_{\Cmp\sim \nu,\tau\sim \nu} \abs{\Cmp(\tau)-\Cmp^\oM(\tau)}^2 \\
    =&~  \frac{\dimc(\cC-\Cmp^\oM,\eta_3)}{\eta_3}
    +\eta_3\EE_{M\sim \mu}\EE_{\bpi=(\pi_1,\pi_2)\sim\mu, (\tau_1,\tau_2)\sim \oM(\bpi)} \abs{\Cmp^M(\tau_1,\tau_2)-\Cmp^\oM(\tau_1,\tau_2)}^2,
\end{aligned}
\end{align}
where the inequality follows from the definition of $\dimc(\cC-\Cmp^\oM,\eta_3)$.

On the other hand, for $M\in\cMpb$, $\bpi=(\pi_1,\pi_2)\in\Piab$, by data-processing inequality, it holds that
\begin{align}\label{eqn:pbrl-hell}
    \DH{M(\bpi),\oM(\bpi)}
    \geq \DH{\Mtraj(\pi_i),\oMtraj(\pi_i)}, \qquad i=1,2.
\end{align}
By \cref{lemma:Hellinger-cond}, we also have
\begin{align}\label{eqn:pbrl-hell-2}
    2\DH{M(\bpi),\oM(\bpi)}
    \geq& \EE_{(\tau_1,\tau_2)\sim \oM(\bpi)} \brac{ \DH{\MP(b=\cdot|\tau_1,\tau_2), \oMP(b=\cdot|\tau_1,\tau_2)} }\\
    =&
    \EE_{(\tau_1,\tau_2)\sim \oM(\bpi)} \brac{ \DH{\Cmp^M(\tau_1,\tau_2), \Cmp^\oM(\tau_1,\tau_2)} }\\
    \geq&
    \EE_{(\tau_1,\tau_2)\sim \oM(\bpi)} \brac{ \abs{\Cmp^M(\tau_1,\tau_2)-\Cmp^\oM(\tau_1,\tau_2)}^2 }.
\end{align}
Combining \eqref{eqn:pbrl-proof-part-1} \eqref{eqn:pbrl-proof-part-2} \eqref{eqn:pbrl-proof-part-3} \cref{eqn:pbrl-hell} \cref{eqn:pbrl-hell-2}, we obtain that
\begin{align*}
    \E_{(M,\pi_\star)\sim\mu, \pi\sim \mu} \brac{ \DTV{M(\pi_\star,\pi),\oM(\pi_\star,\pi)} }
    \leq &~ \frac{LH\dimG(\cG,\eta_1)}{\eta_1}+\frac{1}{\eta_2}+\frac{\odimc(\cC,\eta_3)}{\eta_3} \\
    &~+ (\eta_1+\eta_2+\eta_3) \E_{M\sim\mu,\bpi=(\pi_1,\pi_2)\sim\mu} \brac{ \DH{ M(\bpi), \oM(\bpi)} }.
\end{align*}
We now take $\eta_1=\eta_2=\eta_3=\gamma/6$. By the arbitrariness of $\oM$, the proof is completed.
\end{proof}

\subsection{Additional examples from Table~\ref{tab:examples}}\label{} 
In this section, we briefly discuss our results for tabular MDPs and linear MDPs (as described in \cref{tab:examples}).

\begin{example}[Tabular MDPs]\label{example:tabular-demo}
For tabular MDPs, our framework implies a regret bound of $\tO(\sqrt{|\cS|^3|\cA|^2H^3T})$ and reward-free bound $\tO(|\cS|^3|\cA|^2H^3/\eps^2)$ (cf. \cref{appendix:proof-tabular}). These bounds are worse than the optimal $\tO(\sqrt{|\cS||\cA|HT})$ regret bound~\citep{azar2017minimax} %
and $\tO(|\cS|^2|\cA|\poly(H)/\eps^2)$ reward-free bound~\citep{jin2020reward}, which is expected as our unified algorithms do not utilize the specific structure of tabular problems, and is a worthy direction for future study.
\end{example}

\begin{example}[Linear MDPs, \citet{jin2020provably}]\label{example:linear-mdp-demo}
A MDP $M$ is called a linear MDP (with respect to a known $d$-dimensional feature map $\phi$) if there exists maps $\mu_h^M:\cS\to\R^d$, such that for the given features $\phi_h:\cS\times\cA\to\R^d$, it holds that
$$
\PP_h^M(s'|s,a)=\iprod{\mu_h^M(s')}{\phi_h(s,a)}.
$$
\end{example}
For linear MDPs, our framework implies a sample complexity of $\tO(dH^2\log|\cM|/\eps^2)$ for learning an $\eps$-approximate model, which is new in this setting. 

\subsection{Proofs for problem classes in Table~\ref{tab:examples}}
\label{appendix:proof-examples}

\newcommand{\alldec}{\set{\text{dec, edec, rfdec, amdec, pbdec}}_{\gamma}(\cM)}
\newcommand{\allexpdec}{\set{\text{edec, rfdec, amdec}}_{\gamma}(\cM)}
\newcommand{\allregdec}{\set{\text{dec, pbdec}}_{\gamma}(\cM)}

For each problem class in \cref{tab:examples}, the proof of its results is presented in the corresponding subsections. The proofs follow the following ``streamline'': (1) identify a suitable (strong) \belrep~$\cG$ of the model class, (2) upper bound the decoupling dimension $\dimG(\cG,\gamma)$ by investigating the structure of $\cG$ (e.g. linear structure or low coverability), and (3) apply \cref{prop:belrep-pac} (or \cref{prop:belrep-rf}, \cref{prop:belrep-am}, \cref{prop:belrep-pb}) to conclude the desired results. It is worth noting that our bounds on (strong) \belrep~do not require the policy to be Markov. Therefore, we may work with any policy class $\Pi$ that possibly contains general history-dependent policies, e.g. in the setting of PbRL with general trajectory preferences and partially observable RL. 

As a remark, the upper bounds of PBDEC (cf. \cref{prop:belrep-pb}) additionally have an extra term $\tbO{\frac{d_{\Cmp}}{\gamma}}$, as long as $\odimc(\cC,\gamma)\leq \tO(d_{\Cmp})$ for the corresponding comparison function class $\cC$. To simplify the presentation, we assume that $\odimc(\cC,\gamma)$ is of lower order compared to $\dimG(\cG,\gamma)$ for the remainder of this section. %

\paragraph{A useful \mbelrep}
In the following, we define a \mbelrep~for any MDP model class, given by per-state TV distance, which is useful for the proofs in this section.

\newcommand{\GTV}{\cG_{\rm TV}}
\begin{definition}
\label{example:mer}
For a model class $\cM$ of MDPs, we consider the following \mbelrep~of $\cM$, which we term as $\GTV$: 
\begin{itemize}%
\item (Index set) For each $h\in[H]$, the index set is $\cT_h=\cS\times\cA$.
\item (Error functions and distributions) For each $M,\oM\in\cM$,
\begin{align*}
    \dtr_h(M;\oM)\defeq&~\PP^{\oM,\pi_M}(s_h=\cdot,a_h=\cdot)\in\Delta(\cS\times\cA), \\
    \cerr_h^{M; \oM}(s_h,a_h) \defeq&~ \DTV{ \PP^M_h(\cdot|s_h,a_h), \PP^{\oM}_h(\cdot|s_h,a_h) }+ \abs{R^M_h(s_h,a_h) - R^{\oM}_h(s_h,a_h)}.
\end{align*}
\item The exploration policies are given by $\piest_M=\pi_M$ and the constant $L=4H$.
\end{itemize}
\end{definition}

\subsubsection{Example \ref{example:tabular-demo}: tabular MDPs}
\label{appendix:proof-tabular}

Consider the model class $\cM$ of tabular MDPs with state space $\cS$, action space $\cA$, its covering number $\log\cN(\cM,\rho)=\tbO{S^2AH}$, where $S=\abs{\cS}$ and $A=\abs{\cA}$ (see the following \cref{example:cover-tabular}).
Applying \cref{example:dc-linear}, we know that its \belrep~$\GBE$ (\cref{def:bellman-err}) admits $\dimG(\GBE,\gamma)\leq SA$, and its \mbelrep~$\GTV$ (\cref{example:mer}) also admits $\dimG(\GTV,\gamma)\leq SA$. Therefore, we have
\begin{align*}
    \alldec\leqsim \frac{SAH^2}{\gamma}, \qquad 
    \forall \gamma>0.
\end{align*}
The results of tabular MDPs in \cref{tab:examples} now follow. \qed

In the following, we demonstrate briefly how to construct an optimistic covering of the class of tabular MDPs. Without loss of generality, we only cover the class of transition dynamic $\Pm$.
\begin{example}[Optimistic covering of tabular MDP]\label{example:cover-tabular}
Consider $\cM$, the class of MDPs with $S$ states, $A$ actions, $H$ steps. Fix a $\rho_1\in(0,1]$, and $\rho=\rho_1^2/eHS$. For $M\in\cM$, we compute its $\rho_1$-optimistic likelihood function as follows: define
\begin{align}\label{eqn:opt-cover-tabular}
\tPP^{M}_h(s'|s,a)\defeq \rho\ceil{\frac{1}{\rho}\PP^M_h(s'|s,a)}, \qquad
\tPP^M_1(s)\defeq \rho\ceil{\frac{1}{\rho}\PP^M_1(s)},
\end{align}
and for any policy $\pi$, we let
\begin{align}\label{eqn:proof-cover-tabular-opt-like}
\begin{split}
&\tPP^{M,\pi}(s_1,a_1,\cdots,s_{H},a_{H}) \\
:=&\tPP^M_1(s_1)\tPP^M_1(s_2|s_1,a_1)\cdots\tPP^M_{H-1}(s_H|s_{H-1},a_{H-1})\times \pi(s_1,a_1,\cdots,s_H,a_H)\\
=&\pi(s_1,a_1,\cdots,s_H,a_H)\times \tPP^M_1(s_1)\times\prod_{h=1}^{H-1}\tPP^M(s_{h+1}|s_h,a_h),
\end{split}
\end{align}
where for general (possibly non-Markovian) policy $\pi$, we write 
\begin{align}\label{eqn:proof-cover-tabular-policy}
    \pi(s_1,a_1,\cdots,s_H,a_H)\defeq \prod_{h=1}^{H}\pi(a_h|s_{1:h},a_{1:h-1}).
\end{align}
A direct calculation shows that $\tPP^{M,\pi}\geq \PP^{M,\pi}$ for all $\pi$, and $\|\tPP^{M,\pi}(\cdot)-\PP^{M,\pi}(\cdot)\|_1\leq \rho_1^2$. Clearly, there are at most $\ceil{1/\rho}^{S^2AH}$ different optimistic likelihood functions defined by \eqref{eqn:opt-cover-tabular}, and we can form $\cM_0$ by picking a representative in $\cM$ for each optimistic likelihood function (if possible). Then, $\log\abs{\cM_0}=\bigO{S^2AH\log(SH/\rho_1)}$. \exend
\end{example}

\subsubsection{Example \ref{example:linear-mixture-demo}: linear mixture MDPs}
\label{appendix:proof-linear-mixture}

Following the commonly used definition of linear mixture MDPs~\citep{chen2021near}, we also assume that the mean reward function has the form
\begin{align}
\label{equation:linear-mixture-linear-reward}
    R_h^M(s,a)=\iprod{\theta_h^M}{\phi_h'(s,a)}, \qquad \forall (h,s,a)\in[H]\times\cS\times\cA,
\end{align}
where $\phi_h':\cS\times\cA\to\R^d$ are also known maps. 
We remark that the linear reward assumption~\cref{equation:linear-mixture-linear-reward} is needed for the no-regret and model-estimation settings, but not needed for the reward-free setting (where the rewards can be arbitrary measurable functions).

Now, suppose that $\cM$ is a class of linear mixture MDP models with the given feature map $\phi$. 
We then have $\log\cN(\cM,\rho)=\tbO{dH}$ (see \cref{prop:cover-linear-mixture}). It remains to bound the decoupling dimension of a \mbelrep~of $\cM$.

\paragraph{\mBelrep~of $\cM$}
Consider the \mbelrep~$\GTV$ (\cref{example:mer}) of $\cM$.
By definition,
\begin{align*}
    \cE^{M;\oM}_h(s_h,a_h)
    =&\DTV{ \PP^M_h(\cdot|s_h,a_h), \PP^{\oM}_h(\cdot|s_h,a_h) }+ \abs{R^M_h(s_h,a_h) - R^{\oM}_h(s_h,a_h)}\\
    =&\abs{\iprod{\theta_h^M-\theta_h^{\oM}}{\phi_h'(s_h,a_h)}}+\sum_{s'} \abs{\iprod{\theta_h^M-\theta_h^{\oM}}{\phi_h(s'|s_h,a_h)}}.
\end{align*}
Therefore, \cref{cor:dc-linear-max}(2) gives that $\dimG(\cG^{\oM}_h,\gamma)\leq d$ for all $h\in[H]$. Thus, 
\begin{align*}
    \alldec\leqsim \frac{dH^2}{\gamma}.
\end{align*}
\qed

The following proposition provides an upper bound on the covering number of $\cM$ via a concrete construction. We assume that the initial state distribution is known. %
\begin{proposition}[Optimistic covering for linear mixture MDPs]
\label{prop:cover-linear-mixture}
Suppose that $\cM$ consists of linear mixture MDPs with $d$-dimensional feature map $\phi$. 
Further assume that $\ltwo{\sum_{s'}\phi_h(s'|s,a)V(s')}\leq 1$ for all $V:\cS\to[0,1]$ and tuple $(s,a,h)\in\cS\times\cA\times[H]$ (as in \citet{ayoub2020model}), and for any $M\in\cM$, $M$ is parameterized by parameter $(\theta_h)_h$ such that $\ltwo{\theta_h}\leq B$ for all $h\in[H]$.
Then for any $\rho>0$, there exists a $\rho$-optimistic covering $(\tPP,\cM_0)$ with $\log\abs{\cM_0}=\tbO{dH}$.
\end{proposition}

\subsubsection{Example \ref{example:linear-mdp-demo}: linear MDPs}\label{appendix:proof-linear-mdps}

Following the common definition of linear MDP \citep{jin2020provably}, we also assume that the mean reward function has the form
\begin{align}\label{equation:linear-mdp-linear-reward}
    R_h^M(s,a)=\iprod{\theta_h^M}{\phi_h(s,a)}, \qquad \forall (h,s,a)\in[H]\times\cS\times\cA,
\end{align}
where $(\theta_h^M)_h$ are parameters associated with $M$.
We remark again that the linear reward assumption~\cref{equation:linear-mdp-linear-reward} is needed for the no-regret and model-estimation settings, but not needed for the reward-free setting (where the rewards can be arbitrary measurable functions).

Now, suppose that $\cM$ is a class of linear MDPs with the given feature map $\phi$.

\paragraph{\Belrep~of $\cM$} It is direct to see the \belrep~$\GBE$ (\cref{def:bellman-err}) of $\cM$ has $\dimG(\GBE,\gamma)\leq d$ (\cref{example:dc-linear}).

\paragraph{\mBelrep~of $\cM$} 
Consider the \mbelrep~$\GTV$ (\cref{example:mer}) of $\cM$. By definition,
\begin{align*}
    \cE^{M;\oM}_h(s_h,a_h)
    =&\DTV{ \PP^M_h(\cdot|s_h,a_h), \PP^{\oM}_h(\cdot|s_h,a_h) }+ \abs{R^M_h(s_h,a_h) - R^{\oM}_h(s_h,a_h)}\\
    =&\abs{\iprod{\theta_h^M-\theta_h^{\oM}}{\phi_h(s_h,a_h)}}+\sum_{s'} \abs{\iprod{\mu_h^M(s')-\mu_h^{\oM}(s')}{\phi_h(s_h,a_h)}}.
\end{align*}
Hence, by \cref{cor:dc-linear-max}, $\dimG(\cG^{\oM}_h,\gamma)\leq d$ for all $h\in[H]$. Thus, 
\begin{align*}
    \alldec\leqsim \frac{dH^2}{\gamma}.
\end{align*}
\qed

\subsubsection{Example \ref{example:low-rank-demo}: low-rank MDPs}
\label{appendix:proof-low-rank}

We also consider the broader class of MDPs with low occupancy rank~\citep{du2021bilinear}:
\begin{definition}[Occupancy rank]
We say a MDP model $M$ is of occupancy rank $d$ if for all $h\in[H]$, there exists map $\phi_h^M:\Pi\to\R^d$, $\psi_h^M:\cS\to\R^d$, such that
\begin{align*}
    \PP^{M,\pi}(s_h=s)=\iprod{\psi_h^M(s)}{\phi_h^M(\pi)},\qquad \forall s\in\cS,\pi\in\Pi.
\end{align*}
\end{definition}
By definition, low-rank MDP with rank $d$ is of occupancy rank $d$. 

\paragraph{\mBelrep~of low-rank MDPs} For a model class $\cM$ consisting of MDPs with occupancy rank $d$, its \mbelrep~$\GTV$ satisfies that $\Cov(\cQ_h^{M})\leq dA$ for all $M\in\cM$ and $h\in[H]$. This is because, by \cref{lem:cov-to-rank} there exists $\mu^M_h\in\Delta(\cS)$ such that $\PP^{M,\pi}(s_h=s) \leq d\mu^M_h(s)$, and hence
\begin{align*}
    \PP^{M,\pi}(s_h=s,a_h=a)\leq dA \cdot \frac{\mu^M_h(s)}{A}.
\end{align*}
Therefore, by \cref{example:dc-to-cov}, we have $\dimG(\GTV,\gamma)\leq d\abs{\cA}$.
Thus,
\begin{align*}
    \alldec\leqsim \frac{d|\cA|H^2}{\gamma}.
\end{align*}
\qed

\subsubsection{MDP with coverability}\label{appendix:proof-cov-mdp}

Recall that the coverability $C(M)$ of a MDP $M$ is defined in \cref{def:mdp-cov}.

\paragraph{\mBelrep~for MDPs with coverability} Suppose that $\cM$ is a class of MDPs such that $C(M)\leq C$ for all $M\in\cM$. Then the \mbelrep~$\GTV$ of $\cM$ (\cref{example:mer}) has $\Cov(\cQ_h^{\oM})\leq C$ for all $\oM\in\cM, h\in[H]$ (\cref{example:dc-to-cov}). Hence, by \cref{example:dc-to-cov}, $\dimG(\GTV,\gamma)\leq C$, and
\begin{align*}
    \alldec\leqsim \frac{CH^2}{\gamma}.
\end{align*}
\qed

\subsubsection{Example \ref{example:para-mdp-demo}: parametric MDPs}\label{appendix:proof-exp-family}

We consider the following general definition of parametric MDPs (with a general family of transition distributions), which generalizes the parametric MDPs with exponential family \citep{chowdhury2021reinforcement}.
\begin{example}[Parametric MDPs]\label{def:parametric-mdp}
    Let $\cP=\{P_{s,a}(\cdot|\theta)\}_{s,a,\theta}$ be a given distribution family. A MDP model $M$ is a parametric MDP (with respect to $\cP$) if there is a feature map $\phi^M$ such that
    \begin{align*}
        \PP^M_h(\cdot|s,a)=P_{s,a}(\cdot|\phi^M_h(s,a)), \qquad \forall (s,a)\in\cS\times\cA\times[H].
    \end{align*}
    We assume that $\cP$ is $(\alpha,\beta)$-\emph{smooth}:
    \begin{align*}
        \dTV\paren{P_{s,a}(\cdot|\theta), P_{s,a}(\cdot|\theta')}\leq \beta\nrm{\theta-\theta'}, \qquad
        \dH\paren{P_{s,a}(\cdot|\theta), P_{s,a}(\cdot|\theta')}\geq \min\set{\alpha\nrm{\theta-\theta'}, 1}, \qquad \forall s,a, \theta,\theta'.
    \end{align*}
    Let $\kappa=\beta/\alpha$. Consider the \mbelrep~$\cG$ of $\cM$ given by
    \begin{align*}
        \cerr^{M;\oM}_h(s_h,a_h)=\beta\nrm{ \phi^M_h(s_h,a_h)-\phi^{\oM}_h(s_h,a_h) } \wedge 1, \qquad \dtr_h(\pi;\oM)=\PP^{\oM,\pi}(s_h=\cdot,a_h=\cdot),
    \end{align*}
    and $L=2H\kappa^2$. We write $\cH_h=\set{\phi^M_h:M\in\cM}$ for $h\in[H]$. Then by \cref{example:dc-to-es} it holds that $\dimG(\cG,\gamma)\leqsim \max_{h}\oeluder(\cH_h,1/\gamma)\log^2(\gamma)$, and hence
    \begin{align*}
        \alldec\leqsim \frac{\max_h\oeluder(\cH_h,1/\gamma)\cdot \kappa^2H^2 \log^2(\gamma)}{\gamma}, \qquad \forall \gamma\geq e.
    \end{align*}
    In particular, suppose that for some $d>0$ we have $\oeluder(\cH_h,\Delta)=\tO(d)$ and $\log \cN(\cH_h,\rho)=\tO(d)$ (e.g. $\cH_h$ being a linear class \citep{chowdhury2021reinforcement}), then we have the sample complexity upper bound $\tO(\kappa^2d^2H^3/\eps^2)$ for PAC RL (and also reward-free learning and model estimation), and we also have a $\tO(\sqrt{\kappa^2d^2H^3T})$ regret for no-regret learning (and also preference-based RL if we further assume $d_{\Cmp}\leq d$), as promised in \cref{tab:examples}.
\exend
\end{example}

\newcommand{\lammin}{\lambda_{\min}}
\newcommand{\lammax}{\lambda_{\max}}

In the following example, we instantiate \cref{def:parametric-mdp} to parametric MDPs with exponential family, where we translate the assumption on minimum eigenvalues in \citet{chowdhury2021reinforcement} to \cref{eqn:assumption-para-exp} (see also \cref{eqn:nabla-Z}).
\begin{example}[Parametric MDPs with exponential family \citep{chowdhury2021reinforcement}]\label{def:exp-family-mdp}
Consider the distribution family $\cP_{\exp}=\{P_{s,a}(\cdot|\theta)\}_{s,a,\theta}$ given by
\begin{align*}
    P_{s,a}(ds'|\theta)=\exp\paren{ \iprod{ \mu(s') }{ \theta } - Z_{s,a}(\theta) }\cdot p(ds'|s,a), \qquad
    Z_{s,a}(\theta) \defeq \log \int_{\cS} \exp\paren{ \iprod{ \mu(s') }{ \theta }}\cdot p(ds'|s,a),
\end{align*}
where $\mu:\cS\to\R^n$ is a known feature map, $p:\cS\times\cA\to L^1(\cS)$ is a known base measure. 

Suppose that $\cM$ is a class of MDP parameterized by $\cP_{\exp}$ and feature map $\phi$, and assume that for each pair of $(s,a)$, it holds that
\begin{align}\label{eqn:assumption-para-exp}
    \lammin I\preceq \nabla^2 Z_{s,a}(\theta)\preceq \lammax I, 
    \qquad \forall\theta\in\conv\set{\phi^M_h(s,a): M\in\cM, h\in[H]}.
\end{align}
Then clearly $\cP$ is $(\Theta(\sqrt{\lammin}),\Theta(\sqrt{\lammax}))$-smooth (see \cref{appdx:proof-exp-family}), and by \cref{def:parametric-mdp} we have
\begin{align*}
    \alldec\leqsim \frac{\max_h\oeluder(\cH_h,1/\gamma)\cdot \kappa^2H^2 \log^2(\gamma)}{\gamma}, \qquad \forall \gamma\geq e,
\end{align*}
where we write $\kappa=\beta/\alpha$, $\cH_h=\set{\phi^M_h:M\in\cM}$.
\exend
\end{example}
We remark that 
\begin{align}\label{eqn:nabla-Z}
    \nabla^2 Z_{s,a}(\theta) = \EE_{s'\sim P_{s,a}(\cdot|\theta)} \brac{ \mu(s')\mu(s')^\top } - \EE_{s'\sim P_{s,a}(\cdot|\theta)} \brac{ \mu(s')} \EE_{s'\sim P_{s,a}(\cdot|\theta)} \brac{ \mu(s')}^\top,
\end{align}
and hence \eqref{eqn:assumption-para-exp} agrees with the common assumptions considered in \citet{chowdhury2021reinforcement,li2022exponential,ouhamma2023bilinear}.

\cref{def:exp-family-mdp} also encompasses the problem of Online Nonlinear Control \citep{kakade2020information, ren2022free,chowdhury2021reinforcement}, an important concrete class of MDPs whose transition is given by
\begin{align}\label{eqn:ONR}
    s_{h+1}=\phi^M(s_h,a_h)+\sigma_h, \qquad \sigma_h\sim\normal{0,\bSigma}.
\end{align}
The results above directly recover the regret bounds of \citet{ren2022free},
and also gives sample complexity of reward-free learning, model estimation and preference-based RL in Online Nonlinear Control (see also the discussion in \cref{appdx:proof-exp-family}).

\subsubsection{Example \ref{example:pomdp-demo}: Partially observable RL}\label{appendix:POMDP}

\newcommand{\md}{{M}}
\newcommand{\dPSR}{d_{\rm PSR}}
\newcommand{\Test}{\mathfrak{T}}

In the following, we first introduce the notations of partially observable RL, following \citet{chen2022partially}.

\paragraph{Sequential decision processes with observations} 
An episodic sequential decision process model $M$ can be specified by a tuple $\set{H,\cO,\cA,\PP^M,\{R_{h}\}_{h \in [H]}}$, where $H\in\Z_{\ge 1}$ is the horizon length; $\cO$ is the observation space; $\cA$ is the action space; $\PP^M$ specifies the transition dynamics in the model $M$, such that the initial observation follows $o_1\sim \PP_0^M(\cdot) \in \Delta(\mathcal{O})$, and given the \emph{history} $\tau_h\defeq (o_1,a_1,\cdots,o_h,a_h)$ up to step $h$, the observation follows $o_{h+1}\sim\PP^M(\cdot|\tau_{h})$;
$R_h:\cO\times\cA\to[0,1]$ is the reward function at $h$-th step, which we assume is a known deterministic function of $(o_h,a_h)$ for simplicity.

An episodic sequential decision process $M$ can be cast as a DMSO problem directly, as follows. The observation is $\tau_H=(o_1,a_1,\cdots,o_H,a_H)$. A policy $\pi = \{\pi_h: (\cO\times\cA)^{h-1}\times\cO\to\Delta(\cA) \}_{h \in [H]}$ is a collection of $H$ functions. At step $h\in[H]$, an agent running policy $\pi$ observes the observation $o_h$ and takes action $a_{h}\sim \pi_h(\cdot|\tau_{h-1}, o_h)\in\Delta(\cA)$ based on the history $(\tau_{h-1},o_h)=(o_1,a_1,\dots,o_{h-1},a_{h-1},o_h)$. The environment then generates the next observation $o_{h+1}\sim\PP^M(\cdot|\tau_h)$ based on $\tau_h=(o_1,a_1,\cdots,o_h,a_h)$ (if $h < H $). The episode terminates immediately after $a_H$ is taken. The policy class $\Pi$ is the set of all such history-dependent policies.

\paragraph{POMDPs} A Partially Observable Markov Decision Process (POMDP) is a special sequential decision process whose transition dynamics are governed by \emph{latent states}. An episodic POMDP model $M$ is specified by a tuple $\{H,\cS,\cO,\cA,\{\T_h^M\}_{h=1}^{H},\{\O_h^M\}_{h=1}^{H},\{R_{h}\}_{h=1}^{H},\mu_1 \}$, where $\cS$ is the latent state space with $\abs{\cS}=S$,  $\O_h^M(\cdot|\cdot):\cS\to\Delta(\cO)$ is the emission dynamics at step $h$, $\T_h^M(\cdot|\cdot,\cdot):\cS\times\cA\to\Delta(\cS)$ is the transition dynamics over the latent states (which we identify as transition matrices $\T_h^M(\cdot|\cdot,a)\in\R^{\cS\times \cS}$ for each $a\in\cA$), and $\mu_1^M\in\Delta(\cS)$ specifies the distribution of initial state. At each step $h$, given latent state $s_h$ (which the agent cannot observe), the system emits observation $o_h\sim \O_h^M(\cdot|s_h)$, receives action $a_h\in\cA$ from the agent, emits the (known) reward $R_h(o_h,a_h)$, and then transits to the next latent state $s_{h+1}\sim \T_h^M(\cdot|s_h, a_h)$ in a Markov fashion. 
Note that (with known rewards) a POMDP $M$ can be fully described by the parameter $(\T^M,\O^M,\mu_1^M)$. %

\paragraph{PSR, core test sets, and predictive states} 
A \emph{test} $t$ is a sequence of future observations and actions (i.e. $t\in\Test:=\bigcup_{W \in\Z_{\ge 1}}\cO^W \times\cA^{W -1}$). For some test $t_h=(o_{h:h+W-1},a_{h:h+W-2})$ with length $W\ge 1$, we define the probability of test $t_h$ being successful conditioned on (reachable) history $\tau_{h-1}$ as $\PP^M(t_h|\tau_{h-1})\defeq \PP^M(o_{h:h+W-1}|\tau_{h-1};\doac(a_{h:h+W-2}))$,
i.e., the probability of observing $o_{h:h+W-1}$ under model $M$ if the agent deterministically executes actions $a_{h:h+W-2}$, conditioned on history $\tau_{h-1}$. %

\begin{definition}[PSR, core test sets, and predictive states]\label{def:core-test}
For any $h\in[H]$, we say a set $\Uh\subset\Test$ is a \emph{core test set} for model $M$ at step $h$ if the following holds: For any $W\in\Z_{\ge 1}$, any possible future (i.e., test) $t_h=(o_{h:h+W-1},a_{h:h+W-2})\in\cO^W\times\cA^{W-1}$, there exists a vector $b_{t_h,h}^M\in\mathbb{R}^{\Uh}$ such that 
\begin{align}
	\label{eqn:psr-def}
	\PP^M(t_h|\tau_{h-1})=\langle b_{t_h,h}^M,[\PP^M(t|\tau_{h-1})]_{t\in\Uh}\rangle, \qquad \forall \tau_{h-1} \in \cT^{h-1}:=(\cO \times \cA)^{h-1}. 
\end{align}
We refer to the vector $\bq^M(\tau_{h-1})\defeq [\PP^M(t|\tau_{h-1})]_{t\in\Uh}$ as the \emph{predictive state} at step $h$ (with convention $\bq^M(\tau_{h-1})=0$ if $\tau_{h-1}$ is not reachable), and $\bq_0^M\defeq [\PP^M(t)]_{t\in\cU_{1}}$ as the initial predictive state. 

A (linear) PSR $M$ is a sequential decision process equipped with a core test set $\{ \Uh \}_{h \in [H]}$. The core test sets $(\Uh)_{h\in[H]}$ are assumed to be known.
\end{definition}

Define $\QAh\defeq \{\a:(\o,\a)\in\Uh~\textrm{for some}~\o\in\bigcup_{W \in\mathbb{N}^+}\cO^{W} \}$ as the set of ``core actions'' (possibly including an empty sequence) in $\Uh$, with $\nUA\defeq \max_{h\in[H]}\nUAh$. Further define $\cU_{H+1}\defeq \set{o_{\dum}}$ for notational simplicity. 

\begin{definition}[PSR rank]
\label{def:rank}
Given a PSR, its \emph{PSR rank} is defined as $\dPSR^M:=\max_{h\in [H]} \rank(D_h^M)$, where $D_h^M:=\left[ \bq^M(\tau_{h} ) \right]_{\tau_{h}\in \cT^h}\in\R^{\Uhp\times \cT^{h}}$ is the matrix formed by predictive states at step $h\in[H]$.
\end{definition}
For POMDP model $M$ with latent state space $\cS$, it is clear that $\dPSR^M\leq \abs{\cS}$, regardless of the core test sets.

\begin{definition}[\Bpara]
\label{def:Bpara}
A {\Bpara} of a PSR model $M$ is a set of matrices $\{ (\BB_h^M(o_h,a_h)\in\R^{\Uhp\times\Uh})_{h,o_h,a_h}, \bq_0^M \in \R^{\Uone} \}$ such that for any $0\leq h\leq H$, policy $\pi$, history $\tau_{h} = (o_{1:h}, a_{1:h})\in \cT^{h}$, and core test $t_{h+1} = (o_{h+1:h+W}, a_{h+1:h+W-1}) \in \Uhp$, the quantity $\PP^M(\tau_{h},t_{h+1})$, i.e.
the probability of observing $o_{1:h+W}$ upon taking actions $a_{1:h+W-1}$ under model $M$, admits the decomposition 
\begin{align}\label{eqn:psr-op-prod-h}
    \PP^M(\tau_{h},t_{h+1}) = \PP^M(o_{1:h+W} | \doac(a_{1:h+W-1}))
    = \e_{t_{h+1}}^\top \cdot \BB_{h:1}^M(\tau_{h}) \cdot \bq_0^M,
\end{align}
where $\e_{t_{h+1}} \in \R^{\Uhp}$ is the indicator vector of $t_{h+1} \in \Uhp$, and 
$$
\BB_{h:1}^M(\tau_{h}) \defeq \BB_{h}^M(o_{h},a_{h}) \BB_{h-1}^M(o_{h-1},a_{h-1}) \cdots \BB_{1}^M(o_{1},a_{1}).
$$
\end{definition}

\begin{definition}[{B-stable PSR \citep[Definition 4]{chen2022partially}}]
For a PSR $M$, a \Bpara~$\{\BB_h^M, \bq_0^M \}$ of $M$ is $\stab$-stable if for all $h\in[H]$,
\begin{align*}
    \max_{\pi} \sum_{\tau_{h+1:H}} \pi(\tau_{h+1:H}) \times \left| \B_{H:h+1}^M(\tau_{h+1:H}) \bq\right| \leq \stab \nrmonetwo{\bq}, \qquad \forall \bq\in\R^{\cU_h},
\end{align*}
where the maximization is over all policies $\pi$ starting from step $h$ (i.e., ignoring the history $\tau_{h-1}$), and $\R^{\cU_h}$ is equipped with the following $\nrmonetwo{\cdot}$-norm:
\begin{align*}
    \nrmonetwo{\bq} \defeq&~ \textstyle  \big(\sum_{\a \in \UAh } \big(\sum_{\o: (\o, \a) \in \Uh} \vert \bq(\o,\a) \vert \big)^2 \big)^{1/2}
\end{align*}
A PSR is $\stab$-stable if it admits a $\stab$-stable \Bpara.
\end{definition}

An important subclass of B-stable PSR is the weakly revealing POMDPs~\citep{jin2020sample,golowich2022learning,liu2022partially}. The following definition of weakly revealing POMDP is taken from \citet{chen2022partially,chen2023lower}, which is slightly more general.
\begin{definition}[Weakly revealing POMDP]\label{example:rev-POMDP}
For a POMDP $M$, the $m$-step emission-action matrices $\M_h \in \R^{\cO^m\cA^{m-1} \times \cS} $ of $M$ are defined as
\begin{align*}
    [\M_h^M]_{(\o,\a), s} \defeq \P^M(o_{h:h+m-1} = \o | s_h = s, a_{h:h+m-2} = \a), \forall (\o, \a) \in \cO^m\times\cA^{m-1}, s \in \cS.
\end{align*}
The POMDP $M$ is called a $m$-step $\arev$-weakly revealing POMDP if for each $1\leq h\leq H-m+1$, $\M_h^M$ admits a left inverse $\M_h^{M,+}$ such that $ \| \M_h^{M,+} \|_{1 \to 1} \le \arev^{-1}$.

\citet[Proposition D.2]{chen2022partially} shows that any $m$-step $\arev$-weakly revealing POMDP is a $\stab$-stable PSR with core test sets $\Uh=(\cO\times\cA)^{\minop{m-1,H-h}}\times\cO$, and $ \stab\leq \sqrt{|\cA|^{m-1}} \arev^{-1}$.
\end{definition}

\paragraph{\mBelrep~of B-stable PSRs}
Suppose that $\cM$ is a class of PSR such that for each $M\in\cM$, $M$ admits a \Bpara~$\{\BB_h^M, \bq_0^M \}$ %
that is $\stab$-stable. Then by \citet[Proposition D.1 \& D.2]{chen2022partially}, $\cM$ admits a \mbelrep~$\cG$ given as follows: 
\begin{itemize}%
\item (Index set and distributions) $\cT_h=(\cO\times\cA)^{h}$ is the set of all histories up to step $h$, and for each policy $\pi$,
\begin{align*}
    \dtr_h(\pi;\oM)=\PP^{\oM,\pi}(\tau_h=\cdot)\in\Delta(\cT_h),
\end{align*}
i.e. $\dtr_h(\pi;\oM)$ is the distribution of trajectory $\tau_h$ induced by executing $\pi$ in model $\oM$.
\item (Error functions) For each $M,\oM\in\cM$, $\pi\in\Pi$,
\begin{align*}
    &~ \cerr_0^{M;\oM}=\max_{\pi} \sum_{\tau_{1:H}} \pi(\tau_{1:H}) \times \left| \B_{H:1}^M(\tau_{1:H}) \left(\bq^M_0 - \bq^\oM_0\right)\right|, \\
    &~ \cerr_h^{M;\oM}(\tau_h)=\max_{\pi} \sum_{\tau_{h+1:H}} \pi(\tau_{h+1:H}) \times \left| \B_{H:h+1}^M(\tau_{h+1:H}) \left(\B^M_h(o_h,a_h) - \B^\oM_h(o_h,a_h)\right)\bq^\oM(\tau_{h-1})\right|.
\end{align*}
\item The exploration policies are given by
\begin{align*}
    \piest=\frac{1}{H}\sum_{h=0}^{H-1} \pi\circ_h \Unif(\cA)\circ_{h+1}\Unif(\cU_{A,h+1}),
\end{align*}
i.e. $\piest$ is the policy that uniformly samples a $h$, and then executes $\pi$ for the first $h-1$ steps, take $a_h\sim \Unif(\cA)$, and then take $\ba\sim\Unif(\cU_{A,h+1})$.
\item $L=\bigO{H|\cA|U_A\stab^2}$.
\end{itemize} 

Note that by \cref{cor:dc-linear-max}(3), we have
\begin{align*}
    \dimG(\cG,\gamma)\leq 
    \max_{\oM} \rank(D_h^{\oM})=
    \max_{\oM} \dPSR^{\oM}=:\dPSR
\end{align*}
Therefore, 
\begin{align*}
    \allexpdec\leqsim \frac{\dPSR H^2AU_A\stab^2}{\gamma}, \qquad
    \dec_\gamma(\cM)\leqsim \paren{\frac{\dPSR H^2AU_A\stab^2}{\gamma}}^{1/2}.
\end{align*}

\paragraph{Implications to weakly revealing POMDPs} In particular, if $\cM$ is a class of $m$-step $\arev$-revealing POMDPs, then
\begin{align*}
    \allexpdec\leqsim \frac{SA^mH^2}{\arev^2\gamma}, \qquad
    \dec_\gamma(\cM)\leqsim \paren{\frac{SA^mH^2}{\arev^2\gamma}}^{1/2}.
\end{align*}
As a remark, the DEC bound above implies that %
\etod~achieves a regret of order $T^{2/3}$. It turns out that such a scaling of $T$ is actually \emph{tight} for no-regret learning in multi-step revealing POMDPs \citep[Theorem 6]{chen2023lower}.

\subsection{Proofs of additional results}
\label{appendix:proof-bellman-rep-props}

\subsubsection{Proof of Example~\ref{example:dc-linear}}\label{appendix:bellman-rep-proof-linear}

Under the linearity assumption, we can consider $f\mapsto\theta_f\in\R^d$ such that $f(x)=\iprod{\theta_f}{\phi(x)}\forall x\in\cX$.

Given a $\nu\in\Delta(\cF\times\cX)$, let us set $\Phi_{\lambda}:=\lambda I_d+\EE_{x\sim \nu}\brac{\phi(x)\phi(x)^\top}$ for $\lambda>0$. Then
\begin{align*}
\EE_{(f,x)\sim\nu}\brac{\abs{f(x)}} 
\leq
\EE_{(f,x)\sim\nu}\brac{\nrm{\theta_f}_{\Phi_\lambda}\nrm{\phi(x)}_{\Phi_{\lambda}^{-1}}}
\leq\gamma\EE_{f\sim\nu}\brac{\nrm{\theta_f}_{\Phi_\lambda}^2} + \frac{1}{4\gamma}\EE_{x\sim\nu}\brac{\nrm{\phi(x)}_{\Phi_{\lambda}^{-1}}^2}.
\end{align*}
For the first term, we have
\begin{align*}
\EE_{f\sim\nu}\brac{\nrm{\theta_f}_{\Phi_\lambda}^2}
=&\EE_{f\sim\nu}\brac{\theta_f^\top\paren{\EE_{x\sim\nu}\brac{\phi(x)\phi(x)^\top}}\theta_f}+\lambda\EE_{f\sim\nu}\nrm{\theta_f}^2\\
=&\EE_{f\sim\nu}\EE_{x\sim\nu}\brac{\abs{f(x)}^2}+\lambda\EE_{f\sim\nu}\nrm{\theta_f}^2.
\end{align*}
For the second term, we have
\begin{align*}
\EE_{x\sim\nu}\brac{\nrm{\phi(x)}_{\Phi_{\lambda}^{-1}}^2}
=&
\EE_{x\sim\nu}\brac{\tr\paren{\Phi_{\lambda}^{-1/2}\phi(x)\phi(x)^\top\Phi_{\lambda}^{-1/2}}}\\
=&
\tr\paren{\Phi_{\lambda}^{-1/2}\EE_{x\sim\nu}\brac{\phi(x)\phi(x)^\top}\Phi_{\lambda}^{-1/2}}\\
=&
\tr\paren{\Phi_{\lambda}^{-1/2}\Phi_0\Phi_{\lambda}^{-1/2}}
\leq d.
\end{align*}
Letting $\lambda\to0^+$ and then taking $\sup_{\nu}$ completes the proof.
\qed

As a corollary, we have the following result.
\begin{corollary}\label{cor:dc-linear-max}
If one of the following statements holds, then $\dimc(\cF,\gamma)\leq d$ holds.\\
(1). There exists $\phi=(\phi_i:\cX\to\R^d)$ and $\theta:\cF\to \R^d$ such that $f(x)=\max_i\abs{\iprod{\theta(f)}{\phi_i(x)}}$ for all $(f,x)\in\cF\times\cX$.\\
(2). There exists $\phi=(\phi_i:\cX\to\R^d)$ and $\theta:\cF\to \R^d$ such that $f(x)=\sum_i \abs{\iprod{\theta(f)}{\phi_i(x)}}$ for all $(f,x)\in\cF\times\cX$.
(3). There exists $\phi=(\phi_{i,r}:\cX\to\R^d)$ and $\theta:\cF\to \R^d$ such that $f(x)=\max_r \sum_i \abs{\iprod{\theta(f)}{\phi_{i,r}(x)}}$ for all $(f,x)\in\cF\times\cX$.
\end{corollary}

\begin{proof}
\cref{cor:dc-linear-max}(1) is actually a direct implication of \cref{example:dc-linear}: for $(f,x)\in\cF\times\cX$, we consider $i(f,x)\defeq \argmax_i \abs{\iprod{\theta(f)}{\phi_i(x)}}$. Then for any $\nu\in\Delta(\cF\times\cX)$, we have
\begin{align*}
    \EE_{(f,x)\sim \nu}\brac{\abs{f(x)}}
    =&~\EE_{(f,x)\sim \nu, i=i(f,x)}\brac{\abs{\iprod{\theta(f)}{\phi_i(x)}}}\\
    \leq&~ \frac{d}{4\gamma}+\gamma\EE_{f'\sim \nu}\EE_{(f,x)\sim \nu, i=i(f,x)}\brac{\abs{\iprod{\theta(f')}{\phi_i(x)}}^2}\\
    \leq&~ \frac{d}{4\gamma}+\gamma\EE_{f'\sim \nu}\EE_{x\sim \nu}\brac{\abs{f'(x)}^2},
\end{align*}
where the first inequality is due to \cref{example:dc-linear}.

We next reduce \cref{cor:dc-linear-max}(2) to (1). Consider $\Phi=(\Phi_f:\cX\to\R^d)_{f\in\cF}$ given by
$$
\Phi_f(x)\defeq \sum_{i} \phi_i(x)\sign\iprod{\theta(f)}{\phi_i(x)}.
$$
Then $f(x)=\max_{g\in\cF}\abs{\iprod{\theta(f)}{\Phi_g(x)}}$. Applying \cref{cor:dc-linear-max}(1) completes the proof of \cref{cor:dc-linear-max}(2).

Similarly, we can reduce \cref{cor:dc-linear-max}(3) to (1) by considering $\Phi=(\Phi_{f,r}:\cX\to\R^d)_{f\in\cF}$ given by
$$
\Phi_{f,r}(x)\defeq \sum_{i} \phi_{i,r}(x)\sign\iprod{\theta(f)}{\phi_{i,r}(x)}.
$$
Then $f(x)=\max_{g\in\cF,r}\abs{\iprod{\theta(f)}{\Phi_{g,r}(x)}}$ and applying \cref{cor:dc-linear-max}(1) completes the proof of (3).
\end{proof}

\subsubsection{Proof of Example~\ref{example:dc-to-cov}}\label{appendix:proof-dc-to-cov}

\newcommand{\oq}{\overline{q}}
Fix any $\nu\in\Delta(\cF\times\cQ)$. Let $\oq\in\Delta(\cX)$ be given by $\oq(x)=\EE_{q\sim \mu}[q(x)]$. Then
\begin{align*}
    \EE_{(f,q)\sim \nu}\EE_{x\sim q}[f(x)]
    =&~
    \EE_{(f,q)\sim \nu}\EE_{x\sim \oq}\brac{\frac{q(x)}{\oq(x)}\cdot f(x)} \\
    \leq&~ \sqrt{ \EE_{q\sim\nu}\EE_{x\sim\oq} \brac{ \frac{q(x)^2}{\oq(x)^2} } \cdot \EE_{f\sim\nu} \EE_{x\sim \oq} f(x)^2 } \\
    \leq&~ \sqrt{ \Cov(\cQ) \cdot \EE_{f\sim\nu} \EE_{q\sim\nu, x\sim q} f(x)^2 },
\end{align*}
where the last inequality is because the distribution of $x\sim \oq$ agrees with the distribution of $x\sim q$, $q\sim \nu$, and for any $\mu\in\Delta(\cX)$, we also have
\begin{align*}
    \EE_{q\sim\nu}\EE_{x\sim\oq} \brac{ \frac{q(x)^2}{\oq(x)^2} } 
    =&~ \EE_{q\sim\nu}\brac{ \sum_{x\in\cX} \frac{q(x)^2}{\oq(x)} }
    = \sum_{x\in\cX} \frac{\EE_{q\sim \nu} q(x)^2}{\EE_{q\sim \nu} q(x)} \\
    =&~ \sum_{x\in\cX} \frac{\EE_{q\sim \nu} q(x)\cdot \mu(x)\cdot q(x)/\mu(x)}{\EE_{q\sim \nu} q(x)} \\
    \leq&~ \sum_{x\in\cX} \frac{\EE_{q\sim \nu} q(x)\cdot \mu(x)}{\EE_{q\sim \nu} q(x)} \cdot \sup_{q\in\cQ}\linf{\frac{q}{\mu}} \\
    =&~ \sum_{x\in\cX} \mu(x) \cdot \sup_{q\in\cQ}\linf{\frac{q}{\mu}} \\
    =&~ \sup_{q\in\cQ}\linf{\frac{q}{\mu}}.
\end{align*}
This gives the desired result.
\qed

\subsubsection{Proof of Example~\ref{example:dc-to-rank}}\label{appendix:proof-dc-to-rank}

Under the assumption of \cref{example:dc-to-rank}, there exists $\mu:\cX\to\R^d$ and $\phi:\cQ\to\R^d$ such that
\begin{align*}
    q(x)=\iprod{\mu(x)}{\phi(q)} \qquad \forall x\in\cX, q\in\cQ.
\end{align*}
Therefore, for any $\nu\in\Delta(\cF\times\cQ)$,
\begin{align*}
    \EE_{(f,q)\in \nu}\EE_{x\sim q} f(x)
    =&~ \EE_{(f,q)\sim \nu} \sum_{x\in\cX} f(x) \iprod{ \mu(x) }{\phi(q)} \\
    =&~ \EE_{(f,q)\sim \nu}  \iprod{ \sum_{x\in\cX} f(x) \mu(x) }{\phi(q)} \\
    \leq&~ \frac{d}{4\gamma} + \gamma \EE_{f\sim \nu}\EE_{q\sim \nu}  \iprod{ \sum_{x\in\cX} f(x) \mu(x) }{\phi(q)}^2 \\
    =&~ \frac{d}{4\gamma} + \gamma \EE_{f\sim \nu}\EE_{q\sim \nu}  \paren{ \EE_{x\sim q} f(x) }^2 \\
    \leq&~ \frac{d}{4\gamma} + \gamma \EE_{f\sim \nu}\EE_{q\sim \nu,x\sim q}  f(x)^2,
\end{align*}
where the third line follows from \cref{example:dc-linear}. This gives the desired result.
\qed

We remark that \cref{example:dc-to-rank} can also be proved using the following lemma.
\begin{lemma}\label{lem:cov-to-rank}
For $\cQ$ a class of distributions over $\cX$, it holds that $\Cov(\cQ)\leq \rank(\cQ)$.
\end{lemma}

\begin{proof}
Let $d=\rank(\cQ)$. Then $\cQ$ spans a $d$-dimensional subspace of $L^1(\cS)$, and hence the closure of $\cQ$ admits a barycentric spanner \citep{awerbuch2008online}, i.e. there exists $q_1,\cdots,q_d\in\Delta(\cX)$ such that for any $q\in\cQ$, there are $\lambda_1,\cdots,\lambda_d\in[-1,1]$ and
\begin{align*}
    q=\lambda_1q_1+\cdots+\lambda_dq_d.
\end{align*}
Hence, for $\mu=\frac{1}{d}\sum_{i=1}^d q_i$, it holds that $q(x)\leq d\mu(x)\forall x\in\cX$, and thus $\Cov(\cQ)\leq d$.
\end{proof}

\subsubsection{Proof of Proposition~\ref{prop:cover-linear-mixture}}

We construct such a covering directly, which is a generalization of the construction in \cref{example:cover-tabular}. The covering of mean reward function is standard, and in the following we assume that the mean reward function is known and fixed, without loss of generality.

An important observation is that, by our assumption, it holds that
\begin{align*}
    \sum_{s'} \nrm{\phi(s'|s,a)}_1\leq 2d, \qquad \forall (s,a)\in\cS\times\cA.
\end{align*}
Then, we set $N=\ceil{B/\rho}$ and let $B'=N\rho$. For $\theta\in[-B',B']^d$, we define the $\rho$-neighborhood of $\theta$ as $\cB(\theta,\rho):=\rho\floor{\theta/\rho}+[0,\rho]^d$, and let
\begin{align*}
    \tPP_{\theta}(s'|s,a):=\max_{\theta'\in\cB(\theta,\rho)} \iprod{\theta'}{\phi(s'|s,a)}.
\end{align*}
Then, if $\theta$ induces a transition dynamic $\PP_{\theta}$, then $\tPP_{\theta}\geq \PP_{\theta}$, and
\begin{align*}
    \sum_{s'} \abs{\tPP_{\theta}(s'|s,a)-\PP_{\theta}(s'|s,a)}
    =
    \sum_{s'} \max_{\theta'\in\cB(\theta,\rho)} \abs{\iprod{\theta'-\theta}{\phi(s'|s,a)}}
    \leq \rho\sum_{s'} \nrm{\phi(s'|s,a)}_1\leq 2\rho d.
\end{align*}
Now, similarly to \eqref{eqn:proof-cover-tabular-opt-like}, for any $\Theta=(\theta_h)_h\in(\R^d)^{H-1}$, we define
\begin{align*}
\tPP_{\Theta}^{\pi}(s_1,a_1,\cdots,s_{H},a_{H}):=&\PP_1(s_1)\tPP_{\theta_1}(s_2|s_1,a_1)\cdots\tPP_{\theta_{H-1}}(s_H|s_{H-1},a_{H-1})\times \pi(s_1,a_1,\cdots,s_H,a_H)\\
=&\pi(s_1,a_1,\cdots,s_H,a_H)\times\PP_1(s_1)\times\prod_{h=1}^{H-1}\tPP_{\theta_h}(s_{h+1}|s_h,a_h).
\end{align*}
Suppose that $\rho\leq 1/(2Hd)$. Then for $M\in\cM$, if $M\in\cM$ is a linear mixture MDP induced by $\Theta$, then a simple calculation shows, 
\begin{align*}
    \lone{\tPP_{\Theta}^{\pi}-\PP_{\Theta}^{\pi}}\leq 2eHd\rho.
\end{align*}
Therefore, we let $\rho_1=\sqrt{2eHd\rho}$, then by picking representative in each $\ell_{\infty}$-$\rho$-ball, we can construct a $\rho_1$-optimistic covering with $\abs{\cM_0}\leq (2N)^{Hd}=\paren{2\ceil{B/\rho}}^{Hd}=\paren{2\ceil{2eHdB/\rho_1^2}}^{Hd}$, which implies that $\log\abs{\cM_0}\leq \bigO{dH\log(dHB/\rho_1)}$.
\qed

\subsubsection{Proof for Example~\ref{def:exp-family-mdp}}\label{appdx:proof-exp-family}

We only need to verify that $\cP_{\exp}$ is indeed $(\alpha,\beta)$-smooth under \eqref{eqn:assumption-para-exp} with $\alpha=\sqrt{\frac{\lammin}{4\log 2}}$, $\beta=\sqrt{\frac{\lammax}{2}}$. Fix a pair of $(s,a)$. Notice that
\begin{align}\label{eqn:exp-family-1}
\begin{aligned}
    1-\frac12\DH{ P_{s,a}(\cdot|\theta_1), P_{s,a}(\cdot|\theta_2) }
    =&~ \int_{\cS} \sqrt{P_{s,a}(ds'|\theta_1)P_{s,a}(ds'|\theta_2)} \\
    =&~ \int_{\cS} \exp\paren{ \iprod{ \mu(s') }{ \frac{\theta_1+\theta_2}{2} } - \frac12 Z_{s,a}(\theta_1)-\frac12 Z_{s,a}(\theta_2) } p(ds'|s,a) \\
    =&~ \exp\paren{ Z_{s,a}\paren{\frac{\theta_1+\theta_2}{2}} - \frac12 Z_{s,a}(\theta_1)-\frac12 Z_{s,a}(\theta_2) }.
\end{aligned}
\end{align}
Therefore, under the assumption that $\lammin I\preceq \nabla^2 Z_{s,a}\preceq \lammax I$, we know $Z_{s,a}$ (as a function of $\theta$) is $(2\lammin)$-strong convex and $(2\lammax)$-smooth, which implies
\begin{align}\label{eqn:exp-family-2}
    -\frac{\lammax}{4}\nrm{\theta_1-\theta_2}^2 \leq
    Z_{s,a}\paren{\frac{\theta_1+\theta_2}{2}} - \frac12 Z_{s,a}(\theta_1)-\frac12 Z_{s,a}(\theta_2) \leq -\frac{\lammin}{4}\nrm{\theta_1-\theta_2}^2.
\end{align}
Notice that for $x\geq 0$, we have
\begin{align*}
    \min\set{x,1}\geq 1-e^{-x}\geq \min\set{\frac{x}{2\log 2},\frac12}.
\end{align*}
Therefore, combining \eqref{eqn:exp-family-1} and \eqref{eqn:exp-family-2}, we obtain
\begin{align*}
    \min\set{\frac{\lammin}{4\log 2}\nrm{\theta_1-\theta_2}^2,1}
    \leq
    \DH{ P_{s,a}(\cdot|\theta_1), P_{s,a}(\cdot|\theta_2) } 
    \leq 
    2\min\set{ \frac{\lammax}{4}\nrm{\theta_1-\theta_2}^2, 1 }.
\end{align*}
Then using the fact that $\dTV\paren{ P_{s,a}(\cdot|\theta_1), P_{s,a}(\cdot|\theta_2) }\leq \dH\paren{ P_{s,a}(\cdot|\theta_1), P_{s,a}(\cdot|\theta_2) }$, both conditions of \eqref{eqn:assumption-para-exp} are fulfilled.
\qed

As a remark, for Online Nonlinear Control \eqref{eqn:ONR}, we can consider $\cP$ given by $P_{s,a}(\cdot|\theta)=\normal{\theta,\bSigma}$, and a direct computation yields
\begin{align*}
    1-\frac12\DH{ P_{s,a}(\cdot|\theta_1), P_{s,a}(\cdot|\theta_2) }=\exp\paren{ -\frac18\nrm{\theta_1-\theta_2}_{\bSigma^{-1}}^2 }.
\end{align*}
Therefore, we can obtain upper bounds on DECs that scale with $\max_h\oeluder(\cH_h',1/\gamma)$ as in \cref{def:exp-family-mdp}, where $\cH_h'\defeq\set{\bSigma^{-1/2}\phi^M_h: M\in\cM}$.

\section{Learning equilibria in Markov Games via AMDEC}
\label{appendix:markov-game}

In this section, we adapt the AMDEC and the \mealg~algorithm (cf.~\cref{section:mdec}) to design unified multi-agent RL algorithms for learning various equilibria in Markov Games. 

\subsection{Preliminaries}

\paragraph{Markov Games}
We consider the model of episodic Markov Games (MGs)~\citep{littman1994markov}, a standard multi-agent generalization of MDPs. An $m$-player MG $M=(m, H, \cS, (\cA_i)_{i=1}^m, \P^M, (r^M_i)_{i=1}^m)$ can be cast as a DMSO problem as follows. The observation $o=(s_1,\ba_1,\dots,s_H,\ba_H)\in(\cS\times\cA)^H$, where each $\ba_h=(\ba_{h,1},\dots,\ba_{h,m})\in\cA\defeq \prod_{i\in[m]}\cA_i$ denotes the \emph{joint action} taken simultaneously by all players at step $h$. A joint policy $\pi$ (which may in general be non-Markovian) is denoted by $\pi=\set{\pi_h:(\cS\times\cA)^{h-1}\times\cS\to\Delta(\cA)}_{h\in[H]}$. Upon executing $\pi$ in $M$, the (centralized) learner observes $o=(s_1,\ba_1\dots,s_H,\ba_H)\sim \MP(\pi)$, which sequentially samples $s_1\sim \P^M_0(\cdot)$, $\ba_h\sim \pi_h(\cdot|s_1,\ba_1,\dots,s_{h-1},\ba_{h-1},s_h)$, and $s_{h+1}\sim \P^M_{h}(\cdot|s_h, 
\ba_h)$ for all $h\in[H]$. The learner then receives $m$ reward vectors $(\br_i)_{i=1}^m$, where each $\br_i=[r_{1,i},\dots,r_{h,i}]^\top\in[0,1]^H$, and each $r_{h,i}$ is the (possibly random) instantaneous reward for the $i$-th player at the $h$-th step. Note that the mean reward function $R_{h,i}^M(o)=\E^M[r_{h,i}|o]\eqdef R^M_{h,i}(s_h,\ba_h)$ is the expected reward at $(s_h,\ba_h)$. We assume that $\sum_{h=1}^H R^M_{h,i}(s_h,\ba_h)\in[0,1]$ almost surely for all $M$, all $o\in\cO$, and all $i\in[m]$. Let $f_i^M(\pi)\defeq \E^{M, \pi}\brac{\sum_{h=1}^H r_{h,i}}$ denote player $i$'s value (expected cumulative reward) of $\pi$ under $M$, for all $i\in[m]$.

For MGs, we let $\Pi$ denote the set of all possible joint policies. Let $\Pi^{\det}$ denote the set of all deterministic policies, where each $\pi\in\Pi^{\det}$ has form $\pi=\set{\pi_h:(\cS\times\cA)^{h-1}\times\cS\to\cA}_{h\in[H]}$. Additionally, we say $\pi\in\Pi$ is a \emph{product policy} if all agents take their actions independently from each other, i.e. $\pi=\pi_1\times\dots\times \pi_m$, where each $\pi_i=\set{\pi_{i,h}:(\cS\times\cA)^{h-1}\times\cS\to\Delta(\cA_i)}_{h\in[H]}\in\Pi_i$, where $\Pi_i$ denotes the set of all policies for player $i$.

\paragraph{Definitions of equilibria}
We consider three commonly studied notions of equilibria in Markov Games: Nash Equilibria (NE), Correlated Equilibria (CE), and Coarse Correlated Equilibria (CCE). 

\begin{definition}[Nash Equilibrium]
A product policy $\pi=\pi_1\times\dots\times\pi_m$ is an $\eps$-approximate Nash Equilibrium (NE) of $M$ if
\begin{align*}
    \Gap_{\NE}(\pi, M) \defeq \max_{i\in[m]} \max_{\pi_i^\dagger\in\Pi_i} f_i^M(\pi_i^\dagger, \pi_{-i}) - f_i^M(\pi) \le \eps.
\end{align*}
\end{definition}

To define CEs, we define a strategy modification for Markov Games~\citep{song2021can}.

\begin{definition}[Strategy modification for $i$-th player]
A strategy modification $\phi_i$ for the $i$-th player is a set of $H$ mappings $\phi_i=\set{\phi_{i,h}:(\cS\times\cA)^{h-1}\times (\cS\times\cA_i)\to \cA_i}_{h\in[H]}$. For any joint policy $\pi$, the modified policy $\phi_i\circ \pi$ is defined as follows: At any step $h\in[H]$, state $s_h\in\cS$, and history $\tau_{h-1}\defeq (s_1,\ba_1,\dots,s_{h-1}, \ba_{h-1})$, we sample a joint action $\ba_h\sim \pi_h(\cdot|\tau_{h-1}, s_h)$, the $i$-th player takes the modified action $\wt{a}_{h,i}=\phi_{i,h}(\tau_{h-1}, s_h, a_{h,i})$, and all other players take the unmodified action $\ba_{h,-i}$.
\end{definition}

\begin{definition}[Correlated Equilibrium]
A policy $\pi$ is an $\eps$-approximate Correlated Equilibrium (CE) of $M$ if
\begin{align*}
    \Gap_{\CE}(\pi, M) \defeq \max_{i\in[m]} \max_{\phi_i\in\Phi_i} f_i^M\paren{ \phi_i\circ\pi} - f_i^M(\pi) \le \eps.
\end{align*}
\end{definition}
 
\begin{definition}[Coarse Correlated Equilibrium]
A policy $\pi$ is an $\eps$-approximate Coarse Correlated Equilibrium (CCE) of $M$ if
\begin{align*}
    \Gap_{\CCE}(\pi, M) \defeq \max_{i\in[m]} \max_{\pi_i^\dagger\in\Pi_i} f_i^M(\pi_i^\dagger, \pi_{-i}) - f_i^M(\pi) \le \eps.
\end{align*}
\end{definition}

\paragraph{Learning goal}
Our learning goal is to output an $\eps$-approximate NE/CE/CCE in as few episodes of play as possible. We consider the centralized learning setting, in which the learner controls all agents when interacting with the environment, but may output either correlated policies or independent (product) policies depending on which equilibrium is desired.

\subsection{Model estimation for Markov Games}

We now show that the equilibrium gaps $\Gap_{\{\NE,\CE,\CCE\}}$ can be directly bounded by the model divergence between them. This in turn allows us to derive algorithms for learning NE/CE/CCE in MGs by a simple reduction to \mealg, which we present in~\cref{appendix:mg-learn-equilibria}.

Different from MDPs, each episode of an $m$-player MG $M$ yields a collection of $m$ reward vectors $\br=(\br_i\in[0,1]^{H})_{i\in[m]}$, instead of a single reward vector. We slightly adapt the definitions of the various divergence functions for this setting: For any two MGs $M,M'$ and any policy $\pi$, we define divergences
\begin{align}
    \wtdH^2(M(\pi), \oM(\pi)) & \defeq \dH^2(\MP(\pi), \oMP(\pi)) + \E_{o\sim \MP(\pi)}\brac{ \max_{i\in[m]}\big\|\bR^M_i(o) - \bR^{\oM}_i(o)\big\|^2_2 }, \label{equation:def-drl-mg} \\
    \wDTV{M(\pi), \oM(\pi)} & \defeq \dTV(\MP(\pi), \oMP(\pi)) + \E_{o\sim \MP(\pi)}\brac{ \max_{i\in[m]} \lone{\bR^M_i(o) - \bR^{\oM}_i(o)} }, \label{equation:def-dtv-mg} \\
    \DTVPid{M, M'} & \defeq \max_{\opi\in\Pi^{\det}} \wDTV{M(\opi), M'(\opi)}.
\end{align}

The following result is a direct adaptation of~\cref{lemma:diff-reward} to divergence $\wdTV$ using the fact that the $\max_{i\in[m]}$ in the definition of $\wdTV$ is lower bounded by the $i$-th term.

\begin{lemma}[Bounding value difference by $\wdTV$]
\label{lemma:diff-reward-mg}
For any two MGs $M,M'$, $i\in[m]$, and $\pi\in\Pi$, we have
\begin{align*}
    \abs{f^M_i(\pi) - f^{M'}_i(\pi)} \le \wDTV{M(\pi), M'(\pi)}.
\end{align*}
\end{lemma}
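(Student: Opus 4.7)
The proof is a direct computation exploiting the structure of $\wdTV$ as stated in Equation~\eqref{equation:def-dtv-mg}. My plan is to add and subtract $\E_{o\sim \MP(\pi)}[\sum_h R^{M'}_{h,i}(o)]$ to split $f^M_i(\pi) - f^{M'}_i(\pi)$ into a ``reward difference under the same observation law'' term and a ``observation law difference on the same reward function'' term. Concretely:
\begin{align*}
f^M_i(\pi) - f^{M'}_i(\pi)
= \E_{o\sim\MP(\pi)}\Big[\textstyle\sum_h R^M_{h,i}(o) - R^{M'}_{h,i}(o)\Big]
+ \Big(\E_{o\sim\MP(\pi)} - \E_{o\sim\MP'(\pi)}\Big)\Big[\textstyle\sum_h R^{M'}_{h,i}(o)\Big].
\end{align*}

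The first term is bounded in absolute value by $\E_{o\sim\MP(\pi)}[\lone{\bR^M_i(o) - \bR^{M'}_i(o)}]$ via the triangle inequality on $\sum_h$, and then by $\E_{o\sim\MP(\pi)}[\max_{j\in[m]}\lone{\bR^M_j(o) - \bR^{M'}_j(o)}]$ by the elementary inequality $x_i \le \max_j x_j$ for $x_j \ge 0$. This matches exactly the second summand of $\wDTV{M(\pi),M'(\pi)}$ in~\eqref{equation:def-dtv-mg}.

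The second term is bounded using the assumption $\sum_h R^{M'}_{h,i}(o) \in [0,1]$ almost surely (so $\sum_h R^{M'}_{h,i}$ is a $[0,1]$-valued function of $o$), which gives
\begin{align*}
\Big|\big(\E_{o\sim\MP(\pi)} - \E_{o\sim\MP'(\pi)}\big)\big[\textstyle\sum_h R^{M'}_{h,i}(o)\big]\Big|
\le \dTV(\MP(\pi), \MP'(\pi)),
\end{align*}
which is precisely the first summand of $\wDTV{M(\pi),M'(\pi)}$. Summing the two bounds by the triangle inequality yields the claim. There is no substantive obstacle here: compared to the proof of Lemma~\ref{lemma:diff-reward}, the only adaptation is that (i) we use TV/$L_1$ instead of Hellinger/$L_2$ (which avoids the Cauchy--Schwarz step and thereby removes the $\sqrt{H+1}$ factor), and (ii) we invoke the $\max_{j\in[m]}$ bound once to pass from the $i$-th agent's reward gap to the one appearing inside $\wdTV$.
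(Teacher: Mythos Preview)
Your proposal is correct and follows exactly the approach the paper indicates: it is the same add-and-subtract decomposition as in Lemma~\ref{lemma:diff-reward}, with the Hellinger/$L_2$ bounds replaced by TV/$L_1$ (so no Cauchy--Schwarz and no $\sqrt{H+1}$ factor), and one invocation of $x_i \le \max_j x_j$ to pass from the $i$-th player's reward gap to the $\max_{j\in[m]}$ appearing in the definition of $\wdTV$.
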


\begin{proposition}[Bounding equilibrium gaps by model estimation error]
\label{prop:eqgap-to-model-error}
    For $\EQ\in\set{\NE,\CE,\CCE}$, it holds that
    \begin{align*}
        \Gap_{\EQ}(\pi,M)\leq \Gap_{\EQ}(\pi,\oM)+2\DTVPid{M,\oM},
    \end{align*}
    where we recall the definition of $\DTVPid{\cdot, \cdot}$ in~\cref{equation:def-dtvpi}.
\end{proposition}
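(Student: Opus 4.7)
The three equilibrium gaps share a common structure: each can be written in the form
\[
  \Gap_{\EQ}(\pi,M) \;=\; \max_{i\in[m]}\max_{\pi^{\dagger}\in \Pi_{\EQ}(\pi,i)} \brac{ f_i^M(\pi^{\dagger}) - f_i^M(\pi) },
\]
where $\Pi_{\EQ}(\pi,i)$ is the appropriate deviation set: single-player deviations $(\pi_i^{\dagger},\pi_{-i})$ for NE and CCE, and strategy-modified policies $\phi_i\circ\pi$ for CE. My plan is to treat all three cases uniformly through this formulation, so that the proof reduces to a single argument that controls $f_i^M(\pi') - f_i^{\oM}(\pi')$ for an arbitrary (possibly randomized, possibly non-Markovian) joint policy $\pi'\in\Pi$.

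First, I would pick a maximizer $(i^\star,\pi^\dagger_\star)$ attaining $\Gap_{\EQ}(\pi,M)$, and plug it into the gap for $\oM$ as a feasible (suboptimal) choice to obtain
\[
  \Gap_{\EQ}(\pi,M) - \Gap_{\EQ}(\pi,\oM) \;\le\; \brac{ f_{i^\star}^M(\pi^\dagger_\star) - f_{i^\star}^{\oM}(\pi^\dagger_\star) } + \brac{ f_{i^\star}^{\oM}(\pi) - f_{i^\star}^M(\pi) }.
\]
By \cref{lemma:diff-reward-mg}, each of the two bracketed terms is bounded in absolute value by $\wDTV{M(\pi'),\oM(\pi')}$ for the corresponding $\pi'\in\set{\pi^\dagger_\star,\pi}$. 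Note that for CE, $\pi^\dagger_\star = \phi_{i^\star}\circ\pi$ is still a legitimate element of $\Pi$, so \cref{lemma:diff-reward-mg} applies without modification.

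Second, I would lift the control from deterministic policies to arbitrary joint policies. The key observation is that any $\pi'\in\Pi$ can be represented as a mixture $\pi' = \int_{\Pi^{\det}} \pi \, d\mu(\pi)$ of deterministic policies, so that $\MP(\pi') = \int \MP(\pi)\, d\mu(\pi)$ and $\oMP(\pi') = \int \oMP(\pi)\, d\mu(\pi)$. Joint convexity of total variation then yields
\[
  \dTV(\MP(\pi'),\oMP(\pi')) \;\le\; \int \dTV(\MP(\pi),\oMP(\pi)) \,d\mu(\pi) \;\le\; \max_{\pi\in\Pi^{\det}} \dTV(\MP(\pi),\oMP(\pi)),
\]
and linearity of expectation gives $\E_{o\sim \MP(\pi')}\brac{ \max_i \lone{\bR_i^M(o)-\bR_i^{\oM}(o)}} \le \max_{\pi\in\Pi^{\det}} \E_{o\sim \MP(\pi)}\brac{\cdots}$. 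Summing the two bounds (from the definition of $\wdTV$ in~\cref{equation:def-dtv-mg}) gives $\wDTV{M(\pi'),\oM(\pi')}\le \DTVPid{M,\oM}$ for every $\pi'\in\Pi$.

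Combining these two steps, each of the two value differences in the decomposition is at most $\DTVPid{M,\oM}$, yielding the desired factor of $2$. I expect the only subtle point to be the mixture representation underlying the second step, in particular verifying that $\phi_i\circ\pi$ admits such a decomposition for the CE case; but since a strategy modification composed with a joint policy is still an element of $\Pi$, this follows immediately by writing the resulting (non-Markovian) policy as a convex combination of its deterministic realizations. No further ingredients should be needed.
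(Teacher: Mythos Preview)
Your approach is essentially the same as the paper's: decompose the gap difference into two value differences, invoke \cref{lemma:diff-reward-mg}, and then pass from arbitrary $\pi'\in\Pi$ to deterministic policies via the mixture representation and convexity of $\wdTV$ in $\pi$. The paper does exactly this, noting that both summands in $\wdTV$ are convex (it actually writes ``linear'' for the TV term, which is a slight overstatement---joint convexity is what is used) and hence the maximum over $\Pi$ equals the maximum over $\Pi^{\det}$.

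There is one small slip in your write-up worth fixing. As written, you bound the TV term and the reward term by their respective maxima over $\Pi^{\det}$ \emph{separately} and then sum; but $\max_{\pi} A(\pi)+\max_{\pi} B(\pi)\ge \max_{\pi}\bigl(A(\pi)+B(\pi)\bigr)=\DTVPid{M,\oM}$, so summing the two final bounds does not yield $\DTVPid{M,\oM}$. The fix is immediate: sum at the integral step instead, i.e.\ use
\[
\wDTV{M(\pi'),\oM(\pi')}\le \int_{\Pi^{\det}} \wDTV{M(\pi),\oM(\pi)}\,d\mu(\pi)\le \max_{\pi\in\Pi^{\det}} \wDTV{M(\pi),\oM(\pi)}=\DTVPid{M,\oM},
\]
which is precisely the convexity-of-the-sum argument the paper uses.
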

\begin{proof}
    We first deal with the case $\EQ\in\set{\NE,\CCE}$. 
    First, note that the divergence $\wdTV(M(\pi), \oM(\pi))$ is convex in $\pi$ (where the linear combination $\lambda\pi+(1-\lambda)\pi'$ is understood as the corresponding mixture policy), as both $\dTV(\MP(\pi), \oMP(\pi))$ and $\E_{o\sim \MP(\pi)}\brac{ \max_{i\in[m]} \lone{\bR^M_i(o) - \bR^{\oM}_i(o)} }$ are linear in $\pi$. Therefore, as $\Pi=\Delta(\Pi^{\det})$, we have 
    \begin{align*}
        \DTVPi{M,\oM}=\DTVPid{M,\oM}.
    \end{align*}
    Now, for any $i\in[m]$, $\pi_i^\dagger\in\Pi_{i}$, by~\cref{lemma:diff-reward-mg}, it holds that
    \begin{align*}
        \abs{f_i^M(\pi_i^\dagger, \pi_{-i})-f_i^{\oM}(\pi_i^\dagger, \pi_{-i})}\leq \wDTV{M(\pi_i^\dagger, \pi_{-i}), \oM(\pi_i^\dagger, \pi_{-i})}\leq \DTVPid{M,\oM},
    \end{align*}
    and similarly $\abs{f^M_i(\pi)-f^{\oM}_i(\pi)}\leq \DTVPid{M,\oM}$. Therefore,
    \begin{align*}
        f_i^M(\pi_i^\dagger, \pi_{-i}) - f_i^M(\pi)
        \leq
        f_i^{\oM}(\pi_i^\dagger, \pi_{-i}) - f_i^{\oM}(\pi)+2\DTVPid{M,\oM}, \qquad \forall i\in[m], \pi_i^\dagger\in\Pi_{-i}.
    \end{align*}
    Taking $\max_{i\in[m]} \max_{\pi_i^\dagger\in\Pi_{-i}}$ completes the proof for the NE/CCE case.
    
    Similarly, for $\EQ=\CE$, we have for any $\phi_i\in\Phi_i$ and $\pi\in\Pi$ that
    \begin{align*}
        \abs{f_i^M(\phi_i\circ \pi)-f_i^{\oM}(\phi_i\circ \pi)}\leq \wDTV{M(\phi_i\circ \pi), \oM(\phi_i\circ \pi)}\leq \DTVPid{M,\oM},
    \end{align*}
    and thus
    \begin{align*}
        f_i^M(\phi_i\circ \pi) - f_i^M(\pi)
        \leq
        f_i^{\oM}(\phi_i\circ \pi) - f_i^{\oM}(\pi)+2\DTVPid{M,\oM}, \qquad \forall i\in[m], \phi_i\in\Phi_i.
    \end{align*}
    Taking $\max_{i\in[m]} \max_{\phi_i\in\Phi_i}$ completes the proof for the CE case.
\end{proof}

The following result is an adaptation of~\cref{thm:E2D-ME}. %
\begin{theorem}[\Vovkalg~for Markov Games]
\label{thm:E2D-ME-MG}
Given a $\rho$-optimistic cover $(\tPP,\cM_0)$ of a family of Markov Games $\cM$, choosing $\etap=\etar=1/3$, $\mu^1=\Unif(\cM_0)$, then the variant of~\cref{alg:E2D-ME} with TA subroutine
\begin{align}\label{eqn:TA-MG}
    \mu^{t+1}(M) \; \propto_{M} \; \mu^{t}(M) \cdot \exp\paren{\etap\log \tPP^M(o^t|\xt) - \etar\sum_{i=1}^m\ltwot{ \br^t_i-\bR^{M}_i(o^t) } },
\end{align}
$\wtdH^2$ defined in~\cref{equation:def-drl-mg}, and $\wdTV$ defined in~\cref{equation:def-dtv-mg} achieves the following with probability at least $1-\delta$:
\begin{align}
\label{equation:vovkalg-mg}
    \DTVPid{M^\star, \hM}\leq \cO\paren{ \omdec_{\gamma}(\cM)+\frac{\gamma}{T}\brac{\log\abs{\cM_0}+T\rho+\log(1/\delta)} }.
\end{align}
\end{theorem}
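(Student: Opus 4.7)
The plan is to mirror the proof of Theorem~\ref{thm:E2D-ME} with three adaptations specific to the multi-agent setting: (a) the E2D meta-algorithm argument now uses the MG versions of $\wtdH^2$ and $\wdTV$ but is otherwise formally identical, (b) we need a triangle inequality for the max-over-players $\wdTV$, and (c) we need an online estimation guarantee for the modified TA subroutine~\eqref{eqn:TA-MG}, whose loss aggregates squared errors across all $m$ players.

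First I would run the meta-algorithm step verbatim. Defining $V^t(\pexp,\muout;M,\opi) \defeq \E_{\oM\sim\muout}[\wDTV{M(\opi),\oM(\opi)}] - \gamma\E_{\pi\sim\pexp}\E_{\hM^t\sim\mu^t}[\wtdH^2(M(\pi),\hM^t(\pi))]$, the fact that $(\pexp^t,\muout^t)$ minimizes $\sup_{M,\opi}V^t$ gives $\sup_{M,\opi}V^t(\pexp^t,\muout^t;M,\opi)\leq \mdec_\gamma(\cM,\mu^t)\le\omdec_\gamma(\cM)$. Instantiating at $M=\Ms$, averaging over $t\in[T]$, and writing $\muout=\tfrac1T\sum_t\muout^t$ yields
\[
\sup_{\opi\in\Pi}\E_{\oM\sim\muout}\brac{\wDTV{\Ms(\opi),\oM(\opi)}} \le \omdec_\gamma(\cM) + \gamma\cdot\tfrac{1}{T}\EstwtH,
\]
where $\EstwtH$ is the online estimation error in the MG $\wtdH^2$. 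Note that this sup is over $\Pi$, but since $\wdTV$ is linear in $\opi$, the sup equals the sup over $\Pi^{\det}$.

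Second I would verify the triangle inequality analogue of Lemma~\ref{lemma:tDTV-triagnle}. The only new ingredient is that for each $o$, the entrywise triangle inequality gives $\lone{\bR^M_i(o)-\bR^{\oM}_i(o)}\le\lone{\bR^M_i(o)-\bR^{\hM}_i(o)}+\lone{\bR^{\hM}_i(o)-\bR^{\oM}_i(o)}$, and taking max over $i\in[m]$ on both sides preserves the inequality by $\max_i(a_i+b_i)\le\max_ia_i+\max_ib_i$. The rest (handling the TV part and changing the distribution via Lemma~\ref{lemma:multiplicative-hellinger}) is identical to Lemma~\ref{lemma:tDTV-triagnle}, giving $\wDTV{M(\pi),\oM(\pi)}\le 3\wDTV{M(\pi),\hM(\pi)}+3\wDTV{\oM(\pi),\hM(\pi)}$. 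Combined with the definition of $\hM$ as the minimizer of $\sup_{\opi}\E_{\oM\sim\muout}[\wDTV{\cdot(\opi),\oM(\opi)}]$, we conclude $\DTVPid{\Ms,\hM}\le 6\omdec_\gamma(\cM)+6\gamma\EstwtH/T$, exactly as in Theorem~\ref{thm:E2D-ME}.

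Third---the main technical step---I would adapt Theorem~\ref{thm:vovk-cover} to the MG loss. The concatenated reward vector $\br=(\br_1,\ldots,\br_m)\in[0,1]^{mH}$ has independent coordinates conditional on $o$, each bounded in $[0,1]$, so by Hoeffding it is $\tfrac{1}{4}$-sub-Gaussian as a vector in $\R^{mH}$, and the mean-deviation bound satisfies $\sum_{i=1}^m\ltwo{\bR^M_i(o)-\bR^{\oM}_i(o)}^2\le 2m$. Applying Lemma~\ref{lemma:vovk-delta-rew} to this concatenated vector with the learning rate $\etar$ acting on $\sum_i\ltwot{\br^t_i-\bR^M_i(o^t)}$ reproduces the computation in~\eqref{eqn:proof-vovk-delta-to} with updated sub-Gaussian constants depending on $m$, and the telescoping/covering argument~\eqref{eqn:vovk-proof-excess} goes through unchanged. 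This yields
\[
\sum_{t=1}^T\E_t\brac{\DH{\MsP(\pi^t),\hM^{t,\mathsf{P}}(\pi^t)}+\sum_{i=1}^m\ltwo{\bR^\star_i(o^t)-\bR^{\hM^t}_i(o^t)}^2}\le C_m\brac{\log|\cM_0|+T\rho+\log(1/\delta)},
\]
for a constant $C_m$ depending on $m$ and the chosen $(\etap,\etar)=(1/3,1/3)$. Since $\max_i\ltwot{\cdot}\le\sum_i\ltwot{\cdot}$, the left-hand side upper bounds $\EstwtH$ for the MG divergence~\eqref{equation:def-drl-mg}, producing the advertised bound. The main obstacle is verifying that the feasibility condition on $(\etap,\etar)$ in Lemma~\ref{lemma:vovk-delta-rew}, namely $c<2/\sigma^2$, remains satisfied when we inflate $\sigma^2$ and $D^2$ by the player-count factor; this just constrains $C_m$ to grow polynomially in $m$ but is absorbed into the $\cO(\cdot)$ in~\eqref{equation:vovkalg-mg}. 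Substituting this bound on $\EstwtH$ into the conclusion of the second step completes the proof.
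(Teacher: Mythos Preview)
Your first two steps (the E2D meta-argument and the triangle inequality for the MG $\wdTV$) match the paper exactly. The third step is where your route diverges, and it costs you a factor of $m$.

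You propose to apply Theorem~\ref{thm:vovk-cover} as a black box to the concatenated reward vector in $\R^{mH}$, taking $D^2=2m$. A minor correction first: $\sigma^2$ is \emph{not} inflated by $m$---each coordinate is still $\tfrac14$-sub-Gaussian, so $\sigma^2=\tfrac14$ regardless of $m$, and the feasibility condition $2\etap+2\sigma^2\etar<1$ holds with the same margin. What does change is $D^2$, and this is where the loss enters: with $D^2=2m$ and $c=\etar/(1-2\etap)=1$, the constant $c'=(1-e^{-c(1-2\sigma^2 c)D^2})/D^2=(1-e^{-m})/(2m)$ scales like $1/(2m)$, giving $C_m=\Theta(m)$.

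The paper avoids this by not invoking Theorem~\ref{thm:vovk-cover} as a black box. Instead it returns to the intermediate inequality~\eqref{eqn:proof-vovk-delta-to} in that proof, which still carries the term $1-\exp\bigl(-\tfrac12\sum_i\ltwot{\bR^M_i(o)-\bR^\star_i(o)}\bigr)$ \emph{before} linearization. At this stage one uses monotonicity, $\sum_i\ge\max_i$, to replace the sum by the max \emph{inside} the exponential, and only then applies $1-e^{-x/2}\ge\tfrac{1-e^{-1}}{2}x$ for $x\in[0,2]$ (valid since $\max_i\ltwot{\cdot}\le 2$). This reordering yields the universal constant $10$ in $\EstwtH\le 10[\log|\cM_0|+2T\rho+2\log(2/\delta)]$, independent of $m$. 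Your ordering---first linearize over $[0,2m]$, then bound $\max_i\le\sum_i$---pays for linearizing over a range that grows with $m$.

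Whether an $m$-dependent constant is ``absorbed into the $\cO$'' is a matter of convention, but the paper's proof does achieve a universal constant, so your argument as written is weaker than what is actually proved.
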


\begin{proof}
Note that \cref{lemma:tDTV-triagnle} still holds for the $\wdTV$ we defined in \eqref{equation:def-dtv-mg}. Therefore, repeating the argument in the proof of \cref{thm:E2D-ME} gives that for the output model $\hM$,
\begin{align*}
\max_{\opi\in\Pi}\wDTV{ \hM(\opi), M^\star(\opi) }
\leq& 6\omdec_{\gamma}(\cM) + 6\gamma\cdot\frac{\EstwtH}{T},
\end{align*}
where $\EstwtH \defeq \sum_{t=1}^T \EE_{\oM\sim\mu^t}\EE_{\pi \sim \pexp^t}\left[ \wtdH^2(M^\star(\pi), \oM(\pi))\right]$ here is defined with respect to the $\wtdH$ given by \eqref{equation:def-drl-mg}. It remains to upper bound $\EstwtH$ under subroutine \eqref{eqn:TA-MG}.

Note that the guarantee provided by \cref{thm:vovk-cover} is actually that: suppose the reward vector $\br^t$ is $\sigma^2$-sub-Gaussian, and $2\etap+2\sigma^2\etar<1$, then \cref{alg:TA-infinite} achieves that with probability at least $1-\delta$,
\begin{align*}
    \MoveEqLeft
    \sum_{t=1}^T\EE_{M\sim \mu^t}\brac{\etap\EE_t\brac{\DH{\PP^M(\pi^t), \PP^{\Ms}(\pi^t)}}+(1-2\etap)\EE_{t}\brac{1-\exp\paren{-c(1-2\sigma^2c)\ltwot{\bR^M(o^t)-\bR^{\Ms}(o^t)}}} }\\
    &\leq \log\abs{\cM_0}+2T\rho(\etap+\etar)+2\log(2/\delta),
\end{align*}
where $c=\etar/(1-2\etap)$ as in \cref{thm:vovk-finite}, e.g. by combining \eqref{eqn:proof-vovk-delta-to}, \eqref{eqn:vovk-proof-optlike-2} and \eqref{eqn:vovk-proof-sum-delta-demo}. Therefore, we can plug in $\etap=\etar=1/3$, $c=1$ and $\br^t$ being $[\br^t_1,\cdots,\br^t_m]$ (and hence $\sigma^2=1/4$) in the guarantee above to derive that: subroutine \eqref{eqn:TA-MG} achieves that with probability at least $1-\delta$,
\begin{align*}
    \MoveEqLeft
    \sum_{t=1}^T\EE_{M\sim \mu^t}\EE_{\pi^t\sim \pexp^t}\brac{\DH{\PP^M(\pi^t), \PP^{\Ms}(\pi^t)}+\EE_{o^t\sim \Ms(\pi^t)}\brac{1-\exp\paren{-\frac12\ltwot{\bR^M(o^t)-\bR^{\Ms}(o^t)}}} }\\
    &\leq 3\brac{\log\abs{\cM_0}+2T\rho+2\log(2/\delta)}.
\end{align*}
Now, it holds that for all $o\in\cO$,
\begin{align*}
1-\exp\paren{-\frac12\ltwot{\bR^M(o)-\bR^{\Ms}(o)}}
\geq &
1-\exp\paren{-\frac12\max_{i\in[m]}\ltwot{\bR^M_i(o)-\bR^{\Ms}_i(o)}}\\
\geq &\frac{1-e^{-1}}{2}\max_{i\in[m]}\ltwot{\bR^M_i(o)-\bR^{\Ms}_i(o)},
\end{align*}
which is due to $\ltwot{\bR^M_i(o)-\bR^{\Ms}_i(o)}\leq 2$ for all $i\in[m], o\in\cO$.
Thus, under the assumption of our \cref{thm:E2D-ME}, with probability at least $1-\delta$,
\begin{align*}
\EstwtH
=&\sum_{t=1}^T \EE_{\oM\sim\mu^t}\EE_{\pi \sim \pexp^t}\left[ \wtdH^2(M^\star(\pi), \oM(\pi))\right] \\
=&\sum_{t=1}^T\EE_{M\sim \mu^t}\EE_{\pi^t\sim \pexp^t}\brac{\DH{\PP^M(\pi^t), \PP^{\Ms}(\pi^t)}+\EE_{o\sim \Ms(\pi^t)}\brac{\max_{i\in[m]}\ltwot{\bR^M_i(o)-\bR^{\Ms}_i(o)}}}\\
\leq& 10\brac{\log\abs{\cM_0}+2T\rho+2\log(2/\delta)}.
\end{align*}
This completes the proof of \cref{thm:E2D-ME-MG}.
\end{proof}

\subsection{Learning equilibria in Markov Games}
\label{appendix:mg-learn-equilibria}

Similar as MDPs, we define linear mixture/low-rank/linear MGs as follows: View the MG as an ``MDP'' with a single ``mega-agent'' taking joint action $\ba_h=(a_{h,1},\dots,a_{h,m})$ at the $h$-th step. Note that the action space has size $A=\prod_{i\in[m]} A_i$. We say a class $\cM$ of MGs is a class of linear mixture/low-rank/linear MGs, if their corresponding class of MDPs is a linear mixture/low-rank/linear MDP satisfying the definitions in~\cref{example:linear-mixture-demo}, \cref{example:linear-mdp-demo} and \cref{example:low-rank-demo}. For example, $\cM$ is a class of linear mixture MG with feature dimension $d$ if there exists fixed feature maps $(\phi_h:\cS\times\cA\times\cS\to\R^d)_{h\in[H]}$ such that for any $M\in\cM$, there exists vectors $(\theta_h)_{h\in[H]}$ such that $\P_h(s'|s,a)=\<\theta_h, \phi_h(s'|s,\ba)\>$ for all $(h,s,\ba,s')$. We remark the following additional reference for Linear MGs~\citep{xie2020learning}, Linear Mixture MGs~\citep{chen2021almost}, and MGs with general function approximation~\citep{huang2021towards,jin2022power}.

We consider the following reduction to model estimation as a unified algorithm for learning equilibria in Markov Games: Run algorithm \mealg~with model class $\cM$ for $T$ episodes to obtain an estimated model $\hM$. Then, simply return an \{NE,CE,CCE\} of $\hM$. We summarize this in~\cref{alg:MG}.

\begin{algorithm}[t]
\caption{Learning NE/CE/CCE in Markov Games by reduction to model estimation} 
\begin{algorithmic}[1]
\label{alg:MG}
\STATE \textbf{Input:} Model class $\cM$. Desired equilibrium $\EQ\in\set{\NE,\CE,\CCE}$.
\STATE Run \mealg~(\cref{alg:E2D-ME}) over $\cM$ for $T$ episodes, and obtain estimated model $\hM$.
\STATE \textbf{Output:} Policy $\hpi \defeq \EQ(\hM)$.
\end{algorithmic}
\end{algorithm}

For these Markov Games problem classes, we can adapt the guarantee for the \mealg~algorithm as follows:
\begin{itemize}[leftmargin=2em]
    \item Prove similar AMDEC bounds through \cref{prop:belrep-am} as in \cref{appendix:proof-examples}, but with newly defined divergences~\cref{equation:def-drl-mg} \&~\cref{equation:def-dtv-mg} for MGs. The essential difference is that we have the additional $\max_{i\in[m]}$ in the divergence, which however does not affect the arguments in \cref{appendix:proof-examples}. 
    \item By~\cref{thm:E2D-ME-MG}, the \Vovkalg~subrouine achieves estimation guarantee~\cref{equation:vovkalg-mg}. 
\end{itemize}
Consequently, the \mealg~algorithm for these Markov Games achieve bounds of the form (where $\cM_0$ is an $1/T$-optimistic covering of $\cM$):
\begin{align}
\label{equation:model-estimation-mg}
    \DTVPid{M^\star,\hM} \le \tO\paren{ \sqrt{D \log\abs{\cM_0} / T}},
\end{align}
where $D=\set{SAH^2,dH^2,dH^2,dAH^2}$ for \{tabular, linear mixture, linear, low-occupancy rank\} MGs.

Then, as~\cref{alg:MG} returns $\hpi=\EQ(\hM)$, by~\cref{prop:eqgap-to-model-error}, we obtain
\begin{align*}
    \Gap_{\EQ}(\hpi, M^\star) \le \Gap_{\EQ}(\hpi, \hM) + 2\DTVPid{M^\star,\hM} \le 2\DTVPid{M^\star,\hM}.
\end{align*}
for $\EQ\in\set{\NE,\CE,\CCE}$. Combined with~\cref{equation:model-estimation-mg} and covering number bounds, we directly obtain the following result.
\begin{theorem}[Learning NE/CE/CCE in Markov Games]
\label{thm:MG}
For $\EQ\in\set{\NE,\CE,\CCE}$, with probability at least $1-\delta$, the output policy $\hpi$ of~\cref{alg:MG} achieves $\Gap_{\EQ}(\hpi)\le\eps$ within $T$ episodes of play, where
\begin{enumerate}[leftmargin=2em]
    \item For tabular MGs, $T\le \tbO{S^3A^2H^3/\epsilon^2}$;
    \item For linear mixture MGs,  $T\le \tbO{d^2H^3/\epsilon^2}$;
    \item For linear MGs, $T\le \tbO{dH^2\log\abs{\cM}/\epsilon^2}$;
    \item For MGs with occupancy rank at most $d$ (including low-rank MGs with rank $d$), $T\le \tbO{dAH^2\log\abs{\cM}/\epsilon^2}$. %
\end{enumerate}
\end{theorem}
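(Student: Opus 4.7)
My plan is to execute the three-step pipeline that has been set up in the paper: (i) the reduction from equilibrium learning to all-policy model estimation via Proposition~\ref{prop:eqgap-to-model-error}, (ii) the sample-complexity guarantee of \mealg~in the Markov-Game setting given by Theorem~\ref{thm:E2D-ME-MG}, and (iii) problem-specific upper bounds on the \MDEC~$\omdec_\gamma(\cM)$ together with optimistic-covering numbers $\log|\cM_0|$ for each of the four model classes. Concretely, since Algorithm~\ref{alg:MG} returns $\hpi = \EQ(\hM)$, Proposition~\ref{prop:eqgap-to-model-error} together with $\Gap_{\EQ}(\hpi, \hM)=0$ yields $\Gap_{\EQ}(\hpi, M^\star) \le 2\DTVPid{M^\star, \hM}$, so it suffices to make the right-hand side at most $\eps/2$.

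For the second step, I will choose a $\rho$-optimistic cover with $\rho = 1/T$ and apply Theorem~\ref{thm:E2D-ME-MG} to get $\DTVPid{M^\star, \hM} \le \tbO{\omdec_\gamma(\cM) + \gamma[\log|\cM_0| + \log(1/\delta)]/T}$. Optimizing over $\gamma>0$, any MDEC bound of the form $\omdec_\gamma(\cM) \lesssim D/\gamma$ combined with a covering-log bound $\log|\cM_0| \lesssim \tbO{\dim(\cM)}$ translates into $\DTVPid{M^\star, \hM} \lesssim \tbO{\sqrt{D\cdot \dim(\cM)/T}}$, which, when equated with $\eps$, yields exactly the four stated sample complexities once $D$ and $\dim(\cM)$ are instantiated as $D\in\{SAH^2, dH^2, dH^2, dAH^2\}$ and $\dim(\cM)\in\{\tO(S^2AH), \tO(dH), \tO(\log|\cM|), \tO(\log|\cM|)\}$ for tabular, linear mixture, linear, and low-rank MGs respectively. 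The covering bounds are essentially immediate adaptations of Example~\ref{example:cover-tabular} and Proposition~\ref{prop:cover-linear-mixture} with the single-agent action $a$ replaced by the joint action $\ba\in\prod_i \cA_i$.

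The main obstacle is therefore the MDEC bound for Markov Games itself, which is the analogue of Proposition~\ref{prop:example-model-est} under the modified divergences~\eqref{equation:def-drl-mg}--\eqref{equation:def-dtv-mg}. The new subtlety is the outer $\max_{i\in[m]}$ in the reward term: one must build a model-estimation representation whose per-step functions $g^{M;\oM}_h$ include $\max_{i\in[m]}\lone{\bR^M_i(\cdot) - \bR^{\oM}_i(\cdot)}$ and still control their decoupling coefficient. Fortunately, this is exactly the situation anticipated by Corollary~\ref{cor:dc-linear-max}(3), which shows that functions of the form $f(x) = \max_r \sum_i |\<\theta(f),\phi_{i,r}(x)\>|$ still enjoy the clean linear-class estimate $\dcpl(\cF,\gamma) \le d/(4\gamma)$. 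Thus, by redefining the per-layer functions of the QER/VER so that the reward term has an outer max over players while the transition term is unchanged, the proof of Proposition~\ref{prop:example-model-est} goes through verbatim via Corollary~\ref{cor:dc-linear-max}(2)--(3), giving $\dcpl(\cG^{\oM}_h,\gamma)\le d/(4\gamma)$ and hence, through Proposition~\ref{prop:ME-DEC-bound}, the same MDEC bounds $\omdec_\gamma(\cM)\lesssim dH^2/\gamma$ for linear and linear-mixture MGs and $\omdec_\gamma(\cM)\lesssim dAH^2/\gamma$ for MGs with occupancy rank $d$; the tabular case follows as a special case of linear MGs with $d = SA$ and joint action size $A=\prod_i A_i$. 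Plugging these bounds and the corresponding covering estimates into the optimization described in the previous paragraph completes the proof.
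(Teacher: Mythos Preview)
Your proposal is correct and matches the paper's proof essentially line for line: the paper likewise reduces to model estimation via Proposition~\ref{prop:eqgap-to-model-error}, invokes Theorem~\ref{thm:E2D-ME-MG} with a $1/T$-optimistic cover, and then establishes the MG-version of the MDEC bounds by rewriting the QER/VER with the extra $\max_{i\in[m]}$ in the reward term and applying Corollary~\ref{cor:dc-linear-max}(3) (with the low-rank case unchanged from the MDP proof). The instantiations of $D$ and the covering numbers you list are exactly those used in the paper, so there is nothing to add.
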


For succinctness, here we only present guarantees for these four problem classes; The analogous results for other problem classes in \cref{appendix:proof-examples} also hold true. 
To our best knowledge, \cref{thm:MG} provides the first unified algorithm for learning equilibrium in Markov Games with general model classes, building the model estimation algorithm \mealg~through the DEC framework. 

We remark that the dependence on $A=\prod_{i\in[m]}A_i$ (which is exponential in the number of players $m$) in~\cref{thm:MG} happens due to the model-based nature of~\cref{alg:MG} (in particular, the need of estimating the model behavior on \emph{any} policy $\pi\in\Pi$). This dependence is similar as existing work for learning MGs using model-based approaches~\citep{bai2020provable,liu2021sharp}, but is worse than model-free approaches such as the V-Learning algorithm~\citep{bai2020near,song2021can,jin2021v,mao2022provably} whose sample complexity only depends polynomially on $\max_{i\in[m]}A_i$. Extending the DEC framework to obtain ${\rm poly}(\max_{i\in[m]}A_i)$ dependence in the sample complexity would be an interesting direction for future work.

\section{Proofs for Section~\ref{section:mops}}\label{appendix:MOPS}

\subsection{Algorithm MOPS}
\label{appendix:proof-mops}

Here we present a more general version of the \mops~algorithm where we allow $\cM$ to be a possibly infinite model class, and require a prior $\mu^1\in\Delta(\cM)$ and an optimistic likelihood function $\tPP$ (cf.~\cref{def:opt-cover}) as inputs. The algorithm stated in~\cref{section:mops} is a special case of~\cref{alg:MOPS} with $\abs{\cM}<\infty$, $\tPP=\PP$, and $\mu^1=\unif(\cM)$.

\begin{algorithm}[t]
	\caption{\textsc{MOPS}~\citep{agarwal2022model}} 
	\begin{algorithmic}[1]
	\label{alg:MOPS}
	\STATE \textbf{Input:} Parameters $\etap,\etar,\gamma>0$; prior distribution $\mu^1\in\Delta(\cM)$; optimistic likelihood function $\tPP$.
	\FOR{$t=1,\ldots,T$}
    \STATE Sample $M^t\sim \mu^t$ and set $\pi^t=\piest_{M^t}$.
    \STATE Execute $\pi^t$ and observe $(o^t,r^t)$.
    \STATE Update posterior of models by Optimistic Posterior Sampling (OPS):
    \begin{align}
    \label{equation:ops}
    \!\!\!\!\mu^{t+1}(M) \; \propto_M \; \mu^{t}(M) \cdot \exp\paren{ \gamma^{-1} f^M(\pi_M) + \etap\log \tPP^{M,\pi^t}(o^t) - \etar\ltwot{\br^t-\bR^M(o^t)} }.
    \end{align}    
    \ENDFOR
    \STATE \textbf{Output:} $\pout$ the distribution of $\pi=\pi_M$, where $M\sim\mu^t$, $t\sim\Unif([T])$. %
   \end{algorithmic}
\end{algorithm}

We state the theoretical guarantee for~\cref{alg:MOPS} as follows.
\begin{theorem}[MOPS]
\label{thm:MOPS-full}
Given a $\rho$-optimistic cover $(\tPP,\cM_0)$,~\cref{alg:MOPS} with $\etap=1/6$, $\etar=0.6$ and $\mu^1=\Unif(\cM_0)$ achieves the following with probability at least $1-\delta$:
\begin{align*}
    \regdm\leq T\brac{\psc_{\gamma/6}(\cM, M^\star) + \frac{2}{\gamma}} + \gamma\brac{\log\abs{\cM_0}+3T\rho+2\log(2/\delta)}.
\end{align*}
Choosing the optimal $\gamma>0$, with probability at least $1-\delta$, suitable implementation of~\cref{alg:MOPS} achieves
\begin{align*}
    \regdm\leq 12 \inf_{\gamma>0}\set{T\psc_{\gamma}(\cM, M^\star) + \frac{T}{\gamma}+\gamma\brac{\est(\cM,T)+\log(1/\delta)}}.
\end{align*}
\end{theorem}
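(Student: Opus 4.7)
\textbf{Proof plan for Theorem \ref{thm:MOPS-full}.} The plan is to adapt the potential-based posterior sampling analysis (as in \citet{zhang2022feel,agarwal2022model}) to our setting with the modified divergence $\wtdH^2$ and the optimistic covering, and then invoke the PSC definition in the resulting bound. The starting point is the decomposition
\begin{align*}
f^{M^\star}(\pi_{M^\star}) - f^{M^\star}(\pi^t)
= \underbrace{\bigl[f^{M^\star}(\pi_{M^\star}) - f^{M^t}(\pi_{M^t})\bigr]}_{\text{optimism gap}}
+ \underbrace{\bigl[f^{M^t}(\pi_{M^t}) - f^{M^\star}(\pi_{M^t})\bigr]}_{\text{estimation gap}},
\end{align*}
where $\pi^t = \pi_{M^t}$ with $M^t \sim \mu^t$. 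Taking expectation over $M^t \sim \mu^t$ and summing over $t$, the second term---after also averaging over an independent $M' \sim \mu^t$ inside the Hellinger expression---is precisely of the form controlled by $\psc_{\gamma/6}(\cM, M^\star)$, up to a Hellinger information-gain term that we will pay for.

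The heart of the argument is to bound the optimism gap by analyzing the log-partition function of the OPS update in \eqref{equation:ops}. Define $Z^t \defeq \sum_M \mu^1(M)\exp\bigl(\sum_{s<t}[\gamma^{-1}f^M(\pi_M) + \etap \log \tPP^{M,\pi^s}(o^s) - \etar \|\br^s - \bR^M(o^s)\|_2^2]\bigr)$. Telescoping $\log(Z^{T+1}/Z^1)$ and restricting the sum to any single $M \in \cM_0$ that covers $M^\star$ (in the sense of Definition \ref{def:opt-cover}) yields a lower bound
\begin{align*}
\log(Z^{T+1}/Z^1) \ge -\log|\cM_0| + T\gamma^{-1}f^{M^\star}(\pi_{M^\star}) + \sum_t \bigl[\etap \log \tPP^{M,\pi^t}(o^t) - \etar\|\br^t - \bR^M(o^t)\|_2^2\bigr] - O(T\rho),
\end{align*}
while the per-step increments satisfy
$\log(Z^{t+1}/Z^t) = \log \E_{M \sim \mu^t}\bigl[\exp\bigl(\gamma^{-1}f^M(\pi_M) + \etap\log\tPP^{M,\pi^t}(o^t) - \etar\|\br^t-\bR^M(o^t)\|_2^2\bigr)\bigr]$,
which we upper bound using that $e^x \le 1 + x + x^2/2$ is not quite enough; instead we split the exponent as in \eqref{eqn:proof-vovk-delta-to} into a convex combination of $\sqrt{\tPP^M/\PPs}$ and $\exp(-c\|\bR^M-\Rs\|^2)$ terms to extract the $\wtdH^2$ divergence while simultaneously isolating the $\gamma^{-1}f^M(\pi_M)$ contribution to the optimism gap.

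Rearranging this chain of inequalities (and applying Chernoff / Lemma \ref{lemma:concen} to pass from expected to empirical log-increments with failure probability $\delta$) produces, for appropriately chosen constants,
\begin{align*}
\sum_t \E_{M \sim \mu^t}\brac{\gamma^{-1}\bigl(f^{M^\star}(\pi_{M^\star}) - f^M(\pi_M)\bigr)}
\lesssim \sum_t \E_{M \sim \mu^t}\E_{M'\sim\mu^t}\brac{c_1\wtdH^2(M^\star(\pi_{M'}), M(\pi_{M'}))} + \text{covering terms},
\end{align*}
with ``covering terms'' accounting for $\log|\cM_0| + T\rho + \log(1/\delta)$. Combining with the PSC-based bound on the estimation gap, multiplying through by $\gamma$, and choosing $\etap = 1/6$, $\etar = 0.6$ so that the Hellinger coefficients collapse into $\gamma \cdot \wtdH^2$ with the factor of 6 that forces $\psc_{\gamma/6}$ rather than $\psc_\gamma$, yields the first displayed bound. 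The second bound follows by optimizing $\gamma$ and applying the covering with $\rho$ chosen to minimize \eqref{equation:est-cover}.

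The main obstacle I expect is purely bookkeeping: carefully tracking the constants so that a single choice of $(\etap, \etar)$ makes the coefficient of the Hellinger term on the ``estimation gap'' side match the coefficient produced on the ``optimism'' side (hence the $\gamma/6$), and simultaneously ensuring that the optimistic covering inequality $\E_{o\sim\PPs}[\sqrt{\tPP^M/\PPs}] \le 1 - \tfrac12 \dH^2(\PP^M,\PPs) + \rho$ from \eqref{eqn:vovk-proof-optlike} plugs in cleanly without spoiling the constants. The reward-covering term $\rho$ enters through the same argument as in \eqref{eqn:vovk-proof-excess}, contributing the $3T\rho$ in the final bound once the $\log\tPP$ and $\|\br-\bR^M\|^2$ perturbations are aggregated.
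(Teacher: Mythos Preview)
Your approach is essentially the same as the paper's: the paper decomposes the regret exactly as you do, bounds the ``optimism gap plus Hellinger'' term via a log-partition/potential analysis of the OPS update (packaged as Corollary~\ref{cor:MOPS-online}, itself a specialization of Theorem~\ref{thm:mops-online}), and bounds the ``estimation gap minus Hellinger'' term directly by the PSC definition. The covering and concentration steps you describe are also the ones used.

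One sign needs correcting, and it is the point on which the whole argument hinges. Your displayed inequality has the Hellinger term on the wrong side. What the log-partition analysis actually produces (after the Jensen/convex-combination split you cite from \eqref{eqn:proof-vovk-delta-to}, Hoeffding on the $\gamma^{-1}f^M(\pi_M)$ part, and Lemma~\ref{lemma:concen}) is
\begin{align*}
\sum_{t=1}^T \E_{M\sim\mu^t}\Bigl[f^{M^\star}(\pi_{M^\star}) - f^M(\pi_M) + \tfrac{\gamma}{6}\,\E_{\pi\sim p^t}\bigl[\wtdH^2(M^\star(\pi), M(\pi))\bigr]\Bigr]
\le \tfrac{2T}{\gamma} + \gamma\bigl[\log|\cM_0| + 3T\rho + 2\log(2/\delta)\bigr],
\end{align*}
i.e.\ the Hellinger term appears with a \emph{positive} coefficient on the left-hand side. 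This is what lets it cancel exactly against the $+\tfrac{\gamma}{6}\wtdH^2$ you must add to the estimation gap to invoke $\psc_{\gamma/6}$. With the sign as you wrote it, the two Hellinger contributions would add rather than cancel, and you would be left with an uncontrolled $\sum_t \wtdH^2$ term. The mechanism forcing this sign is that $\E_{o\sim\PPs}[\sqrt{\tPP^M/\PPs}] \le 1 - \tfrac12\dH^2(\PP^M,\PPs) + \rho$ upper-bounds $\E_t[e^{-\Delta^t}]$, so after $-\log$ and the martingale inequality the $\dH^2$ lands on the same side as the optimism gap, not opposite it.
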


When $\cM$ is finite, clearly $(\PP,\cM)$ itself is a $0$-optimistic covering, and hence \cref{thm:MOPS-full} implies \cref{thm:MOPS} directly.

It is worth noting that \cite{agarwal2022model} states the guarantee of MOPS in terms of a general prior, with the regret depending on a certain ``prior around true model'' like quantity.
The proof of \cref{thm:mops-online} can be directly adapted to work in their setting; however, we remark that, obtaining an explicit upper bound on their ``prior around true model'' in a concrete problem likely requires constructing an explicit covering, similar as in \cref{thm:MOPS-full}. %

\begin{proofof}[thm:MOPS-full]
By definition,
\begin{align*}
\regdm=&
    \sum_{t=1}^T \E_{M\sim \mu^t}\brac{ f^\Ms(\pi_\Ms)-f^\Ms(\pi_M)}\\
    =
    &\underbrace{\sum_{t=1}^T \E_{M\sim \mu^t}\brac{ f^\Ms(\pi_\Ms)-f^M(\pi_M) + \frac{\gamma}{6}\EE_{\pi\sim p^t}\brac{\tDH{\Ms(\pi),M(\pi)}}}}_{\text{Bounded by \cref{cor:MOPS-online}}}\\
    &+\sum_{t=1}^T \underbrace{\E_{M\sim \mu^t}\brac{ f^M(\pi_M)-f^\Ms(\pi_M) - \frac{\gamma}{6}\EE_{\pi\sim p^t}\brac{\tDH{\Ms(\pi),M(\pi)}}}}_{\text{Bounded by }\psc}\\
    \le& \gamma\brac{\log\abs{\cM_0}+3T\rho+2\log(2/\delta)}+\frac{2T}{\gamma}+T\psc_{\gamma/6}(\cM,\Ms). \qedhere
\end{align*}
\end{proofof}

\subsection{Optimistic posterior sampling}

In this section, we analyze the following Optimistic Posterior Sampling algorithm under a more general setting. 
The problem setting and notation are the same as the online model estimation problem introduced in~\cref{appendix:proof-e2d-ta}. Additionally, we assume that each $M\in\cM$ is assigned with a scalar $V_M\in[0,1]$; in our application, $V_M$ is going to be optimal value of model $M$.

\begin{theorem}[Analysis of posterior in OPS]\label{thm:mops-online}
Fix a $\rho>0$ and a $\rho$-optimistic covering $(\tPP,\cM_0)$ of $\cM$. Under the assumption of \cref{thm:vovk-finite}, the following update rule
\begin{align}\label{eqn:mops-online-def}
    \mu^{t+1}(M) \; \propto_M \; \mu^{t}(M) \cdot \exp\paren{\gamma^{-1}V_M+\etap\log \tPP^M(o^t|\xp^t) - \etar\ltwot{ \br^t-\bR^M(o^t) } }.
\end{align}
with $2\etap+4\sigma^2\etar<1$ and $\mu^1=\unif(\cM_0)$ achieves with probability at least $1-\delta$ that
\begin{align*}
    \sum_{t=1}^T \E_{M\sim \mu^t}\brac{ \Vs-V_M+c_0\gamma\err_M^t } 
    \le& \frac{T}{8\gamma(1-2\etap-4\sigma^2\etar)} + \gamma\log\abs{\cM_0}\\
    &+ \gamma\brac{T\rho(2\gamma^{-1}+2\etap+\etar)+2\log(2/\delta)},
\end{align*}
where $\Vs=V_{M^\star}$ and $c_0=\min\{\etap, 4\sigma^2\etar(1-e^{-D^2/8\sigma^2})/D^2\}$, as long as there exists $M\in\cM_0$ such that $\Ms$ is covered by $M$ (cf.~\cref{def:opt-cover}) and $V_M\geq \Vs-2\rho$.
\end{theorem}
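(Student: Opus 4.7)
The plan is to adapt the log-normalizer analysis from the proof of \cref{thm:vovk-cover} so as to accommodate the extra optimism factor $\exp(\gamma^{-1}V_M)$ appearing in the OPS update~\eqref{eqn:mops-online-def}. After absorbing all $M$-independent terms from each exponent, the update rewrites as $\mu^{t+1}(M)\propto\mu^t(M)\exp(\tilde\phi_M^t)$ with
\[
  \tilde\phi_M^t := \gamma^{-1}(V_M-V_\star) + \etap\log\tfrac{\tPP^M(o^t|\pi^t)}{\PPs(o^t|\pi^t)} + \etar\delta_M^t, \qquad \delta_M^t := \|\br^t-\Rs(o^t)\|^2 - \|\br^t-\bR^M(o^t)\|^2.
\]
Setting $\tilde\Delta^t := -\log\EE_{M\sim\mu^t}[\exp(\tilde\phi_M^t)]$, the same telescoping identity as in~\eqref{eqn:vovk-proof-sum-delta-eq} gives $\sum_{t=1}^T\tilde\Delta^t \le \log|\cM_0| - \sum_{t=1}^T\tilde\phi_{M_0}^t$ for any fixed $M_0\in\cM_0$. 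The overall strategy is (i) to upper bound the right-hand side by specializing $M_0$ to a cover of $M^\star$, (ii) to lower bound each conditional expectation $-\log\EE[\exp(-\tilde\Delta^t)\mid\cF_{t-1}]$ by the target quantity $\EE_{M\sim\mu^t}[\gamma^{-1}(V_\star-V_M) + c_0\err_M^t]$, and then (iii) to chain the two halves through~\cref{lemma:concen} applied to $X_t=\tilde\Delta^t$.

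For (i), I take $M_0\in\cM_0$ to be the cover of $M^\star$ with $V_{M_0}\ge V_\star-2\rho$ (guaranteed by assumption). The optimistic-covering properties $\tPP^{M_0}\ge\PPs$ pointwise and $\|\bR^{M_0}-\Rs\|_1\le\rho$ imply $\tilde\phi_{M_0}^t \ge -2\gamma^{-1}\rho + \etar\delta_{M_0}^t$. I then control $-\etar\sum_t\delta_{M_0}^t$ with high probability by applying~\cref{lemma:concen} together with the MGF bound of~\cref{lemma:vovk-delta-rew} (using $\|\bR^{M_0}-\Rs\|_2^2\le\rho$), which yields $\sum_t\tilde\Delta^t \le \log|\cM_0| + T\rho(2\gamma^{-1}+2\etap+\etar) + 2\log(2/\delta)$ with probability at least $1-\delta/2$, matching the corresponding term in the claimed bound.

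For (ii), I apply a two-stage weighted Young split to the data part $\etap\log(\tPP^M/\PPs)+\etar\delta_M^t$ of $\tilde\phi_M^t$. Choosing the split weights $(\alpha,\beta)=(1-4\sigma^2\etar,\,4\sigma^2\etar)$, the reward piece is evaluated at the Lemma~\ref{lemma:vovk-delta-rew} parameter $c=\etar/\beta=1/(4\sigma^2)$, which is exactly the value maximizing $c(1-2\sigma^2 c)=1/(8\sigma^2)$ and produces the factor $4\sigma^2\etar(1-e^{-D^2/(8\sigma^2)})/D^2$ inside $c_0$; a further Jensen step $X^{\etap/\alpha}\le(2\etap/\alpha)\sqrt X + (1-2\etap/\alpha)$ (valid precisely when $2\etap+4\sigma^2\etar<1$) together with the optimistic-cover bound~\eqref{eqn:vovk-proof-optlike-2} on $\EE_o[\sqrt{\tPP^M/\PPs}]$ then gives $\EE[\exp(\etap\log(\tPP^M/\PPs)+\etar\delta_M^t)\mid\cF_{t-1}] \le 1-c_0\err_M^t + 2\etap\rho$. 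Multiplying by $\exp(\gamma^{-1}(V_M-V_\star))$, taking $\EE_{M\sim\mu^t}$, and then $-\log$, I invoke a Hoeffding-type bound on the bounded variable $\gamma^{-1}(V_M-V_\star)\in[-\gamma^{-1},\gamma^{-1}]$ under $\mu^t$ to separate the inner expectation of the product into an additive decomposition---regret $\gamma^{-1}\EE_M[V_\star-V_M]$ plus estimation error $c_0\EE_M[\err_M^t]$, minus a per-step quadratic variance correction of order $1/(8\gamma^2(1-2\etap-4\sigma^2\etar))$.

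Finally, for (iii), applying~\cref{lemma:concen} with failure probability $\delta/2$ yields $\sum_t-\log\EE[\exp(-\tilde\Delta^t)\mid\cF_{t-1}] \le \sum_t\tilde\Delta^t + \log(2/\delta)$ w.h.p.; substituting the lower bound from (ii) on the left and the upper bound from (i) on the right, and multiplying by $\gamma$, produces the stated bound. The main obstacle is the lower bound in (ii): the optimism factor $\exp(\gamma^{-1}(V_M-V_\star))$ couples multiplicatively with the Vovk error factor $1-c_0\err_M^t$ inside the inner expectation over $M\sim\mu^t$, and cleanly separating them into the sum ``regret $+$ estimation error'' forces a Hoeffding-type variance expansion that generates the unavoidable $T/(8\gamma(1-2\etap-4\sigma^2\etar))$ term; the hypothesis $2\etap+4\sigma^2\etar<1$ is what both makes the two-stage Young split admissible and keeps this variance correction finite.
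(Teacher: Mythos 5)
Your overall architecture matches the paper's proof: define the per-step log-normalizer increments $\Delta^t$ of the OPS update, telescope them against the covering element of $M^\star$ (your part (i) is essentially the paper's Chernoff/MGF argument in a different but equivalent packaging), lower-bound each conditional term $-\log\EE_t[\exp(-\Delta^t)]$ by the target regret-plus-estimation quantity, and chain through \cref{lemma:concen}. Your identification of the exponent $c=1/(4\sigma^2)$ as the maximizer of $c(1-2\sigma^2 c)$ and of the role of $2\etap+4\sigma^2\etar<1$ is also correct.

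However, step (ii) has a genuine gap exactly at the point you flag as ``the main obstacle.'' After you integrate out the data conditionally on $M$, you are left with
\begin{align*}
\EE_{M\sim\mu^t}\Bigl[\exp\bigl(\gamma^{-1}(V_M-\Vs)\bigr)\cdot\bigl(1-c_0\err_M^t+2\etap\rho\bigr)\Bigr],
\end{align*}
and you propose to split this multiplicative coupling by ``a Hoeffding-type bound on the bounded variable $\gamma^{-1}(V_M-\Vs)$.'' As written this does not yield the claimed correction: if you absorb the second factor into the exponent via $1-x\le e^{-x}$ and apply Hoeffding's lemma to $Z_M=\gamma^{-1}(V_M-\Vs)-c_0\err_M^t$, the variance correction scales with $\bigl(\gamma^{-1}+c_0(2+D^2)\bigr)^2/8$, not $1/(8\gamma^2(1-2\etap-4\sigma^2\etar))$; if instead you decouple via a tilted measure or a two-way H\"older over $M$ with exponents $(1/\eta_0,1/(1-\eta_0))$, the coefficient multiplying $\EE_M[\err_M^t]$ degrades below the stated $c_0$. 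The paper avoids this entirely by performing a single \emph{three-way} weighted Jensen/H\"older split of $\log\EE_{M\sim\mu^t}\EE_t[\exp(A_M+B_M+C_M)]$ with weights $(1-2\etap-4\sigma^2\etar,\,2\etap,\,4\sigma^2\etar)$ \emph{before} any conditioning (see \cref{eqn:mops-proof-decomp}), so that the value term, the likelihood term, and the reward term each sit in their own $\log\EE$ factor; Hoeffding's lemma is then applied only to $(V_M-\Vs)/(\gamma\eta_0)$, which has range $1/(\gamma\eta_0)$, producing exactly the $1/(8\gamma^2\eta_0)$ per-step correction, while the other two factors are handled by $\log(1+x)\le x$ and never multiply the optimism factor. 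You should restructure step (ii) to do the three-way split over the joint randomness first; with that change the rest of your argument goes through.
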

The proof of~\cref{thm:mops-online} can be found in~\cref{appendix:proof-mops-online}. 

As a direct corollary of~\cref{thm:mops-online}, the posterior $\mu^t$ maintained in the \mops~algorithm (\cref{alg:MOPS}) achieves the following guarantee.
\begin{corollary}\label{cor:MOPS-online}
Given a $\rho$-optimistic covering $(\tPP,\cM_0)$, subroutine~\cref{equation:ops} within \cref{alg:MOPS} with $\etap=1/6, \etar=0.6, \gamma\geq 1$ and uniform prior $\mu^1=\Unif(\cM_0)$ achieves with probability at least $1-\delta$ that
\begin{align*}
    \MoveEqLeft \sum_{t=1}^T \E_{M\sim \mu^t}\brac{ f^\Ms(\pi_\Ms)-f^M(\pi_M) + \frac{\gamma}{6}\EE_{\pi\sim p^t}\brac{\tDH{\Ms(\pi),M(\pi)}}} \\
    & \le \frac{2T}{\gamma}+\gamma\brac{\log\abs{\cM_0}+3T\rho+2\log(2/\delta)}.
\end{align*}
\end{corollary}
\begin{proof}
Note that subroutine~\cref{equation:ops} in~\cref{alg:MOPS} is exactly an instantiation of \eqref{eqn:mops-online-def} with context $\xp^t$ sampled from distribution $p^t$ (which depends on $\mu^t$), observation $o^t$, reward $\br^t$, and $V_M=f^M(\pi_M)$. Furthermore, $\E_{M\sim\mu^t}\brac{\err_M^t}$ corresponds to $\E_{\hat{M}^t\sim \mu^t} \E_{\pi^t\sim p^t} \brac{\wtdH^2(M^\star(\pi^t), \hat{M}^t(\pi^t))}$ (cf.~\cref{corollary:restate-lemma-tempered-aggregation}).

Therefore, in order to apply~\cref{thm:mops-online}, we have to verify: as long as $M\in\cM_0$ covers the ground truth model $\Ms$ (i.e. $\nrm{\bR^{M_0}(o)-\Rs(o)}_{1}\leq \rho$ and $\tPP^{M}(\cdot|\pi)\geq \PPs(\cdot|\pi)$ for all $\pi$), it holds that $V_M\geq \Vs-2\rho$. We note that $\Vs\geq f^{\Ms}(\pi_M)$, thus
\begin{align}\label{eqn:proof-mops-diff-v}
\Vs- V_M\leq \sup_{\pi}\abs{f^M(\pi)-f^{\Ms}(\pi)} \leq \sup_{\pi}\DTV{\PPs(\cdot|\pi), \PP^M(\cdot|\pi)}+\rho
\end{align}
by definition.
An important observation is that, for $\pi\in\Pi$,
\begin{align}\label{eqn:proof-mops-tv-opt}
\DTV{\PPs(\cdot|\pi), \PP^M(\cdot|\pi)}=\sum_{o\in\cO} \brac{\PPs(o|\pi)-\PP^M(o|\pi)}_{+}\leq \sum_{o\in\cO} \tPP^M(o|\pi)-\PP^M(o|\pi)\leq \rho^2.
\end{align}
Therefore, $\Vs- V_M\leq \rho+\rho^2\leq 2\rho$. 
Now we can apply \cref{thm:mops-online} and plug in $\sigma^2=1/4$, $D^2=2$ as in \cref{corollary:restate-lemma-tempered-aggregation}. Choosing $\etap=1/6$, $\etar=0.6$, and $\gamma\ge 1$, we have $4\etar+\gamma^{-1}+\etap\le 3$, $8(1-2\etap-\etar)\ge 1/2$, and $c_0=1/6$.
This completes the proof.
\end{proof}

\subsubsection{Proof of Theorem~\ref{thm:mops-online}}
\label{appendix:proof-mops-online}

For all $t\in[T]$ define the random variable
$$
\Delta^t:=-\log \EE_{M\sim \mu^t}\left[ \exp\left( \gamma^{-1}(V_M-\Vs) + \etap\log \frac{\tPP^M(o^t|\xp^t)}{\PPs(o^t|\xp^t) }+\etar\delta^t_M\right) \right],
$$
where $\delta_M^t$ is defined as in \eqref{eqn:vovk-proof-delta}. 

Similar as the proof of~\cref{thm:vovk-finite}, we begin by noticing that
\begin{align}\label{eqn:mops-proof-decomp}
\log \EEt{\exp\left(-\Delta^t\right)}
=&
\log \EE_{M\sim \mu^t}\EEt{\exp\left( \gamma^{-1}(V_M-\Vs) + \etap\log \frac{\tPP^M(o^t|\xp^t)}{\PPs(o^t|\xp^t) } + \etar\delta^t_M \right)} \notag\\
\leq &
(1-2\etap-4\sigma^2\etar)\log \EE_{M\sim \mu^t} \brac{\exp\paren{\frac{V_M-\Vs}{\gamma(1-2\etap-4\sigma^2\etar)}}} \notag\\
&+2\etap\log \EE_{M\sim \mu^t}\EEt{ \exp\left( \frac12 \log \frac{\tPP^M(o^t|\xp^t)}{\PPs(o^t|\xp^t) } \right) } \notag\\
&+4\sigma^2\etar\log \EE_{M\sim \mu^t}\EEt{ \exp\left(\frac{1}{4\sigma^2}\delta^t_M \right) },
\end{align}
which is due to Jensen's inequality.
For the first term, we abbreviate $\eta_0=1-2\etap-4\sigma^2\etar$ and consider $a_M:=(\Vs-V_M)/\gamma\eta_0$. Then by the boundedness of $a_M$ and Hoeffding's Lemma,
\begin{align}
\label{eqn:ops-proof-value}
\EE_{M\sim \mu^t}\brac{\exp(-a_M)}
\leq \exp\paren{\frac{\EE_{M\sim\mu^t}[V_M]-\Vs}{\gamma\eta_0}} \cdot\exp\paren{\frac{1}{8\gamma^2\eta_0^2}}.
\end{align}
The second term can be bounded as in~\cref{eqn:vovk-proof-optlike}:
\begin{align}\label{eqn:mops-proof-likelihood}
\MoveEqLeft
    \log \EE_{M\sim \mu^t}\EE_{t}\brac{ \exp\left( \frac12 \log \frac{\tPP^M(o^t|\xp^t)}{\PPs(o^t|\xp^t) } \right) } 
    \leq \log\EE_{M\sim \mu^t} \brac{1-\frac12\EEt{\dH^2(\PP^M(\cdot|\xp^t), \PPs(\cdot|\xp^t))}+\rho} \notag\\
    &\leq -\frac12\EE_{M\sim \mu^t}\EEt{\dH^2(\PP^M(\cdot|\xp^t), \PPs(\cdot|\xp^t))}+\rho,
\end{align}
and the third term can be bounded by \cref{lemma:vovk-delta-rew} (similar to~\cref{eqn:vovk-proof-reward}):
\begin{align}\label{eqn:mops-proof-rew}
\begin{split}
\MoveEqLeft
    \log \EE_{M\sim \mu^t}\EEt{ \exp\left(\frac{1}{4\sigma^2}\delta^t_M \right) }
    \leq \log\EE_{M\sim \mu^t}\EEt{ \exp\left(\frac{1}{8\sigma^2}\ltwot{ \bR^M(o^t)-\Rs(o^t) } \right) }\\
    &\leq \log\EE_{M\sim \mu^t}\EEt{ 1-(1-e^{-D^2/8\sigma^2})/D^2\ltwot{ \bR^M(o^t)-\Rs(o^t) } }\\
    &\leq -(1-e^{-D^2/8\sigma^2})/D^2\EE_{M\sim \mu^t}\EEt{\ltwot{ \bR^M(o^t)-\Rs(o^t) }}.
\end{split}
\end{align}
Plugging~\cref{eqn:ops-proof-value},~\cref{eqn:mops-proof-likelihood}, and~\cref{eqn:mops-proof-rew}
into \eqref{eqn:mops-proof-decomp} gives
\begin{align}
\label{eqn:ops-proof-regrettomgf}
\begin{aligned}
-\log \EEt{\exp\left(-\Delta^t\right)}
\geq& \frac{\Vs-\EE_{M\sim\mu^t}[V_M]}{\gamma}-\frac{1}{8\gamma^2\eta_0} \\
& +
2\etap\brac{\frac12\EE_{M\sim \mu^t} \EEt{\dH^2(\PP^M(\cdot|\xp^t), \PPs(\cdot|\xp^t))} -\rho}\\
&+4\sigma^2\etar\paren{1-e^{-D^2/8\sigma^2}}/D^2\cdot \EE_{M\sim \mu^t}\EEt{\ltwot{ \bR^M(o^t)-\Rs(o^t) }} \\
 \geq & \EE_{M\sim\mu^t}\brac{\gamma^{-1}(\Vs-V_M) + c_0\err_M^t}-\frac{1}{8\gamma^2\eta_0}-2\etap\rho.
\end{aligned}
\end{align}
On the other hand, by \cref{lemma:concen}, we have with probability at least $1-\delta/2$ that
\begin{align}
\label{eqn:ops-proof-concentration}
    \sum_{t=1}^{T} \Delta^t+\log(2/\delta)
    \geq& \sum_{t=1}^{T}-\log \EEt{\exp\left(-\Delta^t\right)}.
\end{align}
It remains to bound $\sum_{t=1}^{T} \Delta^t$. By the update rule~\cref{eqn:mops-online-def} and a telescoping argument similar to \eqref{eqn:vovk-proof-sum-delta-eq}, we have
$$
\sum_{t=1}^{T} \Delta^t=-\log \EE_{M\sim\mu^1}\exp\left( \sum_{t=1}^{T} \gamma^{-1}(V_M-\Vs)+\etap\log \frac{\tPP^M(o^t|\xp^t)}{\PPs(o^t|\xp^t) }  + \etar\delta^t_M\right).
$$
The following argument is almost the same as the argument we make to bound \eqref{eqn:vovk-proof-sum-delta}. Fix a $M\in\cM_0$ that covers $\Ms$ and $V_M-\Vs\geq -2\rho$.
We bound the following moment generating function
\begin{align*}
\EE\brac{\exp\left( \sum_{t=1}^T \Delta^t \right)}
=& \EE\brac{\frac{1}{\EE_{M\sim \mu^1}\exp\left( \sum_{t=1}^{T}\gamma^{-1}(V_M-\Vs)+\etap\log \frac{\tPP_M(o^t|\xp^t)}{\PPs(o^t|\xp^t) }  + \etar\delta^t_M\right)}}\\
\leq& \abs{\cM_0}\EE\brac{\exp\left( -\sum_{t=1}^{T}\gamma^{-1}(V_M-\Vs)+\etap\log \frac{\tPP_M(o^t|\xp^t)}{\PPs(o^t|\xp^t) }  + \etar\delta^t_M \right)}\\
\leq&
\exp\paren{2T\gamma^{-1}\rho}\abs{\cM_0}\EE\brac{ \prod_{t=1}^{T} \exp\paren{-\etar\delta^t_M}}\\
\leq&
\exp\paren{2T\gamma^{-1}\rho+T\rho\etar}\abs{\cM_0},
\end{align*}
where the first inequality is because $\mu^1(M)=1/\abs{\cM_0}$, the second inequality is due to $\tPP^{M}\geq \PPs$, and the last inequality follows from the same argument as \eqref{eqn:vovk-proof-excess}: by \cref{lemma:vovk-delta-rew} we have $\cond{\exp(-\etar\delta^t_M)}{o^t}\leq \exp(\etar\rho)$, and applying this inequality recursively yields the desired result. 

Therefore, with at least with probability $1-\delta/2$,
\begin{align}
\label{equation:ops-delta-bound}
\sum_{t=1}^{T} \Delta^t\leq \log\abs{\cM_0}+T\rho\paren{2\gamma^{-1} + 2\etar} + \log(2/\delta).
\end{align}
Summing~\cref{eqn:ops-proof-regrettomgf} over $t\in[T]$, then taking union of~\cref{equation:ops-delta-bound} and~\cref{eqn:ops-proof-concentration} establish the theorem. 
\qed

\subsection{Bounding PSC by Hellinger decoupling coefficient}

The Hellinger decoupling coefficient is 
introduced by~\citet{agarwal2022model} as a structural condition for the sample efficiency of the \mops~algorithm. \footnote{We remark that \cite{agarwal2022model} defines the Hellinger decoupling coefficient in terms of $H$ general functions $\pi_{\rm gen}(h,\mu)$ that map a $\mu\in\Delta(\cM)$ to a distribution of policies. Here, we only consider a simplified version (\cref{def:hel-dcp}) by assuming $\pi_{\rm gen}$ is linear with respect to $\mu$.
}

\begin{definition}[Hellinger decoupling coefficient]\label{def:hel-dcp}
Given $\alpha\in(0,1)$, $\epsilon\geq0$, the coefficient $\dcpha(\cM,\oM,\epsilon,\piest)$ is the smallest positive number $c^h\geq 0$ such that for all $\mu\in\Delta(\cM)$,
\begin{align*}
    &\EE_{M\sim\mu}\EE^{\oM,\pi_M}\brac{Q^{M,\pi_M}_h(s_h,a_h)-r_h-V^{M, \pi_M}_{h+1}(s_{h+1})}\\
    &\leq \paren{c^h\ \EE_{M,M'\sim\mu}\EE^{\oM,\piest_{M'}}\brac{\DH{\PP^M_h(\cdot|s_h,a_h), \PP^{\oM}_h(\cdot|s_h,a_h)}+\abs{R^M_h(s_h,a_h)-R^{\oM}_h(s_h,a_h)}^2}}^{\alpha}+\epsilon.
\end{align*}
The Hellinger decoupling coefficient $\dcp$ is defined as
\begin{align*}
    \dcpa(\cM,\oM,\epsilon):=\paren{\frac1H\sum_{h=1}^H \dcpha(\cM,\oM,\epsilon,\piest)^{\alpha/(1-\alpha)}}^{(1-\alpha)/\alpha}.
\end{align*}
\end{definition}

We remark that the main difference between the PSC and the Hellinger decoupling coefficient is that, the Hellinger decoupling coefficient is defined in terms of Bellman errors and Hellinger distances within each layer $h\in[H]$ separately, whereas the PSC is defined in terms of the overall value function and Hellinger distances of the entire observable ($o, \br$).

The following result shows that the PSC can be upper bounded by the Hellinger decoupling coefficient, and thus is a more general definition.

\begin{proposition}[Bounding PSC by Hellinger decoupling coefficient]
\label{prop:psc-to-dcp}
For any $\alpha\in(0,1)$, we have
    \begin{align*}
        \psc_{\gamma}(\cM,\oM)\leq H\inf_{\epsilon\geq 0}\paren{ (1-\alpha)\paren{\frac{2\alpha H\dcpa(\cM,\oM,\epsilon,\piest)}{\gamma}}^{\alpha/(1-\alpha)}+\epsilon }.
    \end{align*}
\end{proposition}
\begin{proof}
Fix $\cM$, $\oM\in\cM$, and $\alpha\in(0,1)$. Consider
\begin{align*}
    \cE^{M;\oM}_h(s_h,a_h):=\DH{\PP^M_h(\cdot|s_h,a_h), \PP^{\oM}_h(\cdot|s_h,a_h)}+\abs{R^M_h(s_h,a_h)-R^{\oM}_h(s_h,a_h)}^2.
\end{align*}
By the definition of $\wtdH$ and \cref{lemma:Hellinger-cond}, we have for any $h\in[H]$ that
\begin{align*}
    \EE^{\oM,\piest_{M'}}\brac{\cE^{M;\oM}_h(s_h,a_h)}\leq 2\tDH{\oM(\piest_{M'}), M(\piest_{M'})}.
\end{align*}

Fix $\eps\ge 0$ and and write $c^h=\dcpha(\cM,\oM,\epsilon,\piest)$. For any $\mu\in\Delta(\cM)$, we have
\begin{align*}
&\EE_{M\sim\mu}\brac{f^M(\pi_M)-f^{\oM}(\pi_M)}\\
=&\sum_{h=1}^H\EE_{M\sim\mu}\EE^{\oM,\pi_M}\brac{Q^{M,\pi_M}_h(s_h,a_h)-r_h-V^{M, \pi_M}_{h+1}(s_{h+1})}\\
\leq &\sum_{h=1}^H\paren{c^h\ \EE_{M,M'\sim\mu}\EE^{\oM,\piest_{M'}}\brac{\cE^{M;\oM}_h(s_h,a_h)}}^{\alpha}+H\epsilon\\
\leq& \sum_{h=1}^H\paren{c^h}^{\alpha}\paren{ 2\EE_{M,M'\sim\mu}\brac{\tDH{\oM(\piest_{M'}), M(\piest_{M'})}}}^{\alpha}+H\epsilon\\
\leq& \gamma\EE_{M,M'\sim\mu}\brac{\tDH{\oM(\piest_{M'}), M(\piest_{M'})}}+(1-\alpha)\paren{\frac{2\alpha H}{\gamma}}^{\alpha/(1-\alpha)}\sum_{h=1}^H \paren{c^h}^{\alpha/(1-\alpha)}+H\epsilon \\
=& \gamma\EE_{M,M'\sim\mu}\brac{\tDH{\oM(\piest_{M'}), M(\piest_{M'})}}+(1-\alpha)H^{1/(1-\alpha)}\paren{\frac{2\alpha \dcpa(\cM,\oM,\epsilon)}{\gamma}}^{\alpha/(1-\alpha)}+H\epsilon,
\end{align*}
where the last inequality is due to the fact that for all $x,y\geq 0$, $\alpha\in(0,1)$,
$$
x^{\alpha}y^{\alpha}\leq \alpha\cdot\frac{\gamma x}{\alpha H} + (1-\alpha)\paren{\frac{\alpha H y}{\gamma} }^{\alpha/(1-\alpha)}
=\frac{\gamma x}{H} + (1-\alpha)\paren{\frac{\alpha H}{\gamma} }^{\alpha/(1-\alpha)} \cdot y^{\alpha/(1-\alpha)},
$$
by weighted AM-GM inequality.
\end{proof}

\subsection{Proof of Proposition~\ref{prop:psc-belrep}}\label{proof-psc-belrep}
Fix a $\oM\in\cM$ and $\mu\in\Delta(\cM)$. By the definition of \belrep~(\cref{definition:err-rep}), for any $\eta> 0$,
\begin{align*}
    \MoveEqLeft \EE_{M\sim\mu} \brac{f^{M}(\pi_{M}) - f^{\oM}(\pi_{M})} 
     \leq \EE_{M\sim\mu} \brac{ \sum_{h=1}^H \EE_{\tau_h\sim \dtr_h(M; \oM)}\brac{\cE^{M; \oM}_h(\tau_h)} } \\ 
    =&~ \sum_{h=1}^H   \underbrace{ \EE_{M\sim\mu,\tau_h\sim \dtr_h(M;\oM)}\brac{ \cE_h^{M; \oM}(\tau_h) } - \eta \EE_{M\sim\mu}\EE_{M'\sim\mu,\tau_h\sim \dtr_h(M';\oM)} \brac{ \abs{ \cE_h^{M; \oM}(\tau_h) }^2 } }_{\leq \eta^{-1}\dimc(\cG^{\oM}_h, \cQ^{\oM}_h, \eta)} \\
    &~ + \eta \EE_{M\sim\mu}\EE_{M'\sim\mu,\tau_h\sim \dtr_h(M';\oM)} \brac{ \abs{ \cE_h^{M; \oM}(\tau_h) }^2 }\\
    \leq&~ \sum_{h=1}^H \eta^{-1}\dimc(\cG^{\oM}_h, \cQ^{\oM}_h, \eta) + \eta \EE_{M,M'\sim\mu}\brac{ \sum_{h=1}^H \EE_{\tau_h\sim \dtr_h(M';\oM)} \brac{ \abs{ \cE_h^{M; \oM}(\tau_h) }^2 } }\\
    \leq&~ \sum_{h=1}^H \eta^{-1}\dimc(\cG^{\oM}_h, \cQ^{\oM}_h, \eta) + \eta L \EE_{M,M'\sim\mu}\brac{ \tDH{ \oM(\piest_{M'}), M(\piest_{M'}) } }.
\end{align*}
Now taking $\eta=\gamma/L$ completes the proof.
\qed

\subsection{Proof of Proposition~\ref{prop:dec-to-psc}}
\label{appendix:proof-dec-to-psc}

First, we consider the on-policy case $\piest_M=\pi_M$ for simplicity. In this case, we show the following bound on DEC (stronger than the bound on PACDEC):
\begin{align}\label{eqn:dec-to-psc}
    \dec_{\gamma}(\cM) \le \sup_{\oM \in \cM} \psc_{\gamma/6}(\cM, \oM) + 2(H+1)/\gamma.
\end{align}
Our overall argument is to bound the DEC by strong duality and the probability matching argument similar as~\citep[Section 4.2]{foster2021statistical}, after which we show that the resulting quantity is related nicely to the PSC. 

By definition, it suffices to bound $\dec_\gamma(\cM, \omu)$ for any fixed $\omu\in\Delta(\cM)$. We have
\begin{align*}
    & \quad \dec(\cM, \omu) = \inf_{p\in\Delta(\Pi)} \sup_{M\in\cM} \EE_{\oM\sim\omu} \EE_{\pi \sim p}\brac{ f^M(\pi_M)-f^M(\pi)-\gamma  \wtdH^2(M(\pi),\oM(\pi)) } \\
    & = \inf_{p\in\Delta(\Pi)} \sup_{\mu\in\Delta(\cM)} \EE_{M\sim\mu,\oM\sim\omu} \EE_{\pi \sim p}\brac{ f^M(\pi_M)-f^M(\pi)-\gamma  \wtdH^2(M(\pi),\oM(\pi)) } \\
    & = \sup_{\mu\in\Delta(\cM)} \inf_{p\in\Delta(\Pi)} \EE_{M\sim\mu,\oM\sim\omu} \EE_{\pi \sim p}\brac{ f^M(\pi_M)-f^M(\pi)-\gamma  \wtdH^2(M(\pi),\oM(\pi)) },
\end{align*}
where the last equality follows by strong duality (\cref{thm:strong-dual}).

Now, fix any $\mu\in\Delta(\cM)$, we pick $p\in\Delta(\Pi)$ by probability matching: $\pi\sim p$ is equal in distribution to $\pi=\pi_{M'}$ where $M'\sim\mu$ is an independent copy of $M$. For this choice of $p$, the quantity inside the sup-inf above is
\newcommand{\cga}{\frac{5\gamma}{6}}
\newcommand{\cgb}{\frac{\gamma}{6}}
\newcommand{\cgc}{\gamma/6}
\begin{align*}
    &
    \EE_{M\sim\mu,M'\sim\mu} \EE_{\oM\sim\omu} \brac{ f^M(\pi_M)-f^M(\pi_{M'})-\gamma  \wtdH^2(M(\pi_{M'}),\oM(\pi_{M'})) } \\
    = &\EE_{M\sim\mu,M'\sim\mu} \EE_{\oM\sim\omu} \brac{ f^M(\pi_M)- f^{\oM}(\pi_{M'}) - \cga\wtdH^2(M(\pi_{M'}),\oM(\pi_{M'})) }  \\
    & +\EE_{M\sim\mu} \EE_{\oM\sim\omu}\brac{ f^{\oM}(\pi_{M'}) - f^M(\pi_{M'}) - \cgb\wtdH^2(M(\pi_{M'}),\oM(\pi_{M'})) } \\
    \stackrel{(i)}= &
    \EE_{M\sim\mu,M'\sim\mu} \EE_{\oM\sim\omu} \brac{ f^M(\pi_M)- f^{\oM}(\pi_{M}) - \cga\wtdH^2(M(\pi_{M'}),\oM(\pi_{M'})) }\\
    &+
    \EE_{M\sim\mu} \EE_{\oM\sim\omu}\brac{ f^{\oM}(\pi_{M'}) - f^M(\pi_{M'}) - \cgb\wtdH^2(M(\pi_{M'}),\oM(\pi_{M'})) } \\
    \stackrel{(ii)} \leq &\underbrace{\EE_{M\sim\mu,M'\sim\mu} \EE_{\oM\sim\omu} \brac{ f^M(\pi_M)- f^{\oM}(\pi_{M}) - \cgb\wtdH^2(\oM(\pi_{M'}),M(\pi_{M'})) }}_{\le \E_{\oM\sim\omu}\brac{\psc_{\cgc}(\cM, \oM)}} \\
    & + \EE_{M\sim\mu,M'\sim\mu} \EE_{\oM\sim\omu}\brac{ \sqrt{H+1}\wtdH(M(\pi_{M'}), \oM(\pi_{M'})) - \cgb\wtdH^2(M(\pi_{M'}),\oM(\pi_{M'})) } \\
    \stackrel{(iii)}{\le}  &\sup_{\oM \in \cM} \psc_{\gamma/6}(\cM, \oM) + \frac{2(H+1)}{\gamma}.
\end{align*}
Above, (i) uses the fact that $f^{\oM}(\pi_{M'})$ is equal in distribution to $f^{\oM}(\pi_M)$ (since $M\sim\mu$ and $M'\sim\mu$); 
(ii) uses \cref{lemma:diff-reward} and \cref{lemma:diff-hellinger}; 
(iii) uses the inequality $\sqrt{H+1}x\le \cgb x^2 + \frac{3(H+1)}{2\gamma}$ for any $x\in\R$. Finally, by the arbitrariness of $\omu\in\Delta(\Pi)$, we have shown that $\odec_\gamma(\cM)\le\psc_{\cgc}(\cM)+2(H+1)/\gamma$. This completes the proof of \cref{eqn:dec-to-psc}. 

The upper bound of $\eec$ follows similarly, because we can prove that
\begin{align*}
    \eec_\gamma(\cM,\omu)\leq \sup_{\mu\in\Delta(\cM)}\EE_{M\sim\mu,M'\sim\mu} \EE_{\oM\sim\omu} \brac{ f^M(\pi_M)-f^M(\pi_{M'})-\gamma  \wtdH^2(M(\piest_{M'}),\oM(\piest_{M'})) },
\end{align*}
using the same probability matching argument,
and then repeat the proof above. %
\qed

\subsection{Proof of Proposition~\ref{prop:psc-lower}}\label{appdx:proof-psc-lower}

In the following, we resume the notations and definitions in \cref{appdx:proof-rev-bandit}. Notice that for each model $M\in\cM$, $\pi_M\in\cA_0$, and hence if the agent is forced not to take the ``revealing actions'' in $\Arev$, then intuitively learning $\cM$ is equivalent to learning a class of $2^n$-arms bandit. The lower bound of regret of \mops~follows immediately by such a reduction to multi-arm bandit.

To prove a lower bound of PSC rigorously, we can consider $\mu\in\Delta(\cM)$ be the distribution of $M_{(\Delta,a)}, a\sim\Unif(\cA_0)$, for a fixed $\Delta>0$. Then by \eqref{eqn:eec-vs-dec-comp},
\begin{align*}
    \psc_\gamma(\cM,\oM)
    \geq&~ \EE_{M\sim \mu}\EE_{M' \sim \mu}\left[f^{M}(\pi_M)-f^{\oM}(\pi_M)-\gamma  \wtdH^2(\oM(\piest_{M'}),M(\piest_{M'}))\right] \\
    =&~ \Delta - \gamma \EE_{M\sim \mu}\EE_{M' \sim \mu}\brac{ \Delta^2 \II(\pi_{M'}=\pi_M) } \\
    =&~ \Delta - \frac{\gamma}{2^n}\Delta^2.
\end{align*}
Therefore, we can take $\Delta=\min\set{ \frac{2^n}{2\gamma}, \frac{1}{3} }$, which gives
\begin{align*}
    \psc_\gamma(\cM,\oM)\geq \min\set{ \frac{2^n}{4\gamma}, \frac{1}{6} }.
\end{align*}
This is the desired result.
\qed

\section{Proofs for Section~\ref{section:omle}}\label{appendix:OMLE}

\subsection{Algorithm OMLE}

In this section, we present the Algorithm \textsc{OMLE} (\cref{alg:OMLE}), and then state the basic guarantees of its confidence sets, as follows.

\begin{algorithm}[t]
	\caption{\textsc{OMLE}} 
	\begin{algorithmic}[1]
	\label{alg:OMLE}
	\STATE \textbf{Input:} Parameter $\beta>0$.
	\STATE Initialize confidence set $\cM^1= \cM$.
	\FOR{$t=1,\ldots,T$}
    \STATE Compute $(M^t, \pi^t) = \argmax_{M\in \cM^t, \pi\in\Pi} f^M(\pi)$.
    \STATE Execute $\pi^t$ and observe $\tau^t=(o^t,\br^t)$.
    \STATE Update confidence set with \eqref{eqn:MLE-risk}:
    \begin{align*}
        \cM^{t+1} \defeq \set{ M\in\cM: \cL_{t+1}(M)\ge \max_{M'\in\cM} \cL_{t+1}(M') - \beta }.
    \end{align*}
    \ENDFOR
   \end{algorithmic}
\end{algorithm}

\begin{theorem}[Guarantee of MLE]\label{thm:MLE}
By choosing $\beta\geq3\est(\cM, 2T) +3\log(1/\delta)$, \cref{alg:OMLE} achieves the following with probability at least $1-\delta$: 
for all $t\in[T]$, $\Ms\in\cM^t$, and it holds that 
\begin{align*}
    \sum_{s<t} \wtdH^2(\Ms(\pi^s),M(\pi^s))\leq 2\beta+6\est(\cM, 2T)+6\log(1/\delta)\leq 4\beta, \qquad\forall M\in\cM^t.
\end{align*}
\end{theorem}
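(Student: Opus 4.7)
The plan is to derive a single uniform concentration inequality of the form
\[
\cL_t(M) - \cL_t(\Ms) \le -c\sum_{s<t}\wtdH^2(\Ms(\pi^s), M(\pi^s)) + C\paren{\est(\cM, 2T) + \log(1/\delta)}
\]
valid simultaneously for every $M\in\cM$, with absolute constants $c,C>0$, and then read off both conclusions of the theorem from this inequality combined with the choice $\beta \ge 3\est(\cM, 2T) + 3\log(1/\delta)$.

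To establish the uniform bound, I would first fix $M$ and apply the martingale concentration Lemma~\ref{lemma:concen} twice, once to the log-likelihood piece of $\cL_t$ and once to the squared-reward piece. The first application takes $X_s = -\tfrac{1}{2}\log\tfrac{\PP^{M,\pi^s}(o^s)}{\PP^{\Ms,\pi^s}(o^s)}$ and uses the standard identity $\EE_{o\sim \PP^{\Ms}(\pi^s)}[\sqrt{\PP^M/\PP^{\Ms}}] = 1-\tfrac{1}{2}\dH^2$, giving $-\log\EE_t[\exp(-X_s)] \ge \tfrac{1}{2}\dH^2(\PP^{\Ms}(\pi^s),\PP^M(\pi^s))$. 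The second application uses $X_s = \tfrac{\etar}{2}\paren{\ltwot{\br^s - \bR^{M}(o^s)} - \ltwot{\br^s - \bR^{\Ms}(o^s)}}$ for a suitable $\etar$ (e.g.\ $\etar=1$); Lemma~\ref{lemma:vovk-delta-rew} applied with $\lambda = \etar/2$ and $\sigma^2=1/4$ gives $-\log\EE_t[\exp(-X_s)] \ge c_{\mathrm{r}}\cdot \EE_{o\sim \PP^{\Ms}(\pi^s)}\ltwot{\bR^{\Ms}(o) - \bR^M(o)}$ for a positive constant $c_{\mathrm{r}}$. Adding the two resulting high-probability inequalities produces the pointwise-in-$M$ version of the desired bound, at the cost of a factor-$2$ overhead in the $\log(1/\delta)$ term.

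To lift the pointwise bound to a uniform statement over the possibly infinite $\cM$, I would fix an optimistic $\rho$-cover $(\tPP, \cM_0)$ (cf.~\cref{def:opt-cover}) and repeat the argument using $\tPP^{M_0}$ and $\bR^{M_0}$ for each cover element $M_0\in\cM_0$, mirroring the covering version of the \Vovkalg\ analysis in the proof of \cref{thm:vovk-cover}. A union bound over $\cM_0$ contributes $\log\abs{\cM_0}$. The covering properties $\tPP^{M_0}\ge \PP^M$ pointwise and $\lone{\bR^M(o) - \bR^{M_0}(o)}\le \rho$ then let us transfer the concentration statement from $M_0$ back to $M$: on the algorithm side, $\cL_t(M) \le \widetilde{\cL}_t(M_0) + 2T\rho$, and on the divergence side, $\wtdH^2(\Ms(\pi), M(\pi)) \le 2\,\wtdH^2(\Ms(\pi), M_0(\pi)) + O(\rho)$ via a Hellinger triangle inequality together with $\dTV(\PP^M(\pi),\PP^{M_0}(\pi))\le \rho^2$. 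Optimising $\inf_\rho(\log\abs{\cM_0} + T\rho)$ then yields a $O(\est(\cM, T))$ overhead, which is absorbed into the $\est(\cM, 2T)$ appearing in the theorem statement.

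With the uniform inequality in hand, both conclusions follow readily. Plugging $M = \argmax_{M'}\cL_t(M')$ into the inequality yields $\max_{M'}\cL_t(M') - \cL_t(\Ms) \le C(\est(\cM, 2T)+\log(1/\delta)) \le \beta$ by the choice of $\beta$, establishing $\Ms\in\cM^t$. Conversely, any $M\in\cM^t$ satisfies $\cL_t(M)\ge \max_{M'}\cL_t(M') - \beta \ge \cL_t(\Ms)-\beta$; substituting this lower bound into the uniform upper bound and rearranging gives $\sum_{s<t}\wtdH^2(\Ms(\pi^s),M(\pi^s)) \le (\beta + C(\est+\log(1/\delta)))/c$, which matches the stated $2\beta + 6\est(\cM, 2T) + 6\log(1/\delta)$ once constants are tracked (and is then $\le 4\beta$ because $\beta\ge 3\est(\cM, 2T)+3\log(1/\delta)$). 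The main obstacle I anticipate is pushing the coefficient $c$ on $\sum \wtdH^2$ as close to $1/2$ as possible while keeping $C$ small, which boils down to balancing the choice of $\etar$: small enough that $1 - 2\sigma^2\etar$ stays a useful positive constant, yet large enough that the $2/\etar$ inflation of $\log(1/\delta)$ does not exceed the $6\log(1/\delta)$ budget allowed by the theorem statement.
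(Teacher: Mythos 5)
Your proposal matches the paper's proof in essentially every respect: the paper also establishes a uniform inequality $\sum_{s<t}\wtdH^2(\Ms(\pi^s),M(\pi^s))\le 2(\cL_t(\Ms)-\cL_t(M))+6\log(\cN(\cM,\rho)/\delta)+8T\rho$ (its Lemma on MLE) via \cref{lemma:concen} applied to the combined log-likelihood-ratio plus reward-loss increments, a union bound over an optimistic $\rho$-cover with transfer back to $\cM$ via the covering properties, and then reads off $\Ms\in\cM^t$ and the divergence bound exactly as you do. The only cosmetic difference is that the paper applies the concentration lemma once to $\tfrac13\ell^s_M$ and splits via Hölder/Jensen (yielding the precise constants $2$ and $6$), whereas you apply it twice and add, which is equivalent up to the constant bookkeeping you already flag.
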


\begin{proofof}[thm:MLE]
The proof of \cref{thm:MLE} is mainly based on the following lemma. 
\begin{lemma}\label{lemma:MLE}
    Fix a $\rho>0$. With probability at least $1-\delta$, it holds that for all $t\in[T]$ and $M\in\cM$,
    \begin{align*}
        \sum_{s<t} \tDH{\Ms(\pi^s),M(\pi^s)}\leq 2(\cL_t(M^\star)-\cL_t(M))+6\log\frac{\cN(\cM,\rho)}{\delta}+8T\rho.
    \end{align*}
\end{lemma}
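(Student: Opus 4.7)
The plan is to combine standard MLE-style exponential martingale arguments with the optimistic-covering technique of \cref{def:opt-cover}. I fix a $\rho$-optimistic cover $(\tPP,\cM_0)$ of $\cM$ with $|\cM_0|=\cN(\cM,\rho)$, apply \cref{lemma:concen} separately to the observation and reward contributions of $\wtdH^2$ over this cover, union-bound across $\cM_0$, and then transfer the resulting bounds from a covering element $M_0$ to an arbitrary $M\in\cM$ using the two defining properties of the optimistic cover.

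For the observation part, I consider the martingale differences $X_s^{\obs}(M_0)=\tfrac{1}{2}\log(\P^{\Ms,\pi^s}(o^s)/\tPP^{M_0,\pi^s}(o^s))$. The computation \eqref{eqn:vovk-proof-optlike} from the \Vovkalg{} analysis gives $-\log\EE[\exp(-X_s^{\obs}(M_0))\mid\cF_{s-1}]\geq \tfrac{1}{2}\dH^2(\Ms(\pi^s),M_0(\pi^s))-\rho$. For the reward part, I take $X_s^{\rew}(M_0)=\tfrac{1}{2}(\|\br^s-\bR^{M_0}(o^s)\|_2^2-\|\br^s-\bR^{\Ms}(o^s)\|_2^2)$; conditioning on $o^s$, \cref{lemma:vovk-delta-rew} with $\lambda=1/2$ and $\sigma^2=1/4$ (each $r_h\in[0,1]$ is $1/4$-sub-Gaussian by Hoeffding) followed by the linearization $e^{-3x/8}\leq 1-c_0 x$ on $[0,2]$ with $c_0=(1-e^{-3/4})/2>1/4$ yields $-\log\EE[\exp(-X_s^{\rew}(M_0))\mid\cF_{s-1}]\geq c_0\EE_{o\sim\P^{\Ms}(\pi^s)}[\|\bR^{\Ms}(o)-\bR^{M_0}(o)\|_2^2]$. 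Applying \cref{lemma:concen} to each family with failure budget $\delta/(2|\cM_0|)$ and union-bounding over $\cM_0$ then gives, uniformly in $t$ and $M_0\in\cM_0$, bounds on $\sum_{s<t}\dH^2(\Ms(\pi^s),M_0(\pi^s))$ and $\sum_{s<t}\EE_o[\|\bR^{\Ms}-\bR^{M_0}\|_2^2]$ in terms of the observation log-likelihood and reward squared-loss sums, with coefficients $1$ and $1/(2c_0)<2$ respectively, plus additive $O(T\rho)+O(\log(|\cM_0|/\delta))$ slack.

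The final step transfers the bound from $M_0$ to an arbitrary $M\in\cM$ covered by $M_0$. The optimistic inequality $\tPP^{M_0}(o|\pi)\geq\P^M(o|\pi)$ yields $\log(\P^{\Ms}/\tPP^{M_0})\leq\log(\P^{\Ms}/\P^M)$ pointwise, converting the observation log-likelihood ratio from $\tPP^{M_0}$ to $\P^M$ one-sidedly. The Hellinger triangle inequality combined with $\dH^2(\P^M,\P^{M_0})\leq\|\P^M-\P^{M_0}\|_1\leq 2\rho^2$ (which follows from \cref{def:opt-cover}(1) together with the optimism) transfers $\dH^2(\Ms,M_0)$ to $\dH^2(\Ms,M)$ with $O(T\rho^2)$ slack and a factor $2$ that matches the target coefficient. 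For the reward, $\|\bR^M-\bR^{M_0}\|_1\leq\rho$ together with $a^2-b^2=(a-b)(a+b)$ and the coordinatewise bounds $r_h, R^M_h\in[0,1]$ give $|W_s(M_0)-W_s(M)|\leq 2\rho$ and similarly $|\EE_o[\|\bR^{\Ms}-\bR^{M_0}\|^2-\|\bR^{\Ms}-\bR^M\|^2]|\leq 2\rho$, transferring the reward bound with additive $O(T\rho)$ slack. Summing the transferred observation and reward bounds then yields the stated inequality, with $8T\rho$ absorbing the combined $O(T\rho)$ additive errors and $6\log(\cN(\cM,\rho)/\delta)$ absorbing the two concentration log terms.

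The main obstacle will be arranging the final coefficient on $\cL_t(\Ms)-\cL_t(M)$ to come out exactly to $2$. The observation side naturally gives factor $2$ after the Hellinger triangle step, but the reward side gives factor $1/(2c_0)\approx 1.89$, strictly less than $2$; realizing factor $2$ uniformly requires trading the small excess $2-1/(2c_0)$ against the one-sided lower bound $\sum_{s<t} X_s^{\rew}(M_0)\geq -\log(2|\cM_0|/\delta)$, absorbing the discrepancy into the $6\log$ term. Collecting the several distinct $O(T\rho)$ slacks---from \eqref{eqn:vovk-proof-optlike}, from the optimistic-to-exact likelihood swap, from the reward $\ell_1$ transfer, and from the Hellinger triangle---so that they jointly fit inside $8T\rho$ also calls for care in the failure-budget split between the two concentrations.
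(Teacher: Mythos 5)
Your overall strategy---fix a $\rho$-optimistic cover, apply the exponential-martingale inequality (\cref{lemma:concen}) over the cover, lower-bound the conditional MGFs via \eqref{eqn:vovk-proof-optlike} and \cref{lemma:vovk-delta-rew}, and then transfer from a covering element $M_0$ to a general $M$ using the two covering properties---is exactly the skeleton of the paper's proof, and every individual ingredient you invoke is sound. The one substantive divergence is that you run \emph{two separate} martingales (observation at rate $\tfrac12$, reward at rate $\tfrac12$) and recombine at the end, whereas the paper runs a \emph{single} martingale on the combined increment $X_s=\tfrac13\ell^s_M$ with $\ell^s_M=\log\tfrac{\tPP^M(o^s|\pi^s)}{\PPs(o^s|\pi^s)}+\delta^s_M$, splitting its MGF by H\"older with weights $\tfrac23\cdot\tfrac12$ and $\tfrac13\cdot 1$. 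That choice is not cosmetic: it makes the Hellinger term and the reward term each appear with the \emph{same} coefficient $\tfrac12$ against the \emph{single} sum $-\sum_{s<t}\ell^s_M$, so multiplying by $2$ immediately produces the coefficient $2$ on $\cL_t(\Ms)-\cL_t(M)$ with no cross terms, and the one application of \cref{lemma:concen} contributes exactly $3\log(\abs{\cM_0}/\delta)$, hence $6\log$ after doubling.

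Your recombination step is where the stated constants slip away. As you correctly anticipate, the separate bounds come with mismatched coefficients ($1$ on the observation log-ratio before your Hellinger triangle-inequality doubling, $1/(2c_0)\approx 1.89$ on the reward loss difference), and repairing this requires the one-sided lower bound $\sum_{s<t}\delta^s_{M_0}\le 2\log(2\abs{\cM_0}/\delta)$ from the same event. This is valid, but tally the logarithmic contributions: the observation side, after doubling via $\dH^2(\Ms,M)\le 2\dH^2(\Ms,M_0)+2\dH^2(M_0,M)$, carries $4\log(2\abs{\cM_0}/\delta)$; the reward side carries $\tfrac{1}{c_0}\log(2\abs{\cM_0}/\delta)\approx 3.8\log(\cdot)$ plus the trading term; and the $\delta/2$ split adds $\log 2$ factors. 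No tuning of the two rates fixes this: to keep the reward coefficient at most $2$ one needs $2\lambda/(1-e^{-2\lambda(1-\lambda/2)})\le 2$, which forces a reward log-coefficient above $3.6$, and the observation side cannot go below $4$ once the triangle inequality is used. So your route proves the lemma only with roughly $8\log(2\cN(\cM,\rho)/\delta)$ and a correspondingly larger $T\rho$ slack in place of $6\log(\cN(\cM,\rho)/\delta)+8T\rho$. This is a quantitative shortfall rather than a logical gap---the downstream uses in \cref{thm:MLE} and \cref{theorem:OMLE} only need some absolute constants---but if you want the constants as stated, replace the two martingales by the single combined one at rate $\tfrac13$; also note that the paper's transfer uses \eqref{eqn:vovk-proof-optlike} with $\nrm{\tPP^{M_0}-\PP^M}_1\le\rho^2$ to reach $\tfrac12\dH^2(\PP^M,\PPs)-\rho$ \emph{directly}, so the Hellinger triangle inequality (and its factor-of-$2$ and $\rho^2$ overhead) is not needed at all.
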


Now, we can take $\rho$ that attains $\est(\cM,2T)$ and apply \cref{lemma:MLE}. Conditional on the success of \cref{lemma:MLE}, it holds that for all $t\in[T]$ and $M\in\cM$,
\begin{align*}
    \cL_t(M)-\cL_t(M^\star)\leq 3\est(\cM,2T)+3\log(1/\delta).
\end{align*}
Therefore, our choice of $\beta$ is enough to ensure that $M^\star\in\cM^t$. Then, for $M\in\cM^t$, we have
$$
\cL_t(M)\geq \max_{M'\in\cM} \cL_t(M')-\beta\geq \cL_t(M^\star)-\beta.
$$
Applying \cref{lemma:MLE} again completes the proof. %
\end{proofof}

The proof of \cref{lemma:MLE} is mostly a direct adaption of the proof of \cref{thm:vovk-finite} and \cref{thm:vovk-cover}.

\renewcommand{\PPs}{\PP^\star}
\renewcommand{\Rs}{\bR^\star}
\begin{proofof}[lemma:MLE]
For simplicity, we denote $\PPs(o|\pi):=\PP^{\Ms,\pi}(o)$ and $\Rs(o):=\bR^M(o)$.

We pick a $\rho$-optimistic covering $(\tPP,\cM_0)$ of $\cM$ such that $\abs{\cM_0}=\cN(\cM,\rho)$. 

Recall that the MLE functional is defined as
\begin{align*}%
    \cL_t(M):=\sum_{s=1}^{t-1} \log \P^{M, \pi^s}(o^s) -\ltwot{ \br^s-\bR^M(o^s) }.
\end{align*}
For $M\in\cM_0$, we consider
\begin{align*}
\ell^t_M:=\log\frac{\tPP^M(o^t|\pi^t)}{\PPs(o^t|\pi^t)}+\delta^t_M,\qquad
\delta^t_M:=\ltwot{\br^t-\Rs(o^t)}-\ltwot{\br^t-\bR^M(o^t)},
\end{align*}
where the definition of $\delta$ agrees with \eqref{eqn:vovk-proof-delta}.
We first show that with probability at least $1-\delta$, for all $M\in\cM_0$ and all $t\in[T]$,
\begin{align}\label{eqn:MLE-proof-1}
     \sum_{s<t} 1-\EE_{s}\brac{\sqrt{\frac{\tPP^M(o^s|\pi^s)}{\PPs(o^s|\pi^s)}}}+\frac12\EE_s\brac{\ltwot{\Rs(o^s)-\bR^M(o^s)}}\leq -\sum_{s<t} \ell^s_M + 3\log\frac{\abs{\cM_0}}{\delta}.
\end{align}
This is because
by \cref{lemma:concen}, it holds that with probability at least $1-\delta$, for all $t\in[T]$ and $M\in\cM_0$,
\begin{align*}
    \sum_{s<t} -\frac{1}{3}\ell^s_M+\log(|\cM_0|/\delta)\geq \sum_{s<t} -\log\EE_s\brac{\exp\paren{\frac{1}{3}\ell^s_M }}.
\end{align*}
Further,
\begin{align*}
    -\log\EE_s\brac{\exp\paren{\frac{1}{3}\ell^s_M }}
    \geq& -\frac{2}{3}\log \EE_s\brac{\exp\paren{\frac{1}{2}\log\frac{\tPP^M(o^s|\pi^s)}{\PPs(o^s|\pi^s)} }}-\frac{1}{3}\log \EE_s\brac{\exp\paren{ \delta^s_M }}\\
    \geq& \frac{1}{3}\paren{1-\EE_{s}\brac{\sqrt{\frac{\tPP^M(o^s|\pi^s)}{\PPs(o^s|\pi^s)}}}}+\frac16\EE_s\brac{\ltwot{ \Rs(o^s)-\bR^M(o^s) }},
\end{align*}
where the second inequality is due to the fact that $-\log x\geq 1-x$ and \cref{lemma:vovk-delta-rew} (with $\sigma^2=1/4$).
Hence \eqref{eqn:MLE-proof-1} is proven. 

Now condition on the success of \eqref{eqn:MLE-proof-1} for all $M_0\in\cM_0$. Fix a $M\in\cM$, there is a $M_0\in\cM_0$ such that $M$ is covered by $M_0$ (i.e. $\lone{\bR^{M_0}(o)-\bR^{M}(o)}\leq \rho$ and $\tPP^{M_0}(\cdot|\pi)\geq \PP^M(\cdot|\pi)$ for all $\pi$). Notice that $\sum_{o\in\cO} \tPP^{M_0}(o|\pi)\leq 1+\rho^2$, 
and therefore $\lone{\tPP^{M_0}(\cdot|\pi)-\PP^M(\cdot|\pi)}\leq \rho^2$.
Then the first term in \eqref{eqn:MLE-proof-1} (plug in $M_0$) can be lower bounded as
\begin{align*}
    1-\EE_{s}\brac{\sqrt{\frac{\tPP^{M_0}(o^s|\pi^s)}{\PPs(o^s|\pi^s)}}}\geq \frac12\EE_s\brac{\DH{\PP^M(\cdot|\pi^s), \PPs(\cdot|\pi^s)}}-\rho,
\end{align*}
by \eqref{eqn:vovk-proof-optlike}. For the second term,
by the fact that $\bR\in[0,1]^H$ and $\lone{\bR^{M_0}(o)-\bR^{M}(o)}\leq \rho$, we have
\begin{align*}
    \EE_s\brac{\ltwot{ \Rs(o^s)-\bR^M(o^s) }}
    \geq &\EE_s\brac{\ltwot{ \Rs(o^s)-\bR^M(o^s) }}-2\rho.
\end{align*}
Similarly, $\delta_{M_0}^s\geq \delta_{M}^s-2\rho$, and hence $-\sum_{s<t} \ell^s_{M_0}\leq \cL^t(\Ms)-\cL^t(M)+2T\rho$, which completes the proof.
\end{proofof}

\subsection{Proof of Theorem~\ref{theorem:OMLE}}\label{appendix:proof-omle}

In the following, we show the following general result.
\begin{theorem}[Full version of \cref{theorem:OMLE}]
\label{theorem:OMLE-full}
Choosing $\beta\geq 3\est(\cM, 2T) +3\log(1/\delta)$, with probability at least $1-\delta$, Algorithm~\ref{alg:OMLE} achieves
\begin{align*}
    \sum_{t=1}^T \brac{ f^{\Ms}(\pi_{\Ms})-f^{\Ms}(\pi_{M^t}) } \le  \inf_{\gamma>0}\set{ T\cdot \omlec_{\gamma,T}(\cM, M^\star) + 4\gamma\beta}.
\end{align*}
\end{theorem}
Especially, when $\cM$ is finite, we can take $\beta=3\log(\abs{\cM}/\delta)$ (because $\est(\cM,2T)\leq \log\abs{\cM}$), and \cref{theorem:OMLE-full} implies \cref{theorem:OMLE} directly.

\begin{proofof}[theorem:OMLE-full]
Condition on the success of \cref{thm:MLE}. Then, for $t\in[T]$, it holds that $\Ms\in\cM^t$. Therefore, by the choice of $M^t$, it holds that $f^{M^t}(\pi_{M^t})\geq f^{\Ms}(\pi_{\Ms})$. Then,
\begin{align*}
\MoveEqLeft
\sum_{t=1}^T \brac{ f^{\Ms}(\pi_{\Ms})-f^{\Ms}(\pi_{M^t}) }
\leq \sum_{t=1}^T \brac{ f^{M^t}(\pi^t)-f^{\Ms}(\pi_{M^t}) }\\
=& T\cdot \underbrace{ \set{ \frac{1}{T} \sum_{t=1}^T \brac{ f^{M^t}(\pi_{M^t})-f^{\Ms}(\pi_{M^t}) }
- \frac{\gamma}{T} \cdot \brac{ 1\vee \max_{t\in[T]} \sum_{s\le t-1} \wtdH^2(\Ms(\piest_{M^s}), M^t(\piest_{M^s}))  } } }_{\text{bounded by }\omlec_{\gamma,T}(\cM,\Ms)}\\
&+\gamma \cdot \underbrace{ \brac{ 1\vee \max_{t\in[T]} \sum_{s\le t-1} \wtdH^2(\Ms(\piest_{M^s}), M^t(\piest_{M^s})) } }_{\text{bounded by }4\beta}\\
\leq & T\omlec_{\gamma,T}(\cM, M^\star) + 4\gamma\beta.
\end{align*}
Taking $\inf_{\gamma>0}$ completes the proof. %
\end{proofof}

\subsection{Proof of Proposition~\ref{proposition:omlec-be}}\label{appdx:proof-omlec}
Fix any set of models $\set{M^k}_{k\in[K]}\in\cM$. By the definition of \belrep, we have
\begin{align*}
    \frac{1}{K} \sum_{k=1}^K \brac{f^{M^k}(\pi_{M^k}) - f^{\oM}(\pi_{M^k})} 
    \le \frac{1}{K} \sum_{h=1}^H   \sum_{k=1}^K \EE_{\tau_h\sim\dtr(M^k;\oM)}\brac{ \cE_h^{M^k; \oM}(\tau_h) },
\end{align*}
and
\begin{align*}
    \sum_{h=1}^H\sum_{t=1}^{k-1} \EE_{\tau_h\sim\dtr(M^t;\oM)}\abs{ \cE_h^{M^k; \oM}(\tau_h) }^2 \leq \LM\sum_{t=1}^{k-1} \wtdH^2(M^k(\piest_{M^t}), \oM(\piest_{M^t}))\leq \LM\wt{\beta},
\end{align*}
where we define
\begin{align*}
    \wt{\beta}\defeq 1\vee\max_{k\in[K]} \sum_{t=1}^{k-1} \wtdH^2(M^k(\piest_{M^t}), \oM(\piest_{M^t})).
\end{align*}
To proceed, we invoke the following result of \citet[Lemma 41]{jin2021bellman}.

\begin{lemma}\label{lem:eluder}
Suppose that $\cF\subset(f:\cX\to[-R,R])$, and $(x_1,f_1),\cdots,(x_K,f_K)\in\cX\times\cF$ is a sequence such that $\sum_{t<k} f_k(x_t)^2\leq \beta$. Then it holds that
\begin{align*}
    \sum_{t=1}^T \abs{f_t(x_t)}\leq 2\sqrt{\eluder(\cF,\Delta)T\beta}+R\min\set{\eluder(\cF,\Delta),T}+T\Delta.
\end{align*}
\end{lemma}

In the following, we denote $d=\max_{h}\eluder(\cG_h^{\oM}, \Delta)$ and suppose $R$ is a uniform upper bound for functions in $\cG^{\oM}_h$ for all $h\in[H]$.
To apply \cref{lem:eluder}, we consider the following procedure. Fix a large integer $N$, for each $(k,h)\in[K]\times[H]$, we sample $N$ i.i.d $\tau_h^{k,n}\sim \dtr(M^k;\oM)$. Then by Hoeffding inequality and union bound, we have with probability at least $\frac12$,
\begin{align*}
\abs{ \EE_{\tau_h\sim\dtr(M^k;\oM)} \cE_h^{M^k; \oM}(\tau_h) - \frac{1}{N}\sum_{n=1}^N  \cE_h^{M^k; \oM}(\tau_h^{k,n}) } 
\leq&~ \eps_N, \forall 1\leq k\leq K,
\end{align*}
\begin{align}\label{eqn:omlec-bound-tri}
\abs{ \EE_{\tau_h\sim\dtr(M^t;\oM)}\abs{ \cE_h^{M^k; \oM}(\tau_h) }^2 -  \frac{1}{N}\sum_{n=1}^N  \abs{ \cE_h^{M^k; \oM}(\tau_h^{t,n}) }^2  } \leq&~ \eps_N, \forall 1\leq t\leq k\leq K,
\end{align}
where $\eps_N=\bigO{\sqrt{\frac{R^2\log(KN)}{N}}}$.
Thus, for $h\in[H]$, we consider the sequence given by $(f_{Nk+n},x_{Nk+n})=(\cE_h^{M^k;\oM},\tau_h^{k,n})\in\cG_h^{\oM}\times\cT_h$, which satisfies that for all $r\in[KN]$, $k=\floor{r/N}$,
\begin{align*}
    \sum_{s=1}^{r-1} \abs{f_r(x_s)}^2
    \leq \sum_{t\leq k} \sum_{n=1}^N \abs{ \cE_h^{M^k; \oM}(\tau_h^{t,n}) }^2 
    \leq N(L\wt{\beta}+4L+k\eps_N)=:\beta,
\end{align*}
where we use \eqref{eqn:omlec-bound-tri} and the fact that $\wtdH\leq 4$.
Applying \cref{lem:eluder} then yields
\begin{align*}
    \frac{1}{NK}\sum_{r=1}^{NK} \abs{f_r(x_r)}\leq 2\sqrt{\frac{d\beta}{NK}}+\frac{d}{NK}+\Delta.
\end{align*}
Plugging in the definition of $\beta$ and $(f_r,x_r)$ gives
\begin{align*}
    \frac{1}{K}  \sum_{k=1}^K \EE_{\tau_h\sim\dtr(M^k;\oM)}\brac{ \cE_h^{M^k; \oM}(\tau_h) } %
    \leq 2\sqrt{\frac{d(L\wt{\beta}+4L+K\eps_N)}{K}}+\frac{d}{NK}+\Delta+\eps_N.
\end{align*}
Hence, the above inequality holds deterministically (for all $h$ and $N$). Letting $N\to\infty$ and taking summation over $h$, we obtain
\begin{align*}
    \frac{1}{K} \sum_{h=1}^H \sum_{k=1}^K \EE_{\tau_h\sim\dtr(M^k;\oM)}\brac{ \cE_h^{M^k; \oM}(\tau_h) }
    \leq H\cdot\brac{2\sqrt{\frac{dL(\wt{\beta}+4)}{K}}+\Delta}.
\end{align*}
To finalize, we have
\begin{align*}
    \frac{1}{K} \sum_{k=1}^K \brac{f^{M^k}(\pi_{M^k}) - f^{\oM}(\pi_{M^k})} - \frac{\gamma}{K}\wt{\beta} \leq&~ 2H\sqrt{\frac{d(L\wt{\beta}+4)}{K}}+H\Delta- \frac{\gamma}{K}\wt{\beta} \\
    \leq &~
    2H\sqrt{\frac{5dL\wt{\beta}}{K}}-\frac{\gamma}{K}\wt{\beta}+H\Delta \\
    \leq&~ 
    \frac{5H^2dL}{\gamma}+H\Delta,
\end{align*}
where the last inequality follow from AM-GM inequality. By definition of $\omlec_{\gamma, K}$, taking $\inf_{\Delta>0}$ and $\inf_{\set{M^k}}$ completes the proof.
\qed

\subsection{Proof of Proposition~\ref{prop:mlec-to-psc-demo}}\label{appdx:proof-mlec-to-psc}
\newcommand{\tomlec}{\widetilde{\omlec}}
\newcommand{\hmlec}{\widehat{\mlec}}
In this section, we state and prove a more general result.
Define
\begin{align*}
    \tomlec_{\gamma}(\cM,\oM)\defeq&~ \inf_{K\geq 1} \paren{  \mlec_{\gamma,K}(\cM,\oM)+\gamma\sqrt{\frac{2\log K}{K}} }, \\
    \hmlec_{\gamma}(\cM,\oM)\defeq&~ \frac{\sup_{y\leq \gamma} \tomlec_y(\cM,\oM)}{ \gamma },
\end{align*}
and let
\begin{align}\label{def:tomlec}
    \mlec_{\gamma}(\cM,\oM) = \inf_{\eps\in[0,1]} \paren{ \eps+4\log^2(1/\eps)\cdot\hmlec_{\gamma}(\cM,\oM) }
\end{align}
Then, the following proposition bounds PSC in terms of the above \emph{enveloped} version of MLEC. \cref{prop:mlec-to-psc-demo} is then an immediate corollary. %

\begin{proposition}\label{prop:mlec-to-psc}
It holds that
\begin{align*}
    \psc_\gamma(\cM,\oM)\leq \mlec_\gamma(\cM,\oM), \qquad \forall \gamma>0,
\end{align*}
\end{proposition}

\begin{proofof}[prop:mlec-to-psc]
We only need to prove the following fact: for any fixed $\oM\in\cM$ and $\mu\in\Delta(\cM)$, it holds that
\begin{align*}
    \EE_{M\sim \mu}\brac{ f^{M}(\pi_M)-f^{\oM}(\pi_M) } - \gamma \EE_{M\sim \mu}\EE_{M' \sim \mu} \wtdH^2(\oM(\piest_{M'}),M(\piest_{M'})) \leq  \mlec_\gamma(\cM,\oM).
\end{align*}
For notational simplicity, in the following we denote
\begin{align*}
    \delta(M)=f^{M}(\pi_M)-f^{\oM}(\pi_M), \qquad
    \Delta(M,M')=\wtdH^2(\oM(\piest_{M'}),M(\piest_{M'})).
\end{align*}
Then, we need to upper bound the quantity $\EE_{M\sim \mu} \delta(M)-\gamma\EE_{M,M'\sim \mu}\Delta(M,M')$, using the fact below:
\begin{align}\label{eqn:tomlec-proof}
    \frac{1}{K} \sum_{k=1}^K \delta(M^k)\leq \mlec_{\gamma,K}(\cM,\oM)+\frac{\gamma}{K}\max_{k\in[K]} \sum_{t<k} \Delta(M^k,M^t), \qquad \forall M^1,\cdots,M^K\in\cM.
\end{align}
For any $x>0$, consider the event $\cE_x=\set{ M: \delta(M)\geq x }$, $p_x=\mu(\cE_x)$. We first proceed to upper bound $p_x$ for all $x\in[0,1]$. 

Let $\mu_x=\mu(\cdot|\cE_x)$. Then%
\begin{align*}
\EE_{M\sim \mu_x}\EE_{M'\sim \mu_x} \Delta(M,M')\leq \frac{1}{p_x^2}\EE_{M\sim \mu}\EE_{M'\sim \mu} \Delta(M,M')=:\sigma_x.
\end{align*}
Further consider the following event:
\begin{align*}
\cE_x'=\set{ M: \EE_{M'\sim \mu_x} \Delta(M,M') \leq 2\sigma_x }.
\end{align*}
By Markov's inequality, we know $\mu_x(\cE_x')\geq \frac{1}{2}$. Thus, we consider $\tmu_x=\mu_x(\cdot|\cE_x')$. Then for $M\sim \tmu_x$, almost surely
\begin{align*}
    \delta(M)\geq x, \qquad
    \EE_{M'\sim \tmu_x} \Delta(M,M') \leq 4\sigma_x.
\end{align*}

Next, for any fixed integer $K\geq 1$, we generate i.i.d. samples $M^1, \cdots, M^K\sim \tmu_x$. Notice that for each $k\geq 2$, $M^1,\cdots,M^{k-1}$ are i.i.d. samples from $\tmu_x$ conditional on $M^k$. Therefore, for any $\delta\in(0,1)$, with probability at least $1-\delta$, the following holds simultaneously:
\begin{align*}
    \frac{1}{k-1}\sum_{t<k} \Delta(M,M^t)-\EE_{M'\sim\tmu_x}\brac{\Delta(M,M')}\leq \sqrt{\frac{2\log(K/\delta)}{k-1}},\qquad \forall 2\leq k\leq K.
\end{align*}
Letting $\delta\uparrow 1$, we have shown that there exists $M^1,\cdots,M^K\in\cM$ such that 
\begin{align}\label{eqn:psc-to-mlec-exist}
    \delta(M^k)\geq x, \qquad \frac{1}{K}\sum_{t<k} \Delta(M^k,M^t) \leq 4\sigma_x+\sqrt{\frac{2\log K}{K}}, \qquad \forall k\in[K].
\end{align}
Then \cref{eqn:tomlec-proof} implies that%
\begin{align*}
    x\leq \frac{1}{K}\sum_{k=1}^K \delta(M^k) \leq \mlec_{\gamma,K}(\cM,\oM)+\gamma \paren{ 4\sigma_x+\sqrt{\frac{2\log K}{K}}}.
\end{align*}
Taking $\inf_{K\geq 1}$, we now derive
\begin{align*}
    x\leq \tomlec_{\gamma}(\cM,\oM)+\frac{4\gamma\sigma}{p_x^2}.
\end{align*}
Rescaling $\gamma$ to $\frac{cp_x\gamma}{4}$ for some $c\in(0,1]$, and using the definition of $\hmlec$, we have
\begin{align*}
    xp_x\leq p_x\tomlec_{cp_x\gamma/4}(\cM,\oM)+\gamma \sigma
    \leq 4c^{-1}\hmlec_{\gamma}(\cM,\oM)+c\gamma\sigma.
\end{align*}
Therefore, for any $\eps\in(0,1]$, it holds that
\begin{align*}
\EE_{M\sim \mu} \delta(M)
\leq &~\int_0^1 \mu(\delta(M)\geq x)dx \\
\leq&~ \eps+\int_\eps^1 p_x dx \\
\leq&~ \eps+\log(1/\eps)\paren{ 4c^{-1}\tomlec_{\gamma}(\cM,\oM)+c\gamma\sigma }.
\end{align*}
Taking $c^{-1}=\log(1/\eps)$ completes the proof.
\end{proofof}

\subsection{Proof of Proposition~\ref{prop:sep-omle-mops}}\label{appdx:proof-sep-omle-mops}

Let us consider $\cA=[N]$ and $\Delta\leq \frac{1}{10}$.
\begin{lemma}[{\citet[Theorem 9]{li2022understanding}}]
    There exists a function class $\cH\subset(\cA\to\{-1,1\})$ such that $\cH=\{h_1,\cdots,h_N\}$,
    \begin{align*}
        h_i(j)=\begin{cases}
            -1, & j<i \\
            1, & j=i,
        \end{cases}
    \end{align*}
    and $\starn(\cH-\bar{h},\gamma)\leq C\log_2 N$ for all $\bar{h}\in\cH$ and $\gamma>0$, where $C$ is an universal constant.
\end{lemma}

Fix a sequence $\Delta_1>\Delta_2>\cdots>\Delta_N=\Delta$ with $\Delta_1\leq 2\Delta$.
For each $i\in[N]$, we consider $M=M_i$ a $N$-arm bandit instance given by $r^M(a)\sim\Bern\paren{\frac{1}{2}+\Delta_i h_i(a)}$ for all $a\in\cA$, and we take $\Pi=\cA$. Under such construction, the canonical choice of $\pi_M$ is $\pi_{M}=i$ and we have $f^M(\pi_M)=\frac12+\Delta_i$.

\paragraph{Proof of \cref{prop:sep-omle-mops} (1).} Consider $\oM=M_N$ and running \omle~on $\oM$: for each $t=1,\cdots,T$, we set
\begin{align*}
    \cL_t(M)\defeq \sum_{s<t} \ell_M(\pi^t,r^t), \qquad \cM^t\defeq \set{ M: \cL_t(M)\geq \max_{M'} \cL_t(M')-\beta },
\end{align*}
and $(M^t,\pi^t)=\argmax_{\pi,M\in\cM^t} f^M(\pi)$. There are two choices of $\ell$: (a) $\ell_M(\pi,r)=\log \MP(r(\pi)=r)$, or (b) $\ell_M(\pi,r)=-(r-r^M(\pi))^2$. For both of these cases, the argument is similar, and hence we focus on case (a).

Consider the random variables $T_1,\cdots,T_N$ defined by
\begin{align*}
    T_i=\min\set{t: M_j\not \in\cM^t\ \forall j\leq i}.
\end{align*}
By the definition of $T_i$, for $t<T_i$, there exists $j\leq i$ such that $M_j\in\cM^t$, which implies $M^t\in\set{M_1,\cdots,M_i}$, and hence $\pi^t\in\set{1,\cdots,i}$. 

Also, by definition, we have
\begin{align*}
    \max_{M\in\cM}\cL_{T_i}(M)=\max_{M\in\set{M_{i+1},\cdots,M_{N}}}\cL_{T_i}(M).
\end{align*}
Now, for each $j>i$, we can compute
\begin{align*}
    \cL_{T_i}(M_i)-\cL_{T_i}(M_j)
    =&~ \sum_{t=1}^{T_i-1} \ell_{M_i}(\pi^t,r^t)-\ell_{M_j}(\pi^t,r^t) \\
    =&~ \sum_{t\in[T_i-1]:\pi^t=i} \ell_{M_i}(i,r^t)-\ell_{M_j}(i,r^t),
\end{align*}
where we use the fact that $\ell_{M_i}(\pi,r^t)=\ell_{M_j}(\pi,r^t)$ for all $\pi\in[i-1]$. Notice that 
\begin{align*}
    \EE_{r\sim \oM(i)}\brac{\ell_{M_i}(i,r)-\ell_{M_j}(i,r)}\geq -\KL{ \Bern\paren{\frac12-\Delta_N} }{ \Bern\paren{\frac12+\Delta_i} }\geq -48\Delta^2,
\end{align*}
which is due to the definition of $\ell$ and the fact that $\Delta_i\leq2\Delta\leq \frac14.$ Therefore, by Hoeffding's inequality, for $U_i\defeq \#\set{t\in[T_i-1]:\pi^t=i}$, the following holds with probability at least $1-\delta$ for each $i<j$:
\begin{align*}
    \cL_{T_i}(M_i)-\cL_{T_i}(M_j)\geq -U_i\cdot 48\Delta^2-\bigO{\sqrt{U_i\Delta^2\log(NT/\delta)}}.
\end{align*}
Thus, notice that $\cL_{T_i}(M_i)-\max_{j>i}\cL_{T_i}(M_j)\leq -\beta$, we derive
\begin{align*}
    U_i\geq c\min\set{ \frac{\beta}{\Delta^2} , \frac{\beta^2}{\Delta^2\log(NT/\delta)}}
\end{align*}
for some small universal constant $c$ with probability at least $1-\delta$. 
Taking summation over $i\in[N-1]$ gives
\begin{align*}
    T\geq \sum_{i=1}^{N-1} U_i \geqsim N\min\set{ \frac{\beta}{\Delta^2} , \frac{\beta^2}{\Delta^2\log(NT)}}
\end{align*}
with probability $\frac12$. This is the desired result as long as $\beta\geq \log(N/\Delta)$, which is indeed the case. \qed

\paragraph{Proof of \cref{prop:sep-omle-mops} (2).} Fix a $\oM\in\cM$. For each $M=M_i\in\cM$, we have $\pi_M=i$, and we also write $h^M=h_i$, $\Delta^M=\Delta_i$. Then we know
\begin{align*}
    \wtdH(\oM(\pi),M(\pi))
    \geq&~ \abs{\Delta^Mh^M(\pi)-\Delta^{\oM}h^{\oM}(\pi)} \\
    =&~ \max\set{ \frac{\Delta^{M}+\Delta^{\oM}}{2}\abs{h^M(\pi)-h^{\oM}(\pi)}, \abs{\Delta^{M}-\Delta^{\oM}}} \\
    \geq&~ \frac{\Delta}2\abs{h^M(\pi)-h^{\oM}(\pi)} + \frac12\abs{\Delta^{M}-\Delta^{\oM}},
\end{align*}
and we also have 
\begin{align*}
    f^M(\pi)-f^{\oM}(\pi)
    =&~ \Delta^Mh^M(\pi)-\Delta^{\oM}h^{\oM}(\pi)\leq 2\Delta\abs{h^M(\pi)-h^{\oM}(\pi)} + \abs{\Delta^{M}-\Delta^{\oM}}.
\end{align*}
Therefore, we have
\begin{align*}
    \psc_{\gamma}(\cM,\oM)\leq&~ \sup_{\mu\in\Delta(\cM)} \Big\{ 2\Delta\EE_{M\sim \mu} \abs{h^M(\pi_M)-h^{\oM}(\pi_M)}-\frac{\gamma\Delta^2}{4}\EE_{M,M'\sim \mu} \abs{h^M(\pi_{M'})-h^{\oM}(\pi_{M'})}^2 \\
    &~ \qquad + \EE_{M\sim \mu}\abs{\Delta^{M}-\Delta^{\oM}} - \frac\gamma2\EE_{M\sim \mu}\abs{\Delta^{M}-\Delta^{\oM}}^2 \Big\} \\
    \leq&~ \frac{2\dimc(\cH-h^{\oM},\gamma\Delta/2)+1}{\gamma}.
\end{align*}
Applying \cref{example:dc-to-es} gives the desired result. \qed

\end{document}